\newcommand{\xr}[1]{\textcolor{black}{#1}}
\newcommand{\blind}{0}
\newif\ifincludeappendix
\begin{document}

\def\spacingset#1{\renewcommand{\baselinestretch}%
{#1}\small\normalsize} \spacingset{1}


\if0\blind
{
  \title{\bf Inference for Deep Neural Network Estimators in Generalized Nonparametric Models}
  \author
{
Xuran Meng\thanks{Department of Biostatistics, University of Michigan; e-mail: {\tt xuranm@umich.edu}} 
~~and~~
Yi Li\thanks{Department of Biostatistics, University of Michigan;
  e-mail: {\tt yili@umich.edu}}
}
\date{}
  \maketitle
} \fi


\bigskip
\begin{abstract}
While deep neural networks (DNNs) are  used for prediction, inference on DNN-estimated subject-specific means for categorical or exponential family outcomes remains underexplored. We address this by proposing a DNN estimator under  generalized nonparametric regression models (GNRMs) and developing a rigorous inference framework. Unlike existing approaches that assume independence between estimation errors and inputs to establish the error bound, a condition often violated in GNRMs, we allow for dependence and  our theoretical analysis demonstrates the feasibility of drawing inference under GNRMs.  To implement inference, we consider an Ensemble Subsampling Method (ESM) that leverages U-statistics and the Hoeffding decomposition to construct reliable confidence intervals for DNN estimates. We show that,  under  GNRM settings,  ESM enables model-free variance estimation and accounts for heterogeneity among individuals in the population.
 Through  simulations under nonparametric logistic, Poisson, and binomial regression models, we demonstrate the effectiveness and efficiency of our method. 
  We further apply the method to the electronic  Intensive Care Unit (eICU) dataset, a large scale collection of anonymized health records from ICU patients, to predict ICU readmission risk and offer patient-centric insights for clinical decision making. 
\end{abstract}

\noindent%
{\it Keywords:} Deep Neural Networks, Inference, Generalized Nonparametric Regression Models, Ensemble Subsampling Method

\vfill
\newpage
\spacingset{1.9} 

\section{Introduction} 

Given an outcome variable \(y\) and  covariates \(\mathbf{x}\),
estimating the conditional expectation \(\mathbb{E}(y |\xb) \) is central to supervised learning.  Under squared loss, \(\mathbb{E}(y |\xb) \) is the optimal estimator that minimizes the expected estimation error among all measurable functions,  making it a natural and widely adopted target in statistical modeling and machine learning. Even when optimality does not hold under more general losses, 
\(\mathbb{E}(y |\xb) \) 
 remains a valuable target, offering insight into the relationship between covariates and outcomes. It  serves as a foundational component for constructing more complex quantities across domains such as conditional variances, counterfactual means, and risk-adjusted estimates
 \citep{hastie2009elements}.
Methods that estimate \(\mathbb{E}(y|\mathbf{x})\) include linear, parametric and nonparametric regression, kernel regression \citep{nadaraya1964estimating}, support vector regression \citep{drucker1996support}, random forests \citep{breiman2001random} and deep neural networks  (DNNs) \citep{lecun2015deep}.

{With the rapid advancement of artificial intelligence, DNN–based estimators have gained widespread popularity}. However,  the statistical inference surrounding the DNN estimator of $\mathbb{E}(y |\xb)$, especially when  $y$ is categorical or belongs to the exponential family, remains underdeveloped. We  address this gap by establishing a rigorous framework for  inference on $\mathbb{E}(y |\xb)$ estimated by DNNs under a generalized nonparametric  regression model (GNRM) setting [see Eq.   \eqref{eq:model}].

Modern inference  approaches include distribution-free conformal estimation inference \citep{lei2018distribution,angelopoulos2021gentle,huang2024conformal}, extensions of jackknife inference within conformal estimation \citep{alaa2020discriminative,kim2020predictive,barber2021predictive}, debiasing methods \citep{athey2018approximate,guo2021inference}, residual-based bootstrap techniques \citep{zhang2023bootstrap}, and the use of $U$ statistics in regressions, random forests and neural networks \citep{mentch2016quantifying,wager2018estimation,schupbach2020quantifying,wang2022quantifying,fei2024u}.  
Several distinctions and limitations remain. For example, conformal prediction constructs distribution-free prediction sets for 
$y$, targeting outcome coverage rather than inference on 
$\EE(y |\xb)$. Its intervals thus absorb both outcome noise and model uncertainty, making them conservative and inefficient for mean inference. Clinically, inference on risk classifications derived from patients’ risk profiles, which are based on the estimated conditional mean (i.e., risk score), is more actionable, as medical decisions (e.g., triage or treatment intensity) rely on risk stratifications rather than prediction sets \citep{lundberg2018explainable,van2019calibration}.
Jackknife and related leave-one-out methods assume stable, nearly independent residuals, conditions violated by DNNs whose estimates depend heavily on individual samples. Debiasing approaches \citep{athey2018approximate,guo2021inference} and residual bootstraps \citep{zhang2023bootstrap} remain limited to linear models, while resampling-based inference for high-dimensional GLMs \citep{fei2021estimation} is fully parametric. Recent work on uncertainty quantification for random forests \citep{mentch2016quantifying,wager2018estimation} and neural networks \citep{schupbach2020quantifying,fei2024u} offers valuable insights but remains largely confined to continuous outcomes, leaving inference for GNRMs such as logistic and Poisson regressions unresolved.

Theoretically, for valid  inference on $\mathbb{E}(y |\xb)$,  we  need to establish the convergence rate of neural networks in a GNRM framework. Since the universal approximation theorem \citep{hornik1989multilayer}, there has been much progress in understanding the convergence rates of neural networks  \citep{mccaffrey1994convergence,kohler2005adaptive,yarotsky2017error,shen2019nonlinear,kohler2021rate,lu2021deep,shen2021robust,jiao2023deep,fan2024noise,wang2024deep,bhattacharya2024deep,yan2025deep}. For instance, \citet{shen2021robust} established the convergence rate under nonparametric regression assuming  the $p$-th moment of response is bounded for some $p>1$; \citet{fan2024noise}  explored nonasymptotic error bounds of the estimators under several loss functions, such as the least-absolute deviation loss and Huber loss; \citet{wang2024deep}  proposed an efficient and robust nonparametric regression estimator  based on an estimated maximum likelihood estimator; 
\citet{fan2024factor,bhattacharya2024deep} investigated deep neural network estimators for nonparametric interaction models with diverging input dimensions, focusing on theoretical properties such as minimax optimality, approximation error control, and sparsity-driven regularization. 
However,  quantifying uncertainty of  DNN-estimated $\EE (y|\xb)$  within the framework of  GNRM introduces additional challenges. The existing approaches typically assume a nonparametric regression framework, where the distribution of $y - \EE (y|\xb)$ does not depend on $\xb$. This assumption simplifies the theoretical analysis and the derivation of convergence results \citep{takezawa2005introduction,gyorfi2006distribution,fan2018local}. In  generalized nonparametric regression settings, however, the assumption often fails due to  heteroskedasticity, as the distribution of $y - \EE (y|\xb)$ may still depend on $\xb$. Consequently, the standard techniques for bounding error terms and controlling variance no longer apply, and the established results on uniform convergence or concentration inequalities \citep{bartlett2020benign} are  invalid.  As a result, it is unclear whether these methods are applicable for drawing  inference on GNRMs estimated by DNNs.
 

Our methodological and theoretical contributions are as follows. First, we propose a DNN estimator for subject specific means under GNRMs and develop an inference framework that explicitly accounts for covariate-dependent error distributions induced by heteroskedasticity. We provide new proof techniques, including new criterion loss function, covering-number based concentration and a truncation strategy that handles exponential-tailed, heteroskedastic noise \xr{in Section~\ref{sec:additional_proof} in the Supplementary Material},  improving the results of \citet{schmidt2020nonparametric}.  
For example,  with the general loss  induced by GNRMs, the quadratic-based arguments of \citet{schmidt2020nonparametric} no longer apply. This necessitates a complete redevelopment of the error bounds, derived instead from the intrinsic curvature and Lipschitz properties in the new GNRM setting. As a result, all estimation error analyses are novel and  move  beyond  Gaussian approximations. 
Additionally, the presence of exponential tailed, heteroskedastic noise complicates uniform convergence, which we address through a truncation strategy.  
The truncation scheme partitions the analysis into bounded and unbounded noise regimes, where standard convergence arguments apply in the former and concentration inequalities control the latter, yielding sharper error bounds. These theoretical tools enable us to extend  the continuous outcome setting of \citet{schmidt2020nonparametric}, accommodating both continuous and discrete outcomes, and thereby generalizing their framework to the broader GNRM class. 
Second, to quantify the uncertainty of the estimates, we propose an ensemble subsampling method (ESM) under GNRMs, developed from U-statistics \citep{frees1989infinite,hoeffding1992class,lee2019u,boroskikh2020u} and Hoeffding decomposition \citep{hoeffding1992class}. Our model-free variance estimator leverages the intrinsic properties of Hoeffding decomposition and accommodates varying population levels as well as individual variance structures.  {Moreover, we employ incomplete U-statistics to  reduce computational cost, enabling the resample size to grow proportionally with the sample size while limiting the total number of resamples. To mitigate the resulting Monte Carlo bias, we refine the variance estimator of \citet{wager2018estimation} for bias correction.}

In Section~\ref{sec:preamble}, we introduce deep neural networks (DNNs), generalized nonparametric regression models (GNRMs) and the ensemble subsampling method (ESM). Section~\ref{sec:theoretical_DNN} presents theoretical results for the convergence rate of DNNs under the GNRM framework, as well as  results for statistical inference using ESM. In Section~\ref{sec:simulations}, we perform simulations to evaluate the finite sample performance of our method.  Section~\ref{sec:realdata} demonstrates the proposed methodology using the electronic Intensive Care Unit (eICU) dataset to estimate both the probability and expected number of ICU readmissions based on patient characteristics, with accompanying statistical inference. The eICU dataset is a  multi-center resource comprising anonymized clinical data from ICU patients, developed through a collaboration between the Massachusetts Institute of Technology and multiple healthcare partners. The findings may support the identification of high-risk individuals and guide the delivery of personalized, risk-informed care.  Section~\ref{sec:discussion} concludes the paper with discussions.  Additional  experiments and proofs are provided in the  Supplementary Material.

\noindent
{\em Notation.} We employ lower letters (e.g. $a$) for scalars and boldface letters for vectors and matrices (e.g. $\ab$ and $\Ab$). For any matrix $\Ab$, we use $\|\Ab\|_{0}$ and $\|\Ab\|_{\infty}$ to denote its zero norm and infinity norm, respectively. The set of natural and real numbers are denoted by $\NN$ and $\RR$ respectively. For any function $f$ mapping a vector to $\RR $, we use $\|f\|_{\infty}$ to denote its infinity norm. For an integer $n$,  
$[n]= \{1, \ldots, n\}$. We write $X_1(n)=O(X_2(n))$ or  $X_1(n)\precsim X_2(n)$ or $X_2(n) \succsim X_1(n)$ if  there exist $C>0$ and $N_0>0$ such that $|X_1(n)|\leq C|X_2(n)| $ when $n >N_0$.    We denote $X_2(n)  \asymp X_1(n) $ if $X_1(n)=O(X_2(n))$ and $X_2(n)=O(X_1(n))$. We denote $X_1(n)=o(X_2(n))$ if $X_1(n)/X_2(n)\to 0$,  and $X_1(n)\sim X_2(n)$ if $\{X_1(n)/X_2(n)\}$ converges to $1$. When clear from context, we add the index $p$ (for probability) such as $O_p$ for a random variable and its realized value to avoid redundancy.

\section{The Preamble}
\label{sec:preamble}

\subsection{Deep Neural Networks} 
Let $L$ be a positive integer
denoting the \textbf{depth} of a neural network,
and  $\pb=(p_0,...,p_L,p_{L+1})^\top$ be a vector of  positive integers specifying the \textbf{width} of each layer,  with $p_0$ corresponding to the input dimension, $p_1, \dots, p_L$  the dimensions of the $L$ hidden layers, and $p_{L+1}$ the dimension of the output layer. An $(L+1)$-layer DNN with layer-width  $\pb$ is defined as 
\begin{align}
    f(\xb)=\Wb_L f_L(\xb)+\vb_L, \label{eq:def_DNN}
\end{align}
with the recursive expression 
 $   f_{l}(\xb)=\sigma(\Wb_{l-1}f_{l-1}(\xb)+\vb_{l-1}),\quad f_{1}(\xb)=\sigma(\Wb_0\xb+\vb_0).$
Here, $\Wb_l\in\RR^{p_{l+1}\times p_l}$ and vectors $\vb_l\in\RR^{p_{l+1}}$ ($l\in[L]$) are the parameters of this DNN, and $\sigma:\RR\to\RR$ is the activation function, applied element-wise to vectors. A commonly used 
 activation function
 is the rectified linear unit (ReLU) function \citep{nair2010rectified}, or
\(\sigma(x)=\max(x,0). \)  
We focus on ReLU due to its piecewise linearity and projection properties, with potential extensions to other activation functions as discussed in Section~\ref{sec:discussion}.

 {Overparameterized DNNs can memorize noise and overfit \citep{cao2022benign}, so sparsity is used to control complexity.}
 We focus on the following class of  $s$-sparse and $F$-bounded networks for  our  theoretical analysis: 
\begin{align}
\label{eq:class_F2}
\cF(L,\pb,s,F)=\big\{f \text{ of form \eqref{eq:def_DNN}: }\|\Wb_{l}\|_{\infty},\|\vb_l\|_{\infty}\leq 1,\sum_{l=1}^L\|\Wb_l\|_0+\|\vb_l\|_0\leq s, \|f\|_{\infty}\leq F\big\}.
\end{align}
Here, $s\in\NN_+$, and $F>0$ is a constant. $\|\cdot\|_0$ is the number of nonzero entries of matrix or vector, and $\|f\|_{\infty}$ is the sup-norm of function $f$.

\subsection{Generalized Nonparametric Regression}
\label{subsec:GLM}
 Consider outcomes from the exponential family, including commonly used distributions such as normal, Bernoulli, Poisson, and binomial. The density (or mass) function for a random variable \( y \) in 
 the (single-parameter) exponential 
 family is  
\(
p(y|\theta) = h(y) \exp\left\{\theta y - \psi(\theta)\right\},
\)  
where \( \theta \) is the canonical parameter, \( 0< h(y)< \infty \) ensures normalization, and \( \psi(\theta) \) is  convex and smooth,  satisfying $\EE(y|\theta)=\psi'(\theta)$, where  $\psi'(\cdot)$ denotes the derivative of  $\psi(\cdot)$ and 
the inverse function of  $\psi'(\cdot)$ is known as the link function \citep{brown1986fundamentals,fei2021estimation}.

Suppose we observe $n$  independently and identically distributed  (iid) sample $\{(\xb_i,y_i)\}_{i=1}^n$ from $(\xb,y)$, where  $\xb\in\RR^p$ is a covariate vector and $y\in\RR$ is the response variable. Given $\xb$, the distribution of $y$ is modeled as \begin{align}
    p(y | \xb) = h(y) \exp\bigg\{ y f_0(\xb) - \psi(f_0(\xb)) \bigg\}, \label{eq:model}
\end{align}where $f_0:\RR^p\to \RR$  is a nonparametric function  with $f_0(\xb)$ representing the  mean parameter of the response variable $y$ given $\xb$.  Model \eqref{eq:model} is referred to as a Generalized Nonparametric Regression Model (GNRM), where the outcomes follow a distribution from the exponential family.  With an independent test point $(\xb_{\new},y_{\new})$ and that $y_{\new}$, given $\xb_{\new}$, follows  \eqref{eq:model}, we aim to estimate $\EE (y_{\new}|\xb_{\new})$ which is $\psi'(f_0(\xb_{\new}))$,  
Consequently, it is of  interest to estimate $f_0$. By the universal approximation theorem \citep{hornik1989multilayer}, DNNs are one of  suitable candidates for estimating $f_0$.  We consider an optimal  DNN estimator, $\hf_n^{\opt}$, which   minimizes the negative log  likelihood function \eqref{eq:model} over the class of DNNs defined in \eqref{eq:class_F2}, that is, 
\begin{align}
    \hf_n^{\opt}=\argmin_{f\in\cF(L,\pb,s,F)}\frac{1}{n}\sum_{i=1}^n \{-y_i f(\xb_i)+\psi(f(\xb_i))\}.\label{eq:hf}
\end{align}
The estimator \eqref{eq:hf} is applicable to  commonly used models, including these listed below.

\noindent\underline{Nonparametric Gaussian Regression:}   Given $\xb$, we assume that the response variable follows a Gaussian distribution with mean $f_0(\xb)$ and variance $\sigma^2$. In this setting,  $\sigma^2$ does not affect the estimation of the mean function $f_0(\xb)$, which can be seen by examining the likelihood and optimization.
The likelihood function for a nonparametric Gaussian regression model is given by
$    p(y|\xb)=\frac{1}{\sqrt{2\pi}\sigma}e^{-\frac{(y-f_0(\xb))^2}{2\sigma^2}} $. 
An application of \eqref{eq:hf} to this setting yields:
\begin{eqnarray*}
\hf_n^{\opt}& = &  \argmin_{f\in\cF(L,\pb,s,F)}\frac{1}{n}\sum_{i=1}^n \{-y_if(\xb_i) + \frac{1}{2}f^2(\xb_i)\} \\
& = & 
\argmin_{f\in\cF(L,\pb,s,F)}\frac{1}{n}\sum_{i=1}^n \{y_i-f(\xb_i)\}^2,
\end{eqnarray*}
 corresponding to  mean squared error minimization as in \citet{schmidt2020nonparametric}.

\noindent\underline{Nonparametric Logistic Regression:} To model binary outcomes $y \in \{0, 1\}$, we set $h(y) = 1$, $\psi(\theta) = \log(1 + e^{\theta})$ in \eqref{eq:model}. The  estimator  \eqref{eq:hf} in this setting can be written as:
\begin{align*}
\hf_n^{\opt}=\argmin_{f\in\cF(L,\pb,s,F)}\frac{1}{n}\sum_{i=1}^n (-y_i f(\xb_i)+\log[1+\exp\{f(\xb_i)\}]).
\end{align*}

\noindent\underline{Nonparametric Poisson Regression:}
 Considering \( y \in \mathbb{N} \) (non-negative integers), we define \( h(y) = \frac{1}{y!} \) and \( \psi(\theta) = e^{\theta} \) in \eqref{eq:model}.  The  estimator  \eqref{eq:hf}  has the following form: 
\[
\hf_n^{\opt} = \argmin_{f \in \cF(L, \pb, s, F)} \frac{1}{n} \sum_{i=1}^n [ \exp\{f(\xb_i)\} - y_i f(\xb_i)].
\]

\noindent\underline{Nonparametric Binomial Regression:} Given covariates $\xb$, the response variable $y$ follows a Binomial distribution with a fixed number of trials $n_{\text{trial}}$ and the success probability $p(\xb)=\frac{1}{1 + e^{-f_0(\xb)}}$. 
This corresponds to $h(y)={n_{\text{trial}}\choose{y}} $ and $\psi(\theta)=n_{\text{trial}}\log(1+e^{\theta})$.  In this setting,  the  estimator  \eqref{eq:hf} is specified as
\[
\hf_n^{\opt} = \argmin_{f \in \cF(L, \pb, s, F)} \frac{1}{n} \sum_{i=1}^n \Big[ -y_i f(\xb_i) + n_{\text{trial}} \log\{1 + e^{f(\xb_i)}\} \Big].
\]

\subsection{Empirical Risk Minimization and Function Smoothness}
\label{sec:approximate_risk}
 
Recall that the pointwise Kullback--Leibler  divergence between two distributions of the form \eqref{eq:model}, one specified by  \( f \) and another  by the true  \( f_0 \), conditional on \( \xb \), i.e.,
\[
\EE \left[ -y f(\xb) + \psi(f(\xb)) + y f_0(\xb) - \psi(f_0(\xb)) \,\big|\, \xb \right] 
= -\psi'(f_0(\xb))(f(\xb) - f_0(\xb)) + \psi(f(\xb)) - \psi(f_0(\xb)),
\]
where we use \( \EE[y | \xb] = \psi'(f_0(\xb)) \).   This equals the Bregman divergence between \( f(\xb) \) and \( f_0(\xb) \), which is nonnegative and vanishes if and only if \( f(\xb) = f_0(\xb) \).  We then quantify the estimation error of $\hat{f}$, any estimator of $f_0$, by evaluating the pointwise estimation error at an independent test point \xr{\( \bX\)}:
\(
\ell(\bX; \hat{f}, f_0) = -\psi'(f_0(\bX)) (\hat{f}(\bX) -f_0(\bX)) +\psi(\hat{f}(\bX))  - \psi(f_0(\bX)).
\)
We further define the average estimation error, our main theoretical target, as
\begin{align}
R_n(\hat{f}, f_0) := \mathbb{E}[\ell(\bX; \hat{f}, f_0)], \label{eq:def_Rn}
\end{align}
where the expectation is taken with respect to all involved random variables.

 Computing the exact minimizer $\hf_n^{\opt}$ in \eqref{eq:hf} is challenging due to the nonconvex loss \citep{fan2024factor}. Instead, we obtain an approximate minimizer $\hat{f}_n$ 
 using optimization algorithms such as gradient descent. To quantify the optimization error, 
we introduce
\begin{align}
\Delta_{n}(\hf_n,\hf_n^{\opt})= 
\EE\bigg\{\frac{1}{n} \sum_{i=1}^n  (-  y_i \hf_n(\xb_i) +  \psi(\hf_n(\xb_i)) -   \frac{1}{n} \sum_{i=1}^n  (-  y_i \hf_n^{\opt}(\xb_i) +  \psi( \hf_n^{\opt}(\xb_i))\bigg\}\label{eq:obtained_estimator}
\end{align} 
that measures the difference between the expected empirical risk of $\hf_n$ and $\hf_n^{\opt}$. For notational ease, we write $\Delta_n(\hat f_n) := \Delta_n(\hat f_n,\hat f_n^{\opt})$ when the reference to $\hat f_n^{\opt}$ is clear. We will show that a small empirical deviation \( \Delta_{n}(\hf_n) \), combined with controlled model complexity of the neural network class, implies a bound on the average excess risk \( R_n(\hat{f}_n, f_0) \),   forming the foundation for the inference theory developed in subsequent sections.

 For desirable  properties of $\hf_n$, { we assume \( f_0 \) belongs to a H\"older class \citep{schmidt2020nonparametric, fan2024factor}}
  with parameters $\gamma$, $K > 0$, and domain $\DD \subset \RR^r$:
\begin{align*}
    \cG_{r}^{\gamma}(\DD,K)=\bigg\{g:\DD\to\RR:\sum_{\bbeta:\|\bbeta\|_1<\gamma} \|\partial^{\bbeta}g\|_{\infty}+\sum_{\bbeta:\|\bbeta\|_1=\lfloor \gamma\rfloor}\sup_{\xb,\yb\in\DD,\xb\neq \yb}\frac{|\partial^{\bbeta}g(\xb)-\partial^{\bbeta}g(\yb)|}{\|\xb-\yb\|_{\infty}^{\gamma-\lfloor \gamma\rfloor}}\leq K\bigg\},
\end{align*}
where $\lfloor \gamma\rfloor$ is the largest integer strictly smaller than $\gamma$, and $\partial^{\bbeta}=\partial^{\beta_1}\ldots \partial^{\beta_r}$ with $\bbeta=(\beta_1,...,\beta_r)^\top\in\NN^r$. We further assume that $f_0$ is a $(q+1)$-composition ($q\in\NN$) of several H\"older functions. That is, for some vectors $\db=(d_0,...,d_{q+1})\in\NN_+^{q+2}$, $\tb= (t_0,...,t_q)\in\NN_+^{q+1}$ and $\mb=(m_0,...,m_q)\in\RR_+^{q+1}$, $f_0\in  \cG(q,\db,\tb,\mb,K)$, where
\begin{align*}
    \cG(q,\db,\tb,\mb,K):=\bigg\{&f=g_q\circ \cdots \circ g_0: g_i=(g_{ij})_j:[a_i,b_i]^{d_i}\to[a_{i+1},b_{i+1}]^{d_{i+1}},\\
    &g_{ij}\in \cG_{t_i}^{m_i}([a_i,b_i]^{t_i}) \text{ with }|a_i|,|b_i|\leq K  \bigg\}.
\end{align*}
{ The motivation for the assumptions on $f_0$ is that the H\"older class  provides a flexible framework for quantifying smoothness, capturing both integer and fractional orders, and serves as a natural benchmark for assessing neural network approximation \citep{yarotsky2017error}, while
the compositional structure  mirrors the architectural design of deep neural networks with each layer applying a transformation to the output of the previous one \citep{yarotsky2017error}.}
We also note that $\db$ and $\tb$ characterize the dimensions of the function, while  $\mb$ represents the measure of smoothness. For instance, if 
\begin{align*}
f_0(\xb)=g_{21}\Big(&g_{11}\big(g_{01}(x_1,x_2,x_3,x_4),g_{02}((x_5,x_6,x_7,x_8))\big),\\
& g_{12}\big(g_{03}(x_9,x_{10},x_{11},x_{12}),g_{04}((x_{13},x_{14},x_{15},x_{16}))\big),\\
&g_{13}\big(g_{05}(x_{17},x_{18},x_{19},x_{20}),g_{06}((x_{21},x_{22},x_{23},x_{24}))\big) \Big),\quad \xb\in[0,1]^{24},
\end{align*}
and  $g_{ij}$ are twice continuously differentiable, then smoothness $\mb=(2,2,2)$, dimensions $\db=(24,6,3,1)^\top$ and $\tb=(4,2,3)^\top$. 
We further define \xr{$m_j^*=m_j\Pi_{l=i+1}^{q}(m_l\wedge 1 )$ and $\phi_n=\max_{j=0,...,q} n^{-2m_j^*/(2m_j^*+t_j)}$}.

\subsection{Ensemble Subsampling Method (ESM)}
 Consider a scenario where $\xb$ is drawn from an unknown probability measure $\PP_{\xb}$ supported on $\cX$, and for an input $\xb_{\new}=\xb_{*} \in \cX$, the unobserved outcome $y_{\new}$ follows  \eqref{eq:model}. Our goal is to estimate  $\EE(y_{\new}|\xb_{\new}=\xb_{*}) = \psi'(f_0(\xb_{*}))$ and its uncertainty.
For this, we introduce the Ensemble Subsampling Method (ESM), based on subsampling techniques \citep{wager2018estimation, fei2024u}, to construct ensemble estimators and confidence intervals.  ESM consists of the following components, which are summarized in Figure~\ref{fig:esmpipline} as well. 
\begin{itemize}
\item{(Subsampling)}
Let $\mathcal{I} = \{1, \dots, n\}$ denote the index set of the training dataset $\mathcal{D}_n$. We construct subsets of size $r (< n)$, yielding $B^* = \binom{n}{r}$ unique combinations. Denoting the $b$-th subset as $\mathcal{I}^b = \{i_1, \dots, i_r\}$, where $i_1 < \dots < i_r$ and $b \in [1 : B^*]$.
\item{(Estimator Construction)}
For each  $ 1 \le b \le B^*$, we aim to minimize the objective function in \eqref{eq:hf}  within $\cF(L,\pb,s,F)$ by  using a gradient based optimization algorithm such as Stochastic Gradient Descent (SGD) on the observations indexed by $\mathcal{I}^b$.  Let $\hat{f}^b$ denote the resulting approximate minimizer satisfying  
\begin{align}
    \Delta_r(\hf^b,\hf^{b,\opt})\leq \Delta^{\opt}_b,\label{eq:delta_opt}
\end{align}
where $\Delta^{\opt}_b$ quantifies the optimization error \citep{fan2024factor},
and 
$\hf^{b,\opt}$ and $ \Delta_r(\hf^b,\hf^{b,\opt})$ are defined respectively in \eqref{eq:hf}  and  \eqref{eq:obtained_estimator} on the subsample indexed by $\cI^b$.

\item{(Ensemble Estimation)}
Using the estimators $\{\hat{f}^b : b = 1, \dots, B^*\}$, we compute the ensemble estimation for any $\mathbf{x}_{*} \in \mathcal{X}$ as:
\begin{align*}
        \hf^{B^*}(\xb_{*})=\frac{1}{B^*}\sum_{b=1}^{B^*}\hf^{b}(\xb_{*}),
\end{align*}
and estimate 
$\EE(y_{\new}|\xb_{\new}=\xb_{*})$
as $\psi'(\hf^{B^*}(\xb_{*})))$.
\item{(Confidence Interval Construction)}
To quantify uncertainty, we estimate the variance, denoted  $\hat{\sigma}_*$, using the infinitesimal jackknife method. The resulting confidence interval (CI) is defined as:
\begin{align}
\text{CI}(\xb_{*})=\big[\psi'(\hf^{B^*}(\xb_{*})-c_{\alpha}\hsigma_*),\psi'(\hf^{B^*}(\xb_{*})+c_{\alpha}\hsigma_*  )\big].\label{eq:CI}
\end{align}
where $c_\alpha$ is a constant controlling the $1 - \alpha$ confidence level.
\end{itemize}
 
\begin{figure}[t]
    \centering
    \includegraphics[width=0.8\linewidth]{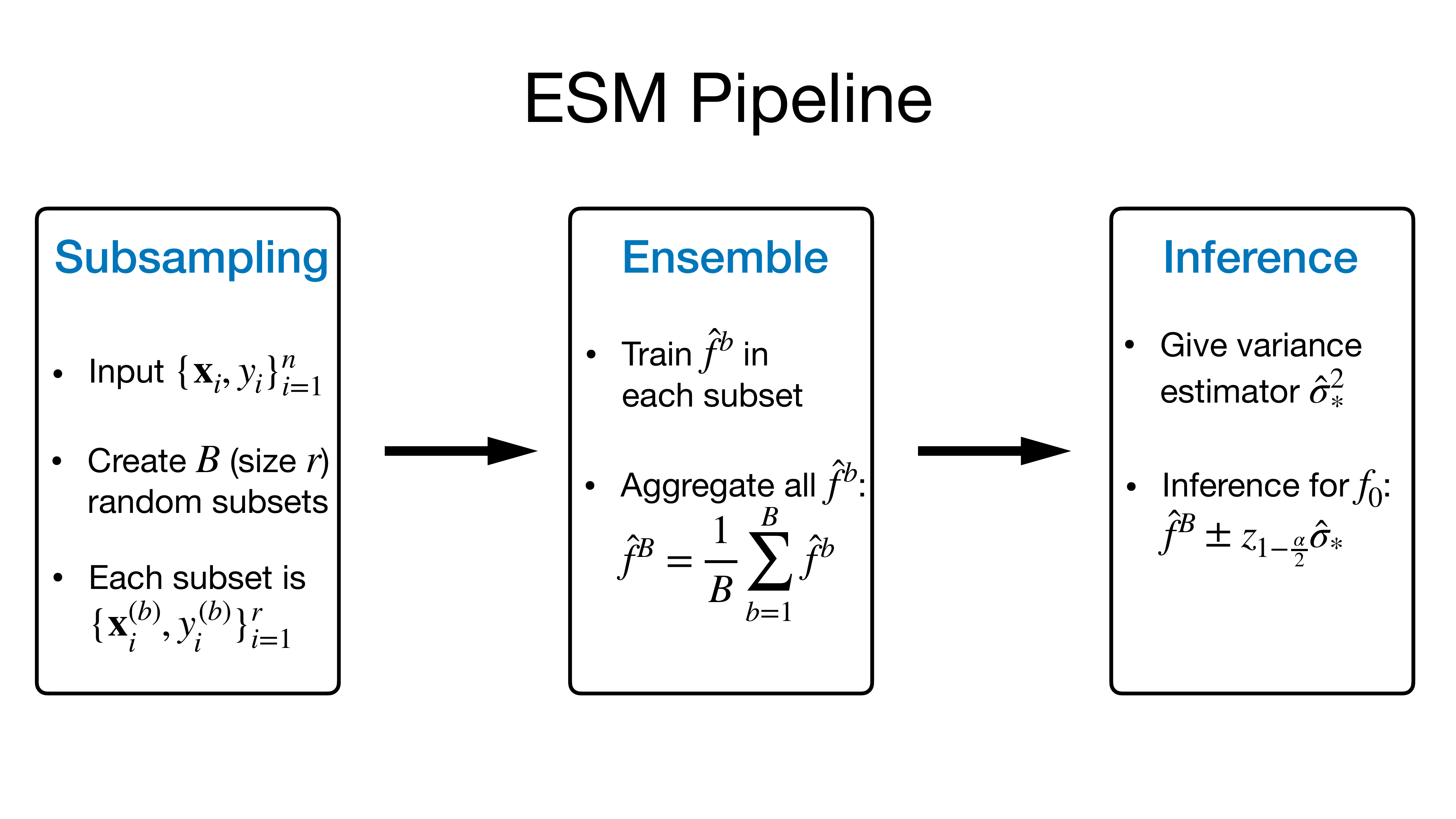}
    \vspace{-20pt}
    \caption{Overview of the Ensemble Subsampling Method (ESM).}
    \label{fig:esmpipline}
\end{figure}
The distribution of $\hat{f}^b(\xb_{*})$ for a fixed $\xb_{*}$ is  intractable. However, since $\hat{f}^b(\xb_{*})$ is permutation symmetric with respect to $\cI^b$, the ensemble estimator $\hat{f}^{B^*}(\xb_{*})$ forms a generalized U-statistic, which we later show to be asymptotically normal by leveraging U-statistics theory.
In practice, we allow $r$ to grow with $n$, but evaluating all $B^* = {{n}\choose{r}}$ models becomes computationally infeasible. To overcome this, we use a stochastic approximation by randomly drawing $B$ size-$r$ subsamples from $\cD_n$. This involves independently sampling indices $b_1, \dots, b_B$ from $1$ to ${{n}\choose{r}}$, where each subsample is indexed by $\cI^{b_j}$, and  computing 
\begin{align}
\hf^{B}(\xb_{*})=\frac{1}{B}\sum_{j=1}^B \hf^{b_j}(\xb_{*}).\label{eq:approx_hfB}
\end{align}
As \eqref{eq:approx_hfB} provides the estimator for $f_0$, the variance $\hsigma_*$  in \eqref{eq:CI} in the Confidence Interval Construction procedure   is given as 
\begin{align}
    \hsigma_*^2=\underbrace{\frac{n(n-1)}{(n-r)^2} \sum_{i=1}^n \hat{V}_i^2}_{\text{original }\hsigma^2}-{\underbrace{\frac{n(n-1)}{(n-r)^2}\cdot \frac{1}{B(B-1)}\sum_{i=1}^n\sum_{j=1}^B(Z_{b_{j}i}-\hat V_i)^2}_{\text{bias correction}}}.\label{eq:def_hsigma_*}
\end{align}
Here,   $Z_{b_{j}i}$ and $\hat{V}_i$ are defined as 
\begin{align*}
    Z_{b_{j}i}=(J_{b_ji}-J_{\cdot i})\big(\hf^{b_j}(\xb_{*})-\hf^{B}(\xb_{*})\big), \quad  \hat{V}_i=\frac{\sum_{j=1}^B Z_{b_j i}}{B},
\end{align*}
where $J_{b_ji}=I(i\in\cI^{b_j})$ and $J_{\cdot i}=\frac{1}{B}\sum_{j=1}^B J_{b_ji}$. 
Extending \citet{peng2024bias},  \eqref{eq:def_hsigma_*} jointly corrects Monte Carlo bias and finite-sample effects using the factor \( n(n-1)/(n-r)^2 \), enabling valid inference with \( B \succsim n \). Its construction  relies on  the  dominance of the first order term in the Hoeffding decomposition, and the bias correction term adequately captures its resampling variability, regardless of the specific scaling between \( r \) and \( n \).

\section{Theoretical Results}
\label{sec:theoretical_DNN}
 Let $\zb_i = (y_i, \xb_i)$ represent an independent observation of sample points.   For studying the asymptotic properties of $\hf^B$ in    \eqref{eq:approx_hfB}, we introduce
\begin{align} \label{cov1r}
\xi_{1,r}(\xb)=\Cov(\hf(\xb;\zb_1,\zb_2,...,\zb_{r}),\hf(\xb;\zb_1,\zb'_2,...,\zb'_r)).
\end{align}
Here $\xb\in \cX$ is in the support of probability measure $\PP_{\xb}$, $\hf(\xb; \zb_1, \zb_2, \dots, \zb_r)$ means that we obtain $\hf$ from the subsample $\zb_1, \zb_2, \dots, \zb_r$ and then apply it to the point $\xb$, and $\xi_{1,r}$ quantifies the covariance between estimates based on two overlapping subsets of size $r$ that share a common point $\zb_1$ but differ in the remaining elements. $\xi_{1,r}$ represents a second-order component in the Hoeffding decomposition of a U-statistic \citep{hoeffding1992class}, capturing the pairwise dependence structure within the subsamples used for constructing the estimator. 
The terms $\zb'_i$ and $\zb_i$ are independently generated from the same data generation process. 
We  impose some regularity assumptions for our theoretical guarantees.
\begin{assumption}
\label{assump:1}
    Suppose that \eqref{eq:model} holds and the true function $f_0\in\cG(q,\db,\tb,\mb,K)$. The support of $y$, denoted by $\supp(h(y))$, is fixed and does not depend on $\xb$. 
    Moreover, the sufficient statistic $T(y)=y$ is not degenerate and can take at least two distinct values in $\supp (h)$.  For any $\xb$, the conditional distribution $p(y | \xb)$ is sub-exponential. Specifically, with bounded $f_0$ there exists a universal constant $\kappa > 0$ such that for all $t > 0$,
\begin{align*}
   \mathbb{P}(|y - \mathbb{E}(y | \xb)| \geq t| \xb) \leq 2 \exp\left(-\frac{t}{\kappa}\right). 
\end{align*}
\end{assumption}
\begin{assumption}
\label{assump:network}
The network class $\cF(L,\pb,s,F)$ with $L$, $\pb$ and $s$ satisfies
i) $F\geq \max(K,1)$,
    ii) $\sum_{i=0}^q\log_2(4t_i\vee 4m_i)\log_2 n\leq L\precsim \log^{\alpha} n$ for some $\alpha>1$,
    iii) $n\phi_n\precsim \min_{i=1,...,L}p_i\leq \max_{i=1,...,L}p_i\precsim n$,
    iv) $s\precsim n\phi_n\log n$.
Here, $\phi_n=\max_{j=0,...,q} n^{-2m_j^*/(2m_j^*+t_j)}$, where $m_j^*=m_j\Pi_{l=i+1}^{q}(m_l\wedge 1 )$. The values  $t_i,m_i$ and $p_i$ are the $(i+1)$-th element in the vectors $\tb$, $\mb$ and $\pb$, respectively. 
\end{assumption}
\begin{assumption}
\label{assump:train}
For each subsample with index $\cI^b$ and  size $r$,  there exists a constant $C>0$ such that the optimization error $\Delta^{\opt}_b$ defined in \eqref{eq:delta_opt} satisfies $\Delta^{\opt}_b\leq  C \phi_r L\log^2r$. 
\end{assumption}
\begin{assumption}
\label{assump:4}
{The covariance $\xi_{1,r}(\xb)$ defined in \eqref{cov1r}  satisfies 
$\inf_{\xb\in \cX}\xi_{1,r}(\xb)\geq Cr^{-(1+\varepsilon)}$ for some $\varepsilon>0$. The  subsample size $r=n^{\gamma}$  satisfies $1/(1+\min_{j=0,...,q}(2m_j^*)/(2m_j^*+t_j))<\gamma<1$, and  
the number of subsampling times $B$ is large such that $B \succsim n$.}  
\end{assumption}
 Assumption~\ref{assump:1}  ensures that $p(y|\xb)$ not  degenerate, while the concentration of $y$ does not deviate too widely.
 Additionally, that $T(y) = y$  takes at least two distinct values in $\supp(h)$ guarantees a non-zero variance of $y$,  ensuring the log-partition function $\psi(f_0(\xb))$ exhibits local convexity.  The boundedness of $f_0$ guarantees the existence of $\kappa$. This condition,  weaker than the sub-Gaussian tail bound assumed in \citet{fan2024noise}, is satisfied by most common exponential family distributions, including Beta, Bernoulli, Poisson, and Gaussian.  
Assumption~\ref{assump:network} is a technical assumption similar to \citet{schmidt2020nonparametric}, which specifies the requirements on the network parameters relative to the sample size $n$ and smoothness parameters $\phi_n$, $t_i$ and $m_i$, including the number of layers $L$ and the sparsity $s$. 
Assumption~\ref{assump:train}, related to  the training dynamics of DNNs, stipulates that DNNs return  an estimator   sufficiently close to the global minimum.  
We note that the rate in Assumption~\ref{assump:train} pertains to the prediction error in Theorem~\ref{thm:bound_Rn}. Our intent is not to assert an explicit convergence rate for SGD in practice, but to impose a sufficient condition that separates optimization error from the inferential analysis. This perspective follows standard practice in the statistical literature \citep{Chernozhukov2018double,schmidt2020nonparametric,fan2024factor}, where such assumptions serve as technical devices to ensure valid inference rather than as primary objects of study. 
 Assumption~\ref{assump:4} on  $\xi_{1,r}(\xb)$ is not meant to serve as a precise characterization of neural networks. Rather, it ensures that the first-order Hoeffding projection remains non-degenerate and dominates the higher-order components, as in the random forest setting \citep{wager2018estimation}.   We choose the subsample size \( r \) much smaller than $n$ to ensure first-order dominance but large enough to limit bias, 
and let the number of subsamples 
$B$ scale with 
$n$ to control computation.


\begin{theorem}
\label{thm:bound_Rn}
    Suppose that $f_0\in \cG(q,\db,\tb,\mb,K)$ and  $\hf_n\in\cF(L,\pb,s,F)$. If Assumptions~\ref{assump:1}-\ref{assump:network} hold, then it holds that
\begin{align*}
\frac{1}{2}\Delta_{n}(\hf_n)-c'\phi_n L\log^2 n\leq R_n(\hf_n,f_0)\leq 2\Delta_{n}(\hf_n)+C'\phi_n L\log^2 n.
\end{align*}
for some constants $c',C'>0$. Here, $\Delta_{n}(\hf_n)$ is defined in \eqref{eq:obtained_estimator}. 
\end{theorem}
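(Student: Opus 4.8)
## Proof Proposal

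The plan is to sandwich the population excess risk $R_n(\hf_n,f_0)$ between multiples of the (expected) empirical deviation $\Delta_n(\hf_n)$ plus an error term governed by the neural-network approximation/complexity rate $\phi_n L\log^2 n$. The natural decomposition is to write, for any $f\in\cF(L,\pb,s,F)$, the difference between population and empirical excess risks as an empirical-process term, and to separately handle the bias of approximating $f_0$ by the best network in $\cF$. Concretely, I would define the empirical excess risk $\hat R_n(f,f_0)=\frac{1}{n}\sum_{i=1}^n\{-y_i f(\xb_i)+\psi(f(\xb_i))+y_i f_0(\xb_i)-\psi(f_0(\xb_i))\}$, note that $\EE[\hat R_n(f,f_0)]=R_n(f,f_0)$ for deterministic $f$ (by the Bregman/KL identity recalled before \eqref{eq:def_Rn}), and that $\frac1n\sum_i\{-y_i\hf_n(\xb_i)+\psi(\hf_n(\xb_i))\}-\frac1n\sum_i\{-y_i f_0(\xb_i)+\psi(f_0(\xb_i))\}$ differs from $\Delta_n(\hf_n)$ only through the appearance of $\hf_n^{\opt}$, which is the empirical minimizer over $\cF$. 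So the key is to relate $R_n(\hf_n,f_0)$ to $\Delta_n(\hf_n)+R_n(\hf_n^{\opt},f_0)$ and then bound $R_n(\hf_n^{\opt},f_0)$ by the oracle inequality for empirical risk minimization over $\cF$.

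The steps, in order: (1) Reduce to the empirical-risk-minimizer analysis: use the definition \eqref{eq:obtained_estimator} to write $\EE[\hat R_n(\hf_n,f_0)] = \EE[\hat R_n(\hf_n^{\opt},f_0)] + \Delta_n(\hf_n)$, so it suffices to (a) control $|R_n(g,f_0)-\EE[\hat R_n(g,f_0)]|$ uniformly over $g\in\cF$ by a multiplicative (Talagrand-type) empirical process bound, and (b) bound $\EE[\hat R_n(\hf_n^{\opt},f_0)]$ via an oracle inequality against $\inf_{f\in\cF}R_n(f,f_0)$, which is in turn $\lesssim\phi_n$ by the Hölder-composition approximation theory (Assumption~\ref{assump:network} and the construction in \citet{schmidt2020nonparametric}, adapted to the GNRM loss). (2) Establish the local curvature / self-bounding property of the GNRM loss: since $\psi$ is convex and, under Assumption~\ref{assump:1}, $\psi''$ is bounded away from $0$ and $\infty$ on the bounded range of $f,f_0$ (guaranteed by $F$-boundedness and the non-degeneracy of $T(y)=y$), the pointwise loss $\ell(\xb;f,f_0)\asymp (f(\xb)-f_0(\xb))^2$; this is what replaces the exact quadratic structure of the Gaussian case and lets the variance of the loss increments be controlled by their mean. (3) Handle the heteroskedastic, sub-exponential noise: the loss increments $-y_i(f-f_0)(\xb_i)+\psi(f(\xb_i))-\psi(f_0(\xb_i))$ contain the unbounded term $y_i(f-f_0)(\xb_i)$; split via truncation at a level $\sim\log n$, apply a bounded-difference/entropy bound (covering numbers of $\cF(L,\pb,s,F)$, which are polynomial in $n$ with exponent controlled by $s$ and $L$) on the truncated part, and control the tail contribution using the sub-exponential bound $\PP(|y-\EE(y|\xb)|\ge t|\xb)\le 2e^{-t/\kappa}$ — this yields only lower-order terms absorbed into $\phi_n L\log^2 n$. (4) Combine: the uniform deviation bound gives $\tfrac12 R_n(g,f_0)-c'\phi_n L\log^2 n\le \EE[\hat R_n(g,f_0)]\le 2R_n(g,f_0)+C'\phi_n L\log^2 n$ (the multiplicative factors $\tfrac12, 2$ coming from the standard "$\epsilon$-mean + $\delta$-variance" localization argument, e.g. via the peeling device); applying this at $g=\hf_n$ and $g=\hf_n^{\opt}$, and using $R_n(\hf_n^{\opt},f_0)\lesssim\phi_n L\log^2 n$ from (1), delivers the two-sided bound.

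The main obstacle is step (3) together with the multiplicative localization in step (2): because the loss is not exactly quadratic and the noise is heteroskedastic with only exponential tails, the standard chaining/Bernstein arguments that give fast rates for sub-Gaussian bounded-loss problems do not apply verbatim. One must (i) verify that the variance of the loss increment is bounded by $C\cdot\EE[\text{increment}]$ up to a term that scales like $(\log n)^2$ times the pointwise $L^2$ distance — this is where the truncation level and the $\kappa$ sub-exponential constant enter — and (ii) run a localized (peeling) empirical-process argument over the shells $\{g\in\cF: 2^k\delta_n\le R_n(g,f_0)\le 2^{k+1}\delta_n\}$ with $\delta_n\asymp\phi_n L\log^2 n$, controlling each shell's supremum via its covering number and the self-bounding variance. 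Getting the $\log^2 n$ power exactly right (rather than a larger polylog) requires care in balancing the truncation level against the entropy integral. The approximation-error half of step (1), by contrast, is essentially a reduction to \citet{schmidt2020nonparametric}'s network construction, since $\inf_{f\in\cF}R_n(f,f_0)\le \tfrac{\sup|\psi''|}{2}\inf_{f\in\cF}\|f-f_0\|_\infty^2$ and the sup-norm approximation rate for $\cG(q,\db,\tb,\mb,K)$ is already $\phi_n$ up to logs under Assumption~\ref{assump:network}.
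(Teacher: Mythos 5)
Your proposal is correct and follows essentially the same route as the paper: a covering-number/Bernstein bound with the self-bounding variance property $\Var(\ell)\precsim\EE\ell$ from the curvature of $\psi$ (the paper's Lemmas~\ref{lemma:R_n-hatR_n} and~\ref{lemma:key_property_psi}), a truncation argument for the heteroskedastic sub-exponential noise term $\frac1n\sum_i\varepsilon_i f(\xb_i)$ (Lemma~\ref{lemma:remain_epsilon_term}), the Schmidt--Hieber sup-norm approximation rate for the bias, and quadratic inequalities to extract the multiplicative constants $\tfrac12$ and $2$. The only cosmetic differences are that the paper localizes via a normalized empirical process rather than dyadic peeling, and works directly with upper/lower bounds on $\hR_n(\hf_n,f_0)$ rather than routing through $R_n(\hf_n^{\opt},f_0)$.
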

Theorem~\ref{thm:bound_Rn} establishes a bound on the out-of-sample estimation error. Unlike previous studies \citep{schmidt2020nonparametric,fan2024factor}, which primarily focus on bounding the in-sample mean squared error, i.e., $\frac{1}{n} \sum_{i=1}^n (y_i - f(\xb_i))^2$,
to derive bounds for the estimation error, our approach applies to a wide class of training losses \eqref{eq:obtained_estimator} under GNRMs.  The  heteroskedasticity inherent in GNRMs necessitates a more refined analysis, and a truncation technique is introduced to address this challenge by separating the noise into different regimes.  
Specifically, 
this result enables the analysis of  non-Gaussian outcomes with heteroskedasticity  depending on covariates, overcoming key limitations of prior methods  \citep{schmidt2020nonparametric, fan2024factor} that rely on the independence of $y_i-\EE(y_i|\xb_i)$ and $f_0(\xb_i)$ to get estimation error bounds.

\begin{proposition}
\label{prop:bias_honest} 
We suppose that   the conditions of Theorem~\ref{thm:bound_Rn} hold, the neural network fitted on subsamples belongs to $\cF(L,\pb,s,F)$,   Assumptions~\ref{assump:network}-\ref{assump:4} hold for each subsample with size $r$, and sparsity $s\asymp r^{1+\delta}\phi_r\log r$ for some constant $\delta>0$.   
Then for a fixed  $\xb_*$, if the pointwise bias is dominated by the best approximation bias, i.e., there exists constant $C(\xb_*)$ only depends on $\xb_*$ such that $\big| \EE\hf^b(\xb_*)-f_0(\xb_*)\big|\precsim C(\xb_*) \inf_{f\in\mathcal{F}}
\bigl|f(\mathbf{x}_*)-f_0(\mathbf{x}_*)\bigr|$, it holds that
$    \sqrt{\frac{n}{r^2\inf_{\xb\in\cX}\xi_{1,r}(\xb)}}\cdot  \big|\EE \hf^B(\xb_*)-f_0(\xb_*)\big|~\to~0$. 
\end{proposition}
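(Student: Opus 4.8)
The plan is to separate the deterministic approximation bias of a subsample estimator from the normalizing rate prescribed by Assumption~\ref{assump:4}, bound each, and then match exponents. Since $\hf^B(\xb_*)=B^{-1}\sum_{j=1}^B\hf^{b_j}(\xb_*)$ and each $\hf^{b_j}(\xb_*)$ is, marginally, distributed as $\hf(\xb_*;\zb_1,\dots,\zb_r)$ by exchangeability of the i.i.d.\ data (the optimization/SGD randomness being independent of the sample), linearity of expectation gives $\EE\hf^B(\xb_*)=\EE\hf^b(\xb_*)$; hence it suffices to control $|\EE\hf^b(\xb_*)-f_0(\xb_*)|$. By the hypothesis of the proposition this is at most $C(\xb_*)\inf_{f\in\cF}|f(\xb_*)-f_0(\xb_*)|\leq C(\xb_*)\inf_{f\in\cF}\|f-f_0\|_\infty$, with $\cF=\cF(L,\pb,s,F)$. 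So the whole problem reduces to (i) an upper bound on the sup-norm approximation error of $\cF$ for the composition-H\"older target $f_0\in\cG(q,\db,\tb,\mb,K)$ under the \emph{enlarged} sparsity $s\asymp r^{1+\delta}\phi_r\log r$, and (ii) a lower bound on $r^2\inf_{\xb\in\cX}\xi_{1,r}(\xb)$.

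For step (i) I would use the constructive network-approximation lemma underlying the upper bound in Theorem~\ref{thm:bound_Rn} (of \citet{schmidt2020nonparametric}-type): for a network of depth $L$ with $\log r\precsim L\precsim\log^{\alpha}r$ and widths as in Assumption~\ref{assump:network}, $s\asymp r\phi_r\log r$ active parameters already furnish $f\in\cF$ with $\|f-f_0\|_\infty^2\precsim\phi_r L\log^2r$. Writing $\rho:=\min_{j}2m^*_j/(2m^*_j+t_j)$ so that $\phi_r=r^{-\rho}$, feeding $r^{\delta}$ times more active parameters into the same construction shrinks the approximation error by a genuine polynomial factor: the bottleneck parameter count needed for sup-norm error $\sqrt{\phi_r}\,r^{-c\delta}$ is of order $r^{1+\delta}\phi_r$, which is still $\succsim r\phi_r$ and, for $\delta\le\rho$, still $\precsim r$, hence compatible with Assumption~\ref{assump:network}. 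This gives $\inf_{f\in\cF}\|f-f_0\|_\infty\precsim r^{-c\delta}\sqrt{\phi_r}\,\mathrm{polylog}(r)$ for a positive constant $c$ depending only on $(\mb,\tb)$ (concretely $c=m^*_{j_\star}/t_{j_\star}$ at the smoothness-limiting index $j_\star$), and therefore $|\EE\hf^b(\xb_*)-f_0(\xb_*)|\precsim C(\xb_*)\,r^{-c\delta}\sqrt{\phi_r}\,\mathrm{polylog}(r)$.

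For step (ii), Assumption~\ref{assump:4} gives $r^2\inf_{\xb\in\cX}\xi_{1,r}(\xb)\succsim r^{1-\varepsilon}$, so $\sqrt{n/\bigl(r^2\inf_{\xb}\xi_{1,r}(\xb)\bigr)}\precsim\sqrt{n}\,r^{-(1-\varepsilon)/2}$. Substituting $r=n^{\gamma}$ and $\phi_r=n^{-\gamma\rho}$ and combining with step (i), the target quantity is at most a $\mathrm{polylog}(n)$ multiple of
\[
\sqrt{n}\;r^{-(1-\varepsilon)/2}\;r^{-c\delta}\;\sqrt{\phi_r}\;=\;n^{\frac12\bigl(1-\gamma(1+\rho-\varepsilon)-2c\delta\gamma\bigr)}.
\]
Since Assumption~\ref{assump:4} enforces $\gamma>1/(1+\rho)$, i.e.\ $\gamma(1+\rho)>1$, the exponent satisfies $1-\gamma(1+\rho-\varepsilon)-2c\delta\gamma<\gamma(\varepsilon-2c\delta)$; in the nondegenerate regime where $\xi_{1,r}\asymp r^{-1}$ up to logarithmic factors the lower bound in Assumption~\ref{assump:4} holds for every $\varepsilon>0$, so one may take $\varepsilon<2c\delta$, making the exponent strictly negative and forcing the product to $0$ (the $\mathrm{polylog}(n)$ prefactor is absorbed). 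Equivalently, for any fixed $\varepsilon$ the same conclusion holds once $\delta$ is chosen with $2c\delta>\varepsilon$, which is compatible with $\delta\le\rho$ and hence with the standing assumptions on $s$.

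The main obstacle is step (i): one must extract from the composition-H\"older approximation machinery a bound that is sub-$\sqrt{\phi_r}$ by a polynomial factor tied precisely to the extra sparsity exponent $\delta$, rather than the usual $O(\sqrt{\phi_r L\log^2r})$, while checking that the required widths still obey Assumption~\ref{assump:network}. Once that is in hand, step (ii) is mere bookkeeping: the slack in $\gamma>1/(1+\rho)$, together with the freedom to shrink $\varepsilon$ (or enlarge $\delta$), is exactly what renders the exponent negative.
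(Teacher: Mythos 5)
Your proposal is correct and follows essentially the same route as the paper: reduce to $|\EE\hf^b(\xb_*)-f_0(\xb_*)|$ via $\EE\hf^B=\EE\hf^b$ and the domination hypothesis, then use the enlarged sparsity $s\asymp r^{1+\delta}\phi_r\log r$ in the Schmidt--Hieber-type construction to get a sup-norm approximation rate $\sqrt{\phi_r}\,r^{-\delta m/t}$ (the paper writes this as $\phi_r^{(1+\delta(t+2m)/t)/2}$, which is the same quantity), and finally compare against $\sqrt{n/(r^2\inf_{\xb}\xi_{1,r}(\xb))}$ using Assumption~\ref{assump:4} and the lower bound on $\gamma$. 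Your exponent bookkeeping in step (ii), including the explicit interplay between $\varepsilon$ and $\delta$, is in fact more careful than the paper's brief sketch, which only invokes ``the lower bound of $\gamma$.''
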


Proposition~\ref{prop:bias_honest}  implies asymptotic unbiasedness, i.e., 
 $\EE \{\hf^B(\xb_*)\} -f_0(\xb_*)=o\Big(\sqrt{\frac{r^2\xi_{1,r}(\xb_*)}{n}} \Big)$  for a fixed $\xb_*$.  Thus, the bias of $\hat{f}^B$ is asymptotically negligible, and enables us to focus on $\hf^B(\xb_*)-\EE \{\hf^B(\xb_*)\}$ and establish the following results.

\begin{theorem}
\label{thm:Inference}
Suppose  the data $\cD_n$ are generated from Model \eqref{eq:model} with an unknown $f_0\in \cG(q,\db,\tb,\mb,K)$, the used neural networks belong to $\cF(L,\pb,s,F)$, and  Assumptions 
\ref{assump:1}--\ref{assump:4} hold (Assumption~\ref{assump:network}  holds for each subsample with size $r$).    Then with the results of Proposition~\ref{prop:bias_honest}, there exists a positive sequence, $\delta_1, \delta_2, \ldots,$ with $\delta_n\to0$ and a corresponding set $\cA_{\delta_n}\subseteq \cX$ with $\PP_{\xb}(\cA_{\delta_n})\geq 1-\delta_n$, such that 
 for any fixed $\xb_{*}\in \cA_{\delta_n}$, it holds that 
\begin{align*}
\sqrt{n}\cdot\frac{\hf^B(\xb_{*})-f_0(\xb_{*})}{\sqrt{r^2\xi_{1,r}(\xb_{*})}}~\cvd~\cN(0,1).
\end{align*}   
\end{theorem}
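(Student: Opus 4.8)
The plan is to decompose the error $\hf^B(\xb_*)-f_0(\xb_*)$ into a bias term, a ``complete'' U-statistic term, and a Monte-Carlo-incompleteness term, and then to show that the middle term drives the asymptotics through a central limit theorem for generalized (infinite-order) U-statistics. First I would write
$\hf^B(\xb_*)-f_0(\xb_*) = \big(\hf^B(\xb_*)-\EE\hf^B(\xb_*)\big) + \big(\EE\hf^B(\xb_*)-f_0(\xb_*)\big)$.
Proposition~\ref{prop:bias_honest} already tells us the second (bias) term is $o\!\big(\sqrt{r^2\xi_{1,r}(\xb_*)/n}\big)$, so after dividing by $\sqrt{r^2\xi_{1,r}(\xb_*)/n}$ it vanishes; this is where Theorem~\ref{thm:bound_Rn} (via the approximation/estimation error bound controlling the magnitude of the individual $\hf^b$) and Assumption~\ref{assump:4} (nondegeneracy $\xi_{1,r}(\xb)\succsim r^{-(1+\varepsilon)}$, and the range of $\gamma$) enter. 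Thus it suffices to prove a CLT for the centered, complete U-statistic $U_n := \binom{n}{r}^{-1}\sum_{b}\big(\hf^b(\xb_*)-\EE\hf^b(\xb_*)\big)$, and then to argue that replacing $U_n$ by its incomplete (Monte Carlo) version $\hf^B(\xb_*)-\EE\hf^B(\xb_*)$ with $B\succsim n$ resamples changes nothing in the limit.

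For the complete U-statistic I would use the Hoeffding decomposition: write $\hf(\xb_*;\zb_1,\dots,\zb_r)-\EE\hf = \sum_{k=1}^r \binom{r}{k}\sum_{|S|=k} g_k(\zb_S)$ where $g_1$ is the first-order projection with $\Var(g_1(\zb_1)) = \xi_{1,r}(\xb_*)$ in the notation of \eqref{cov1r}. The variance of $U_n$ is then $\sum_{k} \binom{r}{k}^2\binom{n}{k}/\binom{n}{r}^2 \cdot \mathrm{Var}(g_k)$, and the leading term is $\frac{r^2}{n}\xi_{1,r}(\xb_*)(1+o(1))$; Assumption~\ref{assump:4}, together with the uniform boundedness $\|\hf\|_\infty\le F$ (which bounds all $\mathrm{Var}(g_k)$) and the scaling $r=n^\gamma$ with $\gamma<1$, guarantees the first-order term dominates, i.e. the U-statistic is non-degenerate and $n\,\mathrm{Var}(U_n)/(r^2\xi_{1,r}(\xb_*))\to 1$. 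One then applies a CLT for infinite-order/growing-degree U-statistics (the Hájek-projection argument: $U_n \approx \frac{r}{n}\sum_{i=1}^n g_1(\zb_i)$ plus a remainder of smaller order, and the linear part is a sum of i.i.d.\ bounded terms to which Lindeberg--Feller applies), exactly in the style of \citet{mentch2016quantifying} and \citet{wager2018estimation}, adapted to the GNRM kernel here. The uniform bound $F$ makes the Lindeberg condition trivial. Showing the Hájek remainder is $o_p(\sqrt{r^2\xi_{1,r}/n})$ is the technical core: one bounds its variance by $\sum_{k\ge2}(\cdots)\mathrm{Var}(g_k)$ and uses $r/n\to0$ and the lower bound on $\xi_{1,r}$ to make the ratio vanish.

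The incomplete-U-statistic step: with $\hf^B(\xb_*) = B^{-1}\sum_{j=1}^B \hf^{b_j}(\xb_*)$ and the $b_j$ drawn i.i.d.\ uniformly, a standard conditional-variance computation gives $\mathrm{Var}(\hf^B(\xb_*)) = \mathrm{Var}(U_n) + \frac{1}{B}\big(\mathrm{Var}(\hf(\xb_*;\zb_{1:r})) - \mathrm{Var}(U_n)\big)$; since $\mathrm{Var}(\hf(\xb_*;\cdot))\le F^2$ and $B\succsim n$ while $\mathrm{Var}(U_n)\asymp r^2\xi_{1,r}/n \succsim n^{2\gamma-1}r^{-\varepsilon}$, the extra Monte Carlo variance is of smaller order precisely under the lower bound $\gamma > 1/(1+\min_j 2m_j^*/(2m_j^*+t_j))$ in Assumption~\ref{assump:4} (this is exactly what that threshold is calibrated to give, accounting for the $r^{-(1+\varepsilon)}$ floor on $\xi_{1,r}$). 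Hence $\hf^B(\xb_*) - \EE\hf^B(\xb_*) = U_n(1+o_p(1))$ after normalization, and the CLT transfers. Finally, the set $\cA_{\delta_n}$ and the ``for any fixed $\xb_*\in\cA_{\delta_n}$'' qualifier arise because the bias-domination hypothesis of Proposition~\ref{prop:bias_honest} and the nondegeneracy of $\xi_{1,r}(\xb_*)$ can only be guaranteed off a covariate set of vanishing probability (the best-approximation bias of the DNN class need not be uniformly negligible near the boundary of $\cX$ or in low-density regions); I would define $\cA_{\delta_n}$ as the set where both conditions hold with the required uniform constants and invoke the approximation theory behind Theorem~\ref{thm:bound_Rn} to show $\PP_\xb(\cA_{\delta_n})\to1$.

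\textbf{Main obstacle.} The hardest part is the Hájek-projection remainder bound: controlling $\sum_{k\ge 2}$ of the higher-order Hoeffding components of a \emph{data-dependent, non-explicit} DNN estimator $\hf^b$, whose kernel we cannot compute, and showing this is asymptotically negligible relative to $r^2\xi_{1,r}(\xb_*)/n$ \emph{uniformly enough} to also survive the incomplete-U-statistic perturbation with only $B\asymp n$ resamples. Assumption~\ref{assump:4}'s two-sided calibration of $\gamma$ is doing the real work here, and verifying that the DNN kernel's higher-order variances are genuinely dominated — rather than merely bounded by $F^2$ — while simultaneously keeping the first-order term alive via the $r^{-(1+\varepsilon)}$ floor, is where the argument is most delicate.
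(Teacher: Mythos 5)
Your proposal is correct and follows essentially the same route as the paper: Chebyshev on the conditional bias to build $\cA_{\delta_n}$, the H\'ajek projection $\ringf^B(\xb_*)=\frac{r}{n}\sum_i g_{1,r}(\zb_i;\xb_*)$ with a Lindeberg CLT for the linear part, a Hoeffding-decomposition variance bound of order $\big(\tfrac{r(r-1)}{n(n-1)}+\tfrac{1}{B}\big)\Var(\hf^b)$ for the combined higher-order and Monte Carlo remainder, and Assumption~\ref{assump:4} to make that remainder $o_p(r^2\xi_{1,r}/n)$. The one point where you overstate the difficulty is your "main obstacle": the paper does not need the higher-order kernel variances to be genuinely dominated term by term — it only uses $\sum_{k\ge 2}\binom{r}{k}V_k\le\Var(\hf^b)\le C$ together with the combinatorial prefactors, and the $r^{-(1+\varepsilon)}$ floor with $\varepsilon$ small then suffices.
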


Theorem~\ref{thm:Inference} establishes the asymptotic normality of the ensembled estimator $\hf^B$ for any fixed \(\xb_*\in \cA_{\delta_n}\), i.e., 
the inference result holds with high probability for almost all \( \xb \in \cX \). Here, Theorem~\ref{thm:Inference} holds by the fact that the first-order projection of $\hf^B(\xb_*)-\EE \{\hf^B(\xb_*)\}$ dominates the other terms. Moreover, as the analytic form of  \(\xi_{1,r}(\xb_{*})\) is unavailable, the  following theorem provides a consistent estimator of it.

\begin{theorem}
\label{thm:estimate_xi1r}
Under the same setting as in Theorem~\ref{thm:Inference} and with  $\hsigma_*^2$ defined in \eqref{eq:def_hsigma_*}, it holds that for  any fixed \( \xb_{*} \in \cX \), 
\begin{align*}
    \sqrt{\frac{r^2\xi_{1,r}(\xb_{*})}{n\hsigma_*^2}}~\cvp~1. 
\end{align*}
\end{theorem}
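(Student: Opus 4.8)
\textbf{Proof proposal for Theorem~\ref{thm:estimate_xi1r}.}

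\textbf{Proof proposal for Theorem~\ref{thm:estimate_xi1r}.}
The target is to show $n\hsigma_*^2/\{r^2\xi_{1,r}(\xb_*)\}\cvp 1$. The plan is to first record, via the Hoeffding decomposition of the generalized $U$-statistic $\hf^{B^*}(\xb_*)$, that under the non-degeneracy bound $\inf_{\xb\in\cX}\xi_{1,r}(\xb)\geq Cr^{-(1+\varepsilon)}$ of Assumption~\ref{assump:4} the first-order component dominates: the variance of $\hf^{B^*}(\xb_*)$ — and hence, since $B\succsim n$, also that of the incomplete statistic $\hf^{B}(\xb_*)$ appearing in Theorem~\ref{thm:Inference} — equals $\{1+o(1)\}\,r^2\xi_{1,r}(\xb_*)/n$, because the higher-order covariance components $\xi_{k,r}(\xb_*)$ ($k\ge 2$) carry smaller combinatorial weights and are uniformly bounded thanks to $\|\hf\|_\infty\le F$. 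It therefore suffices to prove that $\hsigma_*^2$ is ratio-consistent for $r^2\xi_{1,r}(\xb_*)/n$.

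Next I would condition on $\cD_n$ and take expectation over the subsampling draws only. Since the indices $b_1,\dots,b_B$ are drawn independently, and after replacing $J_{\cdot i}$ and $\hf^{B}(\xb_*)$ inside $Z_{b_ji}$ by their limits $r/n$ and $\hf^{B^*}(\xb_*)$ — the induced error being $O_p(B^{-1/2})$ and negligible — the $Z_{b_ji}$ become conditionally i.i.d., so $\EE[\hat V_i^2\mid\cD_n]=(\EE[\hat V_i\mid\cD_n])^2+B^{-1}\operatorname{Var}(Z_{b_1i}\mid\cD_n)$. The bias-correction term in \eqref{eq:def_hsigma_*} is, in $\cD_n$-conditional expectation, exactly $\{n(n-1)/(n-r)^2\}\sum_i B^{-1}\operatorname{Var}(Z_{b_1i}\mid\cD_n)$, because $(B-1)^{-1}\sum_j(Z_{b_ji}-\hat V_i)^2$ is unbiased for $\operatorname{Var}(Z_{b_1i}\mid\cD_n)$; hence the Monte Carlo inflation cancels and $\EE[\hsigma_*^2\mid\cD_n]$ equals the ``ideal'' infinitesimal-jackknife estimate $\{n(n-1)/(n-r)^2\}\sum_{i=1}^n (V_i^*)^2$ up to lower-order corrections, where $V_i^*=\Cov_b(J_{bi},\hf^{b}(\xb_*)\mid\cD_n)=\tfrac{r}{n}\bigl(\bar\hf_i-\hf^{B^*}(\xb_*)\bigr)$ and $\bar\hf_i$ averages $\hf^{b}(\xb_*)$ over the $\binom{n-1}{r-1}$ subsamples containing $i$.

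I would then show the ideal estimate is ratio-consistent for $r^2\xi_{1,r}(\xb_*)/n$. Writing the ANOVA (Hoeffding) decomposition $\bar\hf_i-\hf^{B^*}(\xb_*)=h_{1,i}+\rho_i$, with $h_{1,i}$ the first-order projection term (a function of $\zb_i$ alone whose variance tends to $\xi_{1,r}(\xb_*)$) and $\rho_i$ the higher-order remainder, boundedness $|\hf|\le F$ gives $|h_{1,i}|\le 2F$, so the law of large numbers yields $n^{-1}\sum_i h_{1,i}^2/\xi_{1,r}(\xb_*)\cvp 1$, while $n^{-1}\sum_i\rho_i^2=o_p(\xi_{1,r}(\xb_*))$ since its mean square is controlled by the higher-order $\xi_{k,r}(\xb_*)$ against vanishing weights, using Assumption~\ref{assump:4}. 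As $r/n\to0$ forces $n(n-1)/(n-r)^2\to1$, the ideal estimate converges in ratio to $r^2\xi_{1,r}(\xb_*)/n$. Finally I would bound the residual Monte Carlo fluctuation $\hsigma_*^2-\EE[\hsigma_*^2\mid\cD_n]$: both $\hsigma^2$ and the correction term are $B$- (resp.\ $B(B-1)$-) averages of quantities bounded by $O(F^2)$, so a second-moment computation over the subsampling draws, together with $B\succsim n$ and the floor on $\xi_{1,r}(\xb_*)$, shows this fluctuation is $o_p(r^2\xi_{1,r}(\xb_*)/n)$; chaining the three approximations gives the claim for any fixed $\xb_*\in\cX$.

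The main obstacle is the pair of negligibility arguments at the end — showing that the higher-order ANOVA remainder in $\sum_i(V_i^*)^2$ and the Monte Carlo noise are small \emph{relative to} the target $r^2\xi_{1,r}(\xb_*)/n$, which can itself tend to zero — since these require using the rate conditions of Assumption~\ref{assump:4} (the $r^{-(1+\varepsilon)}$ floor on $\xi_{1,r}$, the admissible range of $\gamma$, and $B\succsim n$) together with the uniform bound $\|\hf\|_\infty\le F$ in a genuinely quantitative way, and also require the careful replacement of $J_{\cdot i}$ and $\hf^{B}(\xb_*)$ by their limits so that the conditional-i.i.d.\ structure of the $Z_{b_ji}$ can be exploited.
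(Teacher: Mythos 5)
Your proposal follows essentially the same route as the paper's proof: you identify the $\cD_n$-conditional expectation of $Z_{b_ji}$ (your $V_i^*$ is the paper's $M_i$), show that the bias-correction term cancels the Monte Carlo inflation $B^{-1}\Var_b(Z_{b_ji})$ of $\sum_i\hat V_i^2$, and establish ratio-consistency of the ideal quantity $\frac{n(n-1)}{(n-r)^2}\sum_i M_i^2$ by isolating the first-order Hoeffding projection (whose squared terms obey a law of large numbers with limit $\xi_{1,r}(\xb_*)$) and showing the higher-order remainder and the residual subsampling fluctuations are $o_p(r^2\xi_{1,r}(\xb_*)/n)$. The one step you flag as the main obstacle — quantifying the error from replacing $J_{\cdot i}$ and $\hf^{B}(\xb_*)$ by their limits — is precisely where the paper does its explicit $\Delta_1,\Delta_2,\Delta_3$ bookkeeping (the dominant term is $O_p(r^{3/2}/(n\sqrt{B}))$, whose negligibility uses $B\succsim n$ together with the $r^{-(1+\varepsilon)}$ floor on $\xi_{1,r}$), so your outline matches the paper's argument in both structure and the ingredients needed to close it.
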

Compared to \citet{wager2018estimation} and \citet{fei2024u}, since we allow the number of subsamples $B$ to grow at the same rate as the sample size $n$, the Monte Carlo bias in the variance estimation becomes non-negligible. As pointed out by \citet{wager2014confidence}, this bias is typically of order $n/B$, and does not vanish when $B \asymp n$.
Different from the correction proposed in \citet{wager2014confidence}, which assumes near asymptotic independence between the inclusion indicators $J_{b i}$ and the estimates $\hat{f}^b$, we propose a bias-correction term that directly accounts for the dependency between subsample structure and estimates. Specifically, we utilize the covariance form of the variance estimator and subtract a U-statistic–based empirical correction that estimates the within-subject variability due to repeated subsampling. This yields a debiased estimator that is asymptotically unbiased even when $B  \asymp n$.    For the confidence interval in \eqref{eq:CI},  Theorem~\ref{thm:estimate_xi1r} and the Slutsky theorem ensure that, as $ n \to \infty $, it holds that
\begin{align*}
    \PP\Big(\EE(y_{\new}|\xb_{*})\in \Big[\hat{L}(\xb_{*}),\hat{U}(\xb_{*})  \Big] \Big)\to 1-\alpha,
\end{align*}
where $
    \hat{L}(\xb_{*})=\psi'\Big(\hf^B(\xb_{*})- z_{1-\alpha/2}\hsigma_* \Big), \quad \hat{U}(\xb_{*})=\psi'\Big(\hf^B(\xb_{*})+z_{1-\alpha/2}\hsigma_* \Big), $ and $ z_{1-\alpha/2} $ is the $(1-\alpha/2)$-th quantile of the standard normal distribution.   In practice, we recommend using this back-transformed interval, as it preserves the parameter’s natural range.

\noindent{\em Remark.}    Our inference framework relies on a well-trained network; when training is insufficient or overly regularized, inference may become unreliable. For analytical tractability, we adopt an idealized $s$-sparse, $F$-bounded network class \citep{schmidt2020nonparametric}, while in practice, regularization techniques such as weight decay and dropout promote approximate sparsity, keeping the trained model close to this class. The optimization gap $\Delta_n(\hat{f}_n)$ in Theorem~\ref{thm:bound_Rn} quantifies the discrepancy between empirical and population risks, filters out degenerate training cases, and accounts for residual optimization and approximation errors. Our theory demonstrates that as long as $\Delta_n(\hat{f}_n)$ remains small, inference validity is preserved, consistent with \citet{Chernozhukov2018double,fan2024factor}.

\section{Numerical Experiments}
\label{sec:simulations}

We conduct simulations under logistic and Poisson generalized nonparametric regression models to evaluate the performance of the ensemble subsampling method (ESM) in point estimation, variance estimation ($\hat{\sigma}_*$), coverage, and interval length. For comparison, we include several alternative methods capable of producing confidence intervals, including HulC \citep{kuchibhotla2024hulc}, Bayesian Neural Networks \citep{sun2021sparse}, Ensemble Methods \citep{lakshminarayanan2017simple}, and the naive bootstrap. Additional experiments are presented in the Supplementary Material, namely,  Section~\ref{sec:simu_binomial} (binomial regression), Section~\ref{sec:simu_compareRF} (comparison with random forests \citep{breiman2001random}), Section~\ref{sec:simu_kernel} (kernel regression), Section~\ref{sec:simu_additionalcheck} (varying covariate dimensions and nonlinearities), and Sections~\ref{subsec:optimization}--\ref{subsec:simu_sensitive_dropout} (sensitivity analyses for optimization parameters, subsample size $r$, number of subsamples $B$, network depth $L$, and dropout rate).


\begin{table}[htbp]
\renewcommand{\arraystretch}{0.5}
\centering
\caption{Simulation results for different regressions under varying sample sizes $n$ and subsampling ratios $r$ using the following metrics:  absolute {\(\text{Bias}_f\)} and {\(\text{MAE}_f\)} are, respectively,  the average and mean absolute  bias between \( \widehat{f}_s^B \) and \( f_0 \);  {\(\text{Bias}_{\psi'}\)} and {\(\text{MAE}_{\psi'}\)}  are, respectively,   the average and mean absolute  bias in the transformed form \( \psi'(\widehat{f}_s^B) \) and \( \psi'(f_0) \);   {EmpSD} denotes the empirical standard deviation of \( \hat{f}_s^B(\mathbf{x}_{\text{test},i}) \) computed across the 300 repetitions and then averaged over the 80 test points;  {SE} and \( \text{SE}_c \) denote the average standard errors without and with bias correction, respectively, averaged over the 80 test points;  {CP} is the average, across test points, of the coverage probability that the 95\% confidence interval \eqref{eq:approx_hfB} contains the truth over 300 repetitions, and {AIL} is the mean length of these intervals.  Numbers in brackets indicate the standard deviation of each metric across test points.}
\label{table:simu_basic}
\setlength{\tabcolsep}{2.7pt} 
\begin{tabular}{rccccccccc}
\hline
\multicolumn{1}{l}{} & $\text{Bias}_f$  & $\text{MAE}_f$ & $\text{Bias}_{\psi'}$  &  $\text{MAE}_{\psi'}$   & EmpSD & SE &$\text{SE}_c$  & CP     & AIL  \\ \hline
\multicolumn{1}{l}{} & \multicolumn{7}{c}{Logistic Model}          \\ \hline
$n=400,r=n^{0.8}$  & 0.11(0.72) & 0.57(0.44) &0.02(0.14) &0.11(0.08) & 0.65 & 0.78 & 0.66 &91.7\% & 0.46(0.13)                   \\ 
$r=n^{0.85}$ & 0.06(0.59) & 0.47(0.36) &0.01(0.12) & 0.10(0.07) &0.55 & 0.68 &0.58& 94.0\% & 0.43(0.11)                  \\ 
$r=n^{0.9}$ & 0.07(0.50) & 0.40(0.31) &0.01(0.10) & 0.08(0.06) & 0.48 & 0.59 & 0.50 & 94.6\% & 0.39(0.09)    \\ 
$n=700,r=n^{0.8}$ & 0.08(0.41)& 0.33(0.26) & 0.02(0.09) & 0.07(0.05) & 0.39 & 0.51 & 0.39 & 93.6\% & 0.31(0.08)  \\
$r=n^{0.85}$ & 0.05(0.36) & 0.28(0.22) & 0.01(0.08) & 0.06(0.05) & 0.34 & 0.44 & 0.34 & 93.9\% &0.28(0.06)    \\
$r=n^{0.9}$  & 0.06(0.33) & 0.26(0.21) & 0.01(0.07) & 0.06(0.05) & 0.31   & 0.40 & 0.31 & 92.9\% & 0.26(0.06)          \\
\hline
\multicolumn{1}{l}{} & \multicolumn{7}{c}{Poisson Model}          \\ \hline
$n=400,r=n^{0.8}$  & -0.32(0.38) & 0.38(0.31) & -0.16(0.23) & 0.22(0.18) & 0.35 & 0.43 & 0.36 & 88.6\% & 0.86(0.45)                \\ 
$r=n^{0.85}$  & -0.23(0.36) & 0.33(0.27) & -0.12(0.25) & 0.21(0.18) & 0.34 & 0.43 & 0.36 & 91.9\% & 0.95(0.52)            \\ 
$r=n^{0.9}$  & -0.14(0.34) & 0.29(0.24) & -0.06(0.25) & 0.19(0.18) & 0.33 & 0.42 & 0.35 & {94.3\%} & 1.02(0.59)             \\
$n=700,r=n^{0.8}$ & -0.17(0.26) & 0.25(0.20)& -0.10(0.19) & 0.16(0.14) & 0.25 & 0.34 & 0.26 & 90.7\% & 0.67(0.34)       \\ 
$r=n^{0.85}$  & -0.10(0.26) & 0.21(0.17) & -0.06(0.19) & 0.15(0.14) & 0.24 & 0.33 & 0.25 & 93.2\% & 0.71(0.38)         \\ 
$r=n^{0.9}$  & -0.04(0.24) & 0.19(0.15) & -0.02(0.19) & 0.14(0.13) & 0.23 & 0.32 & 0.24 & {94.5\%} & 0.72(0.38)       \\ \hline
\end{tabular}
\end{table}
We present our basic settings.    Define 
\(
    g(\xb)=x_1+ 0.25x_2^2 + 0.1\arctan\big(0.5x_3-0.3\big).
\)
Here, $x_j$ is the $j$-th element in the vector $\xb$. We   generate $\xb_i$ from  $\cN(\mathbf{0},\Ib_p)$ with $p=10$, i.e., the first 3 covariates are signals and the rest are noise variables.  For each $i=1, \ldots, n$ and given $\xb_i $, we independently simulate $y_i$ under   the following  models. 

\noindent (logistic model)   
\(
    \PP(y_i=1|\xb_i)=\frac{1}{1+\exp\{-g(\xb_i)\}}, 
\) with $f_0(\xb)=g(\xb)$ and $\psi'(x)=(1+\exp(-x))^{-1}$.

\noindent (Poisson model) With $ k  \in \{0,1,2,\ldots\}$,    \(
    \PP(y_i=k|\xb_i)=\frac{e^{-\lambda(\xb_i)}(\lambda(\xb_i))^k}{k!}, 
\)
where we set $\lambda(\xb_i)=\log(1+e^{g(\xb_i)})$ to ensure that $\lambda(\xb_i)>0$. In this model, we have $f_0(\xb_i)=\log\{ \lambda(\xb_i)\}$ and $\psi'(x)=\exp(x)$. 
For evaluation, we generate a  total of 80 independent test points 
\( \xb_{\text{test},i} \sim \cN(\mathbf{0}, \Ib_{10}), \)  \( i = 1, \dots, 80 \), 
which remain fixed throughout the following experiment. We then apply \eqref{eq:approx_hfB} and \eqref{eq:def_hsigma_*} to obtain the point estimate \( \hat{f}^B(\xb_{\text{test},i}) \) 
and its associated variance estimate \( \hat{\sigma}_{*}(\xb_{\text{test},i}) \). 
To assess the accuracy of the variance estimates and the empirical coverage of the confidence intervals \eqref{eq:CI}, 
we repeat the entire procedure 300 times, each time independently regenerating the training data. 
This yields replicated estimates \( \hat{f}^B_s(\xb_{\text{test},i}) \) and \( \hat{\sigma}_{*,s}(\xb_{\text{test},i}) \) 
for \( s = 1, \ldots, 300 \). Final evaluation is based on the average performance across the fixed test set, 
using the metrics summarized in Table~\ref{table:simu_basic}.

\begin{figure}[t]
    \centering
    \subfloat[Point Estimates and Variations]{\includegraphics[width=0.48\textwidth]{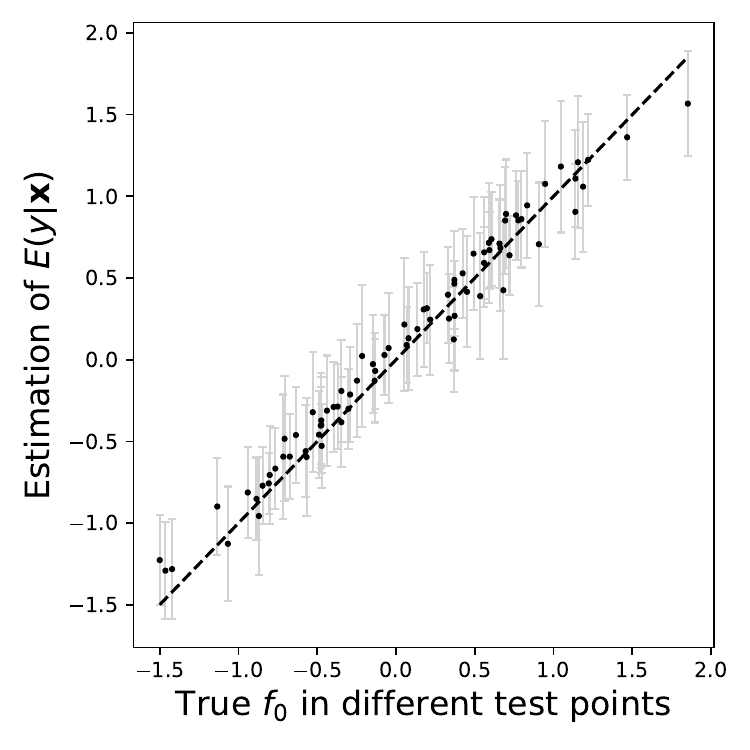}\label{fig_simu:part1CI}}
    \subfloat[Estimated Standard Errors]{\includegraphics[width=0.48\textwidth]{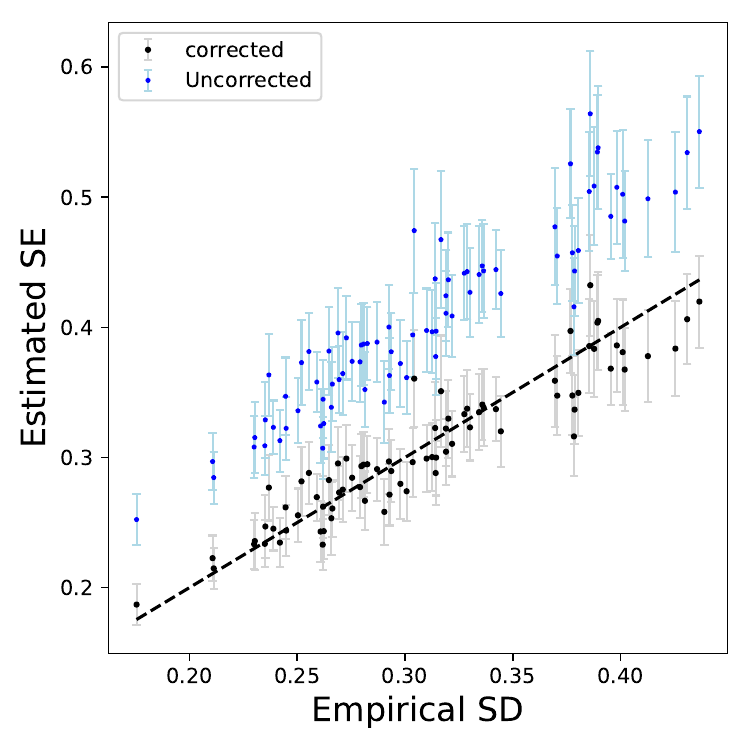}\label{fig_simu:part1SD}}
    \caption{Estimation and inference in simulation samples:  Logistic Model with $n=700$,  $r=n^{0.9}$ and $B=1400$. 
    Figure~\ref{fig_simu:part1CI} shows the average estimated $\EE(y|\xb)$ with variability across 300 repetitions (gray band) over test points. Figure~\ref{fig_simu:part1SD} compares corrected and uncorrected standard errors and their variability (gray and blue bands, respectively) to the empirical standard deviations of estimates across all test samples.}
        \label{fig_simu:part1}
\end{figure}

 In the experiment, we randomly select sub-datasets of size \( r \) to train neural networks, varying \( r \) from \( n^{0.8} \) to \( n^{0.9} \), with \( n = 400 \) and \( 700 \). These choices of $r$ are based on our experiments as they provide enough data for neural network convergence while remaining asymptotically smaller than the total sample size $n$. 
 To ensure computational feasibility, we set \( B = 1400 \)  and use a three-layer DNN with architecture \( \pb = (p, 128, 64, 1) \),  a learning rate of 0.1, and ReLU activation. 
 We apply early stopping (500 epochs), a 10\% dropout rate, $\ell_2$ weight decay with rate 0.02, and output truncation ($F = 3$) to effectively control the sparsity $s$ and boundedness $F$ in the function class  defined in~\eqref{eq:class_F2}, aligning the implemented network with theoretical assumptions.

Table~\ref{table:simu_basic} shows that increasing the sample size \( n \) reduces biases with decreased mean absolute errors (MAE) for both \( f \) and its transformed counterpart \( \psi' \). Figure~\ref{fig_simu:part1CI} displays the average estimated $\mathbb{E}(y|\xb)$ and its variation across 300 repetitions for various test points, in the Bernoulli case with $n = 700$ and $r = n^{0.9}$. The estimated means closely align with the true values.  To further assess the quality of uncertainty quantification, we also examine the empirical standard deviation across repetitions. The corrected standard errors \( \text{SE}_c \) align much more closely with the empirical standard deviations ({EmpSD}) than the uncorrected standard errors ({SE}), confirming the effectiveness of the correction term introduced in \eqref{eq:def_hsigma_*}. This is further illustrated in Figure~\ref{fig_simu:part1SD}, where the variance estimator with bias correction closely matches the empirical standard deviation, while the uncorrected version overestimates the uncertainty. As a result, our method  achieves coverage probabilities (CP) close to the nominal level.

We present the comparison results in Table~\ref{table:comparsion_methods}. For  ESM, we select our results with $\gamma=0.9$. 
For HulC method proposed by \citet{kuchibhotla2024hulc}, although the coverage probability is close to the nominal level of $95\%$, its average interval length is very wide. For  Bayesian neural networks \citep{sun2021sparse}, we observe that the performance is highly sensitive to the tuning of hyperparameters. Due to the difficulty of selecting appropriate priors and optimization settings, the coverage probabilities and average interval lengths in our experiments are worse than those of the proposed ESM, with noticeably wider intervals and less stable coverage. For the ensemble methods \citep{lakshminarayanan2017simple} and the naive bootstrap approach,   the intervals are wider than those of ESM. As the sample size $n$ increases, and the coverage probability tends to exceed the nominal $95\%$. \xr{Overall, ESM achieves competitive coverage performance with narrowest intervals,  although it may not always provide   coverage closest to the nominal level among the compared methods.} \xr{
So, in some cases, using estimators without bias correction or using bootstrap procedures may also have merit, as the presence of bias-correction terms in finite samples can lead to slight undercoverage. }
\begin{table}[t]
\renewcommand{\arraystretch}{0.5}
\centering
\caption{Results of comparison methods.}
\label{table:comparsion_methods}
\setlength{\tabcolsep}{2.7pt} 
\begin{tabular}{cccccccccc}
\hline
\multicolumn{1}{l}{} & $\text{Bias}_f$  & $\text{MAE}_f$ & $\text{Bias}_{\psi'}$  &  $\text{MAE}_{\psi'}$   & EmpSD & SE &$\text{SE}_c$  & CP     & AIL  \\ \hline
\multicolumn{1}{l}{} & \multicolumn{7}{c}{Logistic Model, $n=400$}          \\ \hline
HulC & - & - & - & - & - & -& - & 94.8\% & 0.71(0.21)        \\ 
BNN  & 0.06(0.61) & 0.50(0.35) & 0.02(0.14) & 0.12(0.08) & 0.27 & - & - & 97.8\% & 0.68(0.13)      \\ 
Ensemble & 0.06(0.84) & 0.67(0.52) & 0.01(0.16) & 0.13(0.09) & 0.80 & 0.90 & - & 95.5\% & 0.59(0.15)    \\ 
Naive bp & 0.05(0.81) & 0.64(0.50) & 0.01(0.16) & 0.13(0.09) & 0.77 & 0.85 & - & 95.4\% & 0.57(0.15)  \\ 
\textbf{ESM} & 0.07(0.50) & 0.40(0.31) &0.01(0.10) & 0.08(0.06) & 0.48 & 0.59 & 0.50 & \textbf{94.6\%} & \textbf{0.39(0.09)}    \\ 
\hline
\multicolumn{1}{l}{} & \multicolumn{7}{c}{Logistic Model, $n=700$}          \\ \hline
HulC   & - & - & - & - & - & -& - & 94.9\% & 0.67(0.21)              \\ 
BNN & 0.01(0.64) & 0.55(0.34) & 0.00(0.15) & 0.13(0.07) & 0.11 & - & - & 65.7\% & 0.31(0.10)      \\ 
Ensemble  & 0.04(0.48) & 0.38(0.30) & 0.01(0.10) & 0.08(0.06) & 0.47 & 0.53 & - & 96.8\% & 0.41(0.10)   \\ 
Naive bp  & 0.03(0.48) & 0.37(0.29) & 0.01(0.10) & 0.08(0.06) & 0.46 & 0.52 & - & 96.7\% & 0.40(0.09) \\ 
\textbf{ESM}  & 0.06(0.33) & 0.26(0.21) & 0.01(0.07) & 0.06(0.05) & 0.31   & 0.40 & 0.31 & 92.9\% & \textbf{0.26(0.06)}    \\ 
\hline
\multicolumn{1}{l}{} & \multicolumn{7}{c}{Poisson Model, $n=400$}          \\ \hline
HulC  & - & - & - & - & - & -& - & 90.0\% & 1.40(0.89)               \\ 
BNN & 0.15(0.49) & 0.41(0.30) & 0.03(0.36) & 0.30(0.21) & 0.20 & - & - & 86.3\% & 1.55(1.18)       \\ 
Ensemble & -0.22(0.44) & 0.37(0.31) & -0.09(0.30) & 0.24(0.21) & 0.42 & 0.50 & - & 96.0\% & 1.33(0.71)    \\ 
Naive bp & -0.21(0.44) & 0.37(0.31)  & -0.08(0.30) & 0.24(0.21)  & 0.42 & 0.48 & - & 95.3\% & 1.28(0.69)  \\ 
\textbf{ESM} & -0.14(0.34) & 0.29(0.24) & -0.06(0.25) & 0.19(0.18) & 0.33 & 0.42 & 0.35 & \textbf{94.3\%} & \textbf{1.02(0.59)}       \\ 
\hline
\multicolumn{1}{l}{} & \multicolumn{7}{c}{Poisson Model, $n=700$}          \\ \hline
HulC   & - & - & - & - & - & -& - & 92.8\% & 1.37(0.84)       \\ 
BNN   & 0.26(0.43) & 0.40(0.31)  & 0.12(0.31) & 0.28(0.18) & 0.09 & - & - & 73.2\% & 0.92(0.34)    \\ 
Ensemble & -0.10(0.30) & 0.25(0.20) & -0.04(0.23) & 0.17(0.16) & 0.29 & 0.37 & - & 97.9\% & 1.06(0.56) \\ 
Naive bp & -0.10(0.31)  & 0.25(0.20) & -0.04(0.24) & 0.18(0.16) & 0.30 & 0.35 & - & 97.1\% & 1.03(0.54)   \\ 
\textbf{ESM}  & -0.04(0.24) & 0.19(0.15) & -0.02(0.19) & 0.14(0.13) & 0.23 & 0.32 & 0.24 & \textbf{94.5\%} & \textbf{0.72(0.38)}    \\ 
\hline
\end{tabular}
\end{table}

\begin{table}[t]
\setlength{\tabcolsep}{2.7pt}
\renewcommand{\arraystretch}{0.5}
\centering
\caption{Simulation results for different models under varying sample sizes $n$ and subsampling ratios $r$ with different DNNs.}
\label{table:simu_changeNN}
\begin{tabular}{rccccccccc}
\hline
\multicolumn{1}{l}{} & $\text{Bias}_f$  & $\text{MAE}_f$ & $\text{Bias}_{\psi'}$  &  $\text{MAE}_{\psi'}$   & EmpSD & SE &$\text{SE}_c$  & CP     & AIL  \\ \hline
\multicolumn{1}{l}{} & \multicolumn{7}{c}{Logistic Model}          \\ \hline
$n=400,r=n^{0.8}$ & 0.20(0.51) & 0.44(0.33) & 0.04(0.11) & 0.10(0.07) & 0.49 & 0.80 & 0.51 & 90.2\% & 0.40(0.11)       \\ 
$r=n^{0.85}$  & 0.20(0.52)& 0.44(0.34) & 0.04(0.11) & 0.09(0.07) & 0.50 & 0.86 & 0.52 & 90.2\% & 0.41(0.11)                 \\ 
$r=n^{0.9}$  & 0.19(0.53) & 0.45(0.34) & 0.04(0.11) & 0.10(0.07) & 0.51 & 0.95 & 0.53 & 91.0\% & 0.41(0.11)   \\      
$n=700,r=n^{0.8}$ & 0.16(0.39) & 0.34(0.26) & 0.04(0.09) & 0.08(0.06) & 0.37 & 0.84 & 0.38  & 90.5\% & 0.31(0.08)   \\
$r=n^{0.85}$ & 0.20(0.39) & 0.35(0.27)  & 0.04(0.09) & 0.08(0.06) & 0.37 & 0.78 & 0.38 & 91.1\% & 0.31(0.08)   \\
$r=n^{0.9}$ & 0.16(0.39) &  0.34(0.26) &  0.04(0.09) &  0.07(0.06)  & 0.37 & 0.84 & 0.38 & 91.4\%  & 0.31(0.08)    \\  
\hline
\multicolumn{1}{l}{} & \multicolumn{7}{c}{Poisson Model}          \\ \hline
$n=400,r=n^{0.8}$  & -0.35(0.38) & 0.38(0.28) & -0.19(0.21) & 0.23(0.18) & 0.30 & 0.50 & 0.31 & 86.1\% & 0.70(0.35)         \\ 
$r=n^{0.85}$ & -0.26(0.33) & 0.33(0.26) & -0.14(0.21) & 0.20(0.16) & 0.31 & 0.55 & 0.32 & 90.0\% & 0.78(0.39)     \\ 
$r=n^{0.9}$ & -0.17(0.32) & 0.28(0.23) & -0.08(0.23) & 0.18(0.16) & 0.31 & 0.56 &  0.32 & 91.8\% & 0.86(0.45)       \\ 
$n=700,r=n^{0.8}$ & -0.22(0.24) & 0.26(0.20) & -0.13(0.17) & 0.16(0.14) & 0.23 & 0.47 & 0.23 & 88.0\% & 0.57(0.27)       \\ 
$r=n^{0.85}$ & -0.13(0.25) & 0.21(0.18) & -0.07(0.18) & 0.14(0.13) & 0.23 & 0.51 & 0.23 & 92.3\% & 0.63(0.29)     \\ 
$r=n^{0.9}$ & -0.05(0.25) & 0.20(0.16) & -0.02(0.19) & 0.14(0.13) & 0.23 & 0.56 & 0.24 & 93.4\% & 0.68(0.34)  \\ 
\hline
\end{tabular}
\end{table}
We also investigate  the performance of  narrower yet deeper architectures  under our method. We design a deeper network with \( L=5 \) and \( \pb=(p,10,15,20,30,1) \). As deep networks may cause gradient explosion or vanishing during backpropagation \citep{glorot2010understanding},  we incorporate batch normalization layers after the second and second-to-last layers,  aiming to mitigate gradient instability while maintaining the network's ability to learn complex representations.  
Table~\ref{table:simu_changeNN} shows that key metrics, including empSD, SE, and $\text{SE}_c$, remain largely consistent with those observed for shallower network architectures.
Nevertheless, coverage probabilities drop to around 91\%.    Figure~\ref{fig_simu:part3} presents representative results for $n = 400$ and $r = n^{0.8}$. As shown in Figure~\ref{fig_simu:part3CI}, the fitted values exhibit a slight systematic deviation from the diagonal, indicating bias relative to the true function $f_0$. Nevertheless, Figure~\ref{fig_simu:part3SD} confirms that the corrected variance estimates remain accurate. This bias shifts the confidence intervals, leading to reduced coverage despite precise variance estimation.
 Similar bias-related undercoverage has been reported for random forests in \citet{wager2018estimation}.    In our case, greater network depth may induce complex interactions that, without sufficient  data, amplify estimation bias.
\begin{figure}[t]
    \centering
    \subfloat[Point Estimates with  Intervals]{\includegraphics[width=0.48\textwidth]{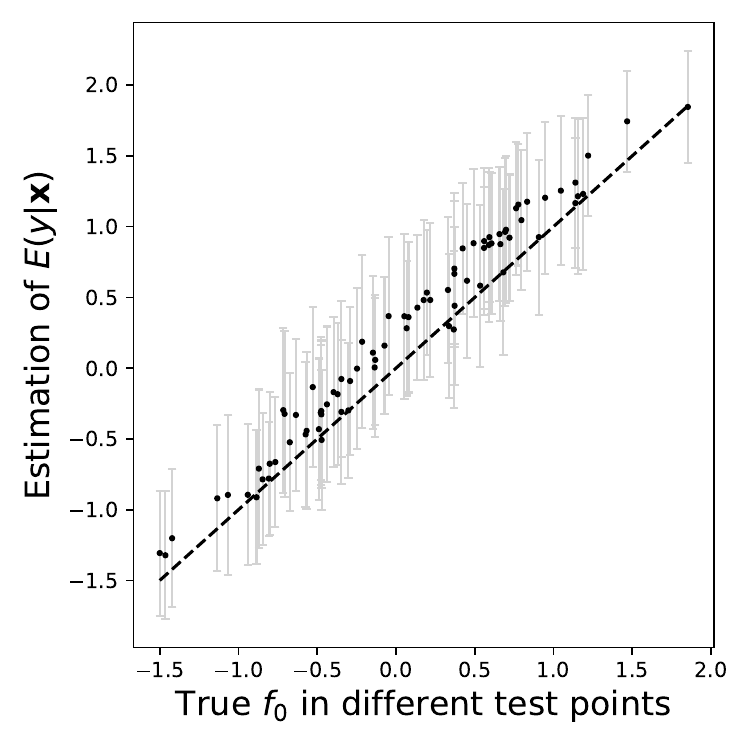}\label{fig_simu:part3CI}}
    \subfloat[Estimated Standard Errors and Variations]{\includegraphics[width=0.48\textwidth]{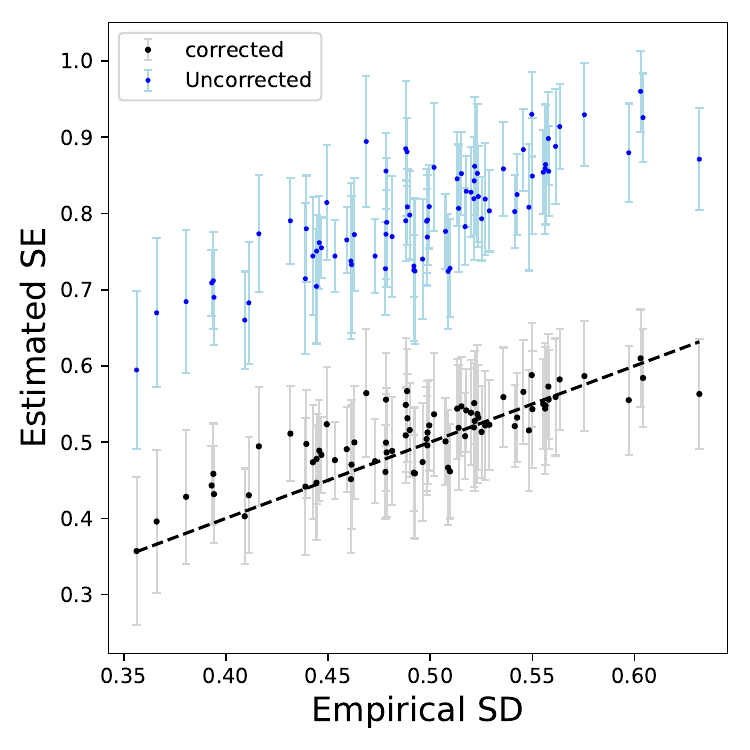}\label{fig_simu:part3SD}}
    \caption{Estimation and inference in simulation samples:  Logistic Model with $n=400$ and $r=n^{0.8}$ under a deeper network.      Figure~\ref{fig_simu:part3CI} displays the average estimated $\EE(y|\xb)$ and its deviation (gray band) across 300 runs at various test points. Figure~\ref{fig_simu:part3SD} compares corrected and uncorrected SEs and their deviations (gray and blue bands) across 300 runs with empirical SDs across test samples.}
        \label{fig_simu:part3}
\end{figure}

\section{Analysis of eICU data}
\label{sec:realdata}
Preventing ICU readmissions is vital as readmitted patients face higher mortality risks and increased expenses.  Identifying high-risk individuals enables more personalized care and optimized resource use. The eICU Collaborative Research Database (eICU-CRD) \citep{pollard2018eICU}  is  a large-scale, multi-center collection of anonymized ICU patient data, created through collaboration between the MIT Laboratory for Computational Physiology and healthcare organizations. It includes detailed information on patients' physiological measurements, clinical data, lab results, medication usage, and outcomes during their ICU stays. Our analysis focuses on  Hospital \#188, a major hospital with 2,632 patients admitted to ICU during 2014--2015. We include ten covariates in our analysis, selected for their clinical relevance and frequent use in prior predictive modeling studies.  The features include six lab tests (Hct, chloride, WBC, Hgb, RBC, glucose), patient age, Emergency Department admission status, and two APACHE-based severity indicators: anticipated  hospital stay and mortality likelihood.
 Among the 2,632 patients admitted to the ICU, 1,476 were not readmitted, 1,046 were readmitted once, 77 were readmitted twice, 26 were readmitted three times, and 7 were readmitted five or more times.
 We apply nonparametric logistic and Poisson models within our ESM framework to estimate, respectively, the probability of ICU readmission and   total number of ICU readmissions using the laboratory and  demographic information. We validate our results using 5-fold cross-validation. The dataset is split into five equal-sized folds;  the model is trained on four folds ($n=2105$), and the ESM method is applied to estimate the subject-specific means and confidence intervals for each subject in the held-out fold, using only covariates while withholding  true outcomes. Repeating this process across all folds yields point estimates and inference intervals for all observations.

\begin{figure}[htbp]
    \centering
   \subfloat[ROC curve in DNNs]{\includegraphics[width=0.48\textwidth]{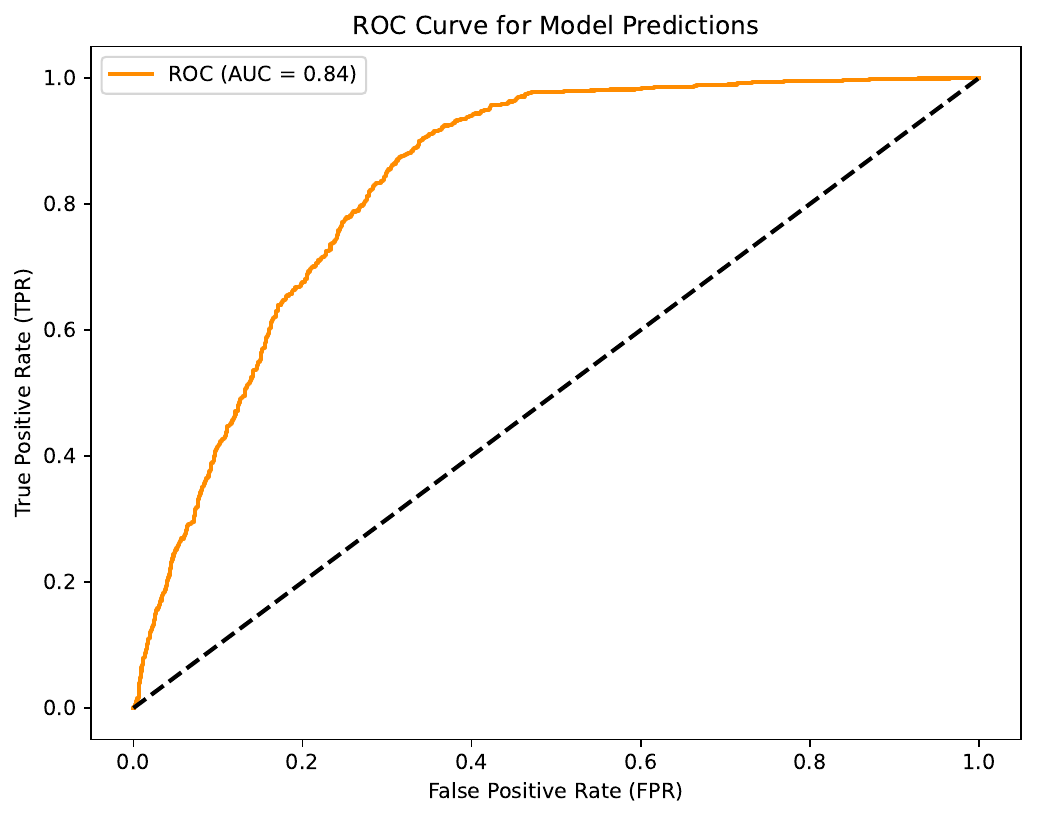}\label{fig_real:rocNNlog}}
    \subfloat[Individual estimates and CIs in DNNs]{\includegraphics[width=0.48\textwidth]{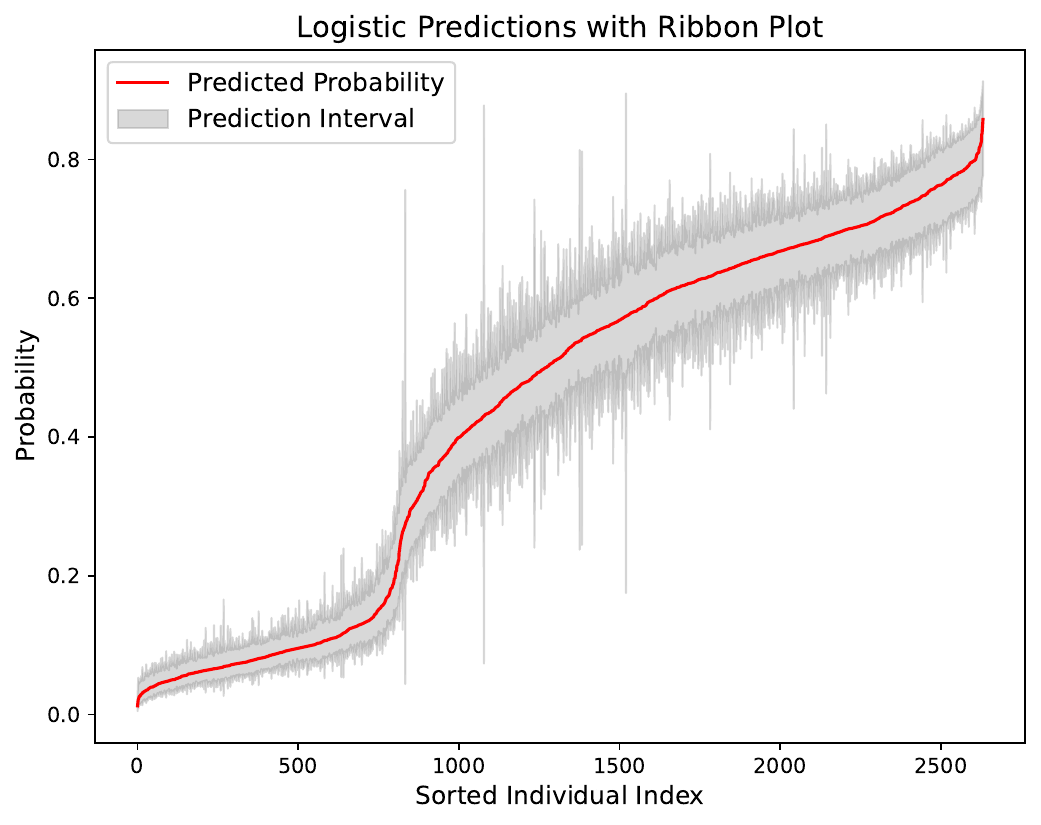}\label{fig_real:calibrationLogisticNN}}

    \subfloat[ROC curve in RFs]{\includegraphics[width=0.48\textwidth]{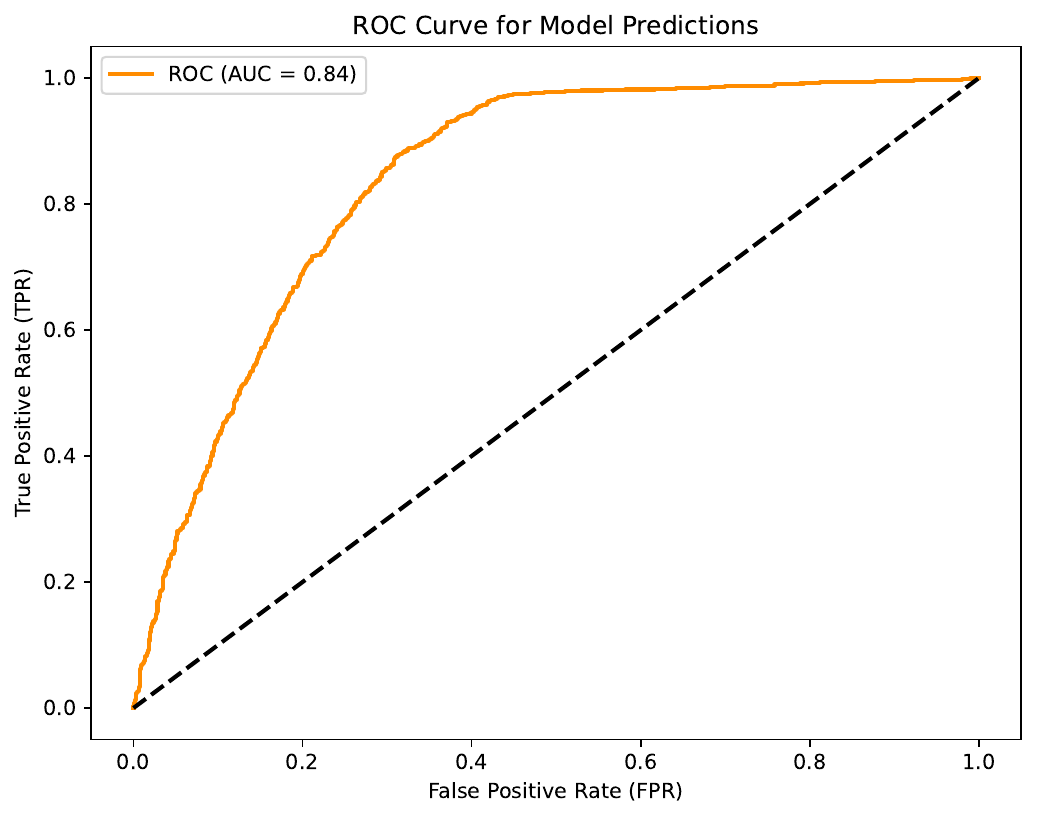}\label{fig_real:rocRFlog}}
    \subfloat[Individual estimates and CIs in RFs]{\includegraphics[width=0.48\textwidth]{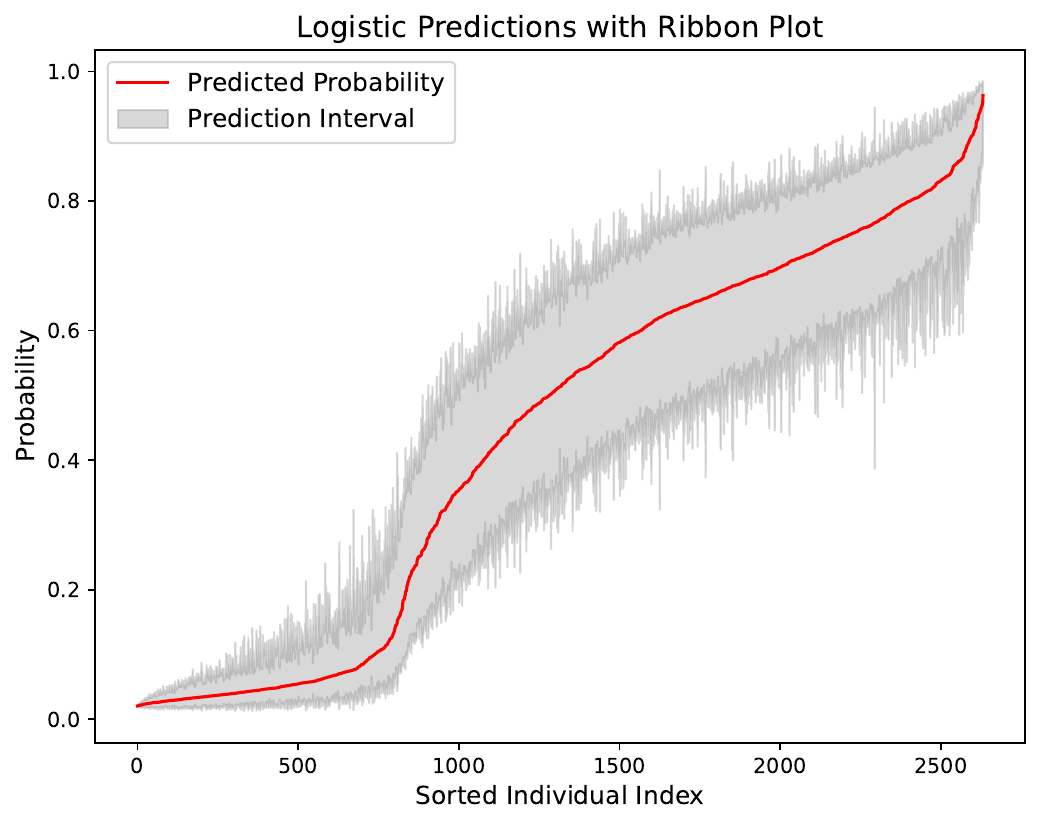}\label{fig_real:calibrationLogisticRF}}
    \caption{ Evaluation of the nonparametric logistic model estimator. Figure~\ref{fig_real:rocNNlog} and \ref{fig_real:rocRFlog} show the ROC curve with both AUC of  0.84. Figure~\ref{fig_real:calibrationLogisticNN} and \ref{fig_real:calibrationLogisticRF} display estimated subject-level probabilities of ICU readmission and confidence intervals, illustrating  heteroskedasticity across individuals.}
    \label{figreal:logisticpart}
\end{figure}

\begin{figure}[htbp]
    \centering
   \subfloat[Lift curve in DNNs]{\includegraphics[width=0.48\textwidth]{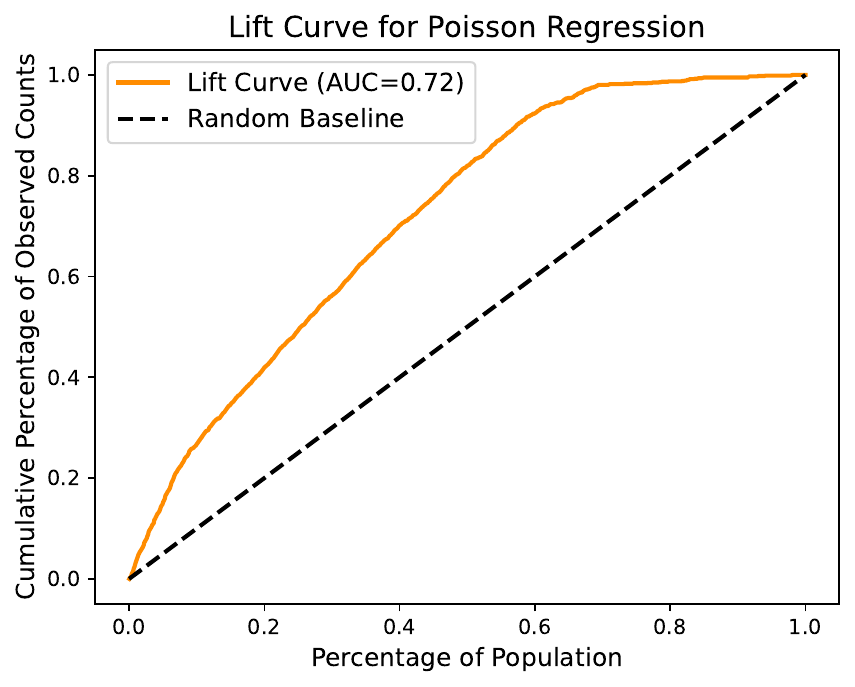}\label{fig_real:rocNNpoi}}
    \subfloat[Individual estimates and CIs in DNNs]{\includegraphics[width=0.48\textwidth]{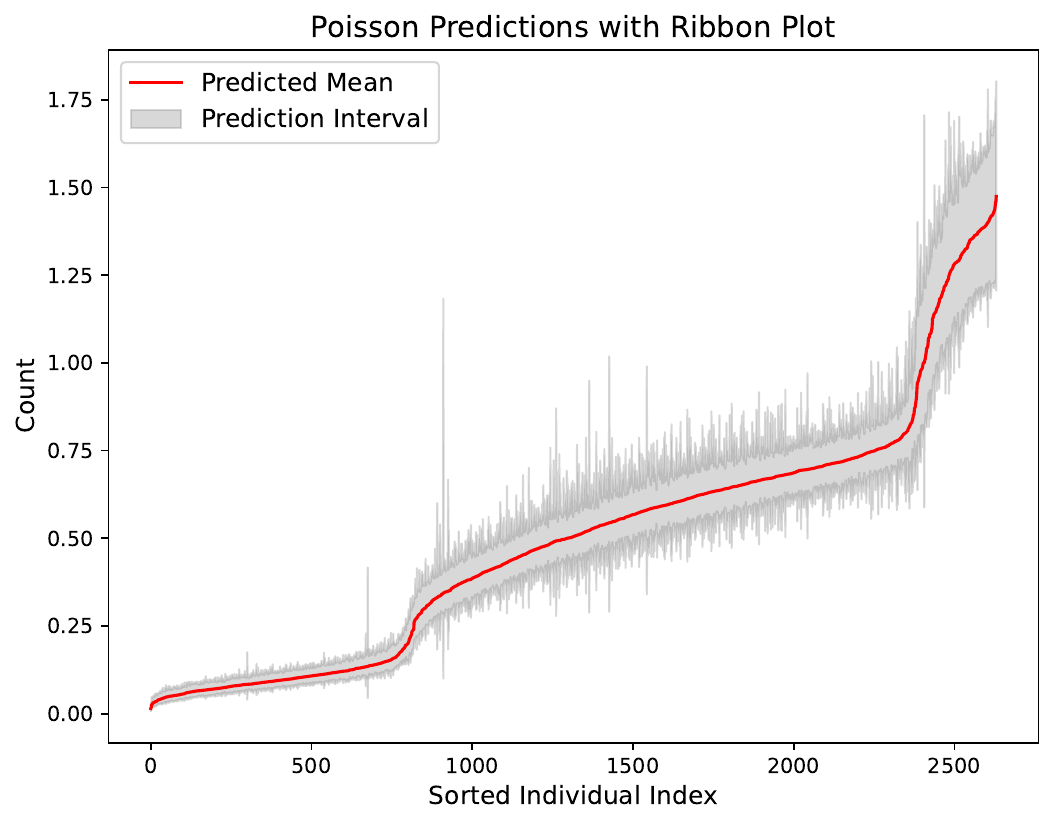}\label{fig_real:calibrationPoissonNN}}

    \subfloat[Lift curve in RFs]{\includegraphics[width=0.48\textwidth]{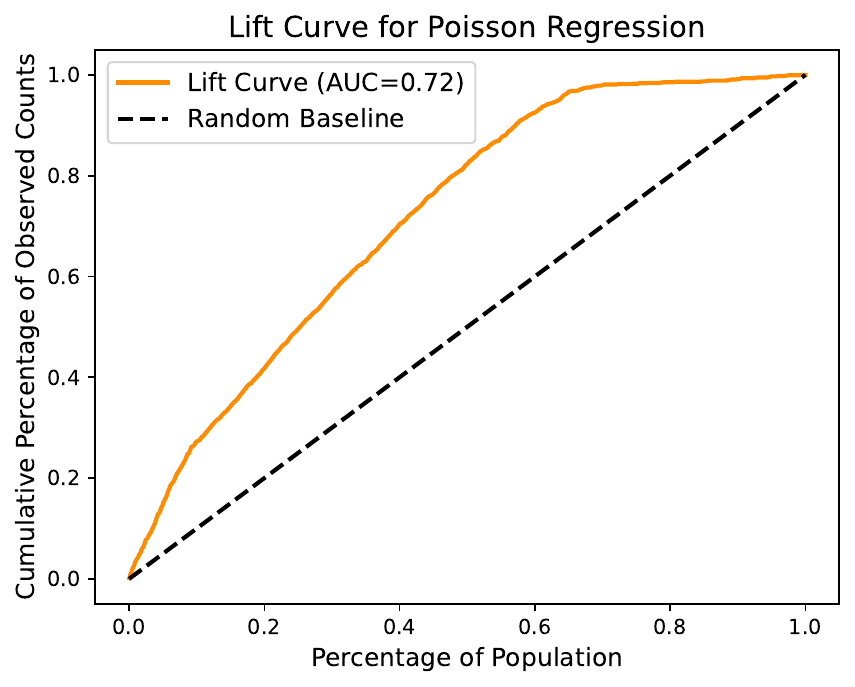}\label{fig_real:rocRFpoi}}
    \subfloat[Individual estimates and CIs in RFs]{\includegraphics[width=0.48\textwidth]{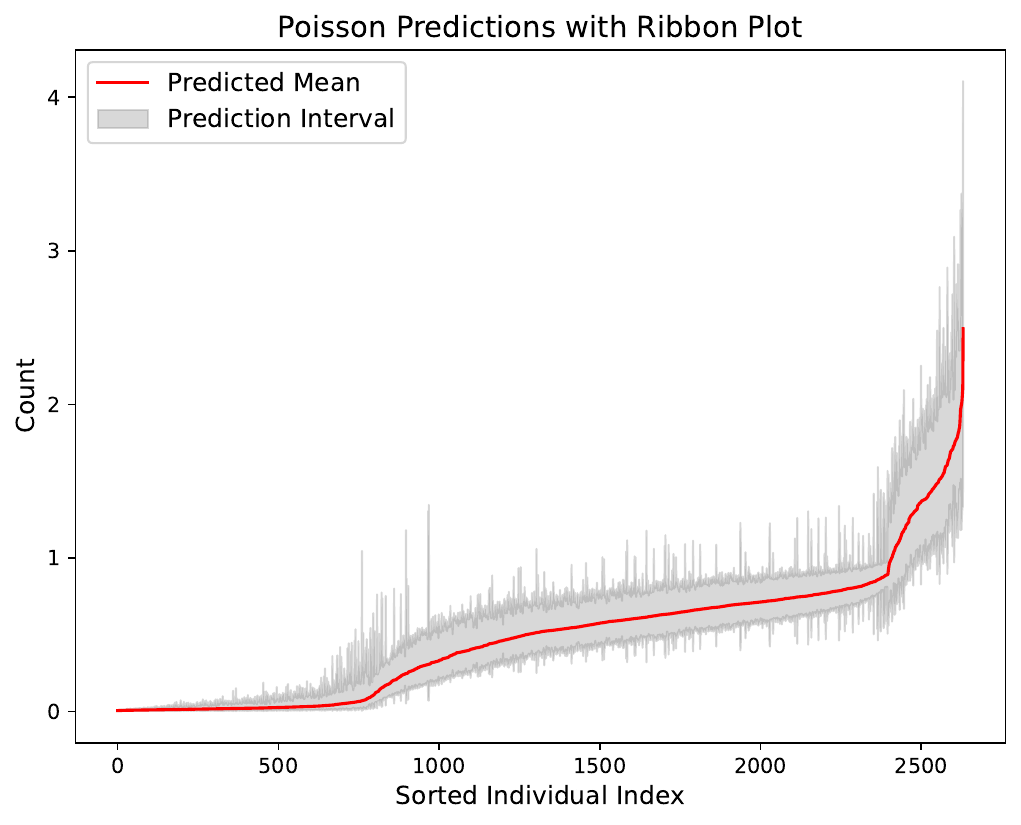}\label{fig_real:calibrationPoissonRF}}
    \caption{ Evaluation of the nonparametric logistic model estimator. Figure~\ref{fig_real:rocNNpoi} and \ref{fig_real:rocRFpoi} show the ROC curve with both AUC of  0.72. Figure~\ref{fig_real:calibrationPoissonNN} and \ref{fig_real:calibrationPoissonRF} display estimated subject-level probabilities of ICU readmission and confidence intervals, illustrating  heteroskedasticity across individuals.}
    \label{figreal:poissonpart}
\end{figure}

To estimate the probability of ICU readmission, we define a binary outcome variable: $y=1$ if any readmission occurs during follow-up, i.e., 2014--2015,   and $y=0$ otherwise. When implementing our method,  we set the subsampling size to $r = n^{0.9} = 979$ and the number of subsampling iterations to $B = 3000$.  We use a $(10, 128, 64, 1)$ architecture with ReLU activation, selected for its balance of accuracy and efficiency via simulations. Training is performed over 300 epochs with a tuned learning rate of $\eta = 0.1$.

 As shown in Figure~\ref{figreal:logisticpart}, both random forests and neural networks achieve similar predictive performance (AUC = 0.84), but their uncertainty estimates differ much. Neural networks yield shorter average confidence intervals (0.123 vs. 0.225 for RFs). Figures~\ref{fig_real:calibrationLogisticNN} and \ref{fig_real:calibrationLogisticRF} highlight these differences by displaying predicted readmission probabilities with 95\% intervals across patients sorted by risk. The variability in interval widths suggests heteroskedasticity, likely driven by differences in data quality or patient characteristics. Notably, Figure~\ref{fig_real:calibrationLogisticNN} reveals a subset of patients (indices between 800–1600) with unusually high uncertainty under DNNs; further details are provided in Section~\ref{subsec:additionalcheck}.

 To estimate subject-level ICU readmissions during 2014–2015, we fit a nonparametric Poisson regression using a DNN to model \(\mathbb{E}(y|\mathbf{x}_{*})\), offering a complementary view of patient well-being. Figure~\ref{figreal:poissonpart} shows a lift curve around 0.72, indicating comparable predictive accuracy for DNNs and RFs. However, uncertainty estimates differ: DNNs yield average interval lengths of 0.124, compared to 0.274 for RFs. Figures~\ref{fig_real:calibrationPoissonNN} and \ref{fig_real:calibrationPoissonRF} illustrate individual predictions with 95\% intervals, again showing heteroskedasticity. Additional analyses (Section~\ref{subsec:realdata_addfeature}) reveal that in some settings, RFs produce narrower intervals than DNNs, highlighting model sensitivity to data structure and signal-to-noise characteristics.   We also assess the transferability of ESM by applying models trained on hospital~\#188 directly to data from hospital~\#458; see Section~\ref{sec:realdata_transfer} in the Supplementary Material. The results indicate that the model remains transferable in this setting.
\section{Discussion}
\label{sec:discussion}
This paper presents a general framework for inference on subject-level means estimated by deep neural networks for categorical and exponential family outcomes. We construct a DNN estimator by minimizing the loss function induced from the generalized nonparametric regression model (GNRM). To address a key gap in leveraging this general loss function, we analyze the convergence rates of DNNs under the GNRM framework and establish connections with U-statistics and Hoeffding decomposition theory.
  Building on these results, we introduce an ensemble subsampling method (ESM) to enable valid inference. Numerical studies and an application to the eICU dataset demonstrate ESM’s utility in quantifying predictive uncertainty and supporting personalized care.

 Our  methodology may have broad applicability  as it enables valid inference for any continuously-differentiable functional of $f_0(x)$, such
 as  
$\Var (y | \xb)
=\psi''(f_0(x))
=:g(f_0(x))$, 
where $g(u)=\psi''(u)$, the second derivative of $\psi(u)$.  We can estimate $\Var (y | \xb)$ via the plug-in estimator $g(\hf^{B}(\xb))$ and apply the delta method to give $\sqrt{\frac{n}{r^2\xi_{1,r}(\xb)}}\cdot (g(\hf^{B}(\xb))-g(f_0(\xb))) =\sqrt{\frac{n}{r^2\xi_{1,r}(\xb)}}\cdot g'(f_0(\xb))(\hf^{B}(\xb)-f_0(\xb))+o_p(1)$, which will lead to estimated confidence intervals.  Our method may also naturally extend to semiparametric causal inference settings where the target parameter can be expressed as a functional of the conditional means. 
Specifically, to estimate the Conditional Average Treatment Effect (CATE),  
\(
\tau(\xb) = \EE[y(1)|\xb] - \EE[y(0)|\xb]
\), 
where $y(1)$ and $y(0)$ denote the potential outcomes under treatment and control,
we apply ESM separately to treated and control groups under standard unconfoundedness. Multiple subsamples are drawn for each group, predictive models are trained to estimate $\EE[y(g)|\xb]$, $g=0,1$, and ensemble means $\hat{\mu}_1(\xb)$ and $\hat{\mu}_0(\xb)$ are obtained. The CATE estimator is  
\(
\hat{\tau}(\xb) = \hat{\mu}_1(\xb) - \hat{\mu}_0(\xb).
\)
ESM’s variance framework provides inference by obtaining $\hat{\sigma}_g^2(\xb)=\hat{\Var}\{\hat{\mu}_g(\xb)\}$ for $g=0,1$, yielding pointwise confidence intervals
\(
\hat{\tau}(\xb) \pm z_{1-\alpha/2} \sqrt{\hat{\sigma}_1^2(\xb) + \hat{\sigma}_0^2(\xb)}.
\) Detailed justifications warrant  investigation.

Our theoretical analysis focuses on ReLU activations, where monomial-based approximations are tractable.  
We envision that ESM can be extended to networks with smooth activation functions such as tanh or GELU, although this requires careful consideration. Unlike ReLU, smooth activations are less flexible in local adaptations,  which may result in slower approximation rates and influence convergence behavior. To preserve inference validity in this setting, it may be necessary to employ alternative analytical tools or apply stronger regularization to control model complexity and prevent saturation.  Moreover, Our theoretical analysis relies on several assumptions that are broadly consistent with recent work on implicit bias and convergence in overparameterized models \citep{chizat2020implicit,chatterji2021does,cao2022benign}. We acknowledge, however, that some of these assumptions may appear strong. In particular, the conditions imposed on $\xi_{1,r}(\mathbf{x})$, which ensure the validity of the first-order Hoeffding projection, can be further improved. An important direction for future work is to relax these conditions by linking them more directly to of the neural network class, such as its sparsity or other structural characteristics.

\noindent
{\bf Data Availability Statement.} The data used in this study are publicly available from the eICU Collaborative Research Database (https://eicu-crd.mit.edu/) upon completion of a data use agreement and approval through the PhysioNet credentialed access process.

\noindent
{\bf Acknowledgements.} 
 We are deeply grateful to the Editor, Associate Editor,  two referees and one reproducibility reviewer for insightful reviews that have greatly improved the manuscript. This work was supported by NIH grants 2R01CA249096-5 and R01CA269398. 
\bibliographystyle{agsm}
\begingroup
\baselineskip=17.5pt
\bibliography{JASAref}
\endgroup

\ifincludeappendix
\appendix
\bigskip
\renewcommand{\thetable}{S.\arabic{table}}
\renewcommand{\thefigure}{S.\arabic{figure}}
\setcounter{table}{0}
\setcounter{figure}{0}

\begin{center}
{\large\bf Supplementary Material for  ``Inference for Deep Neural Network Estimators in Generalized Nonparametric Models" }
\end{center}

\section{Additional Numerical Results}
 We conduct additional numerical experiments to further evaluate our proposed method. In Section~\ref{sec:simu_binomial}, we extend the approach to the binomial regression setting. In Section~\ref{sec:simu_compareRF}, we compare neural networks and random forests, showing that their relative performance varies across different scenarios. In Section~\ref{sec:simu_kernel}, we further examine simpler kernel regression models to assess ESM under smoother nonlinear structures.  In Section~\ref{sec:simu_additionalcheck}, we present experiments illustrating how our method performs as data dimensionality and nonlinearity increase.  We also investigate the effects of key optimization and design parameters, including the subsample size $r$, the number of subsamples $B$, the network depth $L$, and the dropout rate, in Sections~\ref{subsec:optimization}, \ref{subsec:simu_sensitive_r}, \ref{subsec:simu_sensitive_B}, \ref{subsec:simu_sensitive_L}, and~\ref{subsec:simu_sensitive_dropout}, respectively.

\subsection{Binomial Regression}
\label{sec:simu_binomial}
We show the results under binomial regression, that is,   with $ k  \in \{0,1,...,n_{\text{trial}}\}$,  
\[
    \PP(y_i=k|\xb_i)={n_{\text{trial}}\choose{k}}p^k\big(\xb^{(i)}\big)\big(1-p\big(\xb^{(i)}\big)\big)^k,\]
    with \(p\big(\xb^{(i)}\big)=\frac{1}{1+\exp\{-g(\xb_i)\}}.
\)
Under this model,  $f_0(\xb)=g(\xb)$ and $\psi'(x)=n_{\text{trial}}(1+\exp(-x))^{-1}$. 

We   use the same simulation framework as described in Section~\ref{sec:simulations}. We set the number of trials in the binomial distribution to $n_{\text{trial}} = 5$, while keeping all other simulation parameters and settings consistent with the baseline experiments. At the same time, we also vary the resampling number $B$ from 1400 to 3000, and present the corresponding results to examine the effect of $B$ on the variance estimation and interval construction.

\begin{table}[htbp]
\renewcommand{\arraystretch}{0.5}
\centering
\caption{Simulation results for Binomial regression under varying sample sizes $n$ and subsampling ratios $r$.}
\label{table:simu_binomial}
\setlength{\tabcolsep}{2.7pt} 
\begin{tabular}{rccccccccc}
\hline
\multicolumn{1}{l}{} & $\text{Bias}_f$  & $\text{MAE}_f$ & $\text{Bias}_{\psi'}$  &  $\text{MAE}_{\psi'}$   & EmpSD & SE &$\text{SE}_c$  & CP     & AIL  \\ \hline
\multicolumn{1}{l}{} & \multicolumn{7}{c}{$B=1400$}          \\ \hline
$n=400,r=n^{0.8}$ & 0.03(0.34) & 0.27(0.21) & 0.01(0.07) & 0.06(0.04) & 0.31 & 0.39 & 0.32 & 94.0\% & 0.26(0.07)                 \\ 
$r=n^{0.85}$ & 0.04(0.36) & 0.28(0.23) & 0.01(0.07) & 0.06(0.05) & 0.34 & 0.41 &0.34 & {93.5\%} & 0.27(0.08)                  \\ 
$r=n^{0.9}$ & 0.04(0.34) & 0.27(0.21) & 0.01(0.07) & 0.06(0.04) & 0.32 & 0.43 & 0.35 & 95.0\% & 0.28(0.08)        \\ 
$n=700,r=n^{0.8}$ & 0.02(0.27) & 0.21(0.17) & 0.00(0.06) & 0.05(0.03) & 0.25 & 0.34 & 0.25 & 93.2\% & 0.21(0.05)        \\ 
$r=n^{0.85}$  & 0.03(0.25) & 0.20(0.16) & 0.01(0.05) & 0.04(0.03) & 0.24 & 0.34 & 0.25 & {94.3\%} & 0.21(0.05)              \\ 
$r=n^{0.9}$ &0.03(0.26) & 0.20(0.16) & 0.01(0.06) & 0.04(0.03) & 0.25& 0.34 & 0.25 & {93.5\%} & 0.21(0.05)   \\ \hline 
\multicolumn{1}{l}{} & \multicolumn{7}{c}{$B=3000$}          \\ \hline
$n=400,r=n^{0.8}$ & 0.05(0.34) & 0.27(0.21) & 0.01(0.07) & 0.06(0.04) & 0.31 & 0.36& 0.32 & 93.9\% & 0.26(0.07)                 \\ 
$r=n^{0.85}$ & 0.03(0.35) & 0.27(0.22) & 0.01(0.07) & 0.06(0.05) & 0.33 & 0.38 & 0.34 & {94.3\%} & 0.27(0.07)        \\ 
$r=n^{0.9}$ & 0.03(0.34) & 0.27(0.21) & 0.00(0.07) & 0.06(0.04) & 0.32 & 0.38 & 0.34 & 95.0\% & 0.28(0.07)     \\
$n=700,r=n^{0.8}$ & 0.04(0.26) & 0.20(0.16) & 0.01(0.06) & 0.04(0.03) &0.24 & 0.29 & 0.25 & 93.3\% & 0.20(0.05)          \\ 
$r=n^{0.85}$ & 0.01(0.25) & 0.19(0.15) & 0.00(0.05) & 0.04(0.03) & 0.23 & 0.29 & 0.24 & {94.4\%} & 0.20(0.05)          \\ 
$r=n^{0.9}$ & 0.02(0.24) & 0.19(0.15) & 0.00(0.05) & 0.04(0.03) & 0.23 & 0.29 & 0.24 & {94.1\%} & 0.20(0.05)  \\ \hline 
\end{tabular}
\end{table}

\begin{figure}[H]
    \centering
    \subfloat[Estimated Standard Errors]{\includegraphics[width=0.48\textwidth]{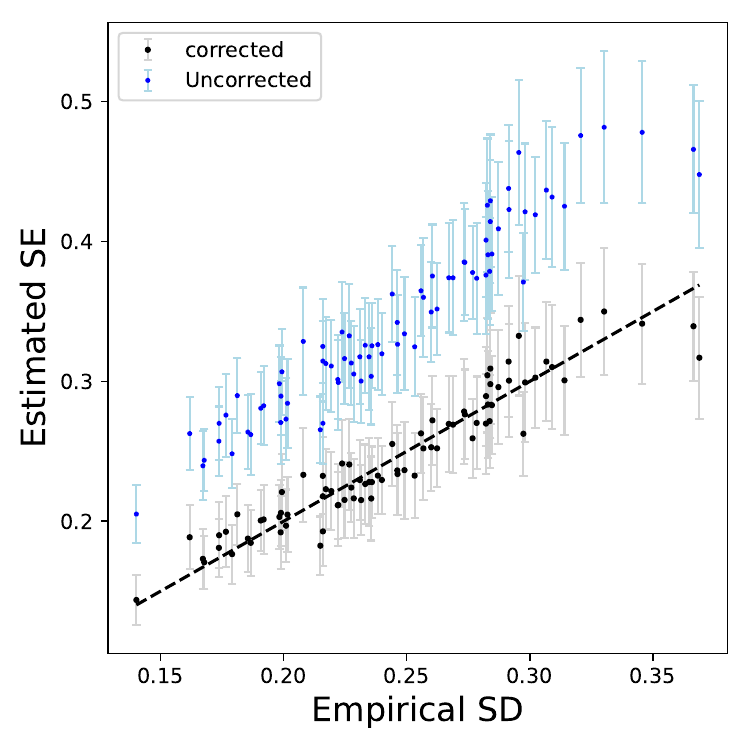}\label{fig_simu:part1sdbino}}
    \subfloat[Estimated Standard Errors]{\includegraphics[width=0.48\textwidth]{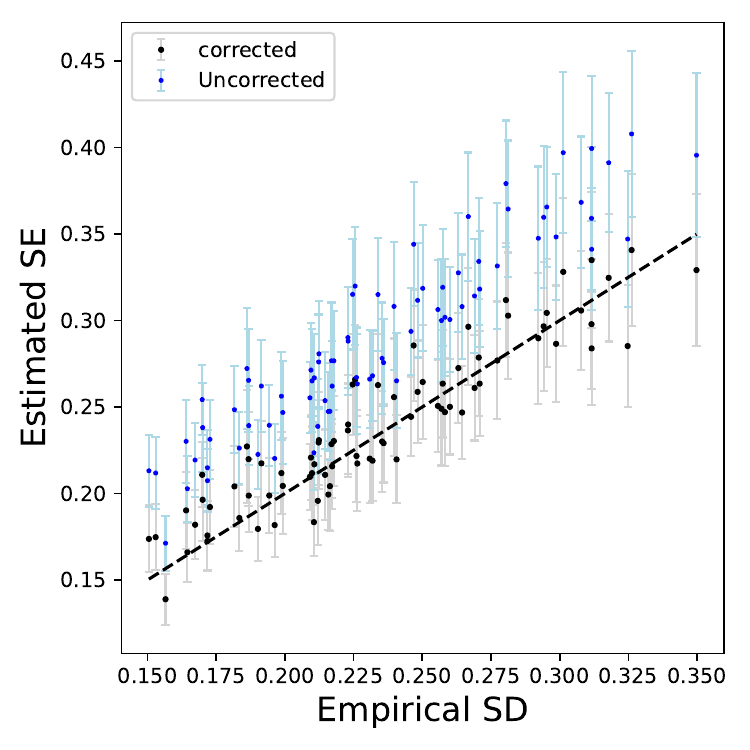}\label{fig_simu:part2sdbino}}
    \caption{Estimation and inference in simulation samples:  Binomial Model with $n=700$ and $r=n^{0.9}$. Figure~\ref{fig_simu:part1sdbino} and \ref{fig_simu:part2sdbino} show corrected and uncorrected estimation SE versus empirical SD of all test samples with $B=1400$ and $B=3000$, respectively.}
        \label{fig_simu:partbino}
\end{figure}

As shown in Table~\ref{table:simu_binomial}, our method achieves reasonable coverage probabilities and stable interval lengths under the binomial regression setting, across various values of $B$ and $r$. In binomial regression, the corrected standard errors $\text{SE}_c$ closely track the empirical variability, supporting the validity of our variance estimation procedure. We also plot the corrected and uncorrected estimation SE versus empirical SD with different $B$ in Figure~\ref{fig_simu:partbino}. Figure~\ref{fig_simu:partbino} also shows that with a larger $B$, the uncorrected estimator becomes more accurate and gradually approaches the empirical SD. In contrast, the corrected SE consistently aligns well with the empirical SD across different values of $B$, demonstrating the effectiveness of our correction procedure, especially in finite-sample settings.

\subsection{Comparison to Random Forests}
\label{sec:simu_compareRF}
Although structurally different from deep neural networks (DNNs), random forests (RFs) are another widely used tool for nonparametric estimation. As a benchmark, we compare the performance of neural networks and random forests within our simulation framework to examine their respective behaviors across varying settings, rather than to identify a universally superior method. In addition, we investigate the use of our corrected variance estimator within the RF framework to enhance the accuracy of empirical standard deviation estimation.

\begin{table}[t]
\renewcommand{\arraystretch}{0.5}
\centering
\caption{Comparison under the basic setting  in logistic model.}
\label{table:simu_compare1logistic}
\setlength{\tabcolsep}{2.7pt} 
\begin{tabular}{rccccccccc}
\hline
\multicolumn{1}{l}{} & $\text{Bias}_f$  & $\text{MAE}_f$ & $\text{Bias}_{\psi'}$  &  $\text{MAE}_{\psi'}$   & EmpSD & SE &$\text{SE}_c$  & CP     & AIL  \\ \hline
\multicolumn{1}{l}{} & \multicolumn{7}{c}{Logistic Model with DNNs}          \\ \hline
$n=400,r=n^{0.8}$  & 0.11(0.72) & 0.57(0.44) &0.02(0.14) &0.11(0.08) & 0.65 & 0.78 & 0.66 &91.7\% & 0.46(0.13)                   \\ 
$r=n^{0.85}$ & 0.06(0.59) & 0.47(0.36) &0.01(0.12) & 0.10(0.07) &0.55 & 0.68 &0.58& 94.0\% & 0.43(0.11)                  \\ 
$r=n^{0.9}$ & 0.07(0.50) & 0.40(0.31) &0.01(0.10) & 0.08(0.06) & 0.48 & 0.59 & 0.50 & 94.6\% & 0.39(0.09)   \\     
$n=700,r=n^{0.8}$ & 0.08(0.41)& 0.33(0.26) & 0.02(0.09) & 0.07(0.05) & 0.39 & 0.51 & 0.39 & 93.6\% & 0.31(0.08)  \\
$r=n^{0.85}$ & 0.05(0.36) & 0.28(0.22) & 0.01(0.08) & 0.06(0.05) & 0.34 & 0.44 & 0.34 & 93.9\% &0.28(0.06)    \\
$r=n^{0.9}$  & 0.06(0.33) & 0.26(0.21) & 0.01(0.07) & 0.06(0.05) & 0.31   & 0.40 & 0.31 & 93.5\% & 0.26(0.06)         \\  
\hline
\multicolumn{1}{l}{} & \multicolumn{7}{c}{Logistic Model with RFs}          \\ \hline
$n=400,r=n^{0.8}$  & 0.07(0.39) & 0.31(0.24) & 0.02(0.09) & 0.07(0.05) & 0.31 & 0.38 & 0.33 & 89.1\% &0.28(0.03)     \\ 
$r=n^{0.85}$ & 0.07(0.41) & 0.33(0.25) & 0.02(0.09) & 0.07(0.06) & 0.36 & 0.43 & 0.36 & 91.1\% & 0.31(0.04)                \\ 
$r=n^{0.9}$ & 0.08(0.44) & 0.35(0.27) & 0.02(0.10) & 0.08(0.06) & 0.40 & 0.48 & 0.41 & 93.0\% & 0.35(0.05)  \\ 
$n=700,r=n^{0.8}$ & 0.07(0.33) & 0.27(0.21) & 0.02(0.07) & 0.06(0.05) & 0.28 & 0.36 & 0.28 & 89.1\% & 0.25(0.03)   \\
$r=n^{0.85}$ & 0.06(0.36) & 0.29(0.22) & 0.01(0.08) & 0.06(0.05) & 0.32 & 0.41 & 0.32 & 91.8\% & 0.27(0.03)  \\
$r=n^{0.9}$  & 0.06(0.37) & 0.30(0.23) & 0.01(0.08) & 0.07(0.05) & 0.35 & 0.47 & 0.37 & 94.2\% & 0.31(0.04)  \\   \hline
\end{tabular}
\end{table}

\begin{table}[H]
\renewcommand{\arraystretch}{0.5}
\centering
\caption{Comparison under the basic setting   in Poisson model.}
\label{table:simu_compare1poisson}
\setlength{\tabcolsep}{2.7pt} 
\begin{tabular}{rccccccccc}
\hline
\multicolumn{1}{l}{} & $\text{Bias}_f$  & $\text{MAE}_f$ & $\text{Bias}_{\psi'}$  &  $\text{MAE}_{\psi'}$   & EmpSD & SE &$\text{SE}_c$  & CP     & AIL  \\ \hline
\multicolumn{1}{l}{} & \multicolumn{7}{c}{Poisson Model with DNNs}          \\ \hline
$n=400,r=n^{0.8}$  & -0.32(0.38) & 0.38(0.31) & -0.16(0.23) & 0.22(0.18) & 0.35 & 0.43 & 0.36 & 88.6\% & 0.86(0.45)                \\ 
$r=n^{0.85}$  & -0.23(0.36) & 0.33(0.27) & -0.12(0.25) & 0.21(0.18) & 0.34 & 0.43 & 0.36 & 91.9\% & 0.95(0.52)            \\ 
$r=n^{0.9}$  & -0.14(0.34) & 0.29(0.24) & -0.06(0.25) & 0.19(0.18) & 0.33 & 0.42 & 0.35 & {94.3\%} & 1.02(0.59)                \\
$n=700,r=n^{0.8}$ & -0.17(0.26) & 0.25(0.20)& -0.10(0.19) & 0.16(0.14) & 0.25 & 0.34 & 0.26 & 90.7\% & 0.67(0.34)       \\ 
$r=n^{0.85}$  & -0.10(0.26) & 0.21(0.17) & -0.06(0.19) & 0.15(0.14) & 0.24 & 0.33 & 0.25 & 93.2\% & 0.71(0.38)         \\ 
$r=n^{0.9}$  & -0.04(0.24) & 0.19(0.15) & -0.02(0.19) & 0.14(0.13) & 0.23 & 0.32 & 0.24 & {94.5\%} & 0.72(0.38)        \\   \hline
\multicolumn{1}{l}{} & \multicolumn{7}{c}{Poisson Model with RFs}          \\ \hline
$n=400,r=n^{0.8}$  & -0.03(0.24) & 0.19(0.16) & -0.01(0.17) &  0.13(0.11) & 0.21 & 0.25 & 0.21 & 92.4\% & 0.60(0.21)           \\ 
$r=n^{0.85}$ & -0.03(0.26) & 0.20(0.17) & 0.00(0.18) & 0.14(0.11) & 0.24 & 0.29 & 0.24 & 93.7\% & 0.68(0.25)       \\ 
$r=n^{0.9}$ & -0.02(0.27) & 0.21(0.18) & 0.01(0.20) & 0.15(0.13) & 0.26 & 0.32 & 0.28 & 94.6\% & 0.79(0.30)   \\ 
$n=700,r=n^{0.8}$ & -0.03(0.21) & 0.16(0.14) & -0.01(0.15) & 0.11(0.09) & 0.18 & 0.24 & 0.19 & 92.8\% & 0.52(0.17)  \\ 
$r=n^{0.85}$ & -0.03(0.22) & 0.17(0.14) & 0.00(0.16) & 0.12(0.10) & 0.20 & 0.28 & 0.21 & 94.2\% & 0.59(0.20)   \\ 
$r=n^{0.9}$ & -0.03(0.26) & 0.20(0.17) & 0.00(0.18) & 0.14(0.12) & 0.24 & 0.32 & 0.25 & 94.3\% & 0.69(0.26)  \\ \hline
\end{tabular}
\end{table}

 We first present random forest performance under the basic settings as  in the simulation section of the main text, covering logistic and Poisson scenarios and using the same evaluation metrics as for neural networks. 
\begin{table}[t]
\setlength{\tabcolsep}{2.4pt}
\renewcommand{\arraystretch}{0.5}
\centering
\caption{Comparison under the changing setting  in logistic model.}
\label{table:simu_compare2logistic}
\begin{tabular}{rccccccccc}
\hline
\multicolumn{1}{l}{} & $\text{Bias}_f$  & $\text{MAE}_f$ & $\text{Bias}_{\psi'}$  &  $\text{MAE}_{\psi'}$   & EmpSD & SE &$\text{SE}_c$  & CP     & AIL  \\ \hline
\multicolumn{1}{l}{} & \multicolumn{7}{c}{Logistic Model with DNNs}          \\ \hline
$n=400,r=n^{0.8}$  & 0.07(0.45) & 0.36(0.28) & 0.01(0.09) & 0.07(0.06) & 0.42 & 0.50 & 0.43 & 93.8\% &0.32(0.09)     \\ 
$r=n^{0.85}$ & 0.04(0.40) & 0.32(0.25) & 0.01(0.08) & 0.07(0.05) & 0.38 & 0.45 & 0.38 & 93.9\% & 0.30(0.08)                \\ 
$r=n^{0.9}$ & 0.04(0.37) & 0.29(0.23) & 0.01(0.08) & 0.06(0.05) & 0.34 & 0.41 & 0.35 & 93.8\% & 0.28(0.07)  \\ 
$n=700,r=n^{0.8}$ & 0.05(0.29) & 0.23(0.19) & 0.01(0.06) & 0.05(0.04) & 0.26 & 0.34 & 0.27 & 93.3\% & 0.22(0.06)   \\
$r=n^{0.85}$ & 0.04(0.29) & 0.23(0.19) & 0.01(0.06) & 0.05(0.04) & 0.25 & 0.32 & 0.25 & 90.4\% & 0.21(0.05)  \\
$r=n^{0.9}$  & 0.04(0.29) & 0.22(0.19) & 0.01(0.06) & 0.05(0.04) & 0.22 & 0.31 & 0.23 & 90.0\% & 0.19(0.05)  \\  
\hline
\hline
\multicolumn{1}{l}{} & \multicolumn{7}{c}{Logistic Model with RFs}          \\ \hline
$n=400,r=n^{0.8}$  & 0.02(0.46) & 0.36(0.30) & 0.01(0.09) & 0.07(0.06) & 0.32 & 0.38 & 0.32 & 84.9\% &0.28(0.04)     \\ 
$r=n^{0.85}$ & 0.04(0.46) & 0.36(0.30) & 0.01(0.10) & 0.08(0.06) & 0.35 & 0.42 & 0.36 & 87.6\% & 0.31(0.05)                \\ 
$r=n^{0.9}$ & 0.03(0.48) & 0.37(0.30) & 0.01(0.10) & 0.08(0.06) & 0.38 & 0.48 & 0.41 & 90.8\% & 0.34(0.05)   \\ 
$n=700,r=n^{0.8}$ & 0.03(0.40) & 0.31(0.26) & 0.01(0.08) & 0.06(0.05) & 0.28 & 0.36 & 0.28 & 84.8\% & 0.24(0.03)   \\
$r=n^{0.85}$ & 0.02(0.41) & 0.32(0.26) & 0.01(0.08) & 0.07(0.05) & 0.32 & 0.41 & 0.32 & 88.5\% & 0.27(0.04)  \\
$r=n^{0.9}$  & 0.01(0.42) & 0.33(0.27) & 0.01(0.09) & 0.07(0.05) & 0.34 & 0.47 & 0.37 & 91.3\% & 0.31(0.05)  \\  \hline
\end{tabular}
\end{table}

\begin{table}[H]
\setlength{\tabcolsep}{2.4pt}
\renewcommand{\arraystretch}{0.5}
\centering
\caption{Comparison under the changing setting  in Poisson model.}
\label{table:simu_compare2poisson}
\begin{tabular}{rccccccccc}
\hline
\multicolumn{1}{l}{} & $\text{Bias}_f$  & $\text{MAE}_f$ & $\text{Bias}_{\psi'}$  &  $\text{MAE}_{\psi'}$   & EmpSD & SE &$\text{SE}_c$  & CP     & AIL  \\ \hline
\multicolumn{1}{l}{} & \multicolumn{7}{c}{Poisson Model with DNNs}          \\ \hline
$n=400,r=n^{0.8}$  & -0.20(0.31) & 0.28(0.24) & -0.12(0.24) &  0.19(0.19) & 0.29 & 0.35 & 0.30 & 90.8\% & 0.80(0.52)           \\ 
$r=n^{0.85}$ & -0.12(0.29) & 0.24(0.20) & -0.07(0.24) & 0.17(0.18) & 0.27 & 0.34 & 0.29 & 93.3\% & 0.82(0.58)       \\ 
$r=n^{0.9}$ & -0.06(0.27) & 0.21(0.18) & -0.03(0.24) & 0.16(0.18) & 0.25 & 0.32 & 0.27 & 95.0\% & 0.81(0.59)   \\ 
$n=700,r=n^{0.8}$ & -0.10(0.22) & 0.18(0.15) & -0.07(0.18) & 0.13(0.14) & 0.20 & 0.27 & 0.20 & 92.3\% & 0.59(0.37)  \\ 
$r=n^{0.85}$ & -0.04(0.20) & 0.16(0.13) & -0.03(0.17) & 0.12(0.13) & 0.19 & 0.26 & 0.19 & 94.1\% & 0.59(0.38)   \\ 
$r=n^{0.9}$ & -0.01(0.20) & 0.15(0.12) & -0.01(0.17) & 0.12(0.12) & 0.18 & 0.24 & 0.18 & 93.4\% & 0.58(0.37)  \\ 
\hline
\multicolumn{1}{l}{} & \multicolumn{7}{c}{Poisson Model with RFs}          \\ \hline
$n=400,r=n^{0.8}$  & -0.00(0.24) & 0.19(0.15) & -0.00(0.20) &  0.14(0.14) & 0.21 & 0.25 & 0.21 & 91.2\% & 0.62(0.25)           \\ 
$r=n^{0.85}$ & -0.00(0.26) & 0.20(0.16) & 0.00(0.20) & 0.15(0.14) & 0.23 & 0.28 & 0.24 & 93.3\% & 0.71(0.30)       \\ 
$r=n^{0.9}$ & -0.00(0.28) & 0.22(0.18) & 0.02(0.23) & 0.17(0.16) & 0.26 & 0.33 & 0.28 & 93.4\% & 0.84(0.40)   \\ 
$n=700,r=n^{0.8}$ & 0.00(0.21) & 0.16(0.14) & -0.00(0.17) & 0.12(0.12) & 0.18 & 0.24 & 0.18 & 91.4\% & 0.54(0.21)  \\ 
$r=n^{0.85}$ & 0.01(0.22) & 0.17(0.14) & 0.01(0.18) & 0.13(0.12) & 0.20 & 0.27 & 0.21 & 93.2\% & 0.63(0.26)   \\ 
$r=n^{0.9}$ & -0.00(0.26) & 0.20(0.16) & 0.01(0.20) & 0.15(0.13) & 0.24 & 0.32 & 0.25 & 94.1\% & 0.73(0.33)  \\\hline
\end{tabular}
\end{table}

As shown in Tables~\ref{table:simu_compare1logistic} and \ref{table:simu_compare1poisson},  when the sample size is small, neural networks tend to produce wider estimation intervals compared to random forests, reflecting greater estimation uncertainty. However, as both the sample size $n$ and subsample size $r$ increase, neural networks are able to capture the signal more effectively, leading to interval lengths and coverage that are comparable to, or even better than  those of random forests.  
The  findings are  consistent with the modern  generalization theory of neural networks \citep{cao2022benign,kou2023benign}, which suggests that neural networks can generalize well when data quality is high and sufficient samples are available. In contrast, random forests tend to offer stronger robustness in low-data or high-noise scenarios.

To further elaborate on these points, we conduct additional experiments under a modified simulation setting, where we increase the signal strength and reduce the data noise. Specifically, we generate the covariates $\xb_i$ from a multivariate normal distribution $\mathcal{N}(0, \frac{1}{4} \Ib_{10})$, and define the underlying regression function as
\(
g(\xb) = 2x_1 + x_2^2 + 0.1 \cdot \arctan(x_3 - 0.3).
\)
The other settings remain unchanged. This setup introduces weaker noise component and stronger signal component. 

The simulation results are summarized in Tables~\ref{table:simu_compare2logistic} and \ref{table:simu_compare2poisson}. In the logistic regression setting, neural networks tend to yield smaller estimation bias and narrower confidence intervals, while also achieving slightly higher coverage probabilities compared to random forests. This shows that, in scenarios with relatively strong signal structures and adequate subsample sizes, neural networks will offer improved estimation efficiency. 
For the Poisson model, both methods perform comparably in terms of coverage. However, as the subsample size $r$ increases, the interval lengths produced by neural networks generally decrease, whereas those from random forests tend to increase. 
Overall, both methods have their strengths: RFs offer stability and robustness under noise, while DNNs can capture more complex patterns when data quality allows. 

Finally, in all of these settings, we find that the corrected variance formula performs well for random forests. The corrected standard errors align more closely with the empirical standard deviations compared to the uncorrected ones. As a result, the coverage probabilities achieved by random forests are close to the nominal level.

\subsection{Additional Experiments on Kernel Regressions}
\label{sec:simu_kernel}
We consider a kernelized generalized linear model (GLM) to evaluate ESM under smooth nonlinear structures. 
Let $\gamma>0$ and $k(\xb, \xb')=\exp\{-\gamma\|\xb-\xb'\|_2^2\}$ denote the Gaussian kernel.  To make the kernel GLM computationally tractable for moderate to large sample sizes, we adopt the Nystrom approximation to obtain a finite dimensional feature representation
$\phi(\xb) \in \mathbb{R}^m$, where $k(\xb,\xb')$ can be approximated by  
$k(\xb,\xb') \approx \phi(\xb)^\top \phi(\xb')$.
Under this approximation, the kernel function becomes
$f(\xb) \approx \wb^\top \phi(\xb)$, 
and the regularized empirical loss reduces to 
\begin{align*} 
\hat{\wb}=\argmin_{\wb}\frac{1}{n}\sum_{i=1}^n [\psi(\wb^\top\phi(\xb_i)) - y_i \wb^\top\phi(\xb_i)] + \lambda \|\wb\|_2^2. 
\end{align*} 
This linearized formulation retains the nonlinear representational capacity of the kernel model while allowing efficient estimation via standard GLM solvers. The conditional mean of the response can be written as  $\mathbb{E} (y|\xb) = \psi'(\hat{\wb}^\top \phi(\xb))$. In practice, we set $\gamma=1/p$, where $p$ is the input dimension to adapt the kernel smooth to feature scale.  For the Bernoulli case, we fit a logistic regression with an $L_2$ penalty (inverse regularization strength $C=10$) and the \texttt{lbfgs} solver with a maximum of 300 iterations. For the Poisson case, we employ the same Nystrom mapping followed by a Poisson regression with $L_2$ regularization parameter $\alpha = 10^{-3}$ and 300 iterations. 
\begin{table}[t]
\setlength{\tabcolsep}{2.4pt}
\renewcommand{\arraystretch}{0.5}
\centering
\caption{Simulation Results with Kernel Regression.}
\label{table:simu_kernel}
\begin{tabular}{rccccccccc}
\hline
\multicolumn{1}{l}{} & $\text{Bias}_f$  & $\text{MAE}_f$ & $\text{Bias}_{\psi'}$  &  $\text{MAE}_{\psi'}$   & EmpSD & SE &$\text{SE}_c$  & CP     & AIL  \\ \hline
\multicolumn{1}{l}{} & \multicolumn{7}{c}{Logistic Model}          \\ \hline
$n=400,r=n^{0.8}$ & 0.05(0.53) & 0.43(0.32) & 0.01(0.11) & 0.09(0.07) & 0.50 & 0.59 & 0.51 & 93.3\% & 0.41(0.07)    \\ 
$r=n^{0.85}$ & 0.04(0.62) & 0.50(0.37) & 0.01(0.13) & 0.10(0.08) & 0.59 & 0.68 & 0.60 & 93.5\% & 0.45(0.09)               \\ 
$r=n^{0.9}$ & 0.04(0.69) & 0.56(0.42) & 0.01(0.14) & 0.12(0.08) & 0.67 & 0.79 & 0.69 & 93.9\% & 0.50(0.11)    \\ 
$n=700,r=n^{0.8}$ & 0.04(0.50) & 0.40(0.30) & 0.01(0.11) & 0.08(0.06) & 0.47 & 0.59 & 0.47 & 93.0\% & 0.37(0.07)    \\
$r=n^{0.85}$ & 0.02(0.57) & 0.46(0.35) & 0.00(0.12) & 0.10(0.07) & 0.54 & 0.69 & 0.55 & 93.3\% & 0.42(0.09)  \\
$r=n^{0.9}$ & 0.01(0.63) & 0.51(0.38) & 0.00(0.13) & 0.10(0.08) & 0.60 & 0.78 & 0.62 & 93.8\% & 0.46(0.10)  \\  
\hline
\multicolumn{1}{l}{} & \multicolumn{7}{c}{Poisson Model}          \\ \hline
$n=400,r=n^{0.8}$ & -0.29(0.35) & 0.37(0.27) & -0.18(0.25) & 0.23(0.19) & 0.32 & 0.39 & 0.33 & 84.7\% & 0.83(0.46)       \\ 
$r=n^{0.85}$ & -0.22(0.37) & 0.34(0.26) & -0.13(0.27) &   0.23(0.20) & 0.35 & 0.41 & 0.36 & 89.5\% & 0.98(0.58)             \\ 
$r=n^{0.9}$ & -0.17(0.38) & 0.33(0.25) & -0.09(0.30) & 0.23(0.21) & 0.36 & 0.43 & 0.38 & 91.6\% & 1.12(0.72)   \\ 
$n=700,r=n^{0.8}$ & -0.20(0.28) & 0.28(0.21) & -0.13(0.21) & 0.19(0.16) & 0.26 & 0.34 & 0.27 & 86.7\% & 0.71(0.37)     \\
$r=n^{0.85}$ & -0.14(0.30) & 0.26(0.20) & -0.08(0.23) & 0.18(0.16) & 0.28 & 0.35 & 0.28 & 90.6\% & 0.81(0.45)  \\
$r=n^{0.9}$ & -0.10(0.30) & 0.26(0.19) & -0.05(0.24) & 0.18(0.16) & 0.29 & 0.36 & 0.29 & 91.9\% & 0.87(0.50)   \\  \hline
\end{tabular}
\end{table}

As shown in Table~\ref{table:simu_kernel}, the kernel based models under the ESM framework also achieve empirical coverage probabilities close to the nominal 95\% level. However, compared with the random forest and deep neural network (DNN) results, the kernel approach exhibits a higher $\mathrm{MAE}_f$ and wider average interval lengths, suggesting that the kernel regression is less effective in capturing complex nonlinear structures than the DNN.

\subsection{Additional Experiments on Dimensionality and Nonlinearity}
\label{sec:simu_additionalcheck}
  We investigate how data dimensionality and nonlinearity influence the learning behavior of neural networks. Model performance depends not only on network capacity, but also on data quality, sample size, signal-to-noise ratio, and the complexity of the underlying regression function.
We begin by assessing the effect of data dimensionality, as the inclusion of irrelevant covariates can increase noise and obscure the true signal. To quantify this effect, we compare three scenarios: low ($p=7$), moderate ($p=10$), and high ($p=20$) dimensional settings. The corresponding results are summarized in Tables~\ref{table:simu_NNcomparep1} and~\ref{table:simu_NNcomparep2}.

\begin{table}[t]
\setlength{\tabcolsep}{2.4pt}
\renewcommand{\arraystretch}{0.5}
\centering
\caption{Simulation results with neural networks under $p=20$.}
\label{table:simu_NNcomparep1}
\begin{tabular}{rccccccccc}
\hline
\multicolumn{1}{l}{} & $\text{Bias}_f$  & $\text{MAE}_f$ & $\text{Bias}_{\psi'}$  &  $\text{MAE}_{\psi'}$   & EmpSD & SE &$\text{SE}_c$  & CP     & AIL  \\ \hline
\multicolumn{1}{l}{} & \multicolumn{7}{c}{Logistic Model}          \\ \hline
$n=400,r=n^{0.8}$ &0.18(1.10) & 0.90(0.66) & 0.02(0.18) & 0.15(0.10) & 0.92 & 1.11 & 0.93 & 87.5\% & 0.54(0.20)   \\ 
$r=n^{0.85}$  & 0.16(1.07) & 0.86(0.65) & 0.02(0.18) & 0.15(0.10) & 0.92 & 1.18 & 0.98 & 90.5\% & 0.58(0.20)        \\ 
$r=n^{0.9}$  & 0.13(0.99) & 0.79(0.61) & 0.02(0.17) & 0.14(0.10) & 0.89 & 1.16 & 0.96 & 91.9\% & 0.58(0.19)   \\ 
$n=700,r=n^{0.8}$ & 0.13(0.83) & 0.67(0.50) & 0.02(0.14) & 0.12(0.08) & 0.69 & 0.95 & 0.71 & 88.4\% & 0.46(0.16)   \\
$r=n^{0.85}$ & 0.08(0.69) & 0.55(0.42) & 0.01(0.13) & 0.10(0.08) & 0.62 & 0.86 & 0.64 & 92.1\%& 0.45(0.13)  \\
$r=n^{0.9}$ & 0.05(0.57) & 0.45(0.35) & 0.01(0.11) & 0.09(0.07) & 0.52 & 0.73 & 0.54 & 93.3\% & 0.40(0.11) \\  
$n=2000,r=n^{0.8}$ & 0.08(0.54) & 0.44(0.33) & 0.01(0.10) & 0.08(0.06) & 0.47 & 0.86 & 0.48 & 90.9\% & 0.35(0.11) \\
$r=n^{0.85}$ & 0.03(0.49) & 0.39(0.30) & 0.00(0.10) & 0.08(0.06) & 0.46 & 0.85 & 0.47 & 93.0\% & 0.36(0.10) \\
$r=n^{0.9}$ & 0.02(0.40) & 0.31(0.24) & 0.00(0.08) & 0.07(0.05) & 0.38 & 0.72 & 0.40 & 95.2\% & 0.32(0.08) \\
\hline
\multicolumn{1}{l}{} & \multicolumn{7}{c}{Poisson Model}          \\ \hline
$n=400,r=n^{0.8}$ & -0.60(0.49) & 0.63(0.45) & -0.30(0.29) & 0.33(0.26) & 0.43 & 0.53 & 0.44 & 77.1\% & 0.90(0.54)           \\ 
$r=n^{0.85}$ & -0.49(0.49) & 0.55(0.42) & -0.25(0.31) & 0.30(0.26) & 0.44 & 0.58 & 0.48 & 86.6\% & 1.11(0.66)      \\ 
$r=n^{0.9}$ & -0.37(0.51) & 0.49(0.40) & -0.17(0.35) & 0.29(0.26) & 0.48 & 0.62 & 0.51 & 90.7\% & 1.37(0.85)  \\ 
$n=700,r=n^{0.8}$ & -0.46(0.39) & 0.49(0.35) & -0.25(0.26) & 0.28(0.23) & 0.35 & 0.48 & 0.35 & 78.3\% & 0.79(0.47)   \\ 
$r=n^{0.85}$ & -0.31(0.38) & 0.38(0.30) & -0.18(0.27) & 0.24(0.22) & 0.35 & 0.50 & 0.37 & 88.5\% & 0.95(0.56)   \\ 
$r=n^{0.9}$ & -0.20(0.38) & 0.33(0.27) & -0.11(0.30) & 0.23(0.22) & 0.36 & 0.53 & 0.38 & 92.8\% & 1.11(0.68)    \\
$n=2000,r=n^{0.8}$ & -0.22(0.26) & 0.27(0.20) & -0.14(0.20) & 0.19(0.17) & 0.24 & 0.46 & 0.25 & 86.6\% & 0.67(0.39) \\
$r=n^{0.85}$ & -0.15(0.27) & 0.24(0.19) & -0.10(0.22) & 0.17(0.17) & 0.26 & 0.49 & 0.26 & 91.7\% & 0.78(0.46) \\
$r=n^{0.9}$ & -0.08(0.26) & 0.22(0.17) & -0.05(0.24) & 0.17(0.17) & 0.25 & 0.51 & 0.27 & 94.3\% & 0.85(0.52) \\ \hline
\end{tabular}
\end{table}

\begin{table}[H]
\setlength{\tabcolsep}{2.4pt}
\renewcommand{\arraystretch}{0.5}
\centering
\caption{Simulation results with neural networks under $p=7$.}
\label{table:simu_NNcomparep2}
\begin{tabular}{rccccccccc}
\hline
\multicolumn{1}{l}{} & $\text{Bias}_f$  & $\text{MAE}_f$ & $\text{Bias}_{\psi'}$  &  $\text{MAE}_{\psi'}$   & EmpSD & SE &$\text{SE}_c$  & CP     & AIL  \\ \hline
\multicolumn{1}{l}{} & \multicolumn{7}{c}{Logistic Model}          \\ \hline
$n=400,r=n^{0.8}$ & 0.05(0.48) & 0.38(0.30) & 0.01(0.10) & 0.08(0.06) & 0.44 & 0.53 & 0.45 & 93.3\% & 0.34(0.09)      \\ 
$r=n^{0.85}$ & 0.03(0.43) & 0.34(0.27) & 0.01(0.09) & 0.07(0.06) & 0.40 & 0.47 & 0.40 & 93.7\% & 0.31(0.08)        \\ 
$r=n^{0.9}$ & 0.05(0.38) & 0.30(0.24) & 0.01(0.08) & 0.06(0.05) & 0.34 & 0.43 & 0.36 & 94.6\% & 0.29(0.07)    \\ 
$n=700,r=n^{0.8}$ & 0.04(0.30) & 0.23(0.19) & 0.01(0.06) & 0.05(0.04) & 0.27 & 0.36 & 0.28 & 93.9\% & 0.23(0.06)    \\
$r=n^{0.85}$ & 0.03(0.30) & 0.23(0.19) & 0.01(0.06) & 0.05(0.04) & 0.27 & 0.38 & 0.28 & 93.5\% & 0.23(0.05)     \\
$r=n^{0.9}$ & 0.04(0.31) & 0.24(0.20) & 0.01(0.06) & 0.05(0.04) & 0.26 & 0.35 & 0.27 & 92.2\% & 0.22(0.05)  \\  
\hline
\multicolumn{1}{l}{} & \multicolumn{7}{c}{Poisson Model}          \\ \hline
$n=400,r=n^{0.8}$ & -0.20(0.30) & 0.28(0.23) & -0.12(0.22) & 0.18(0.17) & 0.27 & 0.34 & 0.28 & 90.6\% & 0.74(0.42)             \\ 
$r=n^{0.85}$ & -0.12(0.28) & 0.23(0.20) & -0.07(0.22) & 0.16(0.16) & 0.26 & 0.33 & 0.27 & 93.7\% & 0.78(0.47)      \\ 
$r=n^{0.9}$ & -0.07(0.26) & 0.21(0.17) & -0.04(0.22) & 0.15(0.16) & 0.25 & 0.31 & 0.26 & 94.6\% & 0.78(0.48)    \\ 
$n=700,r=n^{0.8}$ & -0.09(0.20) & 0.17(0.15) & -0.06(0.16) & 0.12(0.12) & 0.19 & 0.26 & 0.19 & 92.8\% & 0.55(0.32)    \\ 
$r=n^{0.85}$ &-0.03(0.19) & 0.15(0.13) & -0.03(0.16) & 0.11(0.11) & 0.17 & 0.26 & 0.18 & 93.2\% & 0.52(0.26)  \\ 
$r=n^{0.9}$ & -0.02(0.20) & 0.15(0.12) & -0.02(0.17) & 0.12(0.12) & 0.18 & 0.26 & 0.19 & 94.4\% & 0.59(0.34)     \\ \hline
\end{tabular}
\end{table}

\begin{table}[H]
\setlength{\tabcolsep}{2.4pt}
\renewcommand{\arraystretch}{0.5}
\centering
\caption{Simulation results with neural networks under different true function $g(\xb)$.}
\label{table:simu_NNcompareg}
\begin{tabular}{rccccccccc}
\hline
\multicolumn{1}{l}{} & $\text{Bias}_f$  & $\text{MAE}_f$ & $\text{Bias}_{\psi'}$  &  $\text{MAE}_{\psi'}$   & EmpSD & SE &$\text{SE}_c$  & CP     & AIL  \\ \hline
\multicolumn{1}{l}{} & \multicolumn{7}{c}{Logistic Model}          \\ \hline
$n=400,r=n^{0.8}$ &0.04(0.79) & 0.61(0.50) & 0.01(0.15) & 0.12(0.09) & 0.72 & 0.90 & 0.75 & 93.5\% & 0.50(0.16)   \\ 
$r=n^{0.85}$  & 0.06(0.70) & 0.55(0.44) & 0.01(0.14) & 0.11(0.09) & 0.65 & 0.78 & 0.66 & 93.6\% & 0.46(0.14)        \\ 
$r=n^{0.9}$  &0.07(0.60) & 0.47(0.38) & 0.01(0.12) & 0.10(0.12) & 0.54 & 0.68 & 0.57 & 93.8\% & 0.42(0.12)    \\ 
$n=700,r=n^{0.8}$ & 0.07(0.51) & 0.40(0.32) & 0.01(0.11) & 0.08(0.07) & 0.45 & 0.58 & 0.45 & 91.9\% & 0.34(0.10)   \\
$r=n^{0.85}$ & 0.07(0.45) & 0.35(0.28) & 0.01(0.10) & 0.07(0.06) & 0.38 & 0.51 & 0.39 & 91.8\%& 0.31(0.09)  \\
$r=n^{0.9}$ & 0.09(0.38) & 0.31(0.24) & 0.02(0.08) & 0.07(0.05) & 0.32 & 0.45 & 0.34 & 92.0\% & 0.28(0.08) \\  
\hline
\multicolumn{1}{l}{} & \multicolumn{7}{c}{Poisson Model}          \\ \hline
$n=400,r=n^{0.8}$ & -0.42(0.50) & 0.50(0.42) & -0.23(0.34) & 0.29(0.29) & 0.42 & 0.52 & 0.44 & 85.6\% & 0.97(0.66)           \\ 
$r=n^{0.85}$ & -0.29(0.48) & 0.42(0.36) & -0.16(0.33) & 0.26(0.26) & 0.41 & 0.52 & 0.43 & 90.3\% & 1.09(0.76)      \\ 
$r=n^{0.9}$ & -0.18(0.45) & 0.37(0.32) & -0.10(0.35) & 0.25(0.27) & 0.40 & 0.51 & 0.42 & 92.7\% & 1.22(0.93)  \\ 
$n=700,r=n^{0.8}$ & -0.21(0.37) & 0.32(0.28) & -0.14(0.27) & 0.21(0.22) & 0.30 & 0.41 & 0.31 & 88.0\% & 0.78(0.52)   \\ 
$r=n^{0.85}$ & -0.12(0.35) & 0.28(0.24) & -0.08(0.28) & 0.20(0.21) & 0.29 & 0.40 & 0.30 & 90.5\% & 0.83(0.57)   \\ 
$r=n^{0.9}$ & -0.05(0.34) & 0.26(0.22) & -0.05(0.27) & 0.19(0.20) & 0.27 & 0.39 & 0.28 & 91.2\% & 0.84(0.58)    \\ \hline
\end{tabular}
\end{table}

 As shown in Table~\ref{table:simu_NNcomparep1}, increasing the dimension to \( p = 20 \) degrades estimation accuracy, with \(\text{MAE}_f\) rising from 0.28 to 0.55 and empirical standard deviation from 0.34 to 0.62 under the logistic regression model with \( n = 700 \). This reflects increased bias and variability, leading to a drop in coverage from 93.7\% to 92.1\%. Similar effects of high-dimensional noise on random forests were observed in \citet{wager2018estimation}. Increasing the sample size to \( n = 2000 \) under \( p = 20 \) improves estimation, restoring coverage to nearly 95\%. Conversely, reducing the dimension to \( p = 7 \) improves neural network performance (Table~\ref{table:simu_NNcomparep2}), with \(\text{MAE}_f\) decreasing to 0.23 and empirical standard deviation to 0.27. The reduced bias yields shorter intervals and coverage close to 95\%.

We also consider another nonlinear function 
\begin{align*}
    g(\xb)= 2 \tanh\bigg( \frac{1.5 x_1 + 0.6(x_2^2 - 1) + 0.4 x_3 \cdot \tanh(x_4) + 0.15 \cdot \sin(x_5)}{2.5} \bigg),
\end{align*}
where $\tanh(x)=(e^x-e^{-x})/(e^x+e^{-x})$.  This function introduces quadratic, $\tanh$, and even interaction components, representing a moderately nonlinear structure that remains learnable for neural networks. All other simulation settings are kept the same as the basic settings as those in  Section~\ref{sec:simulations}.

Table~\ref{table:simu_NNcompareg} shows the results with a different function $g(\xb)$. We find that the corrected standard error estimator remains accurate, but the coverage probability shows a mild drop compared to the basic setting. This phenomenon shows that not only random forests but also neural networks may exhibit bias in their output logits, particularly under complex nonlinear structures. Neural networks, especially in nonlinear and high-dimensional regimes, can incur non-negligible approximation bias due to optimization challenges or limitations in representing the true underlying function $g(\xb)$. Such biases are difficult to eliminate in practice and can result in reduced coverage probability, even when variance estimation is accurate. How to effectively reduce or correct for such bias remains an open problem, and addressing it is key to improving the reliability of inference.

\subsection{Additional Experiments with Optimization}
\label{subsec:optimization}

We examine how optimization hyperparameters, i.e., weight decay and training epochs, affect results and find our inference stable across reasonable settings. In particular, we vary training epochs (300, 700) and weight decay ($2\times10^{-3}$, $2\times10^{-4}$).

\begin{table}[t]
\setlength{\tabcolsep}{2.4pt}
\renewcommand{\arraystretch}{0.5}
\centering
\caption{Simulation results   with different epochs.}
\label{table:simu_optimization_epoch}
\begin{tabular}{rccccccccc}
\hline
\multicolumn{1}{l}{} & $\text{Bias}_f$  & $\text{MAE}_f$ & $\text{Bias}_{\psi'}$  &  $\text{MAE}_{\psi'}$   & EmpSD & SE &$\text{SE}_c$  & CP     & AIL  \\ \hline
\multicolumn{1}{l}{} & \multicolumn{7}{c}{Logistic with Epochs $=300$}          \\ \hline
$n=400,r=n^{0.8}$ & 0.08(0.50) & 0.40(0.31) & 0.01(0.10) &0.08(0.06) & 0.46 & 0.56 & 0.47 & 93.4\% & 0.36(0.09)   \\ 
$r=n^{0.85}$  & 0.07(0.44) & 0.35(0.28) & 0.02(0.09) & 0.07(0.06) & 0.41 & 0.50 & 0.42 & 93.9\% & 0.34(0.08)      \\ 
$r=n^{0.9}$ & 0.07(0.40) & 0.32(0.26) & 0.01(0.09) & 0.07(0.06) & 0.38 & 0.46 & 0.39 & 93.9\% & 0.32(0.07)   \\ 
$n=700,r=n^{0.8}$ & 0.07(0.33) & 0.26(0.21) & 0.01(0.07) & 0.06(0.05) & 0.29 & 0.39 & 0.30 & 92.5\% & 0.25(0.05)    \\
$r=n^{0.85}$ & 0.07(0.31) & 0.25(0.19) & 0.02(0.07) & 0.05(0.04) & 0.27 & 0.36 & 0.27& 91.6\% & 0.23(0.05)   \\
$r=n^{0.9}$ & 0.06(0.30) & 0.24(0.19) & 0.01(0.07) & 0.05(0.04) & 0.25 & 0.35 & 0.26 & 90.0\% & 0.22(0.05)   \\  
\hline
\multicolumn{1}{l}{} & \multicolumn{7}{c}{Logistic with Epochs $=700$}          \\ \hline
$n=400,r=n^{0.8}$ & 0.13(0.86) & 0.69(0.53) & 0.02(0.15) & 0.13(0.09) & 0.76 & 0.93 & 0.78 & 90.5\% & 0.51(0.15)              \\ 
$r=n^{0.85}$ & 0.08(0.72) & 0.57(0.45) & 0.01(0.14) & 0.11(0.08) & 0.67 & 0.83 & 0.70 & 92.8\% & 0.49(0.14)       \\ 
$r=n^{0.9}$ & 0.06(0.61) & 0.48(0.38) & 0.01(0.12) & 0.10(0.07) & 0.58 & 0.72 & 0.60 & 94.1\% & 0.45(0.12)     \\ 
$n=700,r=n^{0.8}$ & 0.07(0.50) & 0.40(0.31) & 0.01(0.10) & 0.08(0.06) & 0.46 & 0.62 & 0.47 & 92.9\% & 0.36(0.09)     \\ 
$r=n^{0.85}$ & 0.05(0.41) & 0.32(0.25) & 0.01(0.09) & 0.07(0.05) & 0.39 & 0.52 & 0.40 & 94.5\% & 0.32(0.08)   \\ 
$r=n^{0.9}$ & 0.05(0.36) & 0.28(0.22) & 0.01(0.08) & 0.06(0.05) & 0.34 & 0.46 & 0.35 & 94.5\% & 0.29(0.07)     \\ \hline
\multicolumn{1}{l}{} & \multicolumn{7}{c}{ Poisson with Epochs $=300$}          \\ \hline
$n=400,r=n^{0.8}$ &-0.23(0.32) & 0.30(0.25) & -0.13(0.22) & 0.19(0.16) & 0.30 & 0.37 & 0.31 & 90.2\% & 0.78(0.40)              \\ 
$r=n^{0.85}$ & -0.15(0.31) & 0.27(0.22) & -0.08(0.23) & 0.18(0.16) & 0.30 & 0.37 & 0.30 & 92.7\% & 0.84(0.45)       \\ 
$r=n^{0.9}$ & -0.08(0.30) & 0.24(0.20) & -0.04(0.23) & 0.17(0.16) & 0.29 & 0.36 & 0.30 & 94.2\% & 0.88(0.48)     \\ 
$n=700,r=n^{0.8}$ & -0.12(0.23) & 0.20(0.16) & -0.07(0.17) & 0.14(0.13) & 0.21 & 0.29 & 0.22 & 92.2\% & 0.60(0.30)    \\ 
$r=n^{0.85}$ & -0.06(0.22) & 0.18(0.14) & -0.04(0.17) & 0.13(0.12) & 0.21 & 0.29 & 0.21 & 93.8\% & 0.62(0.31)     \\ 
$r=n^{0.9}$ & -0.01(0.22) & 0.17(0.14) & -0.01(0.17) & 0.13(0.12) & 0.20 & 0.28 & 0.20 & 93.7\% & 0.62(0.31)      \\ \hline
\multicolumn{1}{l}{} & \multicolumn{7}{c}{ Poisson with Epochs $=700$}          \\ \hline
$n=400,r=n^{0.8}$  & -0.37(0.41) & 0.43(0.34) & -0.19(0.24) & 0.24(0.19) & 0.37 & 0.47 & 0.39 & 87.2\% & 0.91(0.49)            \\ 
$r=n^{0.85}$ & -0.26(0.39) & 0.36(0.30) & -0.13(0.26) & 0.22(0.19) & 0.37 & 0.47 & 0.39 & 92.0\% & 1.02(0.57)      \\ 
$r=n^{0.9}$ &-0.18(0.37) & 0.32(0.26) & -0.08(0.26) & 0.21(0.18) & 0.35 & 0.46 & 0.38 & 93.9\% & 1.10(0.65)     \\ 
$n=700,r=n^{0.8}$ &-0.20(0.29) & 0.27(0.22) & -0.11(0.20) & 0.17(0.15) & 0.27 & 0.37 & 0.28 & 90.3\% & 0.72(0.37)    \\ 
$r=n^{0.85}$  & -0.12(0.27) & 0.23(0.19) & -0.07(0.20) & 0.16(0.14) & 0.26 & 0.36 & 0.27 & 93.1\% & 0.76(0.40)    \\ 
$r=n^{0.9}$  & -0.06(0.26) & 0.21(0.17) & -0.03(0.21) & 0.15(0.14) & 0.25 & 0.35 & 0.26 & 94.3\% & 0.77(0.42)         \\ \hline
\end{tabular}
\end{table}

\begin{table}[t]
\setlength{\tabcolsep}{2.4pt}
\renewcommand{\arraystretch}{0.5}
\centering
\caption{Simulation results   with different weight decay.}
\label{table:simu_optimization_weightdecay}
\begin{tabular}{rccccccccc}
\hline
\multicolumn{1}{l}{} & $\text{Bias}_f$  & $\text{MAE}_f$ & $\text{Bias}_{\psi'}$  &  $\text{MAE}_{\psi'}$   & EmpSD & SE &$\text{SE}_c$  & CP     & AIL  \\ \hline
\multicolumn{1}{l}{} & \multicolumn{7}{c}{Logistic with WeightDecay $=2\times 10^{-3}$}          \\ \hline
$n=400,r=n^{0.8}$ & 0.18(1.61) & 1.28(0.98) & 0.02(0.22) & 0.19(0.11) & 1.25 & 1.54 & 1.27 & 87.5\% & 0.62(0.22)   \\ 
$r=n^{0.85}$ & 0.13(1.30) & 1.02(0.81) & 0.02(0.20) & 0.16(0.10) & 1.08 & 1.36 & 1.13 & 90.9\% & 0.61(0.19)      \\ 
$r=n^{0.9}$ & 0.08(1.06) & 0.83(0.66) & 0.01(0.18) & 0.15(0.10) & 0.94 & 1.19 & 0.97 & 92.9\% & 0.59(0.17)    \\ 
$n=700,r=n^{0.8}$ & 0.10(0.93) & 0.73(0.58) & 0.01(0.15) & 0.13(0.08) & 0.75 & 1.02 & 0.76 & 89.2\% & 0.48(0.15)     \\
$r=n^{0.85}$ & 0.07(0.72) & 0.57(0.45) & 0.01(0.13) & 0.11(0.08) & 0.63 & 0.87 & 0.64 & 92.0\% & 0.45(0.12)   \\
$r=n^{0.9}$ & 0.03(0.59) & 0.46(0.37) & 0.00(0.11) & 0.09(0.07) & 0.54 & 0.76 & 0.55 & 93.8\% & 0.41(0.10)  \\  
\hline
\multicolumn{1}{l}{} & \multicolumn{7}{c}{Logistic with WeightDecay $=2\times 10^{-4}$}          \\ \hline
$n=400,r=n^{0.8}$ & 0.22(1.72) & 1.36(1.07) & 0.02(0.22) & 0.19(0.11) & 1.32 & 1.66 & 1.37 & 88.1\% & 0.64(0.23)     \\ 
$r=n^{0.85}$ & 0.15(1.41) & 1.11(0.88) & 0.02(0.20) & 0.17(0.11) & 1.17 & 1.48 & 1.22 & 91.1\% & 0.64(0.20)      \\ 
$r=n^{0.9}$  & 0.09(1.13) & 0.88(0.71) & 0.01(0.18) & 0.15(0.10) & 1.00 & 1.28 & 1.04 & 93.3\% & 0.61(0.17)     \\ 
$n=700,r=n^{0.8}$ & 0.12(1.00) & 0.80(0.63) & 0.02(0.16) & 0.14(0.09) & 0.80 & 1.10 & 0.82 & 89.1\% & 0.50(0.16)      \\ 
$r=n^{0.85}$ & 0.07(0.78) & 0.61(0.49) & 0.01(0.14) & 0.11(0.08) & 0.67 & 0.93 & 0.69 & 92.0\% & 0.47(0.13)     \\ 
$r=n^{0.9}$ & 0.04(0.64) & 0.50(0.40) & 0.01(0.12) & 0.10(0.07) & 0.59 & 0.83 & 0.60 & 93.6\% & 0.43(0.11)       \\ \hline
\multicolumn{1}{l}{} & \multicolumn{7}{c}{Poisson with WeightDecay $=2\times 10^{-3}$}          \\ \hline
$n=400,r=n^{0.8}$ & -0.67(0.60) & 0.70(0.56) & -0.28(0.25) & 0.31(0.21) & 0.49 & 0.63 & 0.51 & 83.0\% & 0.95(0.51)               \\ 
$r=n^{0.85}$ & -0.49(0.56) & 0.56(0.49) & -0.21(0.27) & 0.27(0.20) & 0.48 & 0.62 & 0.50 & 88.8\% & 1.10(0.60)         \\ 
$r=n^{0.9}$ & -0.35(0.50) & 0.46(0.41) & -0.14(0.29) & 0.25(0.20) & 0.46 & 0.61 & 0.49 & 92.9\% & 1.24(0.72)     \\ 
$n=700,r=n^{0.8}$ & -0.41(0.41) & 0.44(0.37) & -0.20(0.21) & 0.23(0.17) & 0.35 & 0.49 & 0.36 & 84.3\% & 0.77(0.39)      \\ 
$r=n^{0.85}$ & -0.27(0.37) & 0.35(0.30) & -0.13(0.22) & 0.20(0.16) & 0.33 & 0.48 & 0.34 & 90.5\% & 0.84(0.43)      \\ 
$r=n^{0.9}$ & -0.18(0.35) & 0.30(0.26) & -0.08(0.24) & 0.19(0.16) & 0.33 & 0.48 & 0.33 & 92.8\% & 0.90(0.49)      \\ \hline
\multicolumn{1}{l}{} & \multicolumn{7}{c}{Poisson with WeightDecay $=2\times 10^{-4}$}          \\ \hline
$n=400,r=n^{0.8}$ & -0.72(0.64) & 0.75(0.61) & -0.29(0.25) & 0.32(0.22) & 0.52 & 0.65 & 0.53 & 81.6\% & 0.96(0.51)       \\ 
$r=n^{0.85}$ & -0.54(0.59) & 0.60(0.53) & -0.22(0.27) & 0.29(0.21) & 0.51 & 0.65 & 0.52 & 88.4\% & 1.11(0.61)      \\ 
$r=n^{0.9}$ & -0.38(0.52) & 0.48(0.42) & -0.16(0.29) & 0.26(0.21) & 0.47 & 0.64 & 0.51 & 92.7\% & 1.26(0.73)   \\ 
$n=700,r=n^{0.8}$ & -0.44(0.43) & 0.48(0.39) & -0.21(0.21) & 0.24(0.17) & 0.36 & 0.51 & 0.37 & 83.1\% & 0.77(0.39)   \\ 
$r=n^{0.85}$ & -0.30(0.39) & 0.38(0.33) & -0.14(0.23) & 0.21(0.17) & 0.35 & 0.50 & 0.36 & 89.6\% & 0.85(0.44)     \\ 
$r=n^{0.9}$ & -0.20(0.36) & 0.31(0.27) & -0.09(0.24) & 0.19(0.16) & 0.33 &  0.50 & 0.34 & 92.9\% & 0.92(0.50)          \\ \hline
\end{tabular}
\end{table}

As shown in Tables~\ref{table:simu_optimization_epoch} and \ref{table:simu_optimization_weightdecay}, our variance estimator remains accurate across different training epochs and weight decay values, closely matching the empirical standard deviation. This confirms that optimization error is negligible under reasonable hyperparameter settings and supports viewing Assumption~\ref{assump:train} as a technical device separating optimization from inference rather than a practical constraint on SGD convergence. Larger subsample sizes $r$ reduce bias in MAE, consistent with modern generalization theory. Overall, the results validate the robustness of our inference guarantees, while highlighting that extreme hyperparameters (e.g., excessive weight decay or too few epochs) can impair learning and should be avoided.


\subsection{Additional Experiments on $r$}
\label{subsec:simu_sensitive_r}

Recall that $r = n^{\gamma}$. We vary $\gamma$ from 0.65 to 0.95 (step 0.05) while keeping other settings as in Section~\ref{sec:simulations}, and plot the standard deviation, MAE of $\hat{f}^B - f_0$, and coverage probability versus $\gamma$ for $n=400$ and $n=700$ under two models.

\begin{figure}[htbp]
    \centering
   \subfloat[Standard Deviations]{\includegraphics[width=0.32\textwidth]{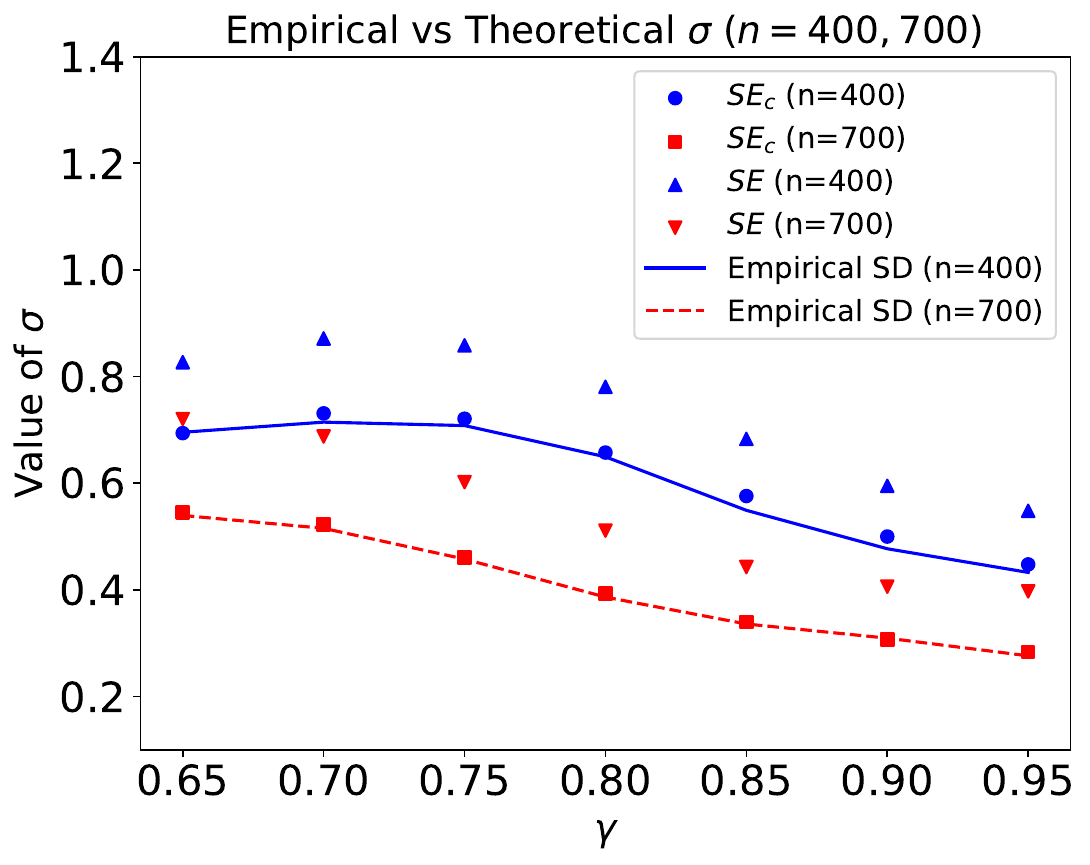}\label{fig_simu:sensitive_Bernoulli_sigma}}
    \subfloat[$\text{MAE}_f$]{\includegraphics[width=0.32\textwidth]{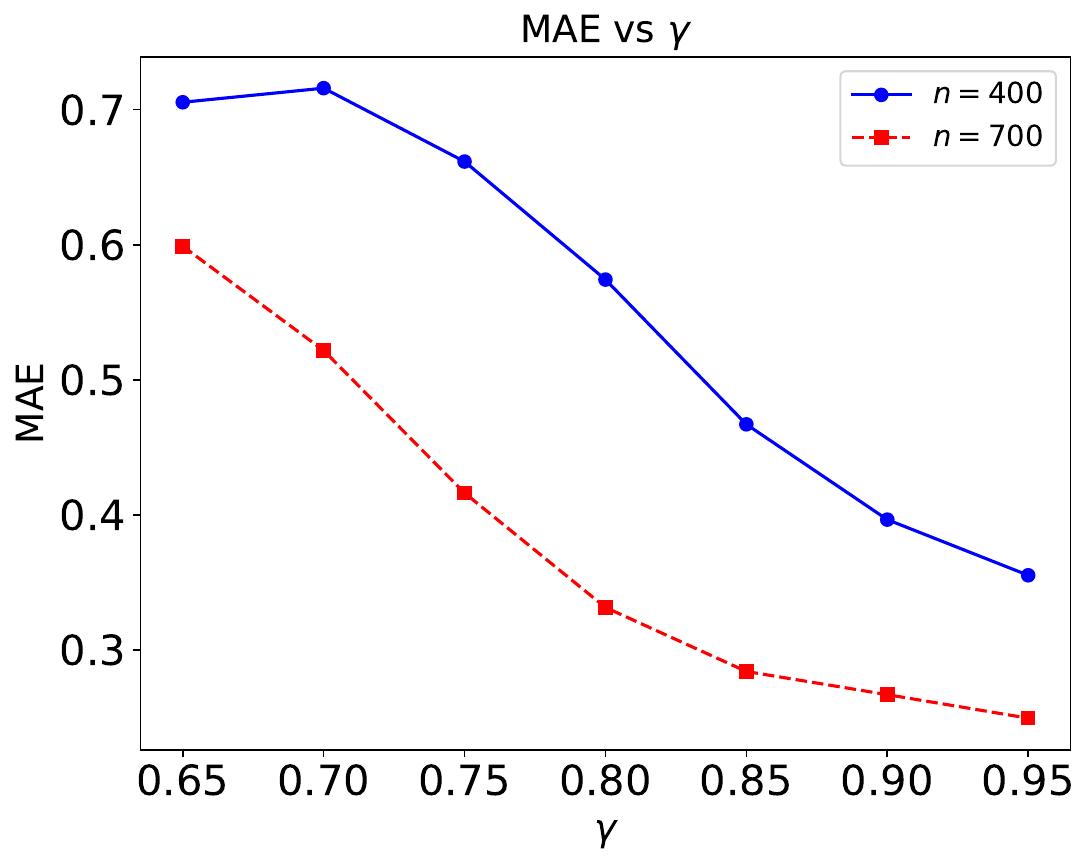}\label{fig_simu:sensitive_Bernoulli_MAE}}
    \subfloat[Cover Probability]{\includegraphics[width=0.32\textwidth]{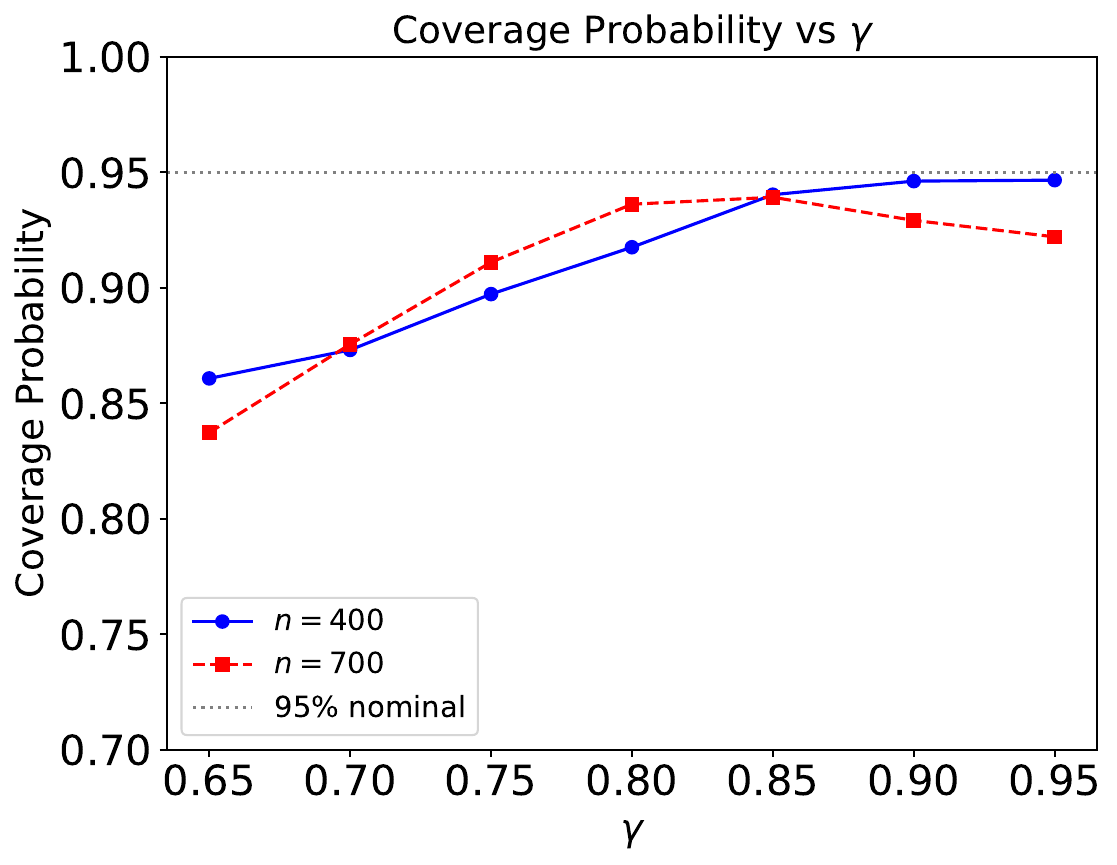}\label{fig_simu:sensitive_Bernoulli_cp}}

    \subfloat[Standard Deviations]{\includegraphics[width=0.32\textwidth]{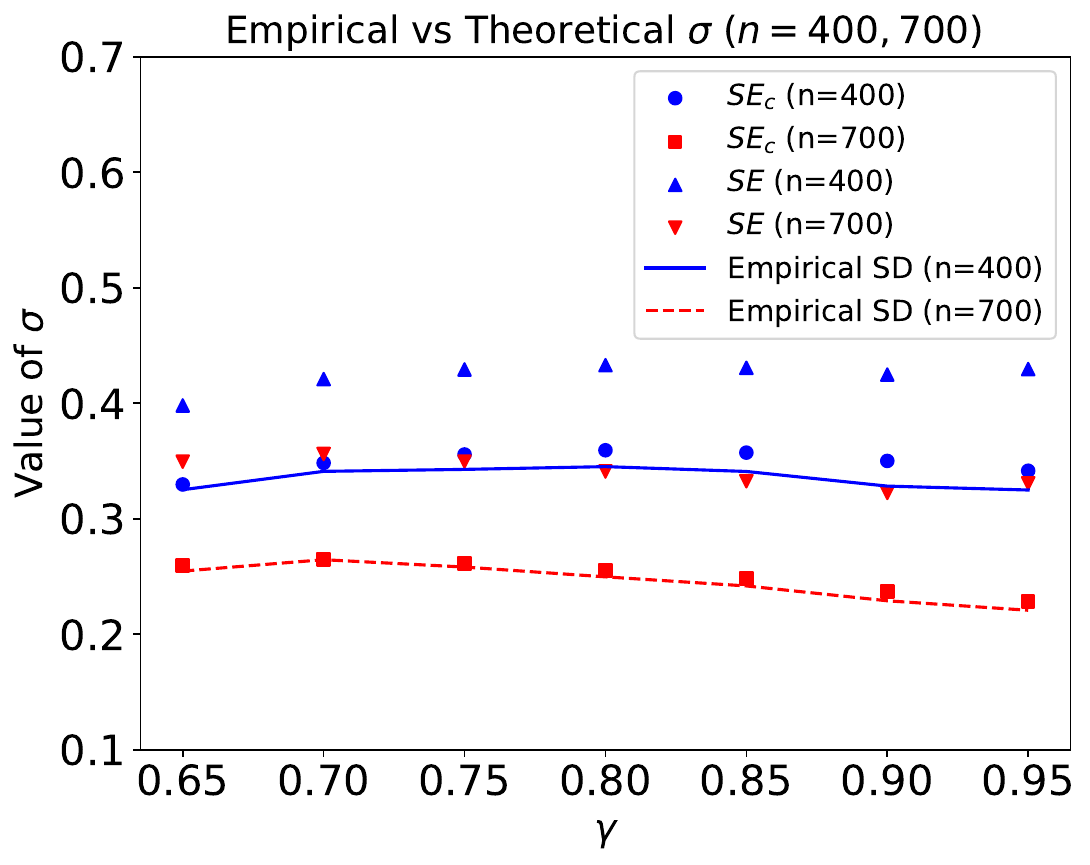}\label{fig_simu:sensitive_Poisson_sigma}}
    \subfloat[$\text{MAE}_f$]{\includegraphics[width=0.32\textwidth]{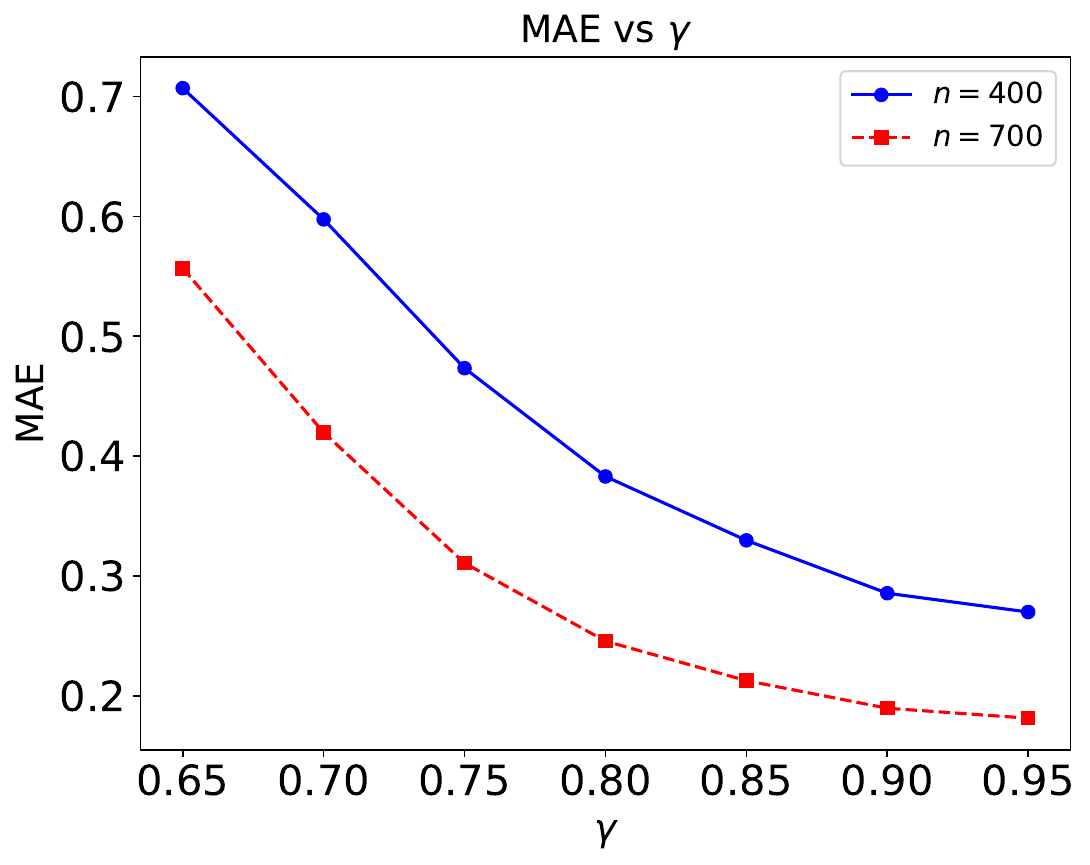}\label{fig_simu:sensitive_Poisson_MAE}}
    \subfloat[Cover Probability]{\includegraphics[width=0.32\textwidth]{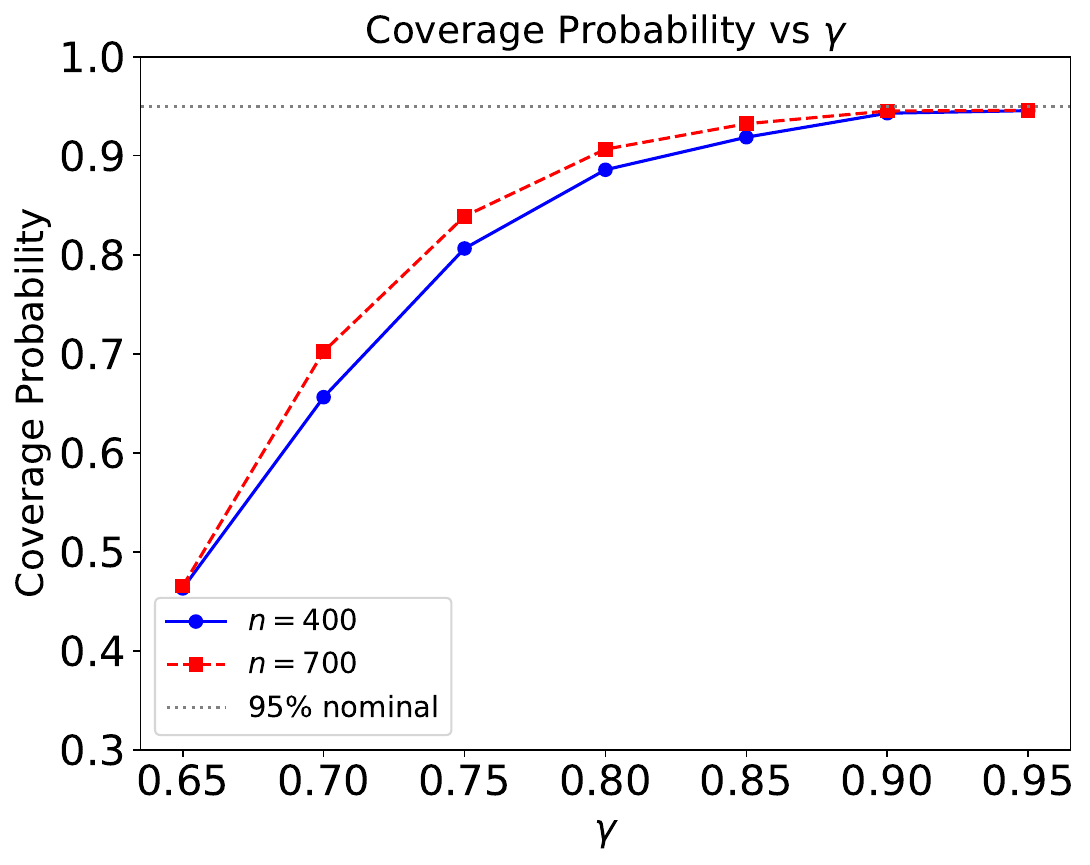}\label{fig_simu:sensitive_Poisson_cp}}
    \caption{Simulation results with different values of $\gamma$. Blue and red lines/points correspond to $n=400$ and $n=700$ respectively. Figures~\ref{fig_simu:sensitive_Bernoulli_sigma}--\ref{fig_simu:sensitive_Bernoulli_cp} present the results under the logistic model, and  Figures~\ref{fig_simu:sensitive_Poisson_sigma}--\ref{fig_simu:sensitive_Poisson_cp} show results under the Poisson model.}
    \label{figsimu:sensitive_r}
\end{figure}

Figures~\ref{fig_simu:sensitive_Bernoulli_sigma}–\ref{fig_simu:sensitive_Poisson_sigma} show that as $\gamma$ increases, our IJ variance estimator closely matches the empirical standard deviation in both logistic and Poisson models, confirming its robustness and validating the finite-sample correction factor $n(n-1)/(n-r)^2$. Figures~\ref{fig_simu:sensitive_Bernoulli_MAE}–\ref{fig_simu:sensitive_Poisson_MAE} indicate that MAE decreases with larger $r$, supporting Theorem~\ref{thm:bound_Rn} and showing bias reduction with larger subsamples. Finally, Figures~\ref{fig_simu:sensitive_Bernoulli_cp}–\ref{fig_simu:sensitive_Poisson_cp} demonstrate that as $r$ increases, coverage improves toward the nominal 95\% level, consistent with our assumption on $\gamma$ ensuring bias decay and valid inference.

We also evaluate computational cost across different $r$ values by recording the training time of a single neural network. All the simulations are conducted in the institute high performance computing platform. The simulation is repeated 300 times, with $B$ networks trained per repetition. As shown in Table~\ref{table:simu_computation_report}, training one network takes only a few seconds, while the main cost arises from repeatedly training the $B$ networks. Following \citet{wager2018estimation} and \citet{fei2024u}, we let $B$ grow proportionally with $n$ and apply bias correction to maintain variance estimation accuracy, largely reducing computation. Although $300\times B$ trainings would otherwise be intensive, high-performance computing with parallel nodes and multi-CPU systems makes the experiments feasible.


\begin{table}[t]
    \centering
\renewcommand{\arraystretch}{0.5}
\caption{Computation time  in seconds.}
\label{table:simu_computation_report}
\begin{tabular}{cccccccc}
\hline
\multicolumn{1}{c}{$\gamma=$} & $0.65$  & $0.7$ & $0.75$  &  $0.8$   & $0.85$ & $0.9$ &$0.95$    \\ \hline
\multicolumn{1}{c}{Logistic} & 0.74 & 0.82 & 1.00 & 1.38 & 1.41 & 1.90 & 2.30  \\
\multicolumn{1}{c}{Poisson} & 0.75 & 0.90 & 0.96 & 1.32 & 1.41 & 1.78 & 2.24
\\\hline
\end{tabular}
\end{table}

\subsection{Additional Experiments on $B$}
\label{subsec:simu_sensitive_B}
 We decrease $B$ from 1400 to 400 in increments of 200 to assess the consistency of variance estimation, while fixing $\gamma = 0.9$.

\begin{figure}[htbp]
    \centering
   \subfloat[Standard Deviations]{\includegraphics[width=0.32\textwidth]{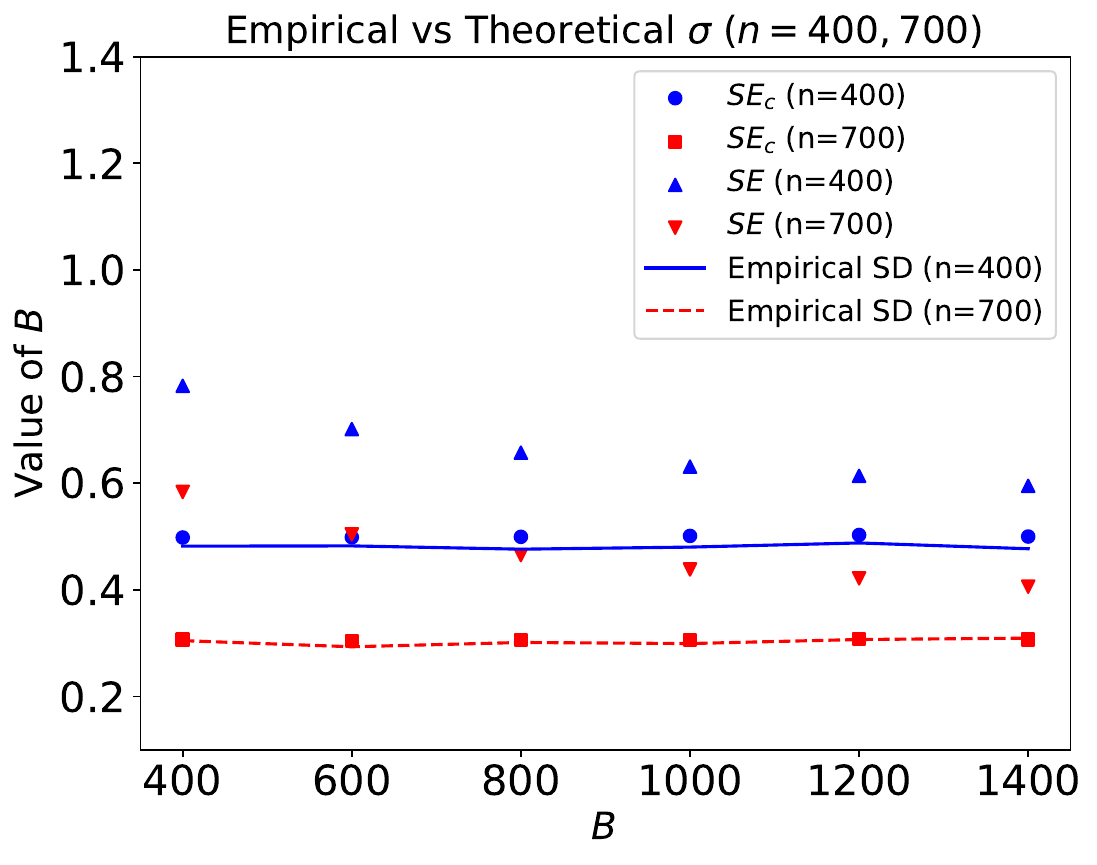}\label{fig_simu:sensitiveB_Bernoulli_sigma}}
    \subfloat[$\text{MAE}_f$]{\includegraphics[width=0.32\textwidth]{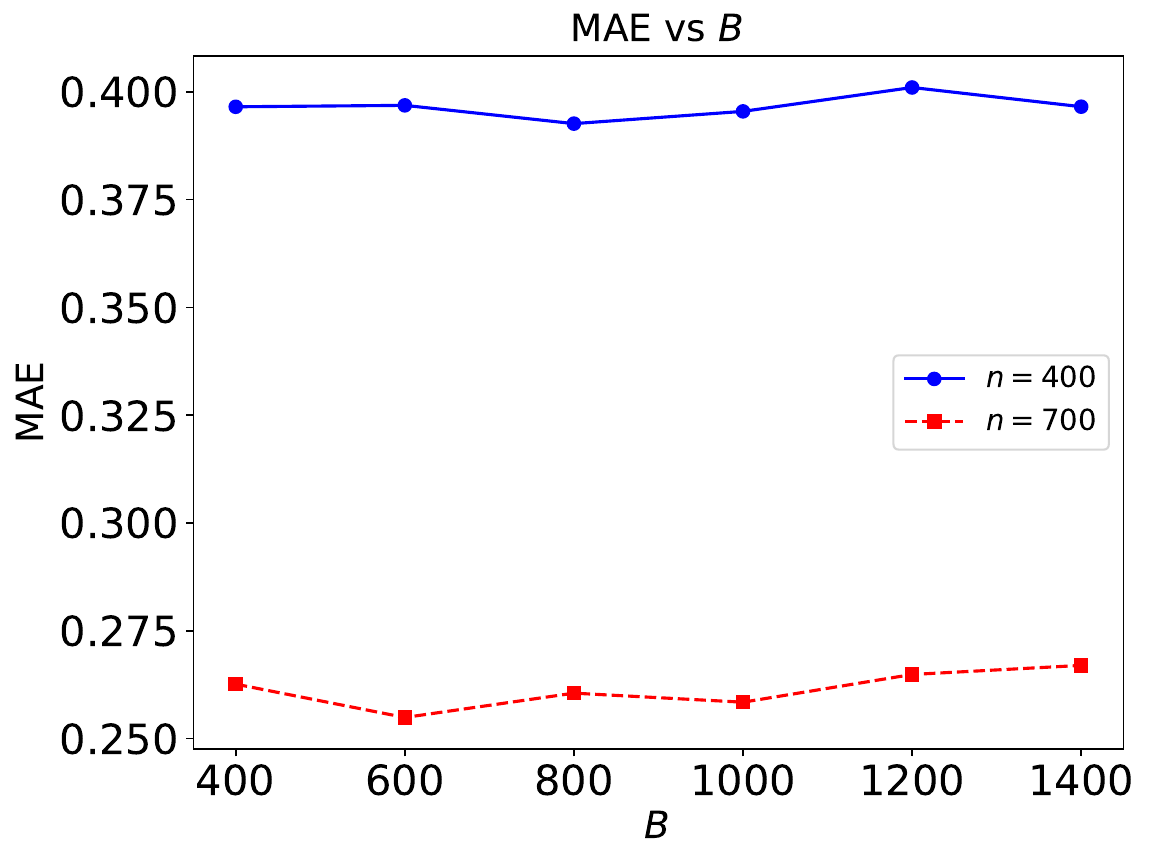}\label{fig_simu:sensitiveB_Bernoulli_MAE}}
    \subfloat[Cover Probability]{\includegraphics[width=0.32\textwidth]{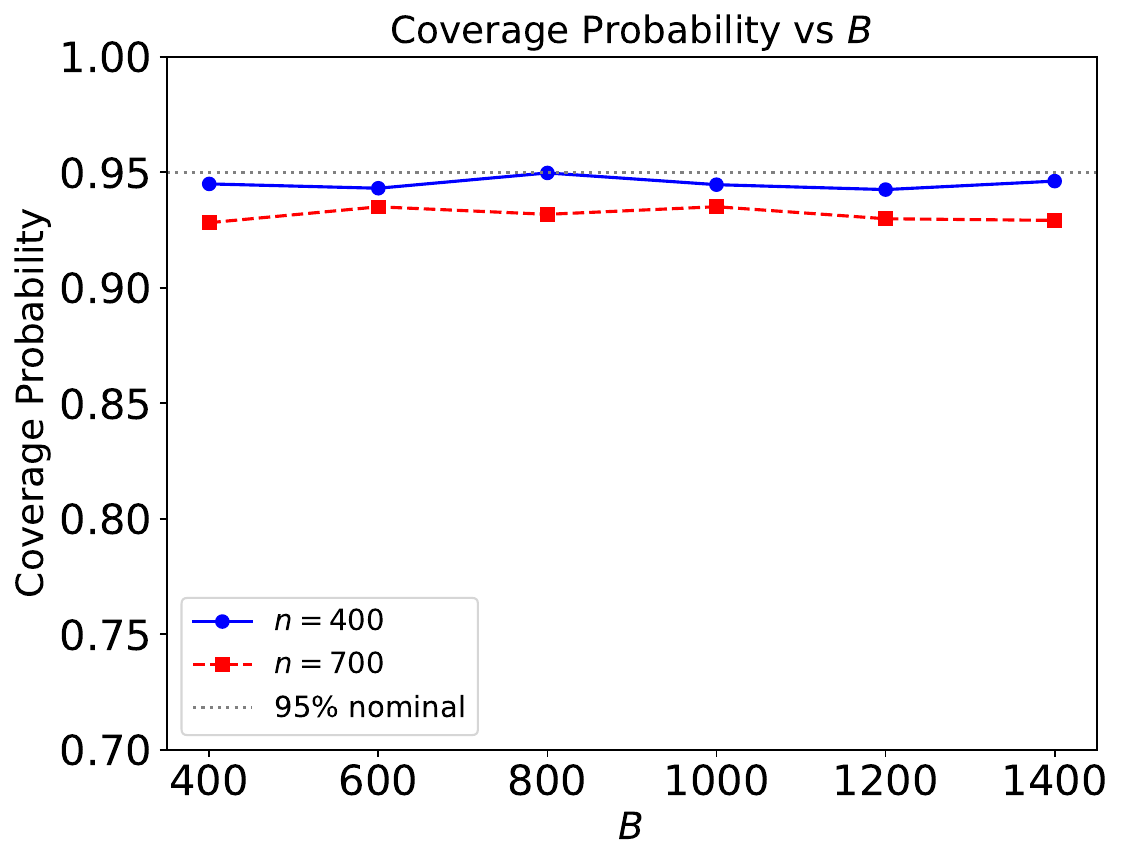}\label{fig_simu:sensitiveB_Bernoulli_cp}}

    \subfloat[Standard Deviations]{\includegraphics[width=0.32\textwidth]{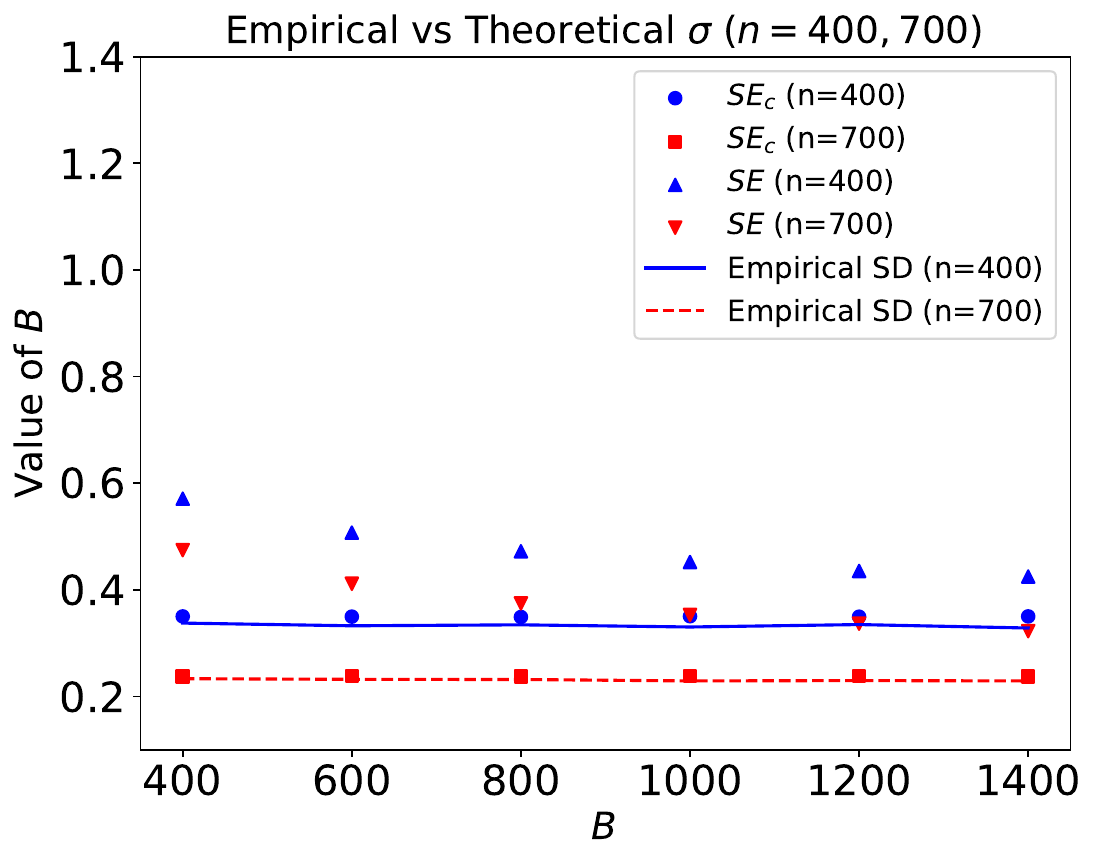}\label{fig_simu:sensitiveB_Poisson_sigma}}
    \subfloat[$\text{MAE}_f$]{\includegraphics[width=0.32\textwidth]{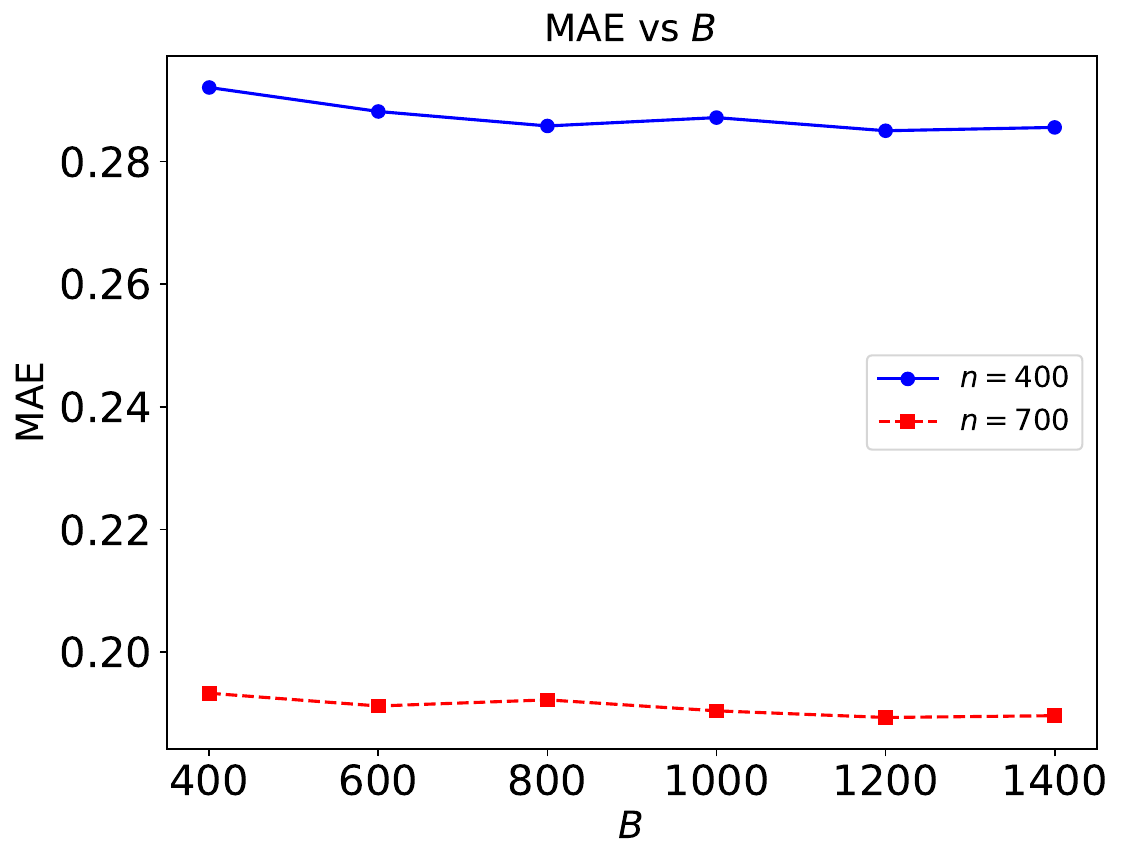}\label{fig_simu:sensitiveB_Poisson_MAE}}
    \subfloat[Cover Probability]{\includegraphics[width=0.32\textwidth]{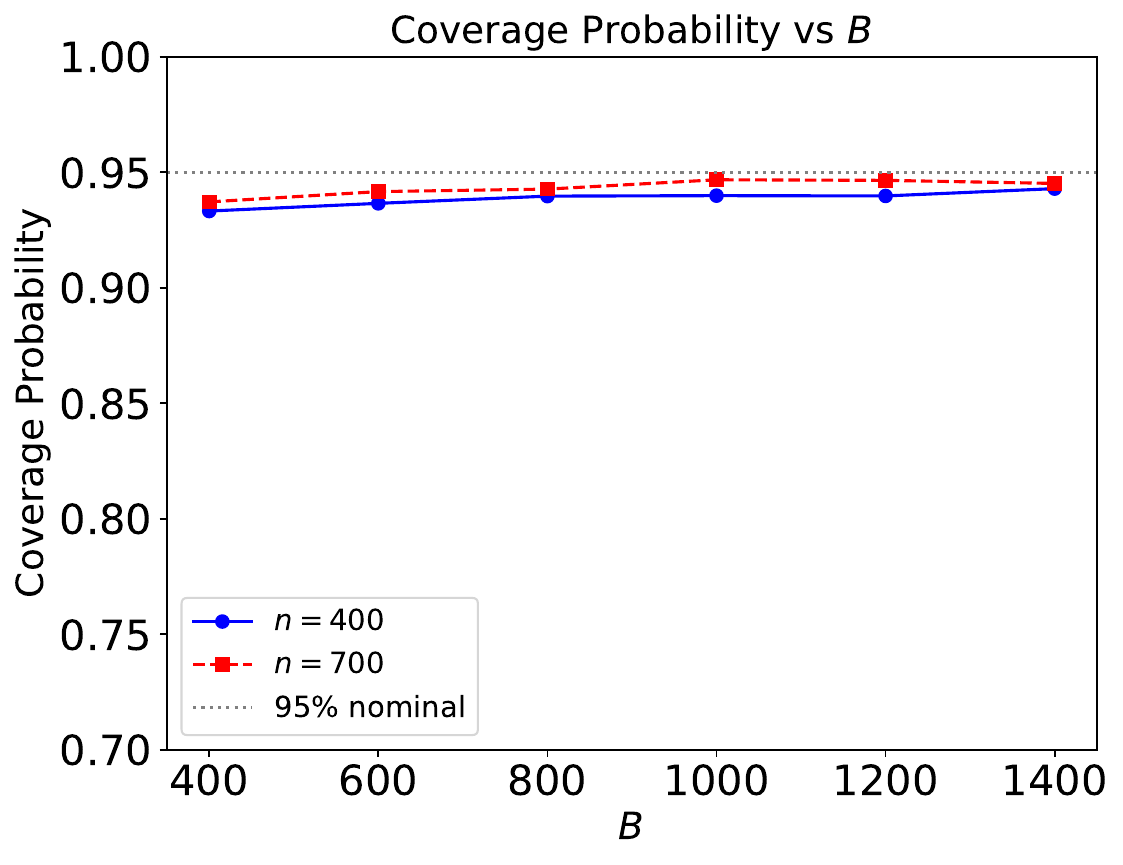}\label{fig_simu:sensitiveB_Poisson_cp}}
    \caption{Simulation results with different values of $B$. Blue and red lines/points correspond to $n=400$ and $n=700$ respectively. Figures~\ref{fig_simu:sensitiveB_Bernoulli_sigma}--\ref{fig_simu:sensitiveB_Bernoulli_cp} present the results under the logistic model, and  Figures~\ref{fig_simu:sensitiveB_Poisson_sigma}--\ref{fig_simu:sensitiveB_Poisson_cp} show results under the Poisson model.}
    \label{figsimu:sensitive_B}
\end{figure}

Figure~\ref{figsimu:sensitive_B} reports the standard deviations, $\text{MAE}_f$, and coverage probabilities as $B$ varies, analogous to Section~\ref{subsec:simu_sensitive_r}. As $B$ increases, the gap between empSD and the uncorrected SE narrows, while our bias-corrected estimator $\text{SE}_c$ closely matches empSD even when $B=400$ and $n=700$ ($B<n$). The $\text{MAE}_f$ is smaller for $n=700$ than for $n=400$, and all coverage probabilities remain near the nominal 95\% level. These results show that the bias correction is effective even with smaller $B$, allowing valid inference and stable performance while reducing computational cost.


\begin{table}[t]
\setlength{\tabcolsep}{2.7pt}
\renewcommand{\arraystretch}{0.5}
\centering
\caption{Simulation results for different models under varying sample sizes $n$ and subsampling ratios $r$ with larger $B=3000$.}
\label{table:simu_changeB}
\begin{tabular}{rccccccccc}
\hline
\multicolumn{1}{l}{} & $\text{Bias}_f$  & $\text{MAE}_f$ & $\text{Bias}_{\psi'}$  &  $\text{MAE}_{\psi'}$   & EmpSD & SE &$\text{SE}_c$  & CP     & AIL  \\ \hline
\multicolumn{1}{l}{} & \multicolumn{7}{c}{Logistic Model}          \\ \hline
$n=400,r=n^{0.8}$  & 0.10(0.71) & 0.56(0.44) & 0.02(0.13) & 0.11(0.08) & 0.64 & 0.72 & 0.66 & 92.0\% & 0.46(0.13)                   \\ 
$r=n^{0.85}$  &0.07(0.60) & 0.47(0.37) &  0.01(0.12) & 0.10(0.07) & 0.56 & 0.63& 0.57& 93.4\% & 0.43(0.11)                \\ 
$r=n^{0.9}$ & 0.05(0.49) & 0.39(0.31) & 0.01(0.10) & 0.08(0.06) & 0.47 & 0.55 & 0.50& {95.3\%} & 0.39(0.09)  \\     
$n=700,r=n^{0.8}$ & 0.06(0.41) & 0.33(0.26) & 0.01(0.09) & 0.07(0.05) & 0.38 & 0.46 & 0.40 & 93.7\% & 0.31(0.08)   \\
$r=n^{0.85}$ & 0.05(0.35) & 0.28(0.22) & 0.01(0.08) & 0.06(0.05) & 0.33 & 0.39 & 0.34 & {94.2\%} & 0.28(0.06)    \\
$r=n^{0.9}$ & 0.05(0.33)& 0.26(0.20) & 0.01(0.07) & 0.06(0.04) & 0.30 & 0.36& 0.31 & {93.8\%} & 0.26(0.06)     \\  
\hline
\multicolumn{1}{l}{} & \multicolumn{7}{c}{Poisson Model}          \\ \hline
$n=400,r=n^{0.8}$ & -0.32(0.38) &  0.39(0.32) & -0.17(0.24) & 0.22(0.18) & 0.36 & 0.39 & 0.36 & 88.1\% & 0.87(0.46)               \\ 
$r=n^{0.85}$ & -0.21(0.37) & 0.32(0.27) & -0.10(0.25) & 0.20(0.18) & 0.34 & 0.39 & 0.36 & 92.2\% & 0.96(0.53)           \\ 
$r=n^{0.9}$  & -0.13(0.35) & 0.28(0.24) & -0.06(0.26) & 0.20(0.18) & 0.33 & 0.38 & 0.35 & {93.5\%} & 1.02(0.58)       \\ 
$n=700,r=n^{0.8}$ & -0.16(0.27) & 0.24(0.20) & -0.09(0.19) & 0.16(0.14) & 0.25 & 0.30 & 0.25 & 90.4\% & 0.68(0.34)       \\ 
$r=n^{0.85}$ & -0.10(0.26) & 0.21(0.17) & -0.06(0.19) & 0.15(0.14) & 0.24 & 0.30 & 0.25 & 93.1\% & 0.71(0.37)       \\ 
$r=n^{0.9}$ & -0.04(0.24) & 0.19(0.15) & -0.02(0.19) &0.14(0.13) & 0.23 & 0.28 & 0.24 & {95.2\%} & 0.72(0.38)   \\ 
\hline
\end{tabular}
\end{table}

\begin{figure}[t]
    \centering
    \subfloat[Point Estimates and Variations]{\includegraphics[width=0.48\textwidth]{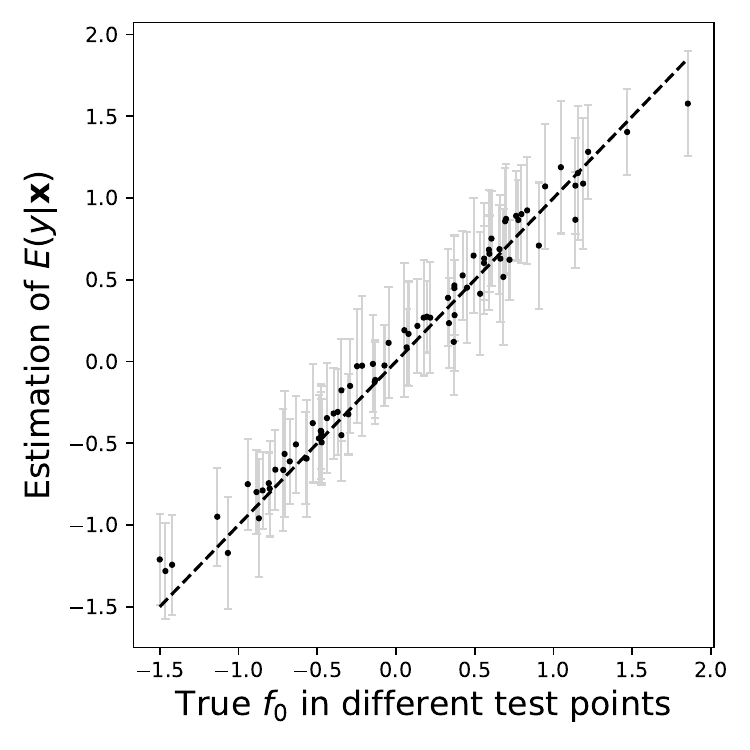}\label{fig_simu:part2CI}}
    \subfloat[Estimated Standard Errors and Variations]{\includegraphics[width=0.48\textwidth]{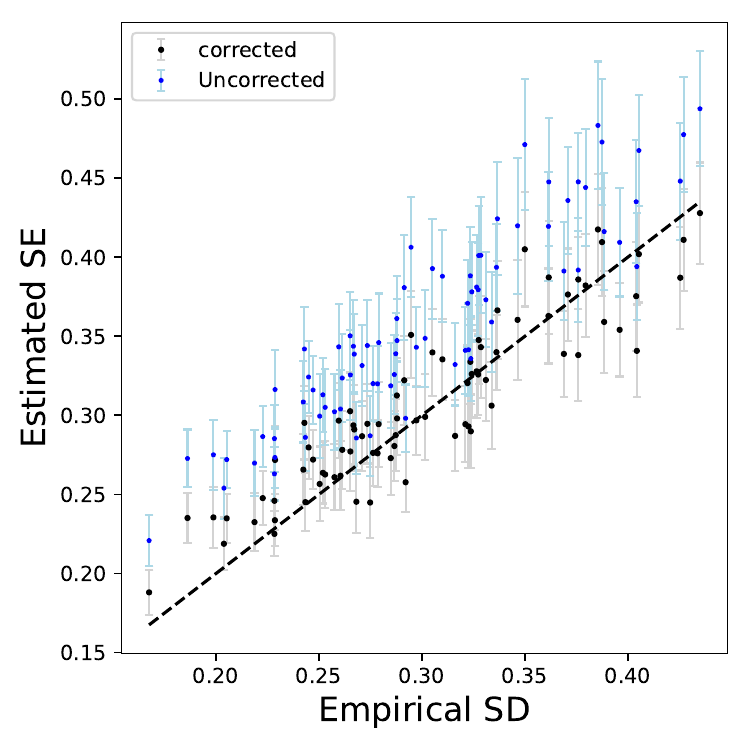}\label{fig_simu:part2SD}}
    \caption{Estimation and inference in simulation samples:  Logistic Model with $n=700$, $r=n^{0.9}$ and $B=3000$.      Figure~\ref{fig_simu:part2CI} shows the average estimated $\EE(y|\xb)$ with variability across 300 repetitions (gray band) over test points. Figure~\ref{fig_simu:part2SD} compares corrected and uncorrected standard errors and their variability (gray and blue bands, respectively) to the empirical standard deviations of estimates across all test samples.}
        \label{fig_simu:part2}
\end{figure}

We further assess the impact of a larger subsample size ($B=3000$) on variance estimation and the accuracy of the correction term. Table~\ref{table:simu_changeB} shows that the results remain consistent: increasing $B$ reduces MAE for both $\widehat{f}^B$ and $\psi'(\widehat{f}^B)$, and the empirical standard deviations (empSD) align closely with the estimated standard errors (SE), maintaining nominal 95\% coverage. Figures~\ref{fig_simu:part2CI} and \ref{fig_simu:part2SD} illustrate that our estimator accurately recovers the conditional mean and that the corrected variance closely follows empSD, whereas the uncorrected version slightly overestimates. As $B$ increases from 1400 to 3000, this bias diminishes, consistent with theory showing that the correction term vanishes asymptotically. These findings confirm that the correction improves finite-sample accuracy while preserving asymptotic validity.
   

\subsection{Additional Experiments on $L$}
\label{subsec:simu_sensitive_L}

We now examine how network depth affects estimation bias and variance using $n=700$, varying both $L$ and $r$. For each $L$, we set $\pb = (p, \underbrace{30,\ldots,30}_{L-1}, 1)$ and apply batch normalization after the 2nd, 4th, and 6th layers to ensure gradient stability. We consider $L=3,4,5,6$ and $r=n^{0.8}, n^{0.85}, n^{0.9}, n^{0.95}$.

Tables~\ref{table:simu_sensitive_Logistic_DNN_L_r}–\ref{table:simu_sensitive_Poisson_DNN_L_r} show that deeper networks tend to introduce larger biases, consistent with~\eqref{eq:bias_disappear}, which predicts larger approximation error and lower coverage for deeper architectures. Hence, in our settings, it appears that relatively shallow networks are preferable for both stability and reduced bias \citep{goodfellow2016deep}.

Consistent with the sensitivity analysis in Section~\ref{subsec:simu_sensitive_r}, increasing $r$ reduces $\text{MAE}_f$, and all bias-corrected variance estimators closely match the empirical results, confirming the robustness of our inference procedure.




\begin{table}[t]
\setlength{\tabcolsep}{2.4pt}
\renewcommand{\arraystretch}{0.5}
\centering
\caption{Simulation results with Logistic under varying $L$ and $r$.}
\label{table:simu_sensitive_Logistic_DNN_L_r}
\begin{tabular}{rccccccccc}
\hline
\multicolumn{1}{l}{} & $\text{Bias}_f$  & $\text{MAE}_f$ & $\text{Bias}_{\psi'}$  &  $\text{MAE}_{\psi'}$   & EmpSD & SE &$\text{SE}_c$  & CP     & AIL  \\ \hline
\multicolumn{1}{l}{} & \multicolumn{7}{c}{Logistic}          \\  \hline 
$r = n^{0.8}, L=3$ & 0.19(0.66) & 0.57(0.39) & 0.04(0.13) & 0.11(0.08) & 0.59 & 0.92 & 0.60 & 84.5\% & 0.43(0.15)\\
$L=4$ & 0.21(0.71) & 0.61(0.41) & 0.04(0.14) & 0.12(0.08) & 0.61 & 0.98 & 0.61 & 82.7\% & 0.44(0.16) \\
$L=5$ & 0.21(0.73) & 0.63(0.43) & 0.04(0.14) & 0.12(0.08) & 0.63 & 1.11 & 0.63 & 82.0\% & 0.44(0.17)\\
$L=6$ & 0.21(0.70) & 0.61(0.41) & 0.04(0.14) & 0.12(0.08) & 0.61 & 1.10 & 0.63 & 83.4\% & 0.44(0.16)\\  \hline
$r = n^{0.85}, L=3$ & 0.19(0.69) & 0.59(0.41) & 0.04(0.14) & 0.12(0.08) & 0.63 & 1.03 & 0.64 & 85.6\% & 0.46(0.16) \\
$L=4$ & 0.21(0.74) & 0.64(0.43) & 0.04(0.14) & 0.13(0.08) & 0.66 & 1.12 & 0.67 & 83.2\% & 0.46(0.17) \\
$L=5$ & 0.20(0.80) & 0.69(0.46) & 0.04(0.15) & 0.13(0.08) & 0.71 & 1.32 & 0.71 & 81.8\% & 0.48(0.19) \\
$L=6$ & 0.21(0.77) & 0.66(0.45) & 0.04(0.15) & 0.13(0.09) & 0.69 & 1.31 & 0.70 & 83.0\% & 0.48(0.18)\\  \hline
$r = n^{0.9}, L=3$ & 0.19(0.71) & 0.60(0.42) & 0.04(0.14) & 0.12(0.08) & 0.65 & 1.16 & 0.68 & 86.8\% & 0.48(0.16) \\
$L=4$ & 0.21(0.75) & 0.64(0.44) & 0.04(0.15) & 0.13(0.08) & 0.69 & 1.27 & 0.71 & 84.9\% & 0.49(0.18) \\
$L=5$ & 0.21(0.85) & 0.73(0.48) & 0.04(0.16) & 0.14(0.09) & 0.76 & 1.60 & 0.78 & 81.9\% & 0.51(0.21) \\
$L=6$ & 0.20(0.82) & 0.72(0.47) & 0.04(0.16) & 0.14(0.09) & 0.75 & 1.57 & 0.76 & 82.7\% & 0.51(0.20) \\  \hline
$r = n^{0.95}, L=3$ & 0.17(0.71) & 0.59(0.42) & 0.03(0.14) & 0.12(0.08) & 0.66 & 1.37 & 0.69 & 88.9\% & 0.49(0.16) \\
$L=4$ & 0.20(0.74) & 0.63(0.44) & 0.04(0.15) & 0.12(0.09) & 0.69 & 1.48 & 0.72 & 86.5\% & 0.49(0.18) \\
$L=5$ & 0.19(0.90) & 0.77(0.50) & 0.03(0.17) & 0.15(0.09) & 0.82 & 2.14 &0.84 & 81.7\% & 0.54(0.23)  \\
$L=6$ & 0.21(0.86) & 0.73(0.49) & 0.04(0.17) & 0.14(0.09) & 0.79 & 2.07 & 0.82 & 81.9\% & 0.53(0.23) \\  \hline 
\end{tabular}
\end{table}

\begin{table}[H]
\setlength{\tabcolsep}{2.4pt}
\renewcommand{\arraystretch}{0.5}
\centering
\caption{Simulation results with Poisson under varying $L$ and $r$.}
\label{table:simu_sensitive_Poisson_DNN_L_r}
\begin{tabular}{rccccccccc}
\hline
\multicolumn{1}{l}{} & $\text{Bias}_f$  & $\text{MAE}_f$ & $\text{Bias}_{\psi'}$  &  $\text{MAE}_{\psi'}$   & EmpSD & SE &$\text{SE}_c$  & CP     & AIL  \\ \hline
\multicolumn{1}{l}{} & \multicolumn{7}{c}{Poisson}          \\  \hline 
$r = n^{0.8}, L=3$ & -0.38(0.33) & 0.41(0.29) & -0.20(0.21) & 0.23(0.17) & 0.31 & 0.49 & 0.31 & 79.8\% & 0.71(0.37) \\
$L=4$ & -0.42(0.37) & 0.45(0.32) & -0.20(0.21) & 0.24(0.17) & 0.33 & 0.53 & 0.33 & 76.9\% & 0.74(0.40) \\
$L=5$ & -0.47(0.38) & 0.50(0.34) & -0.23(0.22) & 0.26(0.17) & 0.34 & 0.59 & 0.34 & 72.9\% & 0.75(0.44) \\
$L=6$ & -0.48(0.39) & 0.51(0.35) & -0.22(0.22) & 0.26(0.17) & 0.34 & 0.62 & 0.35 & 73.7\% & 0.78(0.46) \\   \hline 
$r = n^{0.85}, L=3$ & -0.30(0.35) & 0.36(0.28) & -0.15(0.22) & 0.21(0.17) & 0.33 & 0.55 & 0.33 & 87.0\% & 0.82(0.43)  \\
$L=4$ & -0.35(0.38) & 0.41(0.31) & -0.16(0.23) & 0.22(0.17) & 0.35 & 0.61 & 0.36 & 84.6\% & 0.86(0.46)\\
$L=5$ & -0.42(0.41) & 0.48(0.34) & -0.19(0.24) & 0.25(0.18) & 0.37 & 0.69 & 0.37 & 79.6\% & 0.88(0.52) \\
$L=6$ & -0.43(0.42) & 0.48(0.35) & -0.19(0.24) & 0.25(0.18) & 0.38 & 0.72 & 0.38 & 80.0\% & 0.90(0.53)\\  \hline 
$r = n^{0.9}, L=3$ & -0.22(0.36) & 0.33(0.27) & -0.10(0.24) & 0.20(0.17) & 0.34 & 0.63 & 0.35 & 91.2\% & 0.94(0.50)  \\
$L=4$ & -0.26(0.39) & 0.37(0.30) & -0.11(0.25) & 0.21(0.17) & 0.37 & 0.69 & 0.38  & 89.1\% & 0.98(0.52)\\
$L=5$ & -0.35(0.43) & 0.44(0.34) & -0.15(0.26) & 0.24(0.18) & 0.40 & 0.82 & 0.40 & 84.8\% & 1.02(0.60) \\
$L=6$ & -0.36(0.45) & 0.45(0.35) & -0.14(0.27) & 0.25(0.18) & 0.41 & 0.86  &  0.41 & 84.5\% & 1.05(0.62) \\ \hline 
$r = n^{0.95}, L=3$ & -0.14(0.37) & 0.30(0.25) & -0.04(0.27) & 0.20(0.18) & 0.35 & 0.77 & 0.37 & 93.1\% & 1.06(0.62) \\
$L=4$ & -0.17(0.40) & 0.33(0.28) & -0.05(0.27) & 0.21(0.18) & 0.37 & 0.84 & 0.39 & 91.6\% & 1.11(0.62) \\
$L=5$ & -0.26(0.45) & 0.41(0.32) & -0.09(0.29) & 0.24(0.19) & 0.42 & 1.06 & 0.43 & 87.8\% & 1.20(0.74) \\
$L=6$ & -0.27(0.46) & 0.42(0.33) & -0.09(0.29) & 0.24(0.19) & 0.42 & 1.11 & 0.44 & 87.2\% & 1.22(0.76)\\ \hline 
\end{tabular}
\end{table}

\subsection{Additional Experiments on Dropout Rate}
\label{subsec:simu_sensitive_dropout}
 We vary the dropout rates, which are set to 0.3 and 0.5 in the two models, while retaining all other settings as in Section~\ref{sec:simulations}. 
\begin{table}[t]
\setlength{\tabcolsep}{2.4pt}
\renewcommand{\arraystretch}{0.5}
\centering
\caption{Simulation results   with different dropout rate.}
\label{table:simu_dropout}
\begin{tabular}{rccccccccc}
\hline
\multicolumn{1}{l}{} & $\text{Bias}_f$  & $\text{MAE}_f$ & $\text{Bias}_{\psi'}$  &  $\text{MAE}_{\psi'}$   & EmpSD & SE &$\text{SE}_c$  & CP     & AIL  \\ \hline
\multicolumn{1}{l}{} & \multicolumn{7}{c}{Logistic with Dropout Rate $=0.3$}          \\ \hline
$n=400,r=n^{0.8}$ & 0.09(0.59) & 0.47(0.37) & 0.02(0.12) & 0.10(0.07) & 0.55 & 0.68 & 0.56 & 92.9\% & 0.41(0.11)  \\ 
$r=n^{0.85}$ & 0.08(0.51) & 0.41(0.32) & 0.02(0.11) & 0.09(0.07) & 0.48 & 0.60 & 0.49 & 93.9\% & 0.38(0.09)      \\ 
$r=n^{0.9}$ & 0.06(0.44) & 0.35(0.28) & 0.01(0.10) & 0.08(0.06) & 0.42 & 0.54 & 0.43 & 94.1\% & 0.35(0.08)      \\ 
$n=700,r=n^{0.8}$ & 0.08(0.35) & 0.28(0.23) & 0.02(0.08) & 0.06(0.05) & 0.33 & 0.46 & 0.33 & 93.2\% & 0.27(0.06)     \\
$r=n^{0.85}$ & 0.06(0.32) & 0.25(0.20) & 0.01(0.04) & 0.06(0.04) & 0.29 & 0.41 & 0.30 & 93.1\% & 0.25(0.05)    \\
$r=n^{0.9}$ & 0.07(0.31) & 0.25(0.19) & 0.02(0.07) & 0.06(0.04) & 0.27 & 0.39 & 0.27 & 90.9\% & 0.23(0.05)   \\  
\hline
\multicolumn{1}{l}{} & \multicolumn{7}{c}{Logistic with Dropout Rate $=0.5$}          \\ \hline
$n=400,r=n^{0.8}$  & 0.11(0.48) & 0.38(0.31) & 0.02(0.10) & 0.08(0.06) & 0.45 & 0.58 & 0.46 &  93.8\% & 0.36(0.09)           \\ 
$r=n^{0.85}$ & 0.08(0.42) & 0.33(0.27) & 0.02(0.09) & 0.07(0.06) & 0.40 & 0.52 & 0.40 & 93.9\% & 0.33(0.07)        \\ 
$r=n^{0.9}$ & 0.07(0.39) & 0.31(0.25) & 0.02(0.09) & 0.07(0.05) & 0.36 & 0.48 & 0.36 & 93.0\% & 0.30(0.07)     \\ 
$n=700,r=n^{0.8}$  & 0.07(0.31) & 0.25(0.20) & 0.02(0.07) & 0.06(0.04) & 0.28 & 0.40 & 0.28 & 92.1\% & 0.24(0.05)     \\ 
$r=n^{0.85}$ & 0.07(0.30) & 0.24(0.19) & 0.02(0.07) & 0.05(0.04) & 0.25  & 0.38 & 0.26 & 89.6\% & 0.22(0.05)    \\ 
$r=n^{0.9}$ & 0.07(0.31) & 0.25(0.19) & 0.02(0.07) & 0.06(0.04) & 0.24 & 0.38 & 0.24 & 86.1\% & 0.21(0.04)     \\ \hline
\multicolumn{1}{l}{} & \multicolumn{7}{c}{ Poisson with Dropout Rate $=0.3$}          \\ \hline
$n=400,r=n^{0.8}$ & -0.24(0.33) & 0.31(0.26) & -0.13(0.22) & 0.19(0.16) & 0.31  & 0.39 & 0.31 & 90.8\% & 0.77(0.36)             \\ 
$r=n^{0.85}$ & -0.13(0.31) & 0.25(0.22) & -0.07(0.22) & 0.17(0.15) & 0.28 & 0.38 & 0.30 & 94.0\% & 0.82(0.40)       \\ 
$r=n^{0.9}$  & -0.07(0.29) & 0.23(0.20) & -0.04(0.22) & 0.16(0.15) & 0.28 & 0.37 & 0.29 & 94.6\% & 0.85(0.43)     \\ 
$n=700,r=n^{0.8}$ & -0.10(0.23) & 0.19(0.16) & -0.06(0.17) & 0.13(0.12) & 0.21 & 0.30 & 0.22 & 92.8\% & 0.59(0.27)       \\ 
$r=n^{0.85}$ & -0.05(0.22) & 0.17(0.14) & -0.03(0.17) & 0.13(0.12) & 0.20 & 0.30 & 0.21 & 93.6\% & 0.60(0.28)     \\ 
$r=n^{0.9}$ & -0.01(0.21) & 0.16(0.13) & -0.01(0.17) & 0.12(0.11) & 0.19 & 0.30 & 0.20 & 93.6\% & 0.60(0.29)       \\ \hline
\multicolumn{1}{l}{} & \multicolumn{7}{c}{ Poisson with Dropout Rate $=0.5$}          \\ \hline
$n=400,r=n^{0.8}$  & -0.15(0.29) & 0.25(0.22) & -0.09(0.20) & 0.16(0.14) & 0.26 & 0.35 & 0.27 & 92.9\% & 0.70(0.29)            \\ 
$r=n^{0.85}$ & -0.07(0.27) & 0.21(0.18) & -0.04(0.20) & 0.15(0.14) & 0.25 & 0.33 & 0.25 & 93.8\% & 0.72(0.31)        \\ 
$r=n^{0.9}$ & -0.03(0.26) & 0.21(0.17) & -0.02(0.20) & 0.15(0.13) & 0.24 & 0.34 & 0.25 & 93.6\% & 0.72(0.33)       \\ 
$n=700,r=n^{0.8}$ & -0.05(0.21) & 0.17(0.14) & -0.03(0.16) & 0.12(0.11) & 0.18 & 0.27 & 0.18 & 91.8\% & 0.51(0.21)    \\ 
$r=n^{0.85}$   & -0.01(0.20) & 0.16(0.13) & -0.02(0.16) & 0.11(0.11) & 0.18 & 0.27 & 0.18 & 91.6\% & 0.51(0.22)      \\ 
$r=n^{0.9}$ & 0.02(0.20) & 0.16(0.12) & 0.01(0.15) & 0.11(0.10) & 0.17 & 0.28 &  0.17 & 90.3\% & 0.51(0.22)           \\ \hline
\end{tabular}
\end{table}
As shown in Table~\ref{table:simu_dropout}, under different dropout rates, both the bias and $\text{MAE}_f$ decrease as $r$ increases, showing a consistent trend across settings. The proposed variance estimator also remains accurate under all configurations. The coverage probability can be affected when the dropout rate is large, as excessive dropout may limit the network’s effective capacity and lead to mild underfitting.

\newpage

\section{Additional Real Data Experiments}
\label{sec:additional_realdata}
\subsection{Additional Investigation on Large Volatility in DNNs}
\label{subsec:additionalcheck}
 We further analyze patients with unusually wide prediction intervals from neural networks. As shown in Figure~\ref{fig_real:calibrationLogisticNN}, patients indexed around 800–1600 exhibit large intervals despite similar predicted risks. Among the five with the widest intervals, outlier values in glucose and WBC likely drive the elevated uncertainty. Figure~\ref{figreal:outliers} highlights these extreme values.

\begin{figure}[htbp]
    \centering
   \subfloat[Histogram for ``glucose"]{\includegraphics[width=0.48\textwidth]{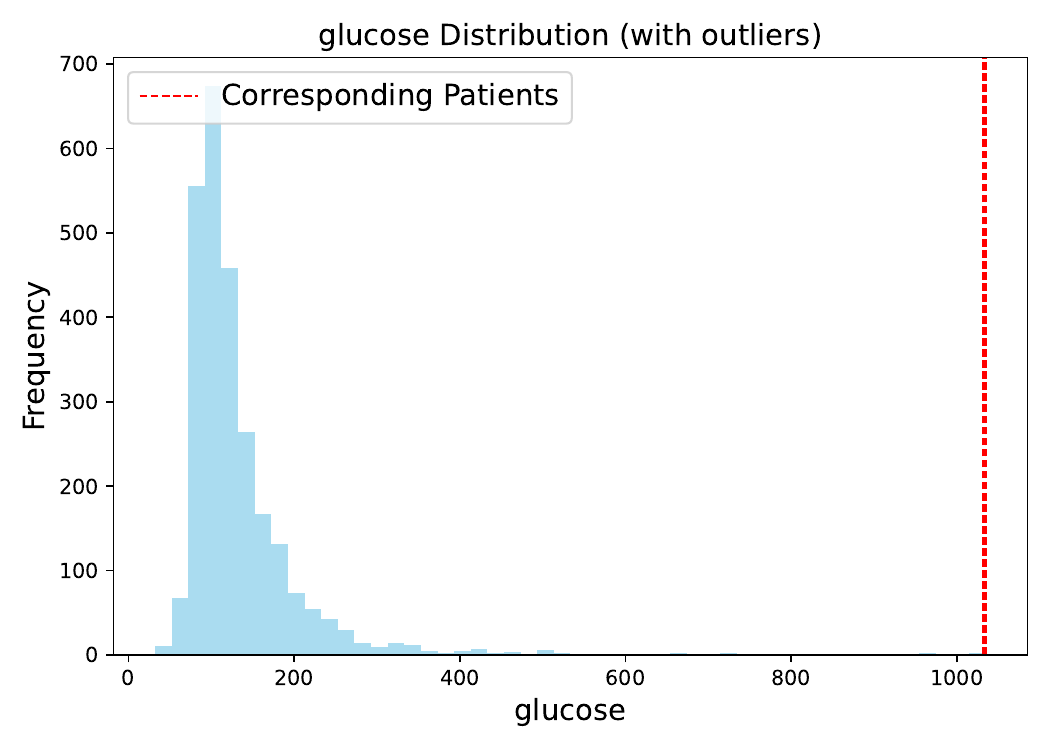}\label{fig_real:outliers1}}
    \subfloat[Histogram for ``WBC"]{\includegraphics[width=0.48\textwidth]{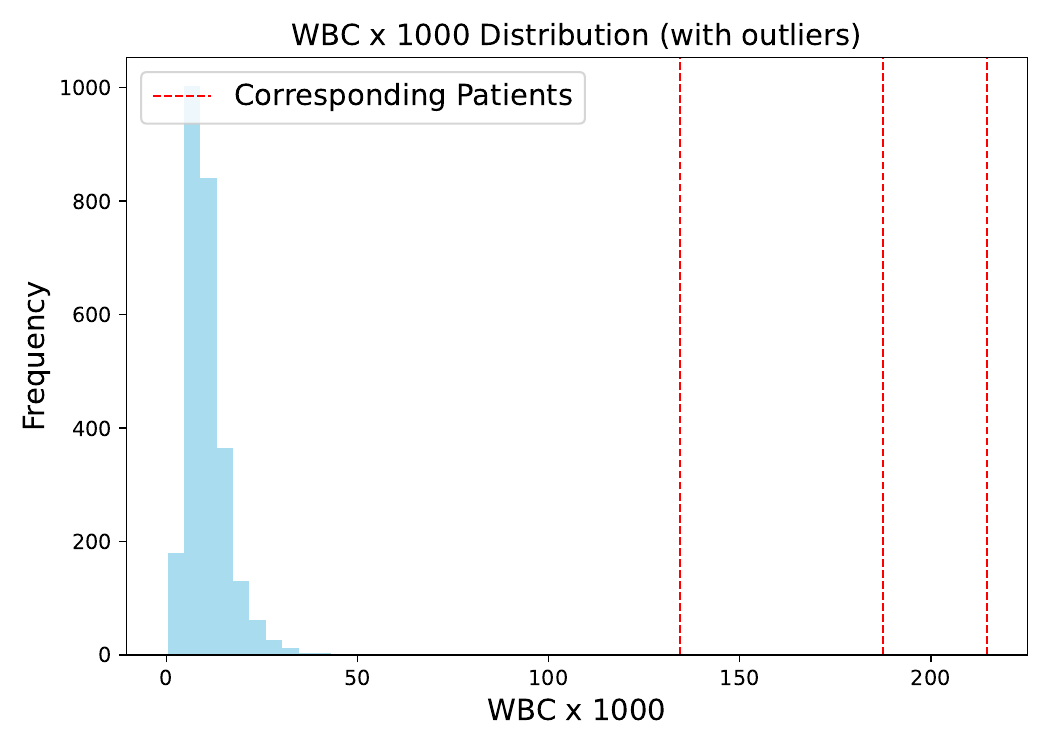}\label{fig_real:outliers2}}
    \caption{Distributions of glucose and WBC levels for the five distinct patients with the widest prediction intervals. Red dashed lines mark their observed values: two patients have extreme glucose values and three others have extreme WBC values, corresponding to those exhibiting the highest uncertainty under neural network estimations.}
    \label{figreal:outliers}
\end{figure} 

 Figure~\ref{figreal:outliers} shows that two patients have extremely high glucose levels and three have unusually high WBC, all well outside the typical range. For neural networks, such outliers reduce prediction confidence, leading to wider intervals, an adaptive and desirable feature for guarding against overconfidence in poorly supported regions. In contrast, random forests, while more robust to outliers due to localized splitting, may produce overly narrow intervals, underestimating uncertainty in critical cases.

\subsection{Additional Experiments with Additional Features}
\label{subsec:realdata_addfeature}
 
\begin{figure}[htbp]
    \centering
   \subfloat[ROC curve in DNNs]{\includegraphics[width=0.48\textwidth]{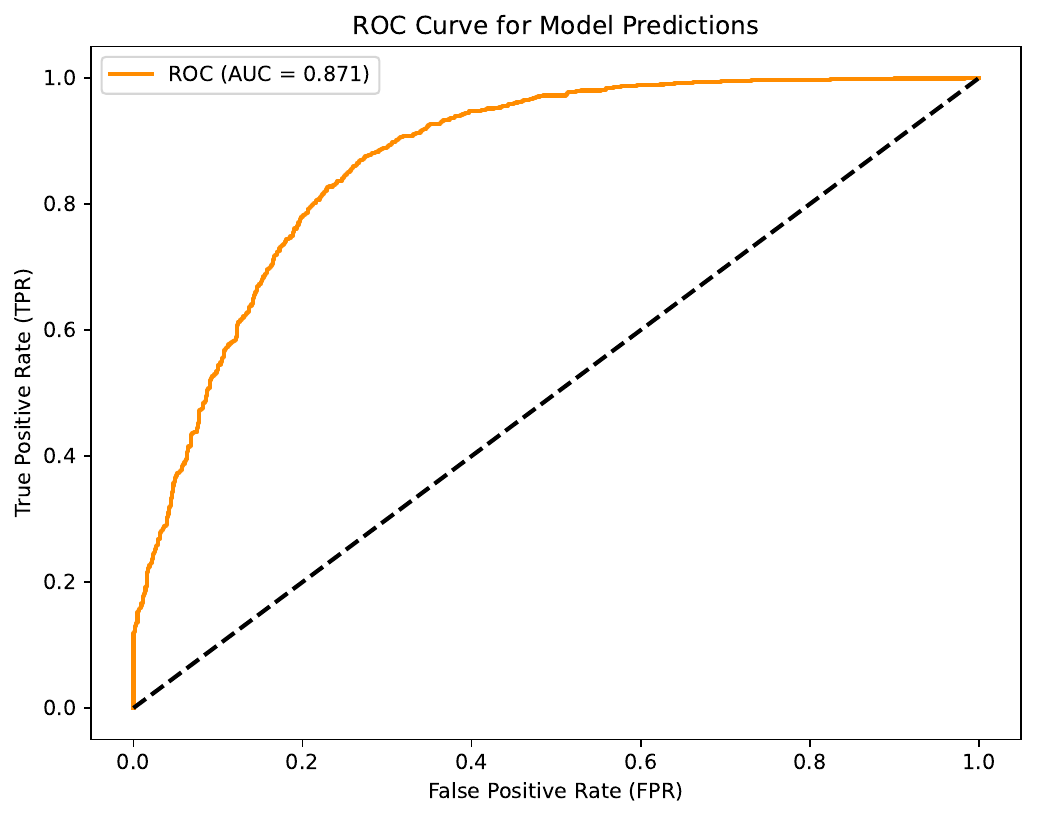}\label{fig_real:rocNNlog1}}
    \subfloat[Individual estimates and CIs in DNNs]{\includegraphics[width=0.48\textwidth]{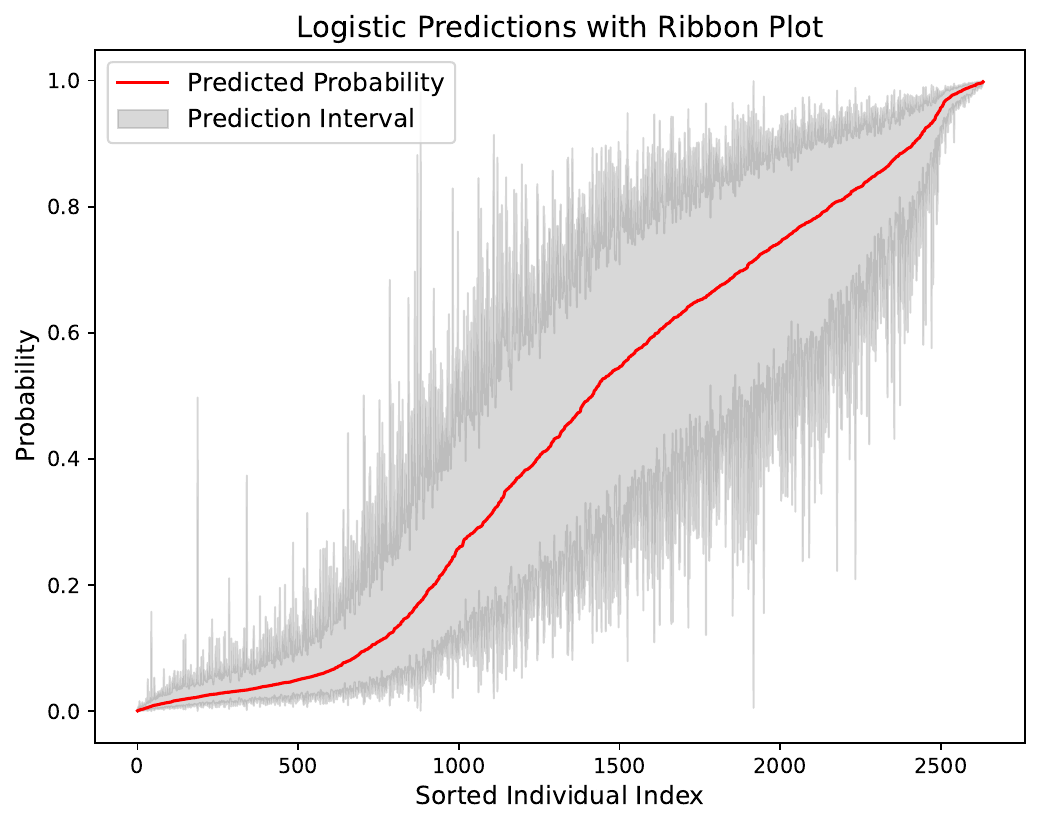}\label{fig_real:calibrationLogisticNN1}}

    \subfloat[ROC curve in RFs]{\includegraphics[width=0.48\textwidth]{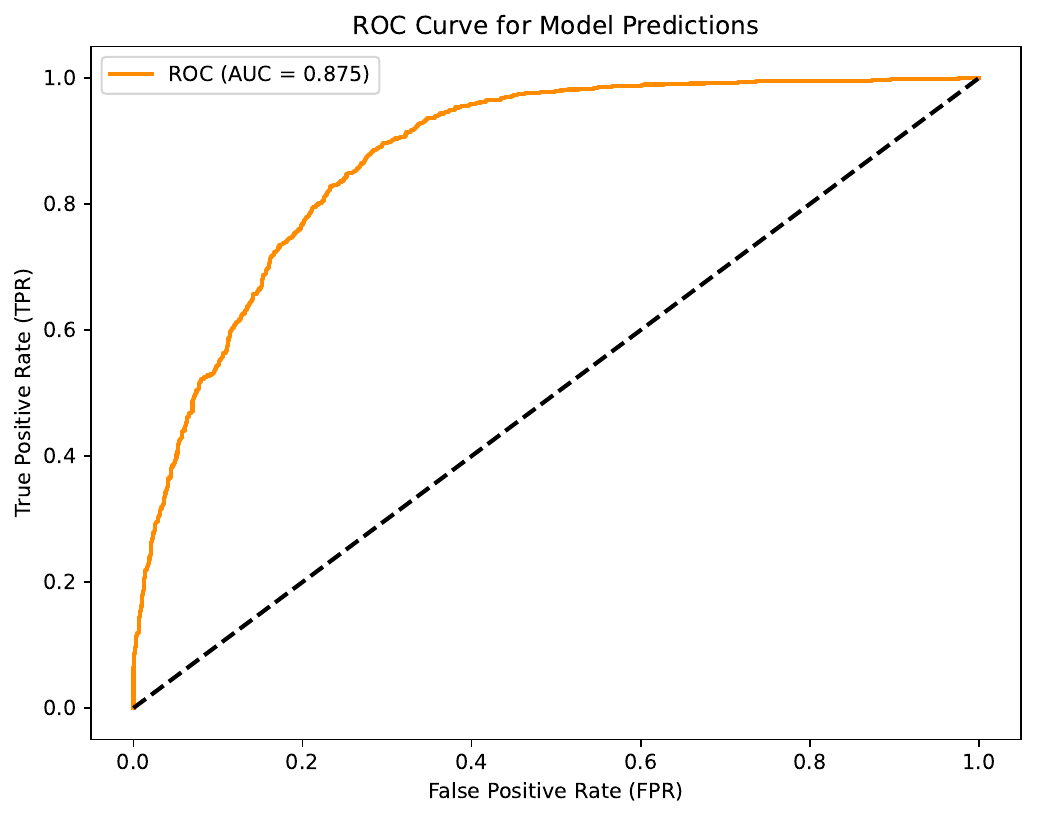}\label{fig_real:rocRFlog1}}
    \subfloat[Individual estimates and CIs in RFs]{\includegraphics[width=0.48\textwidth]{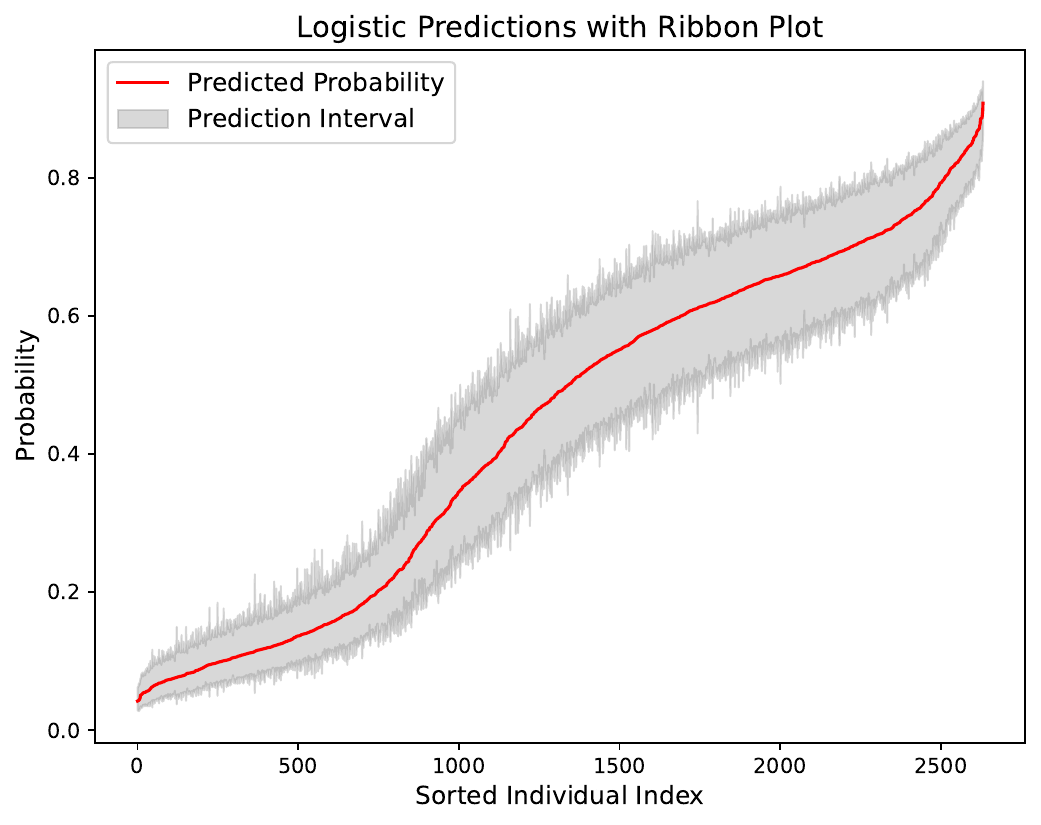}\label{fig_real:calibrationLogisticRF1}}
    \caption{ Evaluation of the nonparametric logistic model estimator. Figure~\ref{fig_real:rocNNlog1} and \ref{fig_real:rocRFlog1} show the ROC curve with both AUC of  0.84. Figure~\ref{fig_real:calibrationLogisticNN1} and \ref{fig_real:calibrationLogisticRF1} display estimated subject-level probabilities of ICU readmission and confidence intervals, illustrating  heteroskedasticity across individuals.}
    \label{figreal:logisticpart1}
\end{figure}

\begin{figure}[htbp]
    \centering
   \subfloat[Lift curve in DNNs]{\includegraphics[width=0.48\textwidth]{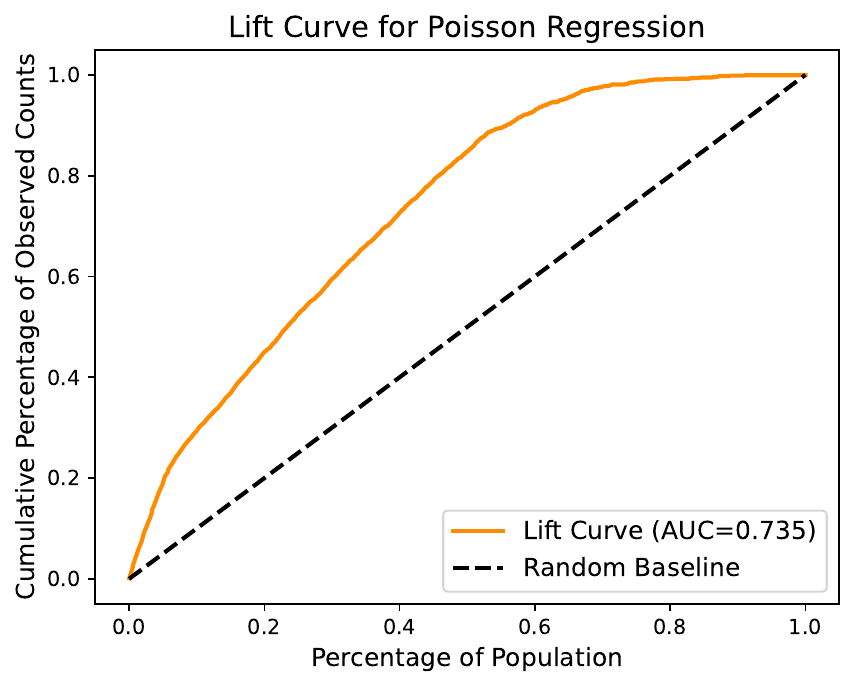}\label{fig_real:rocNNpoi1}}
    \subfloat[Individual estimates and CIs in DNNs]{\includegraphics[width=0.48\textwidth]{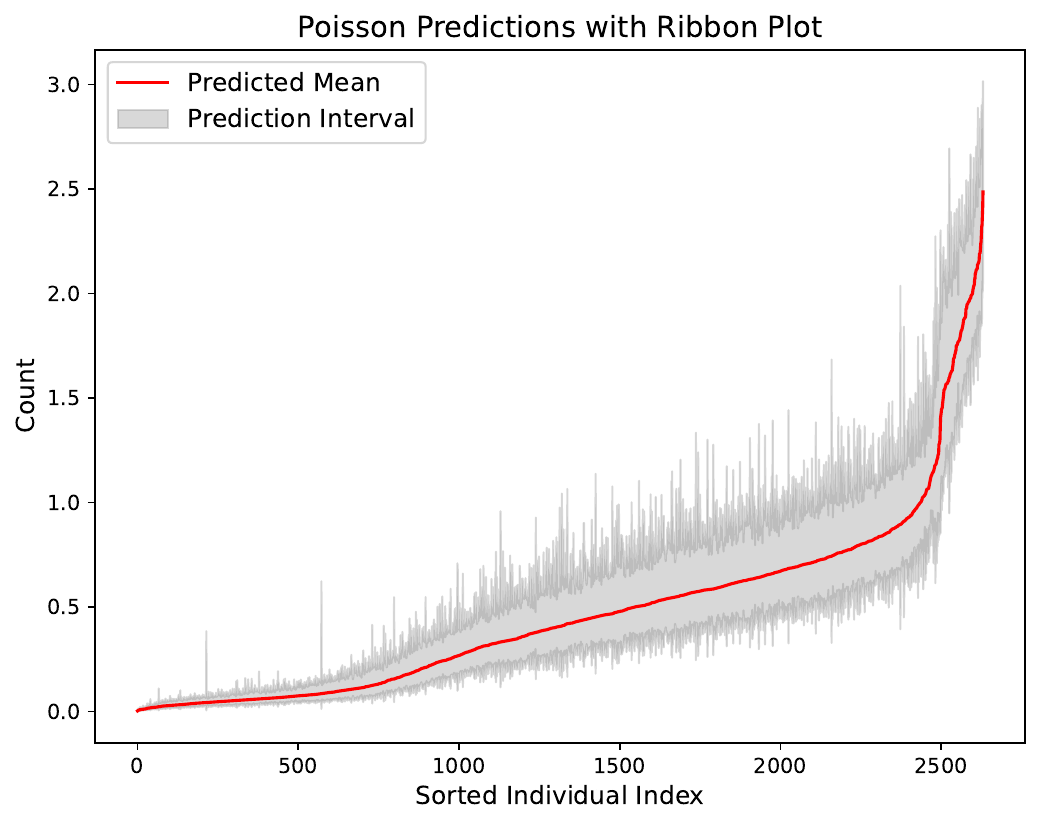}\label{fig_real:calibrationPoissonNN1}}

    \subfloat[Lift curve in RFs]{\includegraphics[width=0.48\textwidth]{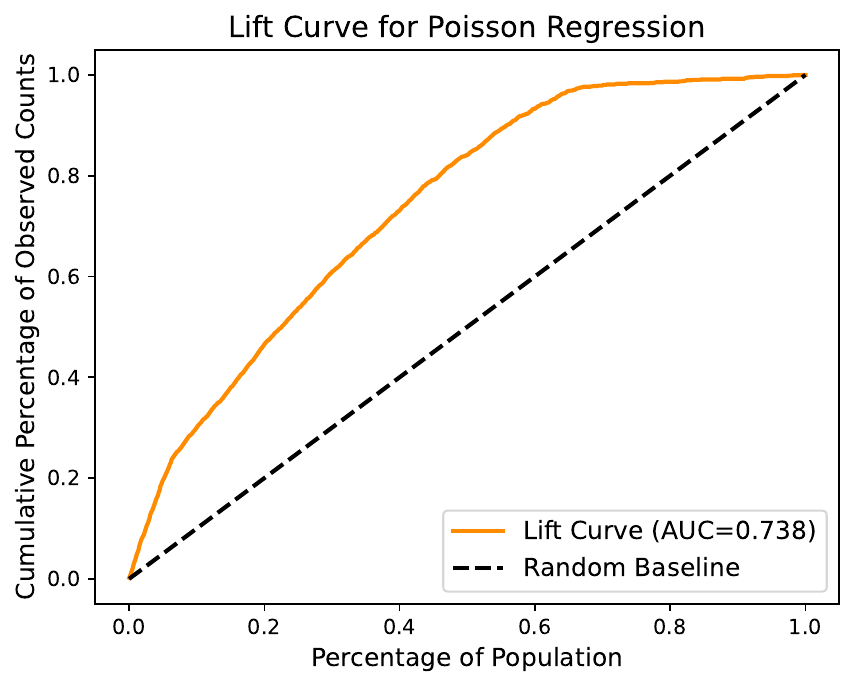}\label{fig_real:rocRFpoi1}}
    \subfloat[Individual estimates and CIs in RFs]{\includegraphics[width=0.48\textwidth]{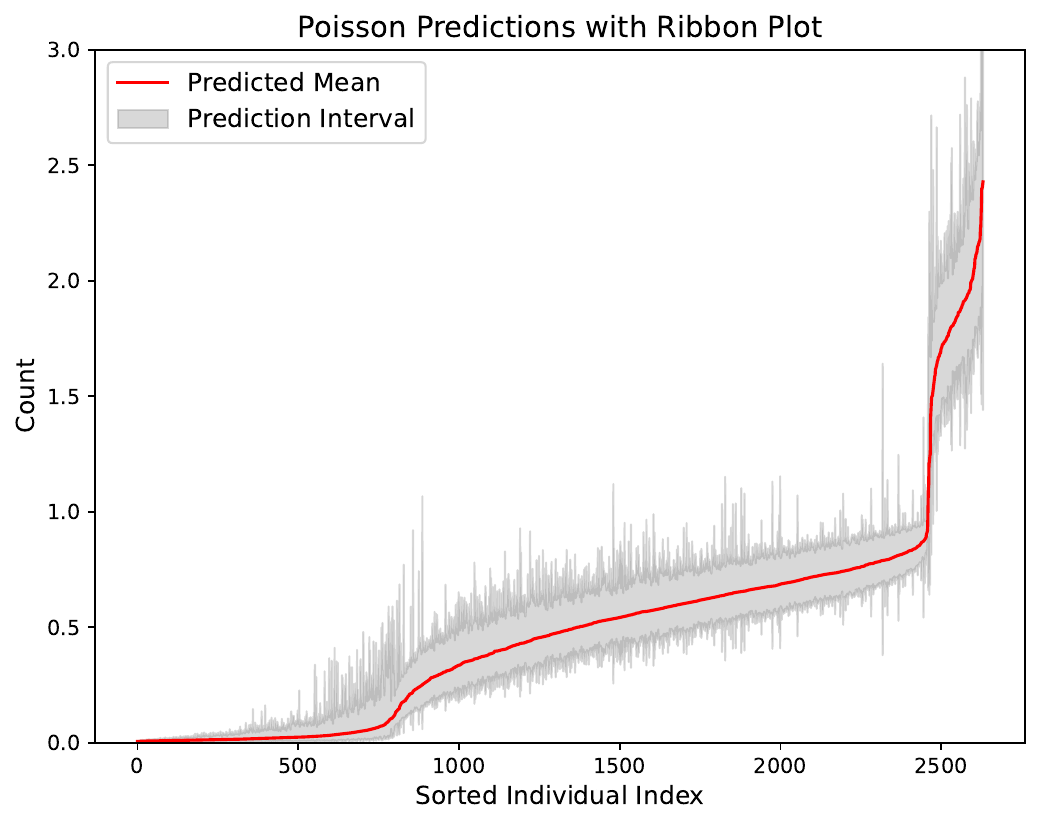}\label{fig_real:calibrationPoissonRF1}}
    \caption{ Evaluation of the nonparametric logistic model estimator. Figure~\ref{fig_real:rocNNpoi1} and \ref{fig_real:rocRFpoi1} show the ROC curve with both AUC of  0.84. Figure~\ref{fig_real:calibrationPoissonNN1} and \ref{fig_real:calibrationPoissonRF1} display estimated subject-level probabilities of ICU readmission and confidence intervals, illustrating  heteroskedasticity across individuals.}
    \label{figreal:poissonpart1}
\end{figure}
 As a sensitivity analysis, we extend the real data experiments by adding features such as RDW, MCHC, MCV, and admission indicators (e.g., Respiratory, Sepsis-related, Surgical/Trauma), expanding the covariate set to 46. This evaluates how model performance and uncertainty estimates respond to a richer feature space with added clinical signal and potential noise.
 Figures~\ref{figreal:logisticpart1} and \ref{figreal:poissonpart1} show results under the expanded feature set. For logistic regression, AUCs for DNNs and RFs remain similar, but DNNs adaptively widen their intervals (0.306 vs. 0.161), reflecting greater sensitivity to increased feature complexity. In the Poisson model, AUCs and interval lengths are comparable (0.261 for DNNs, 0.274 for RFs). These findings highlight the strength of DNNs in leveraging richer features and adapting uncertainty when signal dominates, while RFs maintain stability in noisier, high-dimensional settings, consistent with trends observed in simulation

\subsection{Additional Experiments on Cross-Site Model Transferability}
\label{sec:realdata_transfer}

We further transferred the  model  trained from hospital site 188 to hospital site 458. Site 458 was selected because it has a comparable patient volume to the original training site (188) but exhibits a markedly different outcome distribution. Although the cohort sizes are similar, the risk profiles differ. As shown in Figure~\ref{figreal:distribution_diff}, patients at site 458 generally tend to have a lower anticipated mortality likelihood (assessed by physicians at baseline), a main risk factor for ICU readmissions. Consequently, site 458 displays a  lower overall readmission rate. Specifically, it includes 2,178 patients without readmission and  111 patients with one or more readmissions (ranging from 1 to 4 events), resulting in a challenging and imbalanced scenario for model generalization.

\begin{figure}[t]
    \centering
    \includegraphics[width=0.8\linewidth]{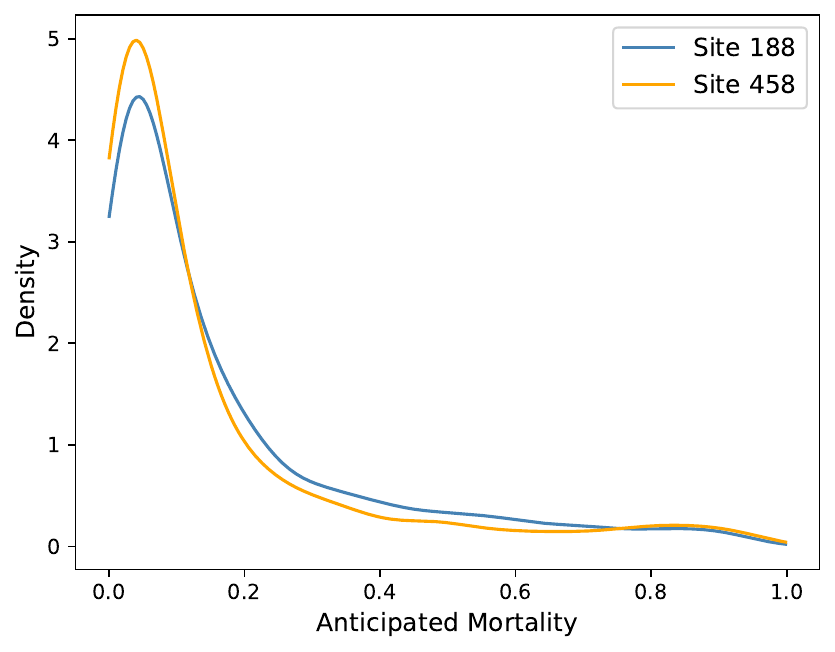}
    \caption{Density of the anticipated mortality.}
    \label{figreal:distribution_diff}
\end{figure}

As shown in Figure~\ref{figreal:transfer}, the transferred model maintains strong predictive performance, achieving a high AUC. In the logistic regression, the true positive rate is 0.92 and the false positive rate is 0.10 at a 0.5 threshold. Notably, the model continues to produce heteroskedastic prediction intervals that vary across individuals, indicating that the ESM-based uncertainty quantification remains responsive to patient-level heterogeneity even under distributional shift. Although this evaluation is limited to a single external site, the results provide encouraging evidence that the proposed method generalizes effectively and that the predictive intervals meaningfully capture site-specific differences in risk. Broader multi-site validation is warranted.

We also observed heteroskedasticity across individuals in the new site as well. Specifically, the variation in interval widths primarily stems from the input-dependent predictive variance intrinsic to the ESM framework. In ESM, each prediction $\hat{f}^B(\mathbf{x}_*)$ aggregates outputs from multiple subsampled models, leading to ensemble variance that naturally differs across regions of the feature space. Consequently, the estimated uncertainty adapts to local variability, yielding a heteroskedastic characterization of estimation uncertainty.
\begin{figure}[H]
    \centering
   \subfloat[ROC curve with Logistic]{\includegraphics[width=0.48\textwidth]{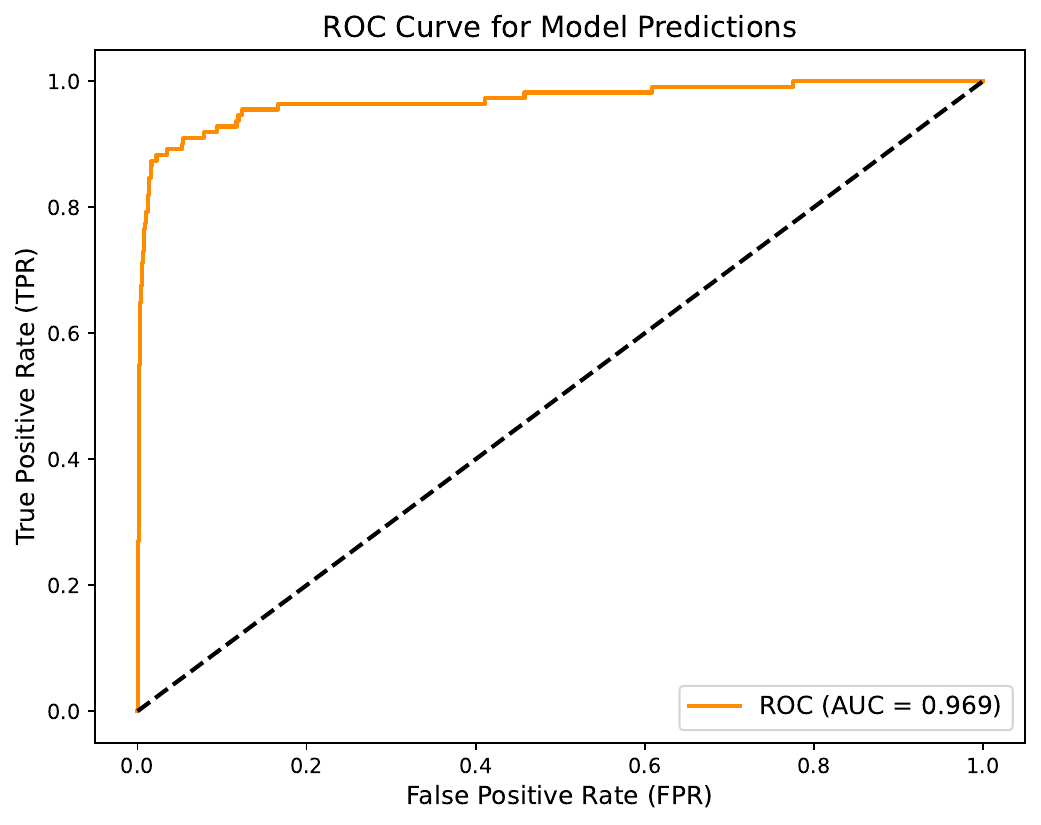}\label{fig_real:rocNNlog1transfer}}
    \subfloat[Individual estimates and CIs with Logistic]{\includegraphics[width=0.48\textwidth]{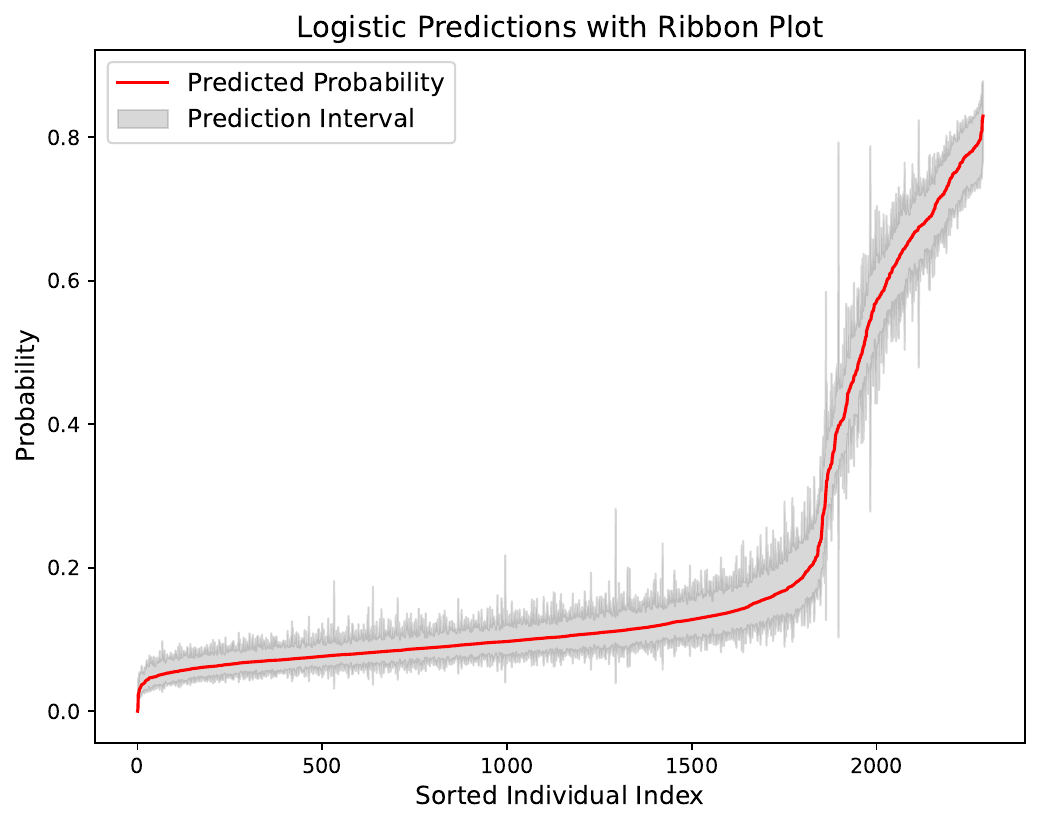}\label{fig_real:calibrationLogisticNN1transfer}}

    \subfloat[Lift curve with Poisson]{\includegraphics[width=0.48\textwidth]{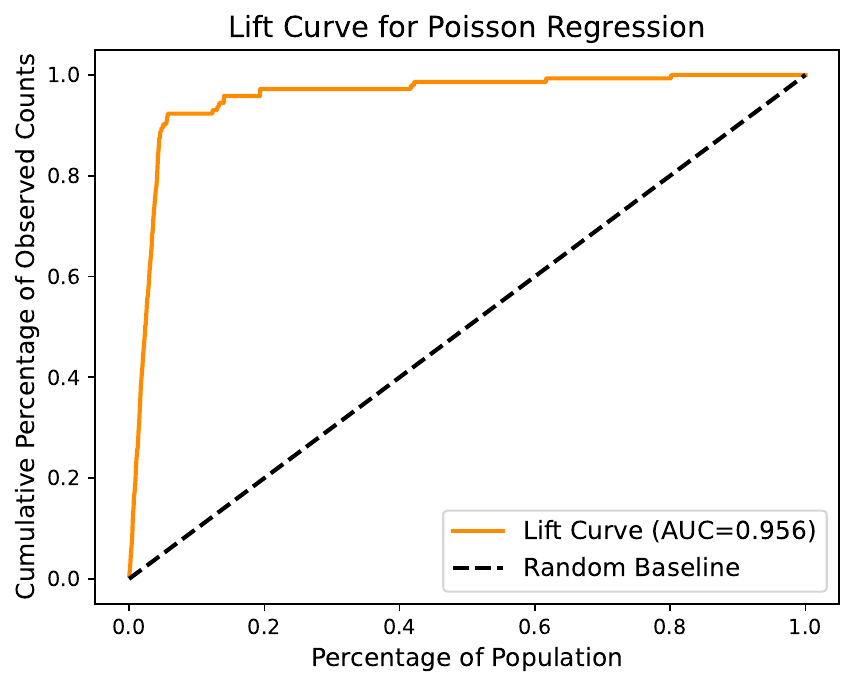}\label{fig_real:rocpoi1transfer}}
    \subfloat[Individual estimates and CIs with Poisson]{\includegraphics[width=0.48\textwidth]{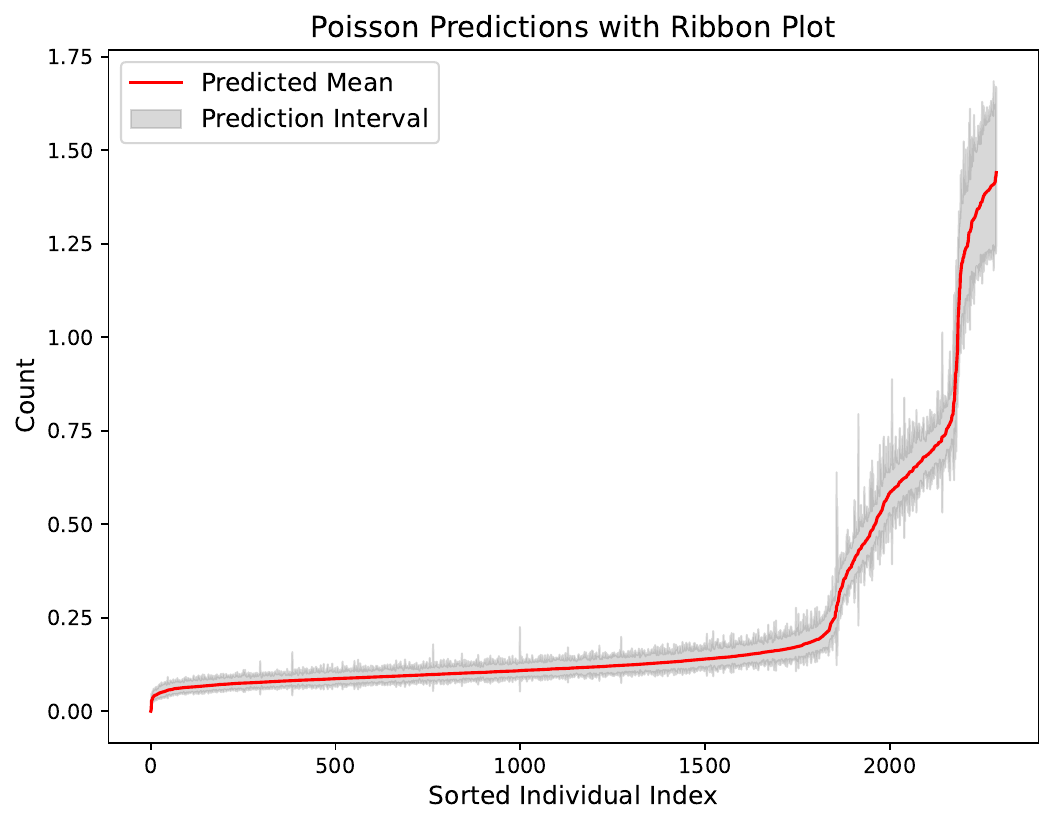}\label{fig_real:calibrationpoisson1transfer}}
    \caption{Evaluation of the transfered model from site 188 to site 458. Figure~\ref{fig_real:rocNNlog1transfer} and \ref{fig_real:calibrationLogisticNN1transfer} show the results under logistic regression. Figure~\ref{fig_real:rocpoi1transfer} and \ref{fig_real:calibrationpoisson1transfer} display results with poisson regression.}
    \label{figreal:transfer}
\end{figure}

\newpage
\begin{landscape}
\begin{table}[t] 
\centering
\renewcommand{\arraystretch}{1.15}
\setlength{\tabcolsep}{8pt}
\small
\caption{Checklist of Some Notations Used in the Paper.}
\label{table:notation}
\begin{tabular}{ll|ll}
\toprule
\textbf{Symbol} & \textbf{Description} & \textbf{Symbol} & \textbf{Description} \\ 
\midrule
$\xb_i$ & Input covariate for the $i$-th observation & 
$\EE[\cdot]$ & Expectation operator  \\
$y_i$ & Response variable &
$\Var[\cdot]$ & Variance operator \\
 $\zb_i$ & $(\xb_i,y_i)$ & $V_k$ & $\Var(T_k(\zb_{i_1},...,\zb_{i_k}))$   \\
$B$ & Number of subsampling times &
$J_{b_j i}$ & $I(i\in\cI^{b_j})$ indication function of $i$ in subset $b_j$\\
$\hf^b(\xb)$ & Estimator for each subsample &
$J_{\cdot i}$ & $\sum_{j=1}^BJ_{b_j i}/B$, average inclusion rate of sample $i$ across $B$ subsamples
 \\
$\hf^B(\xb)$ &  Ensemble estimator &
$Z_{b_j i}$ & $(J_{b_ji}-J_{\cdot i})(\hf^{b_j}(\xb_{*})-\hf^{B}(\xb_*))$ \\
$\ringf^B(\xb_{*})$ & $\sum_{i=1}^n\big( \EE \big(\hf^{B}(\xb_{*})|\zb_i\big)-\EE \hf^b(\xb_{*}) \big)$  & $M_i$ & $\EE(Z_{b_j i} |\zb_1,...,\zb_n)$, expectation takes over randomness of subsampling \\
$\xi_{1,r}(\xb_*)$& $\Cov(\hf(\xb_{*};\zb_1,\zb_2,...,\zb_{r}),\hf(\xb_{*},\zb_1,\zb'_2,...,\zb'_r))$ & $\varepsilon_{b_j i}$ & $Z_{b_j i}-M_i$, $\overline{\varepsilon}_i=\sum_{j=1}^B \varepsilon_{b_j i}/B$ \\
$g_{1,r}(\zb;\xb_{*}) $ & $\EE \hf^b(\xb_{*};\zb,\zb_2,...,\zb_r)-\EE \hf^b(\xb_{*})$  &
$\hat{V}_i$ & ${\sum_{j=1}^B Z_{b_j i}}/B$ \\
$T_k(\zb_1,...,\zb_k)$ & Hoeffding decomposition in \eqref{eq:decomp_hf_smallb} &
$\hat{\Var}_i$ & $\sum_{j=1}^{B}\bigl(Z_{b_ji}-\hat V_i\bigr)^{2}/(B-1)$ \\
$W_{i_1i_2...i_k}$ & Number of subsamples including $i_1,\ldots,i_k$ & $A_i$ & $\big(\frac{r}{n}-\frac{r^2}{n^2}\big)T_1(\zb_i)+\big(\frac{r(r-1)}{n(n-1)}-\frac{r^2}{n^2}\big)\sum_{j\neq i}^{n}T_1(\zb_j)$ \\
\bottomrule
\end{tabular}
\end{table}
\end{landscape}
\newpage
\section{ Additional Propositions and Lemmas and Proofs of Theorems}
\label{sec:proofthms}
For ease of reference, Table~\ref{table:notation} provides the key notation and glossary used throughout the paper, which will facilitate the technical developments that follow.

\subsection{Preliminaries}

We present the following proposition and lemmas, which are used to prove Theorem~\ref{thm:bound_Rn}. Let $\cN_n(\tilde{\delta}, \cF, \|\cdot\|_{\infty})$ denote the covering number, i.e., the minimal number of $\|\cdot\|_{\infty}$-balls with radius $\tilde{\delta}$ required to cover $\cF(L, \pb, s, F)$. For brevity, we write $\cN_n = \cN_n(\tilde{\delta}, \cF, \|\cdot\|_{\infty})$. Then, for any $\hf_n \in \cF(L, \pb, s, F)$, we have the following result.

\begin{proposition}
\label{prop:upper_lower_bound_R_n}
Under the conditions of Theorem~\ref{thm:bound_Rn}, there exist constants $c', C' > 0$ depending only on $\kappa$, the function $\psi$, $F$, and $K$, such that

\begin{align*}
&\frac{1}{2}\Delta_n(\hf_n)-c'\bigg(\frac{\log\cN_n}{n}+\bigg(\sqrt{\frac{\log\cN_n}{n}}+1 \bigg)\tilde{\delta}\bigg)
\quad\leq R_n(\hf_n,f_0)  \leq 2\Delta_n(\hf_n) \\& +2\inf_{f\in\cF(L,\pb,s,F)}\EE\ell(\bX;f,f_0)+C'\bigg(\frac{\log\cN_n}{n}+\bigg(\sqrt{\frac{\log\cN_n}{n}}+1 \bigg)\tilde{\delta}\bigg).
\end{align*}
\end{proposition}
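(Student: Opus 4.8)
# Proof Proposal for Proposition~\ref{prop:upper_lower_bound_R_n}

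\textbf{Overall strategy.} The plan is to relate the population excess risk $R_n(\hat f_n, f_0) = \EE[\ell(\bX;\hat f_n, f_0)]$ to the empirical discrepancy $\Delta_n(\hat f_n)$ through a two-sided uniform deviation argument over the class $\cF(L,\pb,s,F)$, using its covering number $\cN_n$ to control the stochastic fluctuation. The key structural fact is that $\ell(\bX; f, f_0)$ is exactly the Bregman divergence $D_\psi(f(\bX), f_0(\bX)) = \psi(f(\bX)) - \psi(f_0(\bX)) - \psi'(f_0(\bX))(f(\bX)-f_0(\bX))$, which by Assumption~\ref{assump:1} (non-degeneracy of $T(y)=y$, so $\psi''>0$ on the relevant range) is locally strongly convex; combined with boundedness $\|f\|_\infty, \|f_0\|_\infty \le F$ one gets two constants $0 < c_\psi \le C_\psi < \infty$ with $c_\psi (f(\bX)-f_0(\bX))^2 \le \ell(\bX;f,f_0) \le C_\psi (f(\bX)-f_0(\bX))^2$. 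This quadratic sandwiching is what replaces the explicit Gaussian/least-squares identities of \citet{schmidt2020nonparametric}, and the new curvature-and-Lipschitz input is the heart of the generalization to GNRMs.

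\textbf{Main steps.} First I would introduce the centered loss process: for $f \in \cF$, write $g_f(\zb) := -y f(\xb) + \psi(f(\xb)) - (-y f_0(\xb) + \psi(f_0(\xb)))$, so that $\EE[g_f(\zb)] = R_n(f,f_0)$ and $\frac1n\sum_i g_f(\zb_i)$ is the empirical analogue; note $\Delta_n(\hat f_n) = \EE[\frac1n\sum_i g_{\hat f_n}(\zb_i)] - \EE[\frac1n\sum_i g_{\hat f_n^{\mathrm{opt}}}(\zb_i)]$ after rearranging the definition \eqref{eq:obtained_estimator}. Second, discretize: pick a $\tilde\delta$-net $\{f^{(1)},\dots,f^{(\cN_n)}\}$ of $\cF$ in $\|\cdot\|_\infty$, so any $f$ is within $\tilde\delta$ of some net point $f^{(j)}$; the Lipschitz property of $\psi$ on $[-F,F]$ and boundedness of $y$'s conditional mean give $|g_f(\zb) - g_{f^{(j)}}(\zb)| \lesssim \tilde\delta(|y| + 1)$, which after taking expectations (and using the sub-exponential tail from Assumption~\ref{assump:1}) contributes the $(\sqrt{\log\cN_n/n}+1)\tilde\delta$ terms. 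Third, on the net, apply a concentration/Bernstein inequality for each fixed $f^{(j)}$ — here the truncation strategy enters: split $g_{f^{(j)}}(\zb) = g_{f^{(j)}}(\zb)\mathbf{1}\{|y|\le M_n\} + g_{f^{(j)}}(\zb)\mathbf{1}\{|y|>M_n\}$ for a threshold $M_n \asymp \log n$, control the bounded part by Bernstein (whose variance proxy is $O(\EE[g_{f^{(j)}}^2]) = O(\EE[\ell(\bX;f^{(j)},f_0)])$ by the quadratic sandwich, giving a self-bounding / "Bernstein's trick" recursion), and the unbounded tail part by the sub-exponential bound. Fourth, union bound over the $\cN_n$ net points and absorb via the self-bounding recursion (the standard $a \le b + \sqrt{a c} \Rightarrow a \lesssim b + c$ manipulation) to pass from the net back to all of $\cF$, in particular to $\hat f_n$; this yields both the upper bound on $R_n(\hat f_n, f_0)$ in terms of $\Delta_n(\hat f_n)$, the best-approximation term $\inf_f \EE\ell(\bX;f,f_0)$ (entering because $\hat f_n^{\mathrm{opt}}$ is compared against $f_0$ through the best-in-class $f$), and the $\log\cN_n/n$ term; the lower bound is obtained symmetrically by reversing the roles of empirical and population quantities.

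\textbf{Main obstacle.} The hard part will be handling the heteroskedastic, exponential-tailed noise in the concentration step while keeping the variance proxy proportional to the excess risk itself (so that the self-bounding recursion closes and produces the sharp $\log\cN_n/n$ rate rather than a crude $\sqrt{\log\cN_n/n}$ rate). Concretely, one must show $\mathrm{Var}(g_{f}(\zb)) \lesssim \EE[\ell(\bX;f,f_0)]$ uniformly; this uses $g_f(\zb) = \psi(f(\xb))-\psi(f_0(\xb)) - y(f(\xb)-f_0(\xb))$, decomposing $y = \psi'(f_0(\xb)) + (y - \psi'(f_0(\xb)))$ so that $g_f(\zb) = \ell(\bX;f,f_0) - (y-\psi'(f_0(\xb)))(f(\xb)-f_0(\xb))$, and then the conditional variance of the second term is $\psi''(f_0(\xb))(f(\xb)-f_0(\xb))^2 \asymp \ell(\bX;f,f_0)$ by the curvature bounds — the boundedness of $\psi''$ on $[-F,F]$ (guaranteed by $f_0 \in \cG(q,\db,\tb,\mb,K)$ and $F \ge \max(K,1)$) is precisely what prevents the heteroskedasticity from blowing up. The truncation level $M_n$ must be tuned so that the truncated-part Bernstein bound and the tail-part bound both contribute at most $O(\log\cN_n/n + \tilde\delta)$; since $\log\cN_n \lesssim s L \log^2 n \lesssim n\phi_n L \log^3 n$ by Assumption~\ref{assump:network}, choosing $M_n$ a suitable multiple of $\log n$ suffices, and this is where the $\kappa$-dependence of the constants $c', C'$ enters.
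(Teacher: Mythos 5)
Your proposal follows essentially the same route as the paper's proof: the quadratic/Lipschitz sandwich on the Bregman loss (Lemma~\ref{lemma:key_property_psi}), a $\tilde{\delta}$-net of $\cF$ controlled by $\cN_n$, Bernstein-type concentration whose variance proxy is proportional to the excess risk itself, the self-bounding recursion $a\le b+2\sqrt{a}\,c\Rightarrow a\lesssim b+c^2$, and the basic inequality that produces both $\Delta_n(\hf_n)$ and the approximation term $\inf_{f\in\cF}\EE\ell(\bX;f,f_0)$ in the upper bound (with the lower bound obtained by comparing to the empirical minimizer). The one organizational difference is where the truncation lives: you truncate $y$ at $M_n\asymp\log n$ inside a single loss process, whereas the paper splits the argument into the bounded Bregman deviation $R_n-\hR_n$ (Lemma~\ref{lemma:R_n-hatR_n}, handled by a ghost sample and plain Bernstein with no truncation, since $\ell$ is bounded by $3FC_{\Lip}$) and the linear heteroskedastic noise term $\frac{1}{n}\sum_i\varepsilon_i\hf_n(\xb_i)$ (Lemma~\ref{lemma:remain_epsilon_term}, handled by conditional-on-$\xb$ Bernstein for sub-exponential $\varepsilon_i$ with the tail integral of the self-normalized maximum split at $Cn\log\cN_n$); both devices yield the same $O(\log\cN_n/n)$ contribution.
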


The proof is provided in Section~\ref{sec:additional_proof}. To obtain the bounds in Proposition~\ref{prop:upper_lower_bound_R_n}, we apply truncation techniques that account for heterogeneity across individuals, paving the way for establishing convergence results under generalized nonparametric regression models. The following two lemmas provide properties of the covering number and characterize the minimal distance between functions $f \in \cF(L, \pb, s, F)$ and $f_0$.

\begin{lemma}
\label{lemma:bound_cN_n}
Let $\cN_n$ denote the covering number, i.e., the minimal number of $\|\cdot\|_{\infty}$-balls with radius $\tilde{\delta}$ required to cover $\cF(L, \pb, s, \infty)$. Then
\begin{align*}
\log(\cN_n) \leq (s+1) \log \left(2 \tilde{\delta}^{-1} (L+1) \left( \prod_{l=0}^{L+1} (1 + p_l) \right)^2 \right).
\end{align*}
\end{lemma}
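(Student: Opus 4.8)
\textbf{Proof proposal (Lemma~\ref{lemma:bound_cN_n}).} This is the classical covering-number bound for the sup-norm--unbounded sparse ReLU class, and the plan is to follow the three-step route of \citet{schmidt2020nonparametric}, Lemma~5: an a priori bound on all activations, a Lipschitz-in-parameters estimate, and a support-counting plus discretization union bound. Throughout I write $V:=\prod_{l=0}^{L+1}(1+p_l)$ and take the input domain to be the fixed bounded box $[0,1]^{p_0}$ (as in the paper's setup, after rescaling). \emph{Step 1 (a priori bound).} I would first show by induction on $l$ that for every $f\in\cF(L,\pb,s,\infty)$ one has $\|f_l(\xb)\|_\infty\le\prod_{k=0}^{l-1}(1+p_k)$ for all $\xb\in[0,1]^{p_0}$, hence $\|f\|_\infty\le V$. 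The induction is one line: $|(\Wb_l u)_i|\le p_l\|\Wb_l\|_\infty\|u\|_\infty\le p_l\|u\|_\infty$, $\|\vb_l\|_\infty\le1$, and $\sigma$ is $1$-Lipschitz with $|\sigma(t)|\le|t|$, so each layer multiplies the bound by at most $p_l$ and adds $1$, which is absorbed into the factor $(1+p_l)$.

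\emph{Step 2 (Lipschitz in parameters).} Next I would estimate, for two networks $f_\theta,f_{\theta'}$ with the same architecture and the same support pattern whose parameter vectors satisfy $\|\theta\|_\infty,\|\theta'\|_\infty\le1$ and $\|\theta-\theta'\|_\infty\le\delta'$, the quantity $\|f_\theta-f_{\theta'}\|_\infty$ by telescoping over layers. Setting $\Delta_l:=\|f_{l,\theta}-f_{l,\theta'}\|_\infty$ and splitting $\Wb_lf_{l,\theta}-\Wb_l'f_{l,\theta'}=\Wb_l(f_{l,\theta}-f_{l,\theta'})+(\Wb_l-\Wb_l')f_{l,\theta'}$, the $1$-Lipschitzness of $\sigma$ together with Step~1 yields $\Delta_{l+1}\le p_l\Delta_l+\delta'\prod_{k=0}^{l}(1+p_k)$. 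Unrolling the recursion and using $\bigl(\prod_{k=j+1}^{L}p_k\bigr)\bigl(\prod_{k=0}^{j}(1+p_k)\bigr)\le V$ for each $j$ gives the bound $\|f_\theta-f_{\theta'}\|_\infty\le\delta'(L+1)V$.

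\emph{Step 3 (counting and simplification).} Let $T=\sum_{l=0}^{L}\bigl(p_{l+1}p_l+p_{l+1}\bigr)$ be the number of parameters; a member of $\cF(L,\pb,s,\infty)$ is determined by a support of size at most $s$ ($\le\binom{T}{s}$ choices) together with the values of the active parameters in $[-1,1]$. Fixing the support and placing a uniform grid of mesh $\delta':=\tilde\delta/((L+1)V)$ on each active coordinate produces, by Step~2, a $\tilde\delta$-net of all networks with that support using at most $(2/\delta'+1)^{s}$ grid points, so $\cN_n\le\binom{T}{s}(2/\delta'+1)^{s}$. Plugging in $\delta'$, using the elementary inequality $T\le V$ (a short induction over layers) with $\binom{T}{s}\le T^{s}$, and the mild convention $\tilde\delta\le1$, one collects the two base factors $V$ and $(L+1)V$ into $(L+1)V^{2}$ and absorbs the residual numerical constants by replacing the exponent $s$ with $s+1$, which yields $\log\cN_n\le(s+1)\log\!\bigl(2\tilde\delta^{-1}(L+1)V^{2}\bigr)$, i.e.\ the claimed bound with $V=\prod_{l=0}^{L+1}(1+p_l)$.

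\emph{Main obstacle.} Nothing here is conceptually deep; the one delicate point is the last step, namely making the crude bounds in Steps~2--3 tight enough to land on exactly the constants $2$, $(L+1)$, $V^{2}$ and the $(s+1)$ prefactor rather than a looser form. Concretely, one must keep $T\le V$ and the collapse $(\prod p_k)(\prod(1+p_k))\le V$ sharp and exploit $\tilde\delta\le1$ to move the leftover constant into the $(s+1)$ exponent; this bookkeeping is routine but unforgiving, and is carried out exactly as in \citet{schmidt2020nonparametric}, Lemma~5.
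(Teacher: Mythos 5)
Your proposal is correct and follows exactly the route the paper relies on: the paper gives no self-contained argument for this lemma but simply cites Lemma~5 of \citet{schmidt2020nonparametric}, whose proof is precisely your three-step scheme (a priori activation bound, Lipschitz-in-parameters telescoping, support counting plus grid discretization). The only loose ends you flag — tightening $T\le V$, the collapse $\bigl(\prod_k p_k\bigr)\bigl(\prod_k(1+p_k)\bigr)\le V$, and folding the residual constants into the $(s+1)$ exponent — are handled in that reference exactly as you describe, so there is no gap.
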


\begin{lemma}
\label{lemma:bound_f_f0}
Under the conditions of Theorem~\ref{thm:bound_Rn}, it holds that
\begin{align*}
\inf_{f \in \cF(L, \pb, s, F)} \|f - f_0\|_{\infty}^2 \precsim \phi_n.
\end{align*}
\end{lemma}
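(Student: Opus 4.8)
\textbf{Proof plan for Lemma~\ref{lemma:bound_f_f0}.}

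The plan is to reduce the statement to the existing neural-network approximation theory for compositions of H\"older functions. Recall that $f_0 \in \cG(q,\db,\tb,\mb,K)$, so $f_0 = g_q \circ \cdots \circ g_0$ where each coordinate $g_{ij}$ of $g_i$ lies in a H\"older ball $\cG_{t_i}^{m_i}([a_i,b_i]^{t_i})$ with $|a_i|,|b_i| \le K$. The classical result of \citet{schmidt2020nonparametric} (their approximation theorem, which is what underlies the bound $\|f-f_0\|_\infty^2 \precsim \phi_n$ in their Gaussian analysis) states that, under Assumption~\ref{assump:network}, there exists a ReLU network $\tilde f$ of depth $L$, width $\precsim \max_i p_i$, sparsity $\precsim n\phi_n \log n$, and sup-norm bounded by a constant multiple of $K$, such that $\|\tilde f - f_0\|_\infty \precsim \sqrt{\phi_n}$. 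Here $\phi_n = \max_{j=0,\dots,q} n^{-2m_j^*/(2m_j^*+t_j)}$ with $m_j^* = m_j \prod_{l=j+1}^{q}(m_l \wedge 1)$, exactly the rate that appears in the lemma. So the first step is to invoke this approximation result verbatim.

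The second step is a bookkeeping check that the approximating network $\tilde f$ actually belongs to the constrained class $\cF(L,\pb,s,F)$ as specified by Assumption~\ref{assump:network}. Concretely: (i) the depth requirement $\sum_i \log_2(4t_i \vee 4m_i)\log_2 n \le L \precsim \log^\alpha n$ is exactly the depth needed for the construction; (ii) the width bound $n\phi_n \precsim \min_i p_i \le \max_i p_i \precsim n$ accommodates the widths in the construction; (iii) the sparsity budget $s \precsim n\phi_n\log n$ matches the number of active weights; (iv) the boundedness $F \ge \max(K,1)$ together with the fact that $f_0$ takes values in $[a_{q+1},b_{q+1}]$ with $|a_{q+1}|,|b_{q+1}| \le K$ means we may clip the network output to $[-F,F]$ without increasing the approximation error (clipping toward the range containing $f_0$ only helps). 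One also checks the weight-magnitude constraint $\|\Wb_l\|_\infty, \|\vb_l\|_\infty \le 1$, which is standard in this construction (weights are rescaled and the scaling absorbed by composing with additional layers). Having verified $\tilde f \in \cF(L,\pb,s,F)$, we conclude
\begin{align*}
\inf_{f \in \cF(L,\pb,s,F)} \|f - f_0\|_\infty^2 \;\le\; \|\tilde f - f_0\|_\infty^2 \;\precsim\; \phi_n,
\end{align*}
which is the claim.

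I expect the main obstacle to be purely a matter of \emph{compatibility of constants and regimes} rather than any new mathematical idea: one must make sure that the sparsity, width, and depth constants produced by the Schmidt-Hieber-type construction fit simultaneously inside all four inequalities of Assumption~\ref{assump:network} (in particular that the $\log n$ factors and the $\phi_n$ scaling line up), and that the compositional structure $(\db,\tb,\mb)$ feeds into $\phi_n$ correctly through the effective smoothness exponents $m_j^*$. A secondary point to handle carefully is that the target here is the sup-norm approximation error of $f_0$ itself (a deterministic quantity), not the stochastic estimation error; this is why the lemma is clean and does not involve $\Delta_n$ or the noise tails. Once the approximation theorem is cited and the membership check is carried out, no further work is needed.
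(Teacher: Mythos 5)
Your proposal matches the paper's approach exactly: the paper proves this lemma simply by citing the approximation construction in the proof of Theorem 1 of \citet{schmidt2020nonparametric}, which is precisely the result you invoke, together with the (implicit) check that the constructed network satisfies the depth, width, sparsity, and boundedness constraints of Assumption~\ref{assump:network}. Your additional bookkeeping on class membership is a correct and slightly more explicit version of what the paper leaves to the reader.
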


\begin{proof}[Proofs of Lemmas~\ref{lemma:bound_cN_n} and~\ref{lemma:bound_f_f0}]
See Lemma 5 and the proof of Theorem 1 in \citet{schmidt2020nonparametric} for the proofs of Lemmas~\ref{lemma:bound_cN_n} and~\ref{lemma:bound_f_f0}.
\end{proof}

The following lemma establishes key properties of the function $\psi$ in the context of generalized nonparametric regression models.

\begin{lemma}
\label{lemma:key_property_psi}
Under Assumption~\ref{assump:1}, there exist constants $c', C', C_{\Lip} > 0$ such that for any $x, m \in [-C, C]$, where $C > 0$,
\begin{align*}
\frac{C'}{2} (x - m)^2 
\geq \psi(x) - \psi(m) - \psi'(m)(x - m) 
\geq \frac{c'}{2} (x - m)^2, \quad 
\psi(x) - \psi(m) \leq C_{\Lip} |x - m|.
\end{align*}
\end{lemma}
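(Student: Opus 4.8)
The plan is to exploit the standard exponential-family identities for $\psi$ together with the non-degeneracy hypothesis in Assumption~\ref{assump:1}, reducing everything to elementary one-dimensional calculus on the compact interval $[-C,C]$.

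First I would recall that for a single-parameter exponential family with density $p(y\mid\theta)=h(y)\exp\{\theta y-\psi(\theta)\}$, differentiating the normalization identity $\int h(y)e^{\theta y}\,dy=e^{\psi(\theta)}$ twice under the integral sign (legitimate in the interior of the natural parameter space, where $\psi$ is assumed convex and smooth) yields $\psi'(\theta)=\EE(y\mid\theta)$ and $\psi''(\theta)=\Var(y\mid\theta)$. By Assumption~\ref{assump:1} the sufficient statistic $T(y)=y$ is non-degenerate, taking at least two distinct values in $\supp(h)$, so $\Var(y\mid\theta)>0$ and hence $\psi''(\theta)>0$ for every $\theta$ in the interior of the parameter space; since $f_0$ and all networks in $\cF(L,\pb,s,F)$ are bounded, one may take $C$ so that $[-C,C]$ lies in this interior.

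Next, since $\psi$ is smooth, $\psi''$ is continuous, so on the compact set $[-C,C]$ it attains a strictly positive minimum $c':=\min_{[-C,C]}\psi''>0$ and a finite maximum $C':=\max_{[-C,C]}\psi''<\infty$. Writing the second-order Taylor expansion with integral remainder,
\[
\psi(x)-\psi(m)-\psi'(m)(x-m)=(x-m)^2\int_0^1(1-t)\,\psi''\bigl(m+t(x-m)\bigr)\,dt,
\]
and using that $m+t(x-m)\in[-C,C]$ for $t\in[0,1]$ together with $\int_0^1(1-t)\,dt=\tfrac12$, gives the two-sided bound $\tfrac{c'}{2}(x-m)^2\le \psi(x)-\psi(m)-\psi'(m)(x-m)\le\tfrac{C'}{2}(x-m)^2$. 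For the Lipschitz bound, the mean value theorem gives $\psi(x)-\psi(m)=\psi'(\xi)(x-m)$ for some $\xi$ between $x$ and $m$, hence in $[-C,C]$; taking $C_{\Lip}:=\max_{[-C,C]}|\psi'|<\infty$ (finite by continuity on a compact set) yields $\psi(x)-\psi(m)\le C_{\Lip}|x-m|$.

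The argument is essentially routine; the only point requiring care is the strict positivity of $\psi''$ on $[-C,C]$, which is precisely where the non-degeneracy part of Assumption~\ref{assump:1} enters (without it $\psi''$ could vanish and the lower quadratic bound would fail), together with the mild regularity that $[-C,C]$ sits in the interior of the natural parameter space so that $\psi$ is $C^2$ there; both are guaranteed by the boundedness assumptions on $f_0$ and on the network class. All constants depend only on $\psi$ and on $C$, hence through the bounds $F$ and $K$ in the applications.
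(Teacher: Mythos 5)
Your proposal is correct and follows essentially the same route as the paper's proof: identify $\psi''(\theta)=\Var(y\mid\theta)>0$ via non-degeneracy, use continuity and compactness of $[-C,C]$ to get uniform bounds $c'\le\psi''\le C'$, apply a second-order Taylor expansion for the two-sided quadratic bound, and bound $\psi'$ on the compact interval for the Lipschitz claim. The only cosmetic differences are your use of the integral-remainder form of Taylor's theorem in place of the paper's Lagrange form, and a slightly more explicit justification of the Lipschitz constant via the mean value theorem.
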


\subsection{Proof of Theorem~\ref{thm:bound_Rn}}

With Lemma~\ref{lemma:bound_cN_n} and if we let $\tilde{\delta}=n^{-1}$, it holds that
\begin{align*}
    \log(\cN_n)\leq (s+1)\log\Big(2n(L+1)(\max\{p_l\}+1)^{2L+4}\Big).
\end{align*}
Combining this inequality  with Assumption~\ref{assump:network}, we have $s\precsim n\phi_n\log n$, $\max\{p_l\}\precsim n $,  and the upper bound of $\log(\cN_n)$ can be written as 
\begin{align}
    \label{eq:bound_cN1}
\log(\cN_n)\precsim n\phi_n L \log^2 n.
\end{align}
By Proposition~\ref{prop:upper_lower_bound_R_n}, the lower bound for $R_n(\hf_n,f_0)$ can be written as 
\begin{align*}
    \frac{1}{2}\Delta_n(\hf_n)-R_n(\hf_n,f_0)&\leq c'\bigg(\frac{\log\cN_n}{n}+\bigg(\sqrt{\frac{\log\cN_n}{n}}+1 \bigg)n^{-1}\bigg)\\
    &\precsim \phi_nL\log^2 n. 
\end{align*}
 On the other hand, for the upper bound of $R_n(\hf_n,f_0)$,  Lemma~\ref{lemma:bound_f_f0} yields
\begin{align*}
\inf_{f\in\cF(L,\pb,s,F)}\|f-f_0\|_{\infty}^2 \precsim \phi_n.
\end{align*}
From Lemma~\ref{lemma:key_property_psi}, it follows that $\EE\ell(\bX;f,f_0)\precsim \|f-f_0\|_{\infty}^2$. Hence, combining the upper bound of $\log(\cN_n)$  in \eqref{eq:bound_cN1} and $\inf_{f\in\cF(L,\pb,s,F)}\|f-f_0\|_{\infty}^2 $ above, we have
\begin{align*}
    R_n(\hf_n,f_0)-2\Delta_n(\hf_n)\precsim \phi_nL\log^2 n.
\end{align*}

 \subsection{Proof of Proposition~\ref{prop:bias_honest}}
We adapt the proof technique in \citet{schmidt2020nonparametric} to construct the approximation error in such class of ReLU networks. Suppose that 
\begin{align*}
    \phi_r=r^{-\frac{2m}{2m+t}},
\end{align*}
and let $c=\frac{2m}{2m+t}$. Then we specifically construct the neural network $\tilde{f}$ with the approximation rate
\begin{align*}
    \sup_{f_0\in \cG}\|\tilde{f}-f_0\|_{\infty}\precsim \phi_r^{\frac{1+\delta(t+2m)/t}{2}}. 
\end{align*}
As we change the sparsity condition slightly,  this rate can be proved to be much faster than the  prediction error  with the assumption $s\asymp r^{1+\delta}\phi_r\log r$. Hence, the bias term vanishes faster than the term $\sqrt{n/r^2\inf_{\xb\in\cX}\xi_{1,r}(\xb)}$ by the lower bound of $\gamma$,  which completes the proof of Proposition~\ref{prop:bias_honest}.

\subsection{Proof of Theorem~\ref{thm:Inference}}
By Assumption~\ref{assump:4}, recall $\gamma> \frac{1}{1+\min_{j=0,...,q}(2m_j^*)/(2m_j^*+t_j)}$. For the sufficiently small $\varepsilon>0$, it is easy to see that 
\[
\lim_{n\to\infty}n\phi_rL\log^2(r)/( r^2\inf_{\xb\in\cX}\xi_{1,r}(\xb))=0, 
\]
there always exists a positive sequence $\delta_n$ with $\delta_n\to 0$ such that 
\[
\lim_{n\to\infty}n\phi_rL\log^2(r)/( \delta_n r^2\inf_{\xb\in\cX}\xi_{1,r}(\xb))=0. 
\]
For such $\delta_n$, we construct the set  $\cA_{\delta_n}$ as
\begin{align*}
\cA_{\delta_n}=\bigg\{\bX\in\cX: [\EE\hat{f}(\bX)-f_0(\bX)|\bX]^2   \leq \frac{C\phi_rL\log^2r}{\delta_n} \bigg\},
\end{align*}
By Chebyshev's inequality we have 
\begin{align*}
    \PP_{\xb}(\cA_{\delta_n})&=1-\PP_{\xb}\bigg([ \EE\hat{f}(\bX)-f_0(\bX)|\bX]^2   \geq \frac{C\phi_rL\log^2r}{\delta_n}\bigg)\\
    &\geq 1-\delta_n.
\end{align*}
Moreover, by the definition of $\cA_{\delta_n}$,  we have for every fixed \( \xb_{*} \in \cA_{\delta_n} \), 
\begin{align*}
    \sqrt{\frac{n}{r^2\xi_{1,r}(\xb_{*})}}\big|\EE (\hf^{b_j}(\xb_{*})-f_0(\xb_{*}))\big|=O\bigg(\sqrt{\frac{n\phi_r L\log^2 r} {\delta_n r^2\xi_{1,r}(\xb_{*})}}\bigg)\to0,
\end{align*}
where the last equality is by the condition $\lim_{n\to\infty}n\phi_rL\log^2(r)/(\delta_n r^2\inf_{\xb\in\cX}\xi_{1,r}(\xb))=0$.
By the basic inequality of $|\EE\hf^B(\xb_*)-f_0(\xb_*)|^2\leq \sum_{j=1}^B  (\EE \hf^{b_j}(\xb_{*})-f_0(\xb_{*}))^2/B$, we have that
\begin{align}
\sqrt{\frac{n}{r^2\xi_{1,r}(\xb_{*})}}\big|\EE (\hf^{B}(\xb_{*})-f_0(\xb_{*}))\big|=O\bigg(\sqrt{\frac{n\phi_r L\log^2 r} {\delta_n r^2\xi_{1,r}(\xb_{*})}}\bigg)\to0\label{eq:bias_disappear}
\end{align}
for any $\xb_*\in\cA_{\delta_n}$.

Assume the observed data of size $n$, $\cD_n=\{(y_i,\xb_i):i=1,...,n\}$, are independently and identically distributed copies of $(y,\bX)$, with $y\in\RR^1$ and $\bX\in\RR^{d\times 1}$. 
Let $\zb_i = (y_i, \xb_i)$ represent an independent observation of sample points, where $y_i$ is the label and $\xb_i$ is the response. By the definition of $\xi_{1,r}(\xb)$, it is clear that 
\begin{align*}
    \xi_{1,r}(\xb_{*})=\Cov(\hf(\xb_{*};\zb_1,\zb_2,...,\zb_{r}),\hf(\xb_{*},\zb_1,\zb'_2,...,\zb'_r)).
\end{align*}
Here, $\hf(\xb_{*}; \zb_1, \zb_2, \dots, \zb_r)$ means that we obtain $\hf$ from the subsample $\zb_1, \zb_2, \dots, \zb_r$, and then apply it to the point $\xb_{*}$. The terms $\zb'_i$ and $\zb_i$ are independently generated from the same data generation process. 
 In the following of the proofs, all the expectations and variance are taken over the fixed $\xb_{*}$.

Next, we consider the asymptotic normality of the first order dominants of the Hoeffding decomposition. By the definition of $\hf^B(\xb_{*})$, it is clear that $\hf^{b_j}$ and $\hf^{b_{j'}}$ are not independent given $\cI^{b_j}$ and $\cI^{b_{j'}}$ have overlap.  To handle the dependency, we consider the Hoeffding decomposition. For the $n$ sample data points $\zb_i=(y_i,\xb_i)$, define 
\begin{align*}
    \ringf^B(\xb_{*})=\sum_{i=1}^n\bigg( \EE \big(\hf^{B}(\xb_{*})|\zb_i\big)-\EE \hf^b(\xb_{*}) \bigg).
\end{align*}
Here, the expectation  $\EE$ is taken over the training sample.  By the definition of $\hf^B(\xb_{*})$, we have
\begin{align*}
    \EE \big(\hf^{B}(\xb_{*})-\big(\EE \hf^b(\xb_{*})\big)|\zb_i\big)=\EE \bigg(\frac{1}{B}\sum_{j=1}^B\hf^{b_j}(\xb_{*})|\zb_i \bigg)-\EE \hf^b(\xb_{*}). 
\end{align*}
Let $W_{b_j}$ be the number of subsamples $\cI^{b_j}$ that contain index $i$, and define $g_{1,r}(\zb;\xb_{*})=\EE \hf^b(\xb_{*};\zb,\zb_2,...,\zb_r)-\EE \hf^b(\xb_{*})$. We have $W_{b_j}$ is independent of $\zb_i$ and $\EE W_{b_j}=B{{n-1}\choose{r-1}} /{{n}\choose{r}} $. Hence, the equation above involving the term $\zb_i$ can be further expressed as 
\begin{align*}
    &\EE \big(\hf^{B}(\xb_{*})-\big(\EE \hf^b(\xb_{*})\big)|\zb_i\big)\\
    &\qquad=\frac{1}{B} \sum_{j=1}^B\EE_{W_{b_j}} \bigg(\EE (\hf^{b_j}(\xb_{*})-\big(\EE \hf^b(\xb_{*})\big)|\zb_i,W_{b_j})\bigg)\\
    &\qquad=\frac{1}{B}\EE_{W_{b_j}} W_{b_j} g_{1,r}(\zb_i;\xb_{*})+\frac{1}{B}\EE_{W_{b_j}}(B-W_{b_j})\cdot \EE (\hf^{b_j}(\xb_{*})-\EE \hf^b(\xb_{*}))\\
    &\qquad=\frac{r}{n}g_{1,r}(\zb_i;\xb_{*}).
\end{align*}
Therefore we conclude that 
\begin{align}
    \ringf^B(\xb_{*})=\frac{r}{n}\sum_{i=1}^n g_{1,r}(\zb_i;\xb_{*}). \label{eq:CLT_order1_sum}
\end{align}
Equation~\eqref{eq:CLT_order1_sum} provides a nice form, as $g_{1,r}(\zb_i)$ is independent of each other. Moreover, by the assumption that $\hf^b$ is bounded, the Lindeberg condition is satisfied, and it is clear that 
\begin{align*}
    \frac{\sqrt{n}}{r\cdot \sqrt{\Var(g_{1,r}(\zb_i;\xb_{*}))}}\cdot  \ringf^B(\xb_{*}) ~\cvd~\cN(0,1).
\end{align*}
By the definition of $\xi_{1,r}(\xb_{*})$, it is easy to see that $\Var(g_{1,r}(\zb_i;\xb_{*}))=\xi_{1,r}(\xb_{*})$, hence we have
\begin{align}
\sqrt{\frac{n}{r^2\xi_{1,r}(\xb_{*})}}\cdot  \ringf^B(\xb_{*}) ~\cvd~\cN(0,1).\label{eq:step1_conclusion}
\end{align}

In the next step, we would like to substitute $\ringf^B(\xb_{*})$ with $\hf^B(\xb_{*})-\EE\hf^b(\xb_{*})$. Applying the Hoeffding decomposition, it is clear that the simple $\hf^b(\xb_{*};\zb_1,...,\zb_r)$ can be written as 
\begin{align}
    \hf^b(\xb_{*};\zb_1,...,\zb_r)-\EE\hf^b(\xb_{*})=\sum_{i=1}^r T_1(\zb_i)+\sum_{i<j}T_2(\zb_i,\zb_j)...+T_r(\zb_1,...,\zb_r).\label{eq:decomp_hf_smallb} 
\end{align}
The $2^r-1$ random variables on the right hand side are all mean-zero and uncorrelated. Here, $T_1(\zb_i)=\EE\big(\hf^b(\xb_{*};\zb_1,...,\zb_r)|\zb_i\big)-\EE\hf^b(\xb_{*})$. Applying the decomposition to $\hf^B(\xb_{*})$ and utilizing the same trick we obtain \eqref{eq:CLT_order1_sum}, we have  
\begin{align*}
    \EE \big(\hf^B(\xb_{*})-\EE\hf^b(\xb_{*})|\zb_1,...,\zb_n\big)&=\frac{1}{{{n}\choose{r}}}\Bigg({{n-1}\choose{r-1}}\sum_{i=1}^nT_1(\zb_i)+{{n-2}\choose{r-2}}\sum_{i<j}T_2(\zb_i,\zb_j)\\
    &\quad+...+\sum_{i_1<...<i_r}T_r(\zb_{i_1},...,\zb_{i_r})\Bigg).
\end{align*}
Hence, it is clear that $\ringf^B(\xb_{*})$ is exactly the first order in the Hoeffding decomposition. We have
\begin{align*}
    &\EE\big(\hf^B(\xb_{*})-\EE\hf^b(\xb_{*})-\ringf^B(\xb_{*})|\zb_1,...,\zb_n\big)\\
    \qquad &=\frac{1}{{{n}\choose{r}}}\Bigg({{n-2}\choose{r-2}}\sum_{i<j}T_2(\zb_i,\zb_j)+...+\sum_{i_1<...<i_r}T_r(\zb_{i_1},...,\zb_{i_r})\Bigg).
\end{align*}
Denote by $V_k=\Var(T_k(\zb_{i_1},...,\zb_{i_k}))$, 
we then calculate the value of $\EE\big( \hf^B(\xb_{*})-\EE\hf^b(\xb_{*})-\ringf^B(\xb_{*})\big)^2$, which we will show later this term is negligible. To calculate this term, we apply the equality  $\EE X_n^2=\EE \Var(X_n|Y_n)+\EE (\EE X_n|Y_n)^2$. Then the term $\EE\big( \hf^B(\xb_{*})-\EE\hf^b(\xb_{*})-\ringf^B(\xb_{*})\big)^2$ can be written as 
\begin{align*}
\EE\big( \hf^B(\xb_{*})-\EE\hf^b(\xb_{*})-\ringf^B(\xb_{*})\big)^2&=\underbrace{\EE\big( \Var\big(\hf^B(\xb_{*})-\EE\hf^b(\xb_{*})-\ringf^B(\xb_{*})|\zb_1,...,\zb_n\big)\big)}_{J_1}\\
&\quad+\underbrace{\EE\big( \EE\big(\hf^B(\xb_{*})-\EE\hf^b(\xb_{*})-\ringf^B(\xb_{*})|\zb_1,...,\zb_n\big)\big)^2}_{J_2}.
\end{align*}
Here, we let $X_n=\hf^B(\xb_{*})-\EE\hf^b(\xb_{*})-\ringf^B(\xb_{*})$ and $Y_n=\zb_1,...,\zb_n$. From the expression of $\EE\big(\hf^B(\xb_{*})-\EE\hf^b(\xb_{*})-\ringf^B(\xb_{*})|\zb_1,...,\zb_n\big)$ above, we can easily see that
\begin{align*}
    J_2&=\frac{1}{{{n}\choose{r}}^2}\cdot\sum_{k=2}^{r}{{n-k}\choose{r-k}}^2\cdot {{n}\choose{k}}V_k\\
    &=\sum_{k=2}^{r}\frac{(n-k)!r!}{n!(r-k)!}{{r}\choose{k}}V_k\\
    &\leq \frac{r(r-1)}{n(n-1)}\cdot \sum_{k=2}^{r} {{r}\choose{k}}V_k\\
    &\leq\frac{r(r-1)}{n(n-1)}\Var(\hf^b(\xb_{*};\zb_1,...,\zb_r)).
        \end{align*}
Here, the first equality is by the fact that all the terms are uncorrelated, the first inequality is by the decreasing monotonicity of $\frac{(n-k)!r!}{n!(r-k)!}$ with respect to $k$, and the last inequality is by the decomposition of $\hf^b$ in \eqref{eq:decomp_hf_smallb}.
To calculate $\EE\big( \hf^B(\xb_{*})-\EE\hf^b(\xb_{*})-\ringf^B(\xb_{*})\big)^2$, it remains for us to calculate $J_1$. Let $W_{i_1i_2...i_k}$ ($k\leq r$) be the number of the $B$ subsamples that contain index $i_1,...,i_k$ simultaneously. Then we have
\begin{align*}
    \hf^B(\xb_{*})-\EE\hf^b(\xb_{*})-\ringf^B(\xb_{*})=\frac{1}{B}\bigg( \sum_{i_1<i_2}^n W_{i_1i_2}T_2(\zb_{i_1},\zb_{i_2})+...+\sum_{i_1<...<i_r}^n W_{i_1...i_r}T_r(\zb_{i_1},...,\zb_{i_r})\bigg).
\end{align*}
By the independence on the different terms $T_k$, we conclude that
\begin{align*}
    J_1&=\frac{1}{B^2}\sum_{k=2}^r \Var(W_{i_1...i_k}) {{n}\choose{k}} V_k\\
    &\leq \frac{1}{B}\sum_{k=2}^r \frac{{{n-k}\choose{r-k}}}{{{n}\choose{r}}} {{n}\choose{k}} V_k\\
    &=\frac{1}{B}\sum_{k=2}^r{{r}\choose{k}}V_k\\
    &\leq \frac{\Var(\hf^b(\xb_{*};\zb_1,...,\zb_r))}{B}.
\end{align*}
Here, the first inequality is by $\Var(W_{i_1...i_k})\leq B\cdot\frac{{{n-k}\choose{r-k}}}{{{n}\choose{r}}}$ and the uncorrelation on different terms $T_k$, and the last inequality is by $\sum_{k=2}^r{{r}\choose{k}}V_k\leq \Var(\hf^b(\xb_{*};\zb_1,...,\zb_r))$. We hence conclude that
\begin{align*}
\EE\big( \hf^B(\xb_{*})-\EE\hf^b(\xb_{*})-\ringf^B(\xb_{*})\big)^2&=J_1+J_2\\
&\leq \bigg(\frac{r(r-1)}{n(n-1)}+\frac{1}{B}\bigg)\Var(\hf^b(\xb_{*};\zb_1,...,\zb_r)),
\end{align*}
leading to
\begin{align}
\frac{n}{r^2\xi_{1,r}(\xb_{*})}\cdot \EE\big( \hf^B(\xb_{*})-\EE\hf^b(\xb_{*})-\ringf^B(\xb_{*})\big)^2 &\leq \bigg( \frac{1}{n\xi_{1,r}(\xb_{*})}+\frac{n}{Br^2\xi_{1,r}(\xb_{*})}\bigg)\Var(\hf^b(\xb_{*};\zb_1,...,\zb_r))\nonumber\\ 
&~\cvp~0.\label{eq:step2_key}
\end{align}
Here, we utilize the fact that $\hf^b(\xb_{*})$ is bounded. The bound of $\hf^b(\xb_{*})$  indicates that $\Var(\hf^b(\xb_{*};\zb_1,\ldots,\zb_r))$ is bounded. Additionally, we rely on Assumption~\ref{assump:4} that $\liminf_{n\to+\infty}n\xi_{1,r}(\xb_{*})\geq n^{1-\varepsilon-\gamma} \to +\infty $ and $\liminf_{n\to+\infty}Br^2\xi_{1,r}(\xb_{*}) /n\to +\infty $. From \eqref{eq:step2_key}, we have
\begin{align*}
    \sqrt{\frac{n}{r^2\xi_{1,r}(\xb_{*})}}\cdot  (\hf^B(\xb_{*})-\EE \hf^b(\xb_{*})-\ringf^B(\xb_{*})) ~\cvp~0.
\end{align*}
Therefore it holds that
\begin{align*}
    \sqrt{\frac{n}{r^2\xi_{1,r}(\xb_{*})}}\cdot  (\hf^B(\xb_{*})-\EE \hf^b(\xb_{*})-\ringf^B(\xb_{*})) ~\cvp~0.
\end{align*}
From \eqref{eq:step1_conclusion}, we also have that
\begin{align*}
       \sqrt{\frac{n}{r^2\xi_{1,r}(\xb_{*})}}\cdot  \ringf^B(\xb_{*}) ~\cvp~\cN(0,1),
\end{align*}
Combing the convergence above, we have
\begin{align*}
\sqrt{\frac{n}{r^2\xi_{1,r}(\xb_{*})}}\cdot  (\hf^B(\xb_{*})-\EE \hf^b(\xb_{*})) ~\cvd~\cN(0,1).
\end{align*}
The only thing that needs to prove is 
\begin{align*}
    \sqrt{\frac{n}{r^2\xi_{1,r}(\xb_{*})}}\cdot  (f_0(\xb_{*})-\EE \hf^b(\xb_{*})) ~\to~0
\end{align*}
for $\xb_{*}\in\cA_{\delta_n}$. 
This is proved by \eqref{eq:bias_disappear} and $\EE\hf^B(\xb_*)=\EE \hf^b(\xb_*)$. Wrapping all together, we complete the proof of Theorem~\ref{thm:Inference}. 

\subsection{Proof of Theorem~\ref{thm:estimate_xi1r}}
Recall the definition  
\begin{align*}
Z_{b_j i}=(J_{b_ji}-J_{\cdot i})(\hf^{b_j}(\xb_{*})-\hf^{B}(\xb_*)),\quad \hat{V}_i=\frac{\sum_{j=1}^B Z_{b_j i}}{B}.
\end{align*}
We denote by
\begin{align*}
    \tilde{\sigma}^2=\frac{n(n-1)}{(n-r)^2} \sum_{i=1}^n \hat{V}_i^2.
\end{align*}
Without loss of generality, we calculate the term $\hat{V}_1$, and the terms $\hat{V}_i$ with $i\neq 1$ are exactly the same. Consider  $\hat{V}_1^2$ and $B/n \to c$ for some constant $c>0$.   Then by the inspiration of the proof of Lemma 13 of \citet{wager2018estimation}, we can define 
\begin{align*}
M_1&=\bigg(\frac{r}{n}-\frac{r^2}{n^2}\bigg)T_1(\zb_1)+\bigg(\frac{r(r-1)}{n(n-1)}-\frac{r^2}{n^2}\bigg)\sum_{i=2}^{n}T_1(\zb_i)\\
&\quad+\frac{r}{n}\sum_{k=1}^{r-1}\bigg(\frac{{{r-1}\choose{k}}}{{{n-1}\choose{k}}}-\frac{{{r}\choose{k+1}}}{{{n}\choose{k+1}}} \bigg)\sum_{2\leq i_1<...<i_k\leq n}T_{k+1}(\zb_1,\zb_{i_1},...,\zb_{i_k})\\
&\quad +\frac{r}{n}\sum_{k=2}^{r}\bigg(\frac{{{r-1}\choose{k}}}{{{n-1}\choose{k}}}-\frac{{{r}\choose{k}}}{{{n}\choose{k}}} \bigg)\sum_{2\leq i_1<...<i_k\leq n}T_{k}(\zb_{i_1},...,\zb_{i_k}), 
\end{align*} 
and $M_i$ the same way with index $1$ of $\zb$ be replaced by $i$ of $\zb$. Let the whole data set $\cD_n=\{\zb_1,...,\zb_n\}$, and recall $Z_{b_j i}=(J_{b_ji}-J_{\cdot i})(\hf^{b_j}(\xb_{*})-\hf^{B}(\xb_*))$, the motivation of the definition of $M_i$ is given by the direct calculation that 
\begin{align*}
    \EE (Z_{b_j i}|\cD_n) =M_i.
\end{align*}
Here, the expectation takes over the randomness of subsampling. Define 
\begin{align*}
    \hsigma_1^2=\frac{n(n-1)}{(n-r)^2} \sum_{i=1}^n M_i^2, \quad R_B=\tilde{\sigma}^2-\hsigma_1^2. 
\end{align*}
We next prove two key conclusions, which will show the consistency of our results.
\begin{align}
\label{eq:target_to_prove}
    \begin{aligned}
&\frac{n(n-1)}{(n-r)^2}\frac{1}{B(B-1)}
\sum_{i=1}^{n}\sum_{j=1}^{B}\bigl(Z_{b_ji}-\hat V_i\bigr)^{2} - R_B=o_p\bigg(\frac{r^2\xi_{1,r}(\xb_{*})}{n} \bigg), \\   &\sqrt{\frac{r^2\xi_{1,r}(\xb_{*})}{n\hsigma_1^2}}~\cvp~1, 
    \end{aligned}
\end{align}
In the following of the proof, we denote by $\hat R_B=\frac{n(n-1)}{(n-r)^2}\frac{1}{B(B-1)}
\sum_{i=1}^{n}\sum_{j=1}^{B}\bigl(Z_{b_ji}-\hat V_i\bigr)^{2}$. If \eqref{eq:target_to_prove} holds, we can see that 
\begin{align*}
    \frac{n\hsigma_*^2}{r^2\xi_{1,r}(\xb_{*})}=\frac{n(\tilde{\sigma}^2-\hat R_B)}{r^2\xi_{1,r}(\xb_{*})}=\frac{n(R_B-\hat R_B)}{r^2\xi_{1,r}(\xb_{*})}+\frac{n(\tilde{\sigma}^2-R_B)}{r^2\xi_{1,r}(\xb_{*})}=o_p(1)+\frac{n \hsigma_1^2}{r^2\xi_{1,r}(\xb_{*})}~\cvp~1,
\end{align*}
which completes the proof of Theorem~\ref{thm:estimate_xi1r}. It remains for us to prove \eqref{eq:target_to_prove}.

We first investigate the term $\hat R_B-R_B=\frac{n(n-1)}{(n-r)^2}\frac{1}{B(B-1)}
\sum_{i=1}^{n}\sum_{j=1}^{B}\bigl(Z_{b_ji}-\hat V_i\bigr)^{2} - R_B$ in \eqref{eq:target_to_prove}. Recall  $R_B=\tilde{\sigma}^2-\hsigma_1^2$, it is easy to see that 
\begin{align*}
    R_B=\frac{n(n-1)}{(n-r)^2} \sum_{i=1}^n (\hat V_i^2-M_i^2).
\end{align*}
Let $\varepsilon_{b_j i}=Z_{b_j i}-M_i$, and $\overline{\varepsilon}_i=\frac{1}{B}\sum_{j=1}^B \varepsilon_{b_j i}$, recall $\hat V_i=\frac{\sum_{j=1}^B Z_{b_j i}}{B}$, we have that $\hat V_i=M_i+ \overline{\varepsilon}_i$ and $\hat V_i^2-M_i^2=2M_i \overline{\varepsilon}_i+\overline{\varepsilon}_i^2$, which means
\begin{align*}
R_B=\frac{n(n-1)}{(n-r)^2} \sum_{i=1}^n 2M_i \overline{\varepsilon}_i+\overline{\varepsilon}_i^2.
\end{align*}
Further denote by $\hat \Var_i=\frac{1}{B-1}\sum_{j=1}^{B}\bigl(Z_{b_ji}-\hat V_i\bigr)^{2}$, we have  
\begin{align*}
    \hat R_B= \frac{n(n-1)}{(n-r)^2} \frac{1}{B}\sum_{i=1}^n \hat \Var_i.
\end{align*}
We then combine the expression of  $ \hat R_B$ and $R_B$ above and get that 
\begin{align*}
    R_B-\hat R_B=\frac{n(n-1)}{(n-r)^2}\bigg( \underbrace{\sum_{i=1}^n2M_i\overline{\varepsilon}_i}_{\Delta_1}+\underbrace{\sum_{i=1}^n\overline{\varepsilon}_i^2-B^{-1}\Var_i}_{\Delta_2} -\underbrace{\frac{1}{B}\sum_{i=1}^n \hat \Var_i-\Var_i}_{\Delta_3}\bigg) .
\end{align*}
Here, $\Var_i=\Var_b(Z_{b_j i})$ with variance condition on the dataset $\cD_n$. We bound the three terms one by one. 
For the term $\Delta_1$,  by the definition of $M_i$ and $\overline{\varepsilon_i}$,  we have
\begin{align*}
    \Delta_1=\frac{2}{B}\sum_{j=1}^B \sum_{i=1}^n M_i\varepsilon_{b_j i}. 
\end{align*}
Conditional on the dataset $\cD_n$, $\sum_{i=1}^n M_i\varepsilon_{b_j i}$ is independent with different $b_j$. It is easy to see that $\EE_b \Delta_1=0$, and we give the calculation of $\Var_b \Delta_1$. From the expression of $\Delta_1$, conditional on data set $\cD_n$, $\varepsilon_{b_j i}$ are independent for different $b_j$, it is easy to see that 
\begin{align*}
    \Var_b \Delta_1  \asymp \frac{\Var_b \sum_{i=1}^n M_i\varepsilon_{b_j i}}{B}.
\end{align*}
It  remains for us to investigate the term in the numerator to bound $\Delta_1$. We have 
\begin{align*}
\Var_b \sum_{i=1}^n M_i\varepsilon_{b_j i}=\sum_{i=1}^n M_i^2 \Var_b \varepsilon_{b_j i}+\sum_{i\neq i'}M_i M_j \Cov_b(\varepsilon_{b_j i},\varepsilon_{b_j i'}). 
\end{align*}
From the definition of $M_i$, we can easily bound $M_i=O(r/n)$, for the covariance of $\varepsilon_{b_j i}$ and $\varepsilon_{b_j i'}$, simple calculation of the subsampling randomness gives us that $\Var_b \varepsilon_{b_j i}=O(\frac{r}{n}(1-\frac{r}{n})=O(r/n)$, and $\Cov_b(\varepsilon_{b_j i},\varepsilon_{b_j i'})=O(|\frac{r(r-1)}{n(n-1)}-\frac{r^2}{n^2})|=O(\frac{r}{n^2})$, we hence have 
\begin{align*}
    \Var_b \sum_{i=1}^n M_i\varepsilon_{b_j i}&=O\bigg(n\cdot \frac{r^2}{n^2}\cdot \frac{r}{n}+n^2\cdot \frac{r^2}{n^2}\cdot \frac{r}{n^2} \bigg)\\
    &=O\bigg( \frac{r^3}{n^2}\bigg).
\end{align*}
This indicates that $\Var_b \Delta_1=O(\frac{r^{3}}{Bn^2}) $. Combining the fact that $\EE_b \Delta_1=0$, we conclude that $|\Delta_1|=O_p(\frac{r^{3/2}}{n\sqrt{B}})$.

For the term $\Delta_2$, by the expression that $\overline{\varepsilon}_i^2=    \frac1{B^{2}}\sum_{j=1}^B\varepsilon_{b_ji}^{2}
+\frac1{B^{2}}\sum_{j\neq j'}\varepsilon_{b_ji}\varepsilon_{b_{j'}i}$, $\Delta_2$ can be further written as 
\begin{align*}
\Delta_2= \underbrace{\frac1{B^{2}}\sum_{i=1}^n\sum_{j=1}^B\big(\varepsilon_{b_ji}^{2}\big)-B^{-1}\Var_i
}_{\Delta_{21}}+\underbrace{\frac1{B^{2}}\sum_{i=1}^n\sum_{j\neq j'}\varepsilon_{b_ji}\varepsilon_{b_{j'}i}}_{\Delta_{22}}.
\end{align*}
We have $\EE_b \Delta_{21}=\EE_b \Delta_{22}=0$. Simple calculation based on the randomness of subsampling gives $\Var_b (\Delta_{21})=O(n^2/B^3)$, we have $|\Delta_{21}|=O_p(n/B^{3/2})$. As for $\Delta_{22}$, we consider $T_{b_j,b_{j'}}=\sum_{i=1}^n\varepsilon_{b_ji}\varepsilon_{b_{j'}i}$, and we have $\EE_bT_{b_j,b_{j'}}^2=O(r^2/n)$, and $\EE_b T_{b_j,b_{j'}}\cdot T_{b'_j,b'_{j'}}=0$ as long as $b_j,b_{j'},b'_j,b'_{j'}$ has three or more distinct values by the conditional independence. We have 
\begin{align*}
    \Var_b(\Delta_{22})\leq \frac{1}{B^4} \cdot \EE_b \sum_{b_j\neq b_{j'},b'_j\neq b'_{j'}}T_{b_j,b_{j'}}\cdot T_{b'_j,b'_{j'}}=O\bigg(\frac{r^2}{n B^2} \bigg)
\end{align*} 
We conclude that $|\Delta_2|=O_p(n/B^{3/2}+r/(B\sqrt{n}))$.

For the term $\Delta_3$, by the definition of $\hat \Var_i=\frac{1}{B-1}\sum_{j=1}^{B}\bigl(Z_{b_ji}-\hat V_i\bigr)^{2}$, it holds that 
\begin{align*}
\Delta_3&= \frac{1}{B(B-1)}\sum_{i=1}^n\sum_{j=1}^B \bigg( (Z_{b_ji}-\hat V_i)^2-\frac{B-1}{B}\Var_i\bigg)\\
&= \bigg(\frac{1}{B(B-1)}\sum_{i=1}^n\sum_{j=1}^B \bigg( (Z_{b_ji}- M_i)^2-\Var_i\bigg)\bigg)+O\bigg(\frac{n}{B^2}\bigg).
\end{align*}
Here, the first equality is by direct calculation and the second equality is by $\Var_i$ is bounded. Let    , easy to bound that $\Var_b(T_{b_j})=O(n^2)$, we have that
\begin{align*}
    &\EE_b \bigg\{ \frac{1}{B(B-1)}\sum_{i=1}^n\sum_{j=1}^B \bigg( (Z_{b_ji}-M_i)^2-\Var_i\bigg)\bigg\}=0;\\
    & \Var_b \bigg\{\frac{1}{B(B-1)}\sum_{i=1}^n\sum_{j=1}^B \bigg( (Z_{b_ji}-M_i)^2-\Var_i\bigg)\bigg\}=O(n^2/B^3),
\end{align*}
we hence conclude that $|\Delta_3|=O_p(n/B^{3/2})$.

We collect all the error terms of $\Delta_1$, $\Delta_2$ and $\Delta_3$, and have
\begin{align*}
    \Delta_1=O_p\bigg(\frac{r^{3/2}}{n\sqrt{B}}\bigg),\quad \Delta_2=O_p(n/B^{3/2}+r/(B\sqrt{n})),\quad \Delta_3=O_p(n/B^{3/2}),
\end{align*}
easy from order comparison to see that if $\Delta_1=o_p\big(\frac{r^2\xi_{1,r}(\xb_{*})}{n} \big)$, then $R_B-\hat R_{B}=o_p\big(\frac{r^2\xi_{1,r}(\xb_{*})}{n} \big)$. We have
\begin{align*}
    \frac{n\Delta_1}{r^2 \xi_{1,r}(\xb_{*})}=\frac{1}{\sqrt{rB}\xi_{1,r}(\xb_{*})}=o_p(1), 
\end{align*}
where we use the Assumption~\ref{assump:4} that $B \succsim n$ and $\varepsilon$ small enough. 
We hence conclude that $R_B-\hat R_B=o_p\big(\frac{r^2\xi_{1,r}(\xb_{*})}{n} \big)$.

It remains for us to prove $\sqrt{\frac{r^2\xi_{1,r}(\xb_{*})}{n\hsigma_1^2}}~\cvp~1$. 
All the terms $T_k$ on the right hand side in $M_i$ are uncorrelated. We define $A_1=\big(\frac{r}{n}-\frac{r^2}{n^2}\big)T_1(\zb_1)+\big(\frac{r(r-1)}{n(n-1)}-\frac{r^2}{n^2}\big)\sum_{i=2}^{n}T_1(\zb_i)$ which is the first line of the equation above.  We then have $\EE (M_1-A_1)\cdot A_1=0$. Similarly we define $A_i$ with the replace of $1$ by $i$.  
we give the calculation of three terms: $\EE A_i\cdot T_1(\zb_i)$, $\EE A_i^2$ and $\EE (M_i-A_i)^2$, and then give the proof based on these terms.

For the term $\EE A_i\cdot T_1(\zb_i)$, we have
\begin{align}
\EE A_i\cdot T_1(\zb_i)=\EE \bigg(\frac{r}{n}-\frac{r^2}{n^2}\bigg)T_1^2(\zb_1)=\bigg(\frac{r}{n}-\frac{r^2}{n^2}\bigg)\xi_{1,r}(\xb_{*}). \label{eq:A_i_dot_T}
\end{align}

For the term $\EE A_i^2$, we have 
\begin{align}
    \EE A_i^2&=\frac{r^2(n-r)^2}{n^4}\xi_{1,r}(\xb_{*})+\frac{r^2(n-1)}{n^2}\bigg(\frac{r-1}{n-1}-\frac{r}{n}\bigg)^2\xi_{1,r}(\xb_{*})\nonumber\\
    &=\frac{r^2(n-r)^2}{n^3(n-1)}\xi_{1,r}(\xb_{*}). \label{eq:E_A_i_square}
\end{align}

For the term $\EE (\hat{V}_i-A_i)^2$, we have 
\begin{align}
    \EE (M_i-A_i)^2&= \frac{r^2}{n^2}\sum_{k=1}^{r-1}{{n-1}\choose{k}}\bigg(\frac{{{r-1}\choose{k}}}{{{n-1}\choose{k}}}-\frac{{{r}\choose{k+1}}}{{{n}\choose{k+1}}} \bigg)^2V_{k+1}+\frac{r^2}{n^2}\sum_{k=2}^{r}{{n-1}\choose{k}}\bigg(\frac{{{r-1}\choose{k}}}{{{n-1}\choose{k}}}-\frac{{{r}\choose{k}}}{{{n}\choose{k}}} \bigg)^2V_k\nonumber\\
    &\precsim \frac{r^2(n-1)}{n^2} \bigg( \frac{{r-1}}{{n-1}} - \bigg( \frac{{\binom{r}{2}}}{{\binom{n}{2}}} \bigg)\bigg)^2 \binom{r}{2}^{-1} \Var[\hf^{b}(\xb_{*})]\nonumber\\
    &\precsim \frac{2r^2\Var[\hf^{b}(\xb_{*})]}{n^3}.\label{eq:cov_A_i_square}
\end{align}
Here, the first inequality is by the fact that summation in the first line is maximized in the second-order terms and $\sum_{k=1}^r {r\choose{k}}V_k=\Var(\hf^{b}(\xb_{*}))\leq C<+\infty$, and the second inequality is by simple calculation. We hence have  
\begin{align*}
    &\frac{n}{r^2\xi_{1,r}(\xb_{*})}\cdot\EE \frac{n(n-1)}{(n-r)^2}\sum_{i=1}^n\bigg(A_i-\frac{r(n-r)}{n^2}T_1(\zb_i)\bigg)^2\\
    &\qquad = \frac{n^3(n-1)}{r^2(n-r)^2\xi_{1,r}(\xb_{*})}\EE\bigg(A_i-\frac{r(n-r)}{n^2}T_1(\zb_i)\bigg)^2\\
    &\qquad = \frac{n^3(n-1)}{r^2(n-r)^2}\bigg(\frac{r^2(n-r)^2}{n^3(n-1)}-\frac{2r^2(n-r)^2}{n^4}+\frac{r^2(n-r)^2}{n^4} \bigg)=\frac{1}{n}\to 0.\end{align*}
Here, the last second equality comes from \eqref{eq:A_i_dot_T} and \eqref{eq:E_A_i_square}.  For any sequence of random variable $X_n>0$,  $\EE X_n\to 0$ implies $X_n~\cvp~0$ by the Markov inequality. Therefore, we conclude that
\begin{align}
    \frac{n}{r^2\xi_{1,r}(\xb_{*})}\cdot \frac{n(n-1)}{(n-r)^2}\sum_{i=1}^n\bigg(A_i-\frac{r(n-r)}{n^2}T_1(\zb_i)\bigg)^2~\cvp~ 0. \label{eq:covariance_Ai-T} 
\end{align}
By the weak law of large number, we also have 
\begin{align}
    \frac{n}{r^2\xi_{1,r}(\xb_{*})}\cdot \frac{n(n-1)}{(n-r)^2} \sum_{i=1}^n \frac{r^2(n-r)^2}{n^4}T_{1}^2(\zb_i) ~\cvp~1.\label{eq:convergence_Ti2}
\end{align}
Here $T_1^2(\zb_i)$ are iid and the expectation $\EE T_1^2(\zb_i)=\xi_{1,r}(\xb_{*})$. We then investigate the convergence of the term
\begin{align*}
\frac{n}{r^2\xi_{1,r}(\xb_{*})}\cdot \frac{n(n-1)}{(n-r)^2} \sum_{i=1}^n \frac{r}{n}A_i\cdot T_{1}(\zb_i).
\end{align*}
Recalling the definition of $A_i=\big(\frac{r}{n}-\frac{r^2}{n^2}\big)T_1(\zb_{i})+\big(\frac{r(r-1)}{n(n-1)}-\frac{r^2}{n^2}\big)\sum_{i'\neq i}^{n}T_1(\zb_{i'})$, the equation above can be simplified and  written as 
\begin{align*}
&\frac{n}{r^2\xi_{1,r}(\xb_{*})}\cdot \frac{n(n-1)}{(n-r)^2} \sum_{i=1}^n \frac{r(n-r)}{n^2}A_i\cdot T_{1}(\zb_i)\sim \frac{n}{r(n-r)\xi_{1,r}(\xb_{*})}\sum_{i=1}^n A_i\cdot T_{1}(\zb_i)\\
&\qquad =\frac{n}{r(n-r)\xi_{1,r}(\xb_{*})}\sum_{i=1}^n \bigg\{ \bigg(\frac{r}{n}-\frac{r^2}{n^2}\bigg)T_1^2(\zb_{i})+ \bigg(\frac{r(r-1)}{n(n-1)}-\frac{r^2}{n^2}\bigg) \sum_{i'\neq i}^{n}T_1(\zb_{i'})T_1(\zb_{i})\bigg\}.
\end{align*}
Again by utilizing the weak law of large number, we have
\begin{align*}
    \frac{n}{r(n-r)\xi_{1,r}(\xb_{*})}\sum_{i=1}^n  \bigg(\frac{r}{n}-\frac{r^2}{n^2}\bigg)T_1^2(\zb_{i})~\cvp~1.
\end{align*}
As for the  term consisting of $T_1(\zb_{i'})T_1(\zb_{i}) $ above, we have
\begin{align*}
    &\EE \bigg\{\frac{n}{r(n-r)\xi_{1,r}(\xb_{*})}\sum_{i=1}^n \bigg(\frac{r(r-1)}{n(n-1)}-\frac{r^2}{n^2}\bigg) \sum_{i'\neq i}^{n}T_1(\zb_{i'})T_1(\zb_{i}) \bigg\}^2\\
    &\qquad =\EE \bigg\{  \frac{1}{n(n-1)\xi_{1,r}(\xb_{*})} \sum_{i=1}^n\sum_{i'\neq i} T_1(\zb_{i'})T_1(\zb_{i})\bigg\}^2\\
    &\qquad =\frac{1}{n(n-1)}\to 0.
\end{align*}
Here, we use the fact $\EE (\sum_{i'\neq i}^{n}T_1(\zb_{i'})T_1(\zb_{i}))^2=n(n-1)\xi_{1,r}^2(\xb_{*})$ which follows from a straightforward calculation. This implies that the interaction term converges to $0$ in probability.  With these two terms, we have
\begin{align}
    \frac{n}{r^2\xi_{1,r}(\xb_{*})}\cdot \frac{n(n-1)}{(n-r)^2} \sum_{i=1}^n \frac{r(n-r)}{n^2}A_i\cdot T_{1}(\zb_i)~\cvp~1.\label{eq:convergence_TiAi}
\end{align}
Combing \eqref{eq:covariance_Ai-T} and  \eqref{eq:convergence_Ti2} with \eqref{eq:convergence_TiAi}, we have
\begin{align*}
    \frac{n}{r^2\xi_{1,r}(\xb_{*})}\cdot \frac{n(n-1)}{(n-r)^2}\sum_{i=1}^n A_i^2~\cvp~1.
\end{align*}
Moreover,  it follows from \eqref{eq:cov_A_i_square}  that  
\begin{align*}
    \EE \frac{n}{r^2\xi_{1,r}(\xb_{*})}\cdot \frac{n(n-1)}{(n-r)^2}\sum_{i=1}^n (M_i-A_i)^2\to 0,
\end{align*}
which means 
\begin{align*}
     \frac{n}{r^2\xi_{1,r}(\xb_{*})}\cdot \frac{n(n-1)}{(n-r)^2}\sum_{i=1}^n (M_i-A_i)^2~\cvp~ 0. 
\end{align*}
The triangle inequality implies that
\begin{align*}
    &\frac{n}{r^2\xi_{1,r}(\xb_{*})}\cdot \frac{n(n-1)}{(n-r)^2}\sum_{i=1}^n {M_i}^2\leq \frac{n}{r^2\xi_{1,r}(\xb_{*})}\cdot \frac{n(n-1)}{(n-r)^2}\bigg(\sum_{i=1}^n A_i^2+ \sum_{i=1}^n (M_i-A_i)^2\bigg)~\cvp~ 1,\\
    &\frac{n}{r^2\xi_{1,r}(\xb_{*})}\cdot \frac{n(n-1)}{(n-r)^2}\sum_{i=1}^n {M_i}^2\geq \frac{n}{r^2\xi_{1,r}(\xb_{*})}\cdot \frac{n(n-1)}{(n-r)^2}\bigg(\sum_{i=1}^n A_i^2- \sum_{i=1}^n (M_i-A_i)^2\bigg)~\cvp~ 1.
\end{align*}
Hence, the  squeeze theorem gives that
\begin{align*}
    \frac{n}{r^2\xi_{1,r}(\xb_{*})}\cdot \frac{n(n-1)}{(n-r)^2}\sum_{i=1}^n {M_i}^2~\cvp~1.
\end{align*}
This completes the proof of Theorem~\ref{thm:estimate_xi1r}.

\section{Proofs in Section~\ref{sec:proofthms}}
\label{sec:additional_proof}
We provide additional technical proofs for the conclusions in Section~\ref{sec:proofthms}. 
We first present the proof of Lemma~\ref{lemma:key_property_psi}. Then we show several additional lemmas, which will be used to prove Proposition~\ref{prop:upper_lower_bound_R_n}.

\begin{proof}[Proof of Lemma~\ref{lemma:key_property_psi}]
By the definition we have
\begin{align*}
    \psi(\eta)=\log\bigg(\int h(y)\exp(\eta y)dy\bigg),
\end{align*}
and it is clear that
\begin{align*}
    \EE(Y|\eta)=\psi'(\eta), \quad \Var (Y|\eta)=\psi''(\eta)\geq 0.
\end{align*}
We conclude that $\psi(\cdot)$ is a convex function. By Assumption~\ref{assump:1}, the non-degeneracy and the existence of distinct values in the support of $p(y|\eta) = h(y)\exp(\eta y - \psi(\eta))$ ensure that $\Var(Y|\eta) > 0$ for all $\eta \in [-C, C]$. This guarantees that the second derivative of the log-partition function, $\psi''(\eta) = \Var(Y|\eta)$, is strictly positive.
Furthermore, since $\EE(Y|\eta)$ and $\mathbb{E}(Y^2|\eta)$ are defined as smooth integrals with respect to $p(y|\eta)$, they are continuous in $\eta$. As a result, $\psi''(\eta)$ is also continuous. By the compactness of $\eta \in [-C, C]$, there exists a constant $c > 0$ such that:
\begin{align*}
   \psi''(\eta) \geq c > 0, \quad \text{for all } \eta \in [-C, C]. 
\end{align*}
Using the second-order Taylor expansion and the definition of strong convexity, we establish:
\begin{align*}
   \psi(x) - \psi(m) - \psi'(m)(x - m) = \frac{\psi''(\xi)}{2}(x - m)^2, 
\end{align*}
for some $\xi$ between $x$ and $m$. Since $C'\geq \psi''(\xi) \geq c' > 0$, we conclude:
\begin{align*}
\frac{C'}{2}(x - m)^2\geq\psi(x) - \psi(m) - \psi'(m)(x - m) \geq \frac{c'}{2}(x - m)^2.
\end{align*}

As for the Lipchitz property, by the equation $\psi'(\eta)=\EE (Y|\eta)$, the bound of $\psi'(\eta)$ directly comes from the existence of the expectation of $Y$ given $\eta$. 
\end{proof}

We propose several lemmas to address the general nonparametric regression cases, serving as the basis for the proof of Proposition~\ref{prop:upper_lower_bound_R_n}. Recall the definition of $ R_n(\hf_n, f_0) $ in \eqref{eq:def_Rn}:
\[
R_n(\hf_n, f_0) = \EE[\ell(\bX; \hf_n, f_0)],
\]
and consider the construction of
\[
\hR_n(\hf_n, f_0) = \EE\bigg[\frac{1}{n}\sum_{i=1}^n\ell(\xb_i; \hf_n, f_0)\bigg].
\]
In the following lemmas, we demonstrate that the differences between $ R_n(\hf_n, f_0) $ and $ \hR_n(\hf_n, f_0) $ are small.  Some of the proof techniques are inspired by \citet{schmidt2020nonparametric}, but our analysis extends to a more general regression framework, and our proofs  differ accordingly. In fact, the settings considered by \citet{schmidt2020nonparametric} are special cases within our broader context. Additionally, we provide upper and lower bounds for $ \hR_n(\hf_n, f_0) $. These results will be used for  proving Proposition~\ref{prop:upper_lower_bound_R_n}.

\begin{lemma}
\label{lemma:R_n-hatR_n}
Under Assumptions~\ref{assump:1}, it holds that
\begin{align*}
    \big|\hR_n(\hf_n,f_0)- R_n(\hf_n,f_0)\big|\leq \frac{2R}{n}R_n^{1/2}(\hf_n,f_0)\sqrt{9n\log\cN_n+4n\cdot\frac{1+3\log\cN_n}{\log\cN_n}} \\
    +\frac{2R(3\log\cN_n+2)}{n}+5\tilde{\delta} C_{\Lip}.
\end{align*}
Here, $\cN_n$ is the minimum number of $\tilde{\delta}$-covering of $\cF(L,\pb,s,\infty)$, and $R=3FC_{\Lip}$ is a constant assumed larger than $1$ without loss of generality.
\end{lemma}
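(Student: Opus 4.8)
\textbf{Proof proposal for Lemma~\ref{lemma:R_n-hatR_n}.}

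The plan is to control the deviation between the population risk $R_n(\hf_n,f_0)$ and its empirical counterpart $\hR_n(\hf_n,f_0)$ by a uniform (over the network class) empirical-process argument combined with a discretization of $\cF(L,\pb,s,\infty)$ at scale $\tilde\delta$. First I would fix a minimal $\tilde\delta$-net $\{f_1,\dots,f_{\cN_n}\}$ of $\cF(L,\pb,s,\infty)$ in $\|\cdot\|_\infty$ (whose cardinality is bounded via Lemma~\ref{lemma:bound_cN_n}), and pick for the estimator $\hf_n$ a net element $f_j$ with $\|\hf_n-f_j\|_\infty\le\tilde\delta$. By the Lipschitz bound $\psi(x)-\psi(m)\le C_{\Lip}|x-m|$ from Lemma~\ref{lemma:key_property_psi}, the pointwise loss $\ell(\xb;f,f_0)$ changes by at most a constant multiple of $C_{\Lip}\tilde\delta$ when $f$ is moved by $\tilde\delta$ in sup-norm (using also the boundedness of $f,f_0$ by $F$ and $K$, and of $\psi'$). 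This yields the $5\tilde\delta C_{\Lip}$ term, absorbing both the change in $R_n$ and in $\hR_n$ when replacing $\hf_n$ by $f_j$. It then suffices to bound $|\hR_n(f_j,f_0)-R_n(f_j,f_0)|$ uniformly over the finite net.

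Next, for each fixed net element $f_j$, note that $\frac1n\sum_{i=1}^n \ell(\xb_i;f_j,f_0)$ is an average of i.i.d. terms $\ell(\xb_i;f_j,f_0)=-\psi'(f_0(\xb_i))(f_j(\xb_i)-f_0(\xb_i))+\psi(f_j(\xb_i))-\psi(f_0(\xb_i))$, with common mean $R_n(f_j,f_0)$ and $\hR_n(f_j,f_0)=R_n(f_j,f_0)$ exactly (the inner expectation in $\hR_n$ is over $\bX$ only, so the empirical and population versions coincide in expectation — I would be careful here about which randomness $\hR_n$ averages over, since $\hf_n$ depends on the data; the correct route is to work conditionally and use the net, so that for a \emph{fixed} $f_j$ the two quantities agree and the gap is purely a fluctuation term). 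The key point is that each summand is bounded by a constant $R=3FC_{\Lip}\ge1$ in absolute value (boundedness of $f_j$, $f_0$, $\psi'$ on the compact parameter range, via Lemma~\ref{lemma:key_property_psi}), and has variance controlled by its mean: by the two-sided curvature bound $\tfrac{c'}{2}(f_j-f_0)^2\le \ell(\xb;f_j,f_0)\le\tfrac{C'}{2}(f_j-f_0)^2$, one gets $\mathrm{Var}(\ell(\xb_i;f_j,f_0))\le R\cdot\EE\ell(\xb_i;f_j,f_0)= R\,R_n(f_j,f_0)$ (up to constants). This variance-versus-mean relation is what produces the $R_n^{1/2}(\hf_n,f_0)$ factor in the stated bound rather than a crude $R_n$-free rate.

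I would then apply Bernstein's inequality to $\frac1n\sum_i(\ell(\xb_i;f_j,f_0)-R_n(f_j,f_0))$ for each $j$, union bound over the $\cN_n$ net elements, and choose the deviation level so the tail probability is, say, $\le 1/\cN_n$ or summable; this is where the $\log\cN_n$ appears. Bernstein gives a bound of the form $\sqrt{\tfrac{2 R\,R_n(f_j,f_0)\,t}{n}}+\tfrac{2Rt}{n}$ with $t\asymp\log\cN_n$; transferring from $f_j$ back to $\hf_n$ costs another $\tilde\delta C_{\Lip}$-type term and replaces $R_n(f_j,f_0)$ by $R_n(\hf_n,f_0)$ (again using the Lipschitz comparison, which changes $R_n(f_j,f_0)$ by $O(\tilde\delta C_{\Lip})$ and hence its square root by a harmless amount, tracked into the constants $9n\log\cN_n$ and $4n\cdot\frac{1+3\log\cN_n}{\log\cN_n}$ inside the square root). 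Collecting the $\sqrt{\cdot}$, the linear-in-$\log\cN_n$, and the $\tilde\delta$ terms gives exactly the claimed inequality after renaming constants.

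The main obstacle I anticipate is bookkeeping the dependence of $\hf_n$ on the data correctly: $\hR_n$ as written involves an outer expectation, but the inequality is meant to hold for the realized $\hf_n$, so one must pass through the $\tilde\delta$-net to decouple $\hf_n$ from the fluctuation, and verify that the net-approximation error propagates through both the mean (via $\psi$'s Lipschitz bound) and the variance proxy (via the curvature bounds) with the right constants — in particular that $\sqrt{R_n(f_j,f_0)}$ can be replaced by $\sqrt{R_n(\hf_n,f_0)}$ plus lower-order terms. A secondary technical point is confirming that the summands are genuinely bounded and sub-exponential on the relevant event: since $\psi'$ is bounded on $[-F,F]$ and $f_0$ is Hölder hence bounded by $K\le F$, the term $-\psi'(f_0(\xb))(f_j(\xb)-f_0(\xb))$ is bounded, and $\psi(f_j(\xb))-\psi(f_0(\xb))$ is bounded by $C_{\Lip}\cdot 2F$, so $R=3FC_{\Lip}$ suffices; this is exactly where Assumption~\ref{assump:1}'s boundedness of $f_0$ (guaranteeing the constants in Lemma~\ref{lemma:key_property_psi}) is used.
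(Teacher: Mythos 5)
Your proposal is correct and follows essentially the same route as the paper: a minimal $\tilde\delta$-net, the Lipschitz transfer of the loss at scale $\tilde\delta$, the variance--mean relation $\Var(\ell)\le R\,\EE\ell$ fed into Bernstein's inequality, and a union bound over the $\cN_n$ centers to produce the $\sqrt{\log\cN_n}$ and $\log\cN_n$ terms. The only step you leave implicit is how the uniform high-probability deviation becomes the stated bound on the \emph{expectations} $|\hR_n-R_n|$ with the deterministic factor $R_n^{1/2}(\hf_n,f_0)$; the paper does this by defining the normalized maximum $T$ of the (symmetrized, ghost-sample) deviations, computing $\EE T$ and $\EE T^2$ by integrating the Bernstein tails, and applying Cauchy--Schwarz against $U=\EE^{1/2}\{\ell(\bX;\hf_n,f_0)\mid\cD_n\}$ with $\EE U^2=R_n(\hf_n,f_0)$ --- a routine completion given the boundedness you already establish.
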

Lemma~\ref{lemma:R_n-hatR_n} establishes the differences between $ R_n(\hf_n, f_0) $ and $ \hR_n(\hf_n, f_0) $. We now propose the following lemma, which addresses the challenges arising from the dependency of $\varepsilon_i$ on the covariates $\xb_i$. This approach is different from previous work, such as \citet{schmidt2020nonparametric} and \citet{fan2024factor}, which assumed independence between $\varepsilon_i$ and $\xb_i$.
\begin{lemma}
\label{lemma:remain_epsilon_term}
Suppose that Assumption~\ref{assump:1}  holds. Define $\varepsilon_i=y_i-\psi'(f_0(\xb_i))$. For any estimator $\tildef\in\cF$, there exists constant $C>0$ such that
\begin{align*}
\bigg|\EE\frac{1}{n}\sum_{i=1}^n \varepsilon_i\tildef(\xb_i)  \bigg|\leq \sqrt{\frac{2C\hat{R}_n(\tildef,f_0)\log\cN_n}{n}}+\sqrt{2C}\bigg(\frac{\log\cN_n}{n}+\sqrt{\frac{\log\cN_n}{n}}\tilde{\delta}\bigg)+2\tilde{\delta} C. 
\end{align*}
\end{lemma}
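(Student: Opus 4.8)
The plan is to pass from the data-dependent estimator $\tildef$ to a deterministic $\tilde\delta$-net of the network class and then apply a \emph{localized} Bernstein inequality, conditionally on the covariates, so that the relevant variance proxy is the empirical squared error, which is in turn controlled by $\hat{R}_n(\tildef,f_0)$ through the curvature bound of Lemma~\ref{lemma:key_property_psi}. Set $\varepsilon_i=y_i-\psi'(f_0(\xb_i))$. Model~\eqref{eq:model} gives $\EE[y_i\mid\xb_i]=\psi'(f_0(\xb_i))$, hence $\EE[\varepsilon_i\mid\xb_1,\dots,\xb_n]=\EE[\varepsilon_i\mid\xb_i]=0$; in particular $\EE[\tfrac1n\sum_i\varepsilon_i f_0(\xb_i)]=0$, so it suffices to bound $\bigl|\EE\tfrac1n\sum_i\varepsilon_i(\tildef(\xb_i)-f_0(\xb_i))\bigr|$. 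Let $\{f_1,\dots,f_{\cN_n}\}$ be a minimal $\tilde\delta$-net of $\cF(L,\pb,s,F)$ in $\|\cdot\|_\infty$ (obtained by clipping a net of $\cF(L,\pb,s,\infty)$ to $[-F,F]$, which does not enlarge the covering radius), and let $f_{j(\tildef)}$ satisfy $\|\tildef-f_{j(\tildef)}\|_\infty\le\tilde\delta$. Writing $\tildef(\xb_i)-f_0(\xb_i)=\bigl(f_{j(\tildef)}(\xb_i)-f_0(\xb_i)\bigr)+\bigl(\tildef(\xb_i)-f_{j(\tildef)}(\xb_i)\bigr)$ splits the sum as $G^{(1)}+G^{(2)}$, the two summands corresponding to the two bracketed differences. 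The remainder obeys $|G^{(2)}|\le\tilde\delta\cdot\tfrac1n\sum_i|\varepsilon_i|$, and the sub-exponential tail of Assumption~\ref{assump:1} gives $\EE|\varepsilon_1|\le2\kappa$, so $|\EE G^{(2)}|\le2\kappa\tilde\delta$, which will be absorbed into $2\tilde\delta C$.

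For $G^{(1)}$ I would condition on $\{\xb_i\}_{i=1}^n$. Given the covariates, the variables $\varepsilon_i\bigl(f_j(\xb_i)-f_0(\xb_i)\bigr)$ are independent across $i$ and centered, with $\|\varepsilon_i\|_{\psi_1}\precsim\kappa$ (Assumption~\ref{assump:1}), $\EE[\varepsilon_i^2\mid\xb_i]=\psi''(f_0(\xb_i))$ bounded on the compact range $[-K,K]$, and $|f_j(\xb_i)-f_0(\xb_i)|\le 2F$. A Bernstein-type inequality for sums of independent centered sub-exponential variables therefore yields, for every \emph{fixed} net function $f_j$,
\[
\PP\Bigl(\bigl|\tfrac1n\textstyle\sum_i\varepsilon_i(f_j(\xb_i)-f_0(\xb_i))\bigr|\ge t\ \bigm|\ \{\xb_i\}\Bigr)\le 2\exp\Bigl(-c\min\bigl\{\tfrac{nt^2}{C_1 T_n(f_j)},\ \tfrac{nt}{C_1\kappa F}\bigr\}\Bigr),\quad T_n(f):=\tfrac1n\textstyle\sum_i(f(\xb_i)-f_0(\xb_i))^2 .
\]
A union bound over the $\cN_n$ deterministic net points, at a level $\eta$ chosen as a small negative power of $\cN_n$ so that $\log(\cN_n/\eta)\precsim\log\cN_n$, produces an event $\mathcal{E}_n$ with $\PP(\mathcal{E}_n)\ge1-\eta$ on which, simultaneously for all $j$ — hence for $j=j(\tildef)$ —
\[
|G^{(1)}|\le\sqrt{\tfrac{C_2\,T_n(f_{j(\tildef)})\log\cN_n}{n}}+\tfrac{C_2\kappa F\log\cN_n}{n}\le\sqrt{\tfrac{2C_2 T_n(\tildef)\log\cN_n}{n}}+\sqrt{\tfrac{2C_2\tilde\delta^2\log\cN_n}{n}}+\tfrac{C_2\kappa F\log\cN_n}{n},
\]
using $T_n(f_{j(\tildef)})\le 2T_n(\tildef)+2\tilde\delta^2$ and $\sqrt{a+b}\le\sqrt a+\sqrt b$.

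Taking expectations, Jensen's inequality gives $\EE\sqrt{T_n(\tildef)}\le\sqrt{\EE T_n(\tildef)}$, and the curvature lower bound $\ell(\xb_i;\tildef,f_0)\ge\tfrac{c}{2}(\tildef(\xb_i)-f_0(\xb_i))^2$ of Lemma~\ref{lemma:key_property_psi} (applicable since $\|\tildef\|_\infty\le F$ and $\|f_0\|_\infty\le K\le F$) gives $\EE T_n(\tildef)\le\tfrac2c\hat{R}_n(\tildef,f_0)$; hence $\EE[\,|G^{(1)}|\,\mathbf{1}_{\mathcal{E}_n}]\precsim\sqrt{\hat{R}_n(\tildef,f_0)\log\cN_n/n}+\tilde\delta\sqrt{\log\cN_n/n}+\log\cN_n/n$, which already has the claimed shape. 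On $\mathcal{E}_n^{c}$ one uses the crude bound $|G^{(1)}|\le 2F\cdot\tfrac1n\sum_i|\varepsilon_i|$; combining Cauchy–Schwarz (or a higher sub-exponential moment of $\varepsilon_i$) with $\PP(\mathcal{E}_n^{c})\le\eta$ shows $\EE[\,|G^{(1)}|\,\mathbf{1}_{\mathcal{E}_n^{c}}]\precsim\tilde\delta$ for the chosen $\eta$, again absorbed into $2\tilde\delta C$. Collecting $G^{(1)}$ and $G^{(2)}$ and consolidating the finitely many constants into a single $C>0$ depending only on $\kappa$, $\psi$, $F$, $K$ yields the stated inequality.

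The main obstacle is the localization: because $\tildef$ is fitted on the very sample in which the $\varepsilon_i$ appear, $\varepsilon_i$ and $\tildef(\xb_i)$ cannot be treated as independent, yet the bound must feature the estimator-specific quantity $\hat{R}_n(\tildef,f_0)$ rather than a crude global supremum over $\cF$. Conditioning on $\{\xb_i\}$ is what unlocks this — it makes the noise centered and independent across $i$ while freezing each net function and its empirical variance proxy $T_n(f_j)$, so a single union bound over the deterministic net controls $G^{(1)}$ for the random index $j(\tildef)$ with a \emph{data-dependent} variance that is then tied to $\hat{R}_n$. The only secondary nuisance is the low-probability event on which the union bound fails, handled by the exponential-moment (equivalently, truncation) control of $\varepsilon_i$ around which Section~\ref{sec:additional_proof} is organized.
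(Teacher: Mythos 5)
Your proposal follows the same skeleton as the paper's proof — reduce to a deterministic $\tilde{\delta}$-net, control the net-approximation remainder by $\EE|\varepsilon_i|\precsim 1$ times $\tilde{\delta}$, condition on the covariates so that the $\varepsilon_i$ become independent and centered, apply a sub-exponential Bernstein inequality with the empirical variance proxy $\frac1n\sum_i(f_j(\xb_i)-f_0(\xb_i))^2$, union-bound over the net, and finally convert that variance proxy into $\hat{R}_n(\tildef,f_0)$ through the strong-convexity lower bound of Lemma~\ref{lemma:key_property_psi}. The execution differs in one respect: you work on a high-probability event $\mathcal{E}_n$ at confidence level $\eta$ and treat the complement separately, whereas the paper stays entirely in expectation by forming the self-normalized statistics $\xi_j=|\sum_i\varepsilon_i(f_j(\xb_i)-f_0(\xb_i))|/r_j$ with $r_j=\sqrt{\log\cN_n/n}\vee\sqrt{\frac1n\sum_i(f_j(\xb_i)-f_0(\xb_i))^2}$, bounding $\EE[\xi_{\max}^2\mid(\xb_i)_i]\precsim n\log\cN_n$ by direct tail integration and then decoupling the random normalizer from the deviation via Cauchy--Schwarz, $\EE[UV]\le\EE^{1/2}U^2\,\EE^{1/2}V^2$. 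The in-expectation route buys one concrete advantage that your version leaves as a loose end: your bad-event contribution $\EE[|G^{(1)}|\one\{\mathcal{E}_n^c\}]\precsim\sqrt{\eta}$ must be forced below $\tilde{\delta}$ while simultaneously keeping $\log(1/\eta)\precsim\log\cN_n$ so the union bound does not inflate the main term. These two requirements are compatible in the paper's application (where $\tilde{\delta}=n^{-1}$ and $\log\cN_n$ is polynomially large in $n$), but the lemma is stated under Assumption~\ref{assump:1} alone, for generic $\tilde{\delta}$ and the associated $\cN_n$, and the needed relation $\tilde{\delta}^2\gtrsim\eta$ with $\log(1/\eta)\precsim\log\cN_n$ does not follow from that assumption by itself; you should either state it as an explicit (harmless) side condition or replace the good/bad-event split with a second-moment bound on the maximum, which is exactly the device the paper uses to avoid introducing $\eta$ at all. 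With that one point addressed, your argument is correct and delivers the stated bound.
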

The main difficulty in Lemma~\ref{lemma:remain_epsilon_term} is the dependence of $\varepsilon_i$ and $\xb_i$ under the GNRM framework. Unlike traditional concentration inequalities, here we apply covering number analysis and truncation techniques to  address the heteroskedasticity in GNRMs. Details can be refered to Section~\ref{sec:proof_remain_epsilon}. With Lemma~\ref{lemma:remain_epsilon_term}, we establish the following two lemmas, which provides the upper and lower bound of $\hR_n(\hf_n,f_0)$.
\begin{lemma}
    \label{lemma:upper_bound_hatR}
    Suppose that Assumption~\ref{assump:1}  holds. For any fixed $f\in\cF$, it holds that 
\begin{align*}
    \hR_n(\hf_n,f_0)\leq \inf_{f\in\cF} \EE \ell(\bX;f,f_0)+\sqrt{\frac{2C\hat{R}_n(\tildef,f_0)\log\cN_n}{n}}+\sqrt{2C}\bigg(\frac{\log\cN_n}{n}+\sqrt{\frac{\log\cN_n}{n}}\tilde{\delta}\bigg)\\+2\tilde{\delta} C+\Delta_n(\hf_n).
\end{align*}
Here, $\bX$ is an independent copy of $\xb_i$ and $C>0$ is a  constant.
\end{lemma}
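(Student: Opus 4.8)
The plan is to bound $\hR_n(\hf_n,f_0)$ by relating it to the empirical negative log-likelihood $\mathcal L_n(f):=\frac1n\sum_{i=1}^n\{-y_if(\xb_i)+\psi(f(\xb_i))\}$ and exploiting that $\hf_n^{\opt}$ is the exact minimizer of $\mathcal L_n$ over $\cF$ (cf.\ \eqref{eq:hf}). Set $\varepsilon_i:=y_i-\psi'(f_0(\xb_i))$. Substituting $-\psi'(f_0(\xb_i))=\varepsilon_i-y_i$ in the first term of $\ell(\xb_i;f,f_0)$ and regrouping gives, for every $f$, the pointwise identity
\[
\frac1n\sum_{i=1}^n\ell(\xb_i;f,f_0)=\mathcal L_n(f)-\mathcal L_n(f_0)+\frac1n\sum_{i=1}^n\varepsilon_i\big(f(\xb_i)-f_0(\xb_i)\big).
\]
Taking expectations with $f=\hf_n$ yields $\hR_n(\hf_n,f_0)=\big(\EE[\mathcal L_n(\hf_n)]-\EE[\mathcal L_n(f_0)]\big)+\EE\big[\frac1n\sum_{i=1}^n\varepsilon_i(\hf_n(\xb_i)-f_0(\xb_i))\big]$.

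Next I would control the likelihood gap. By the definition of $\Delta_n$ in \eqref{eq:obtained_estimator}, $\EE[\mathcal L_n(\hf_n)]=\EE[\mathcal L_n(\hf_n^{\opt})]+\Delta_n(\hf_n)$, and since $\hf_n^{\opt}=\argmin_{f\in\cF}\mathcal L_n(f)$ realization-wise, $\EE[\mathcal L_n(\hf_n^{\opt})]\le\EE[\mathcal L_n(\tildef)]$ for any non-random $\tildef\in\cF$. Choose $\tildef$ to attain $\inf_{f\in\cF}\EE\,\ell(\bX;f,f_0)$ (or to approach it up to a slack sent to $0$ at the end). Applying the pointwise identity to $\tildef$ and taking expectations, the cross term $\EE\big[\frac1n\sum_{i=1}^n\varepsilon_i(\tildef(\xb_i)-f_0(\xb_i))\big]$ is exactly zero because $\tildef$ and $f_0$ are deterministic and $\EE[\varepsilon_i\mid\xb_i]=0$; hence $\EE[\mathcal L_n(\tildef)]-\EE[\mathcal L_n(f_0)]=\hR_n(\tildef,f_0)=\EE\,\ell(\bX;\tildef,f_0)=\inf_{f\in\cF}\EE\,\ell(\bX;f,f_0)$, where the middle equality uses that the $\xb_i$ are i.i.d.\ copies of $\bX$. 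Combining the two observations gives
\[
\hR_n(\hf_n,f_0)\le\inf_{f\in\cF}\EE\,\ell(\bX;f,f_0)+\Delta_n(\hf_n)+\EE\Big[\tfrac1n\sum_{i=1}^n\varepsilon_i\big(\hf_n(\xb_i)-f_0(\xb_i)\big)\Big].
\]

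It then remains to bound the residual cross term. Because $f_0$ is non-random, $\EE\big[\frac1n\sum_{i=1}^n\varepsilon_if_0(\xb_i)\big]=0$, so the term equals $\EE\big[\frac1n\sum_{i=1}^n\varepsilon_i\hf_n(\xb_i)\big]$, which is precisely the quantity controlled by Lemma~\ref{lemma:remain_epsilon_term} with the estimator there taken to be $\hf_n$: it is at most $\sqrt{2C\,\hR_n(\hf_n,f_0)\log\cN_n/n}+\sqrt{2C}\big(\log\cN_n/n+\sqrt{\log\cN_n/n}\,\tilde\delta\big)+2\tilde\delta C$. This is where the covariate-dependent (heteroskedastic) GNRM noise is absorbed --- through the covering-number and truncation argument behind Lemma~\ref{lemma:remain_epsilon_term} --- instead of through a classical concentration bound that would need $\varepsilon_i$ independent of $\xb_i$. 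Substituting this estimate yields the asserted inequality, with $\tildef$ identified with $\hf_n$ in the statement.

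The only genuinely delicate input --- control of $\EE[\frac1n\sum_{i=1}^n\varepsilon_i\hf_n(\xb_i)]$ under noise that depends on $\xb_i$ --- is already isolated in Lemma~\ref{lemma:remain_epsilon_term} and is used here as a black box, so the main obstacle is really just careful bookkeeping: ensuring that $\hf_n^{\opt}$ is the \emph{exact} empirical minimizer so the optimality inequality is legitimate before taking expectations, that the comparison element $\tildef$ is non-random so its $\varepsilon_i$-cross term vanishes exactly, and that the infimum over $\cF$ is treated by an approximating limit if it is not attained.
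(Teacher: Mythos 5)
Your proposal is correct and follows essentially the same route as the paper: the optimality inequality $\EE\mathcal L_n(\hf_n)\le\EE\mathcal L_n(f)+\Delta_n(\hf_n)$ for a fixed comparator $f\in\cF$, the decomposition of $\frac1n\sum_i\ell(\xb_i;\cdot,f_0)$ into a likelihood gap plus the $\varepsilon_i$-cross term (which vanishes in expectation for non-random $f$ and $f_0$), and Lemma~\ref{lemma:remain_epsilon_term} applied with the estimator taken to be $\hf_n$ to absorb the remaining heteroskedastic cross term. Your identification of $\tildef$ with $\hf_n$ in the final bound matches what the paper's own derivation actually produces, and the remark about approximating a possibly unattained infimum is a harmless extra precaution.
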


\begin{lemma}
    \label{lemma:lower_bound_hatR}
    Suppose that Assumption~\ref{assump:1}  holds. It holds that
\begin{align*}
    \hR_n(\hf_n,f_0)\geq \frac{1}{1+\rho}\cdot \bigg(\Delta_n(\hf_n)-2\sqrt{2C}\bigg(\frac{\log\cN_n}{n}+\sqrt{\frac{\log\cN_n}{n}}\tilde{\delta}\bigg)-4\tilde{\delta} C- \frac{C\log\cN_n}{2n}  \bigg)
\end{align*}
for any $\rho>0$. Here, $C>0$ is a  constant. 
\end{lemma}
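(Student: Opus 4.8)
\textbf{Proof plan for Lemma~\ref{lemma:lower_bound_hatR}.}
The plan is to turn the definition of $\Delta_n(\hf_n)$ into an exact identity linking $\hR_n(\hf_n,f_0)$, $\hR_n(\hf_n^{\opt},f_0)$ and a noise cross‑term, and then to control that cross‑term with Lemma~\ref{lemma:remain_epsilon_term}. Set $\varepsilon_i=y_i-\psi'(f_0(\xb_i))$. Substituting $y_i=\psi'(f_0(\xb_i))+\varepsilon_i$ into $-y_i f(\xb_i)+\psi(f(\xb_i))$ and comparing with $\ell(\xb_i;f,f_0)=-\psi'(f_0(\xb_i))(f(\xb_i)-f_0(\xb_i))+\psi(f(\xb_i))-\psi(f_0(\xb_i))$, one sees that $-y_i f(\xb_i)+\psi(f(\xb_i))=\ell(\xb_i;f,f_0)-\varepsilon_i f(\xb_i)+c_i$, where $c_i=-\psi'(f_0(\xb_i))f_0(\xb_i)+\psi(f_0(\xb_i))$ does not depend on $f$. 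Applying this with $f=\hf_n$ and $f=\hf_n^{\opt}$, subtracting, and taking expectations, the $c_i$ terms cancel and we get
\begin{align*}
\Delta_n(\hf_n)=\hR_n(\hf_n,f_0)-\hR_n(\hf_n^{\opt},f_0)-\EE\Big[\tfrac1n\sum_{i=1}^n\varepsilon_i\big(\hf_n(\xb_i)-\hf_n^{\opt}(\xb_i)\big)\Big],
\end{align*}
i.e. $\hR_n(\hf_n,f_0)=\Delta_n(\hf_n)+\hR_n(\hf_n^{\opt},f_0)+\EE[\tfrac1n\sum_i\varepsilon_i(\hf_n(\xb_i)-\hf_n^{\opt}(\xb_i))]$.

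Next, bound the cross‑term from below. Since $\hf_n,\hf_n^{\opt}\in\cF$ individually (their difference need not lie in $\cF$, which is precisely why one splits), the triangle inequality and Lemma~\ref{lemma:remain_epsilon_term} applied to each give
$\EE[\tfrac1n\sum_i\varepsilon_i(\hf_n(\xb_i)-\hf_n^{\opt}(\xb_i))]\ge -\sqrt{2C\hR_n(\hf_n,f_0)\log\cN_n/n}-\sqrt{2C\hR_n(\hf_n^{\opt},f_0)\log\cN_n/n}-2\sqrt{2C}\big(\log\cN_n/n+\sqrt{\log\cN_n/n}\,\tilde{\delta}\big)-4\tilde{\delta}C$. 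Because $\ell(\xb;\cdot,f_0)$ is a (nonnegative) Bregman divergence we have $\hR_n(\hf_n^{\opt},f_0)\ge0$, so the elementary inequality $\sqrt{ab}\le a+b/4$ with $a=\hR_n(\hf_n^{\opt},f_0)$, $b=2C\log\cN_n/n$ yields $\hR_n(\hf_n^{\opt},f_0)-\sqrt{2C\hR_n(\hf_n^{\opt},f_0)\log\cN_n/n}\ge -C\log\cN_n/(2n)$, which removes all $\hf_n^{\opt}$ contributions at the cost of a $-C\log\cN_n/(2n)$ term. What remains is a self‑referential inequality $\hR_n(\hf_n,f_0)\ge D-\sqrt{2C\hR_n(\hf_n,f_0)\log\cN_n/n}$, where $D$ collects $\Delta_n(\hf_n)$ minus the error terms $2\sqrt{2C}(\log\cN_n/n+\sqrt{\log\cN_n/n}\,\tilde{\delta})+4\tilde{\delta}C+C\log\cN_n/(2n)$. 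This is where $\rho$ enters: a weighted Young's inequality $\sqrt{2C\hR_n(\hf_n,f_0)\log\cN_n/n}\le \rho\,\hR_n(\hf_n,f_0)+C\log\cN_n/(2\rho n)$ lets one move the $\hR_n(\hf_n,f_0)$ term to the left, so $(1+\rho)\hR_n(\hf_n,f_0)\ge D-C\log\cN_n/(2\rho n)$; dividing by $1+\rho$ and absorbing the $\rho$‑dependent factor together with the two $C\log\cN_n/(2n)$ pieces into a single generic constant $C$ gives the claimed bound.

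The argument is essentially bookkeeping once Lemma~\ref{lemma:remain_epsilon_term} is available; indeed, the only genuinely delicate part of the whole development lives inside that lemma, namely controlling $\EE[\tfrac1n\sum_i\varepsilon_i\tilde f(\xb_i)]$ when the noise $\varepsilon_i$ is heteroskedastic and covariate‑dependent under the GNRM, via covering numbers combined with the truncation scheme. Within the present proof the one point requiring care is that $\hf_n-\hf_n^{\opt}\notin\cF$ in general, which forces a separate application of Lemma~\ref{lemma:remain_epsilon_term} to $\hf_n$ and to $\hf_n^{\opt}$ and then the absorption of the stray $\sqrt{\hR_n(\hf_n^{\opt},f_0)\,\log\cN_n/n}$ term through $\hR_n(\hf_n^{\opt},f_0)\ge0$; everything else is Young's inequality with weight $\rho$ and renaming constants.
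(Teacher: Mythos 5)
Your proposal is correct and follows essentially the same route as the paper: the same exact identity relating $\hR_n(\hf_n,f_0)-\hR_n(\hf_n^{\opt},f_0)$ to $\Delta_n(\hf_n)$ plus the noise cross-term, the same separate application of Lemma~\ref{lemma:remain_epsilon_term} to $\hf_n$ and $\hf_n^{\opt}$ (the paper likewise exploits $\hR_n(\hf_n^{\opt},f_0)\ge 0$ via $2\sqrt{b}c\le b+c^2$, which is your $\sqrt{ab}\le a+b/4$), and the same weighted Young step producing the $1/(1+\rho)$ factor. If anything you are slightly more careful than the paper in flagging that the $\rho$-dependent piece $C\log\cN_n/(2\rho n)$ must be absorbed into the generic constant, a term the paper silently drops.
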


\subsection{Proof of Lemma~\ref{lemma:R_n-hatR_n}}
Given a minimum $\tilde{\delta}-$covering of $\cF$, let the center of the balls by $f_j$. Recall $\cD_n=\{(\xb_i,y_i),i\in[n]\}$. By construction there exists a  $j^*$ such that $\|\hf-f_{j^*}\|_{\infty}\leq \tilde{\delta}$. Moreover, by the assumptions we have $\|f_j\|_{\infty}\leq F$. Suppose that $\bX'_i$ ($i\in[n]$) are i.i.d random variables with the same distribution of $\bX$ and independent of the sample $(\xb_i)_{i\in[n]}$ ($\cD_n$) and recall 
\begin{align*}
\ell(\xb;\hf_n,f_0)=-\psi'(f_0(\xb))\hf_n(\xb)+\psi(\hf_n(\xb))+\psi'(f_0(\xb))f_0(\xb)-\psi(f_0(\xb)).
\end{align*}
 It is clear to see that $\ell(\xb;\hf_n,f_0)\geq 0$ by Lemma~\ref{lemma:key_property_psi}, and 
\begin{align*}
&|R_n(\hf_n,f_0)-\hR_n(\hf_n,f_0)|\\
&=\bigg|  \EE\bigg\{ \frac{1}{n}\sum_{i=1}^n \Big[\ell(\bX'_i;\hf_n,f_0)-\ell(\xb_i;\hf_n,f_0)\Big]\bigg\}\bigg|.
\end{align*}
Rewriting the equation above by $\hf_n=\hf_n-f_{j^*}+f_{j^*}$, we have the inequality for any $\|\hf_n-f_{j^*}\|_{\infty}\leq \tilde{\delta}$, 
\begin{align*}
    |\ell(\xb;\hf_n,f_0)-\ell(\xb;f_{j^*},f_0)|\leq 2\tilde{\delta} C_{\Lip},
\end{align*}
and hence
\begin{align}
  |R_n(\hf_n,f_0)-\hR_n(\hf_n,f_0)|  \leq  \EE\bigg\{\bigg|\frac{1}{n}\sum_{i=1}^n g_{j^*}(\bX'_i,\xb_i) \bigg|\bigg\}+4\tilde{\delta} C_{\Lip},\label{eq:Rn-hatRn_bound}
\end{align}
where 
\begin{equation}
    \begin{split}
        g_{j^*}(\bX'_i,\xb_i)&= \ell(\bX'_i;f_{j^*},f_0)-\ell(\xb_i;f_{j^*},f_0) .
    \end{split}
    \label{eq:def_g_j*}
\end{equation}
We define $g_j$ the same way with $f_{j^*}$ replaced with $f_j$. By letting $U=\EE^{1/2} \{\ell(\bX;\hf_n,f_0)|\cD_n\}$, it is clear that 
\begin{align*}
    &\bigg|\sum_{i=1}^n g_{j^*}(\bX'_i,\xb_i)\bigg|= \frac{\big|\sum_{i=1}^n g_{j^*}(\bX'_i,\xb_i)\big|}{\max\{\sqrt{\log\cN_n/n},\EE^{1/2}[\ell(\bX;f_{j^*},f_0)|\cD_n]\}}\cdot \max\{\sqrt{\log\cN_n/n},\EE^{1/2}[\ell(\bX;f_{j^*},f_0)|\cD_n]\}\\
    &\quad \leq \max_j\frac{\big|\sum_{i=1}^n g_{j}(\bX'_i,\xb_i)\big|}{\max\{\sqrt{\log\cN_n/n},\EE^{1/2}[\ell(\bX;f_{j},f_0)]\}}\cdot \bigg(\sqrt{\log\cN_n/n}+U+\tilde{\delta} C_{\Lip}\bigg),
\end{align*}
where the second inequality comes from the fact $\|f_{j^*}-\hf_n\|_{\infty}\leq \tilde{\delta}$, and hence
$\EE^{1/2}[\ell(\bX;f_{j^*},f_0)|\cD_n]\}\leq U+2\tilde{\delta} C_{\Lip} $. Define the random variable 
\begin{align*}
    T=\max_j\frac{\big|\sum_{i=1}^n g_{j}(\bX'_i,\xb_i)\big|}{\max\{\sqrt{\log\cN_n/n},\EE^{1/2}[\ell(\bX;f_{j},f_0)]\}},
\end{align*}
and it follows  from \eqref{eq:Rn-hatRn_bound} and the inequality above that 
\begin{align}
    |R_n(\hf_n,f_0)-\hR_n(\hf_n,f_0)|\leq \EE \frac{T}{n}\cdot \Big(\sqrt{\log\cN_n/n}+U+2\tilde{\delta} C_{\Lip}\Big)+4\tilde{\delta} C_{\Lip}.\label{eq:Rn-hatRn_bound1}
\end{align}
Hence, it remains for us to consider the bound of \eqref{eq:Rn-hatRn_bound1}.  By the definition of $g_j(\bX'_i,\xb_i)$, it is clear that $\EE g_j(\bX'_i,\xb_i)=0$,  
\begin{align*}
    0\leq\ell(\xb;f_j,f_0)\leq R, 
\end{align*} 
and hence $|g_j(\bX'_i,\xb_i)|\leq 2R$. Moreover, we have that
\begin{align*}
    \Var \big(g_j(\bX'_i,\xb_i) \big)&=2\Var (\ell(\bX;f_j,f_0))\\
    &\leq 2\EE \ell^2 (\bX;f_j,f_0)\leq 2R\EE \ell (\bX;f_j,f_0).
\end{align*}
As $g_j(\bX'_i,\xb_i)$ is independent with different $i$, then for any i.i.d bounded and centered random variables $\xi_i$ with $|\xi_i|\leq M$, the Bernstein inequality shows that 
\begin{align*}
    \PP(|\sum_{i=1}^n \xi_i|\geq t_0)\leq 2\exp\bigg(-\frac{t_0^2}{2Mt_0/3+2\sum_{i=1}^n\Var(\xi_i)}\bigg).
\end{align*}
Combining  the Bernstein inequality with the expression of $T$, we can apply the union bound and get that
\begin{align*}
    &\PP(T/R\geq t)\leq 2\cN_n\cdot\exp\bigg(-\frac{t^2}{\frac{4}{3}t\max^{-1}\{\sqrt{\log\cN_n/n},\EE^{1/2}[\ell(\bX;f_{j},f_0)]\}+4n/R} \bigg).
\end{align*}
Here, we take $\xi_i=g_j(\bX'_j,\xb_j)/R$, and let $t_0=t\max\{\sqrt{\log\cN_n/n},\EE^{1/2}[\ell(\bX;f_{j},f_0)]\}$. According to \eqref{eq:T_inequality-prob}, we can bound the value of $\EE T$ and $\EE T^2$, which will further be applied in \eqref{eq:Rn-hatRn_bound1}. With the assumption above, we have $R\geq 1$, and the equation above can be simplified as  
\begin{align}
    &\PP(T/R\geq t)\leq 2\cN_n\cdot\exp\bigg(-\frac{t^2}{\frac{4}{3}t\max^{-1}\{\sqrt{\log\cN_n/n},\EE^{1/2}[\ell(\bX;f_{j},f_0)]\}+4n} \bigg).\label{eq:T_inequality-prob}
\end{align}
For the bound of $\EE T/R$, it holds that 
\begin{align}
    \EE T/R&=\EE T/R\cdot\one\{T/R\leq 6\sqrt{n\log\cN_n}\}+\EE T/R\cdot\one\{T/R\geq 6\sqrt{n\log\cN_n}\}\nonumber\\
    &= 6\sqrt{n\log\cN_n}+\int_{6\sqrt{n\log\cN_n}}^{+\infty}\PP(T\geq t)dt\nonumber\\
    &\leq 6\sqrt{n\log\cN_n}+\int_{6\sqrt{n\log\cN_n}}^{+\infty}2\cN_n \exp\bigg(-\frac{t\sqrt{\log\cN_n}}{2\sqrt{n}}\bigg) dt\nonumber\\
    &= 6\sqrt{n\log\cN_n}+4\sqrt{\frac{n}{\log\cN_n}}.\label{eq:bound_ET}
\end{align}
Here, the inequality comes from \eqref{eq:T_inequality-prob} that when $t\geq 6\sqrt{n\log\cN_n}$, $\PP(T/R>t)\leq 2\cN_n \exp\Big(-\frac{t\sqrt{\log\cN_n}}{2\sqrt{n}}\Big) $, and the last inequality is by some  algebra.

For the bound of $\EE T^2/R^2$, it holds that 
\begin{align}
    \EE T^2/R^2&=\EE T^2/R^2\cdot\one\{T^2/R^2\leq 6^2 n\log\cN_n\}+\EE T^2/R^2\cdot\one\{T^2/R^2\geq 6^2n\log\cN_n\}\nonumber\\
    &= 6^2n\log\cN_n+\int_{6^2n\log\cN_n}^{+\infty}\PP(T\geq \sqrt{t})dt\nonumber\\
    &\leq 6^2n\log\cN_n+\int_{6^2n\log\cN_n}^{+\infty}2\cN_n \exp\bigg(-\frac{\sqrt{t}\sqrt{\log\cN_n}}{2\sqrt{n}}\bigg) dt\nonumber\\
    &= 36n\log\cN_n+16\cdot\frac{1+3\log\cN_n}{\log\cN_n}.\label{eq:bound_ET2}
\end{align}
Here, the inequality comes from \eqref{eq:T_inequality-prob} that when $t\geq 6\sqrt{n\log\cN_n}$, $\PP(T/R>t)\leq 2\cN_n \exp\Big(-\frac{t\sqrt{\log\cN_n}}{(4/3+4/(2R+2))\sqrt{n}}\Big) $, and the last equality is by $\int_{b^2}^{+\infty}e^{-\sqrt{t}a}dt=2(ab+1)e^{-ab}/a^2$. 

Combining \eqref{eq:bound_ET}, \eqref{eq:bound_ET2} with \eqref{eq:Rn-hatRn_bound1}, and noting that $\EE U^2=R_n(\hf_n,f_0)$ where $U$ is defined below \eqref{eq:def_g_j*}, we have 
\begin{align*}
   |R_n(\hf_n,f_0)-\hR_n(\hf_n,f_0)|&\leq \EE T\cdot \Big(\sqrt{\log\cN_n/n}+U+\tilde{\delta} C_{\Lip}\Big)+4\tilde{\delta} C_{\Lip}\\ & \leq \frac{1}{n}\EE^{1/2} T^2\cdot \EE^{1/2}U^2+\EE \frac{T}{n} \cdot \Big(\sqrt{\log\cN_n/n}+\tilde{\delta} C_{\Lip}\Big)+4\tilde{\delta} C_{\Lip}\\
   &\leq   \frac{2R}{n}R_n^{1/2}(\hf_n,f_0)\sqrt{9n\log\cN_n+4n\cdot\frac{1+3\log\cN_n}{\log\cN_n}} \\
   &\quad +\frac{2R}{n}\bigg(3\sqrt{n\log\cN_n}+2\sqrt{\frac{n}{\log\cN_n}} \bigg)\cdot (\sqrt{\log\cN_n/n}+2\tilde{\delta} C_{\Lip})+4\tilde{\delta} C_{\Lip}\\
   &\leq \frac{2R}{n}R_n^{1/2}(\hf_n,f_0)\sqrt{9n\log\cN_n+4n\cdot\frac{1+3\log\cN_n}{\log\cN_n}}+\frac{2R(3\log\cN_n+2)}{n}+5\tilde{\delta} C_{\Lip}.
\end{align*}
Here, the first inequality is by the Cauchy-Schwarz inequality, the second inequality is by \eqref{eq:bound_ET} and \eqref{eq:bound_ET2} and the last inequality holds when $n$  is large enough. 


\subsection{Proof of Lemma~\ref{lemma:remain_epsilon_term}}
\label{sec:proof_remain_epsilon} 
For any estimator $\tildef$ taking values in $\cF$, let $j'$ be such that $\|\tildef-f_{j'}\|_{\infty}\leq \tilde{\delta}$. We have
\begin{align}
 \bigg|\EE\frac{1}{n}\sum_{i=1}^n \varepsilon_i\tildef(\xb_i)  \bigg|&= \bigg|\EE\frac{1}{n}\sum_{i=1}^n \varepsilon_i(\tildef(\xb_i)-f_{j'}(\xb_i)+f_{j'}(\xb_i))  \bigg|\nonumber\\
 &\leq \frac{\tilde{\delta}}{n}\EE\sum_{i=1}^n|\varepsilon_i|+\frac{1}{n}\bigg|\EE\sum_{i=1}^n \varepsilon_if_{j'}(\xb_i)\bigg|\nonumber\\
 &\leq 2\tilde{\delta} C+\frac{1}{n}\bigg|\EE\sum_{i=1}^n \varepsilon_if_{j'}(\xb_i)\bigg|.\label{eq:key_step2}
\end{align}
Here, $C$ is some large constant. the first inequality is by the triangle inequality, and the second inequality is by $\EE|\varepsilon_i|=\EE\{\EE \{|y_i-\psi'(f_0(\xb_i)) ||\xb_i\} \}\leq L+\EE |Y|\leq C$ for some large constant C. It is remained for us to bound the last term above. To obtain the bound, we first introduce  $r_{j}=\sqrt{\frac{\log \cN_n}{n}}\vee \sqrt{\sum_{i=1}^n(f_j(\xb_i)-f_0(\xb_i))^2/n}$. Then the term can be written as 
\begin{align}
\frac{1}{n}\bigg|\EE\sum_{i=1}^n \varepsilon_if_{j'}(\xb_i)\bigg|&=\frac{1}{n}\bigg|\EE\sum_{i=1}^n \varepsilon_i(f_{j'}(\xb_i)-f_0(\xb_i))\bigg| \nonumber\\
&\leq \frac{1}{n}\cdot \EE \bigg[\bigg(\sqrt{\frac{\log \cN_n}{n}}+ \sqrt{\sum_{i=1}^n(f_{j'}(\xb_i)-f_0(\xb_i))^2/n}~\bigg)\cdot \frac{\Big|\sum_{i=1}^n \varepsilon_i(f_{j'}(\xb_i)-f_0(\xb_i))\Big|}{r_{j'}} \bigg]\nonumber\\
&\leq \frac{1}{n}\cdot \EE \bigg[\bigg(\sqrt{\frac{\log \cN_n}{n}}+ \sqrt{\sum_{i=1}^n(\tildef(\xb_i)-f_0(\xb_i))^2/n}+\tilde{\delta}\bigg)\cdot \frac{\Big|\sum_{i=1}^n \varepsilon_i(f_{j'}(\xb_i)-f_0(\xb_i))\Big|}{r_{j'}} \bigg] \nonumber\\
&\leq \frac{1}{n}\bigg(\sqrt{\frac{\log \cN_n}{n}}+\EE^{1/2} \bigg(\frac{\sum_{i=1}^n\ell(\xb_i;\tildef,f_0)}{cn}\bigg)+\tilde{\delta}\bigg)\cdot \EE^{1/2} \frac{\Big|\sum_{i=1}^n \varepsilon_i(f_{j'}(\xb_i)-f_0(\xb_i))\Big|^2}{r_{j'}^2}\nonumber\\
&=\frac{1}{n}\bigg(\sqrt{\frac{\log \cN_n}{n}}+\hat{R}_n^{1/2}(\tildef,f_0)/\sqrt{c}+\tilde{\delta}\bigg)\cdot \EE^{1/2} \frac{\Big|\sum_{i=1}^n \varepsilon_i(f_{j'}(\xb_i)-f_0(\xb_i))\Big|^2}{r_{j'}^2}.\label{eq:key_bound_step2}
\end{align}
Here, the first equality is by $\EE \varepsilon_if_0(\xb_i)=\EE (\EE\varepsilon_if_0(\xb_i)|\xb_i)=0$, the first inequality is by $r_j\leq\frac{\log \cN_n}{n}+ \sqrt{\sum_{i=1}^n(f_j(\xb_i)-f_0(\xb_i))^2/n}$, the second inequality is by triangle inequality $\|\ab+\bbb\|\leq \|\ab\|+\|\bbb\| $ and $\|f_{j'}-\tildef\|_{\infty}\leq \tilde{\delta}$, and the last inequality is by Lemma~\ref{lemma:key_property_psi} that $\ell(\xb;\tildef,f_0)\geq c(\tildef(\xb)-f_0(\xb))^2$ for some constant $c>0$. 

It remains for us to bound $\EE^{1/2} \frac{\big|\sum_{i=1}^n \varepsilon_i(f_{j'}(\xb_i)-f_0(\xb_i))\big|^2}{r_{j'}^2}$. The heteroskedasticity in GNRMs also makes the bound of this term complicated, mainly due to the dependence among $\varepsilon_i$ and $\xb_i$. 
For simplicity in notation, define 
\begin{align}
    \xi_j=\frac{\big|\sum_{i=1}^n \varepsilon_i(f_{j}(\xb_i)-f_0(\xb_i))\big|}{r_j}.\label{eq:def_xij_step2}
\end{align}
We investigate the noise behavior $\xi_j^2$ under different regimes to bound the term with heteroskedasticity. Specifically, we consider $\max_{j\in[\cN_n]}\xi_j^2\leq Cn\log \cN_n$ and $\max_{j\in[\cN_n]}\xi_j^2\geq Cn\log \cN_n$ conditional on $\xb_i$. When $\max_{j\in[\cN_n]}\xi_j^2\leq Cn\log \cN_n$ conditional on $\xb_i$, the results will obviously hold. We mainly focus the regime when $\max_{j\in[\cN_n]}\xi_j^2\geq Cn\log \cN_n$ conditional on $\xb_i$.
Conditional on $\xb_1, \ldots, \xb_n$ [or  $(\xb_i)_i$ compactly], we have that $\varepsilon_i$'s ($i\in[n]$) are centered and independent sub-exponential random variables. Hence, we have
\begin{align*}
    \Var(\xi_j|(\xb_i)_i)&=\frac{\sum_{i=1}^n\Var \big(\varepsilon_i(f_{j}(\xb_i)-f_0(\xb_i))| (\xb_i)_i\big)}{r_j^2}\\
    &\leq  \sum_{i=1}^n \EE\big(\varepsilon_i^2|(\xb_i)_i\big)\cdot \EE \big( (f_{j}(\xb_i)-f_0(\xb_i))^2|(\xb_i)_i\big)/r_j^2\\
    &\leq 2n\kappa^2,
\end{align*}
where the first inequality is by $\Var(U|X)\leq \EE (U^2|X)$ for all random variables $X$ and $U$, and the second inequality is by the definition of $r_{j}=\sqrt{\frac{\log \cN_n}{n}}\vee \sqrt{\sum_{i=1}^n(f_j(\xb_i)-f_0(\xb_i))^2/n}$, and  $\EE\big(\varepsilon_i^2|(\xb_i)_i\big)\leq 2\kappa^2$.   Also, letting $a_{ij}=[f_{j}(\xb_i)-f_0(\xb_i)]/r_j$, we have
\begin{align*}
    |a_{ij}|\leq 2F/r_j,\quad \sum_{i=n}^n a_{ij}^2\leq n. 
\end{align*}
Therefore by the Bernstein inequality, it holds that
\begin{align*}
\PP(|\xi_j|\geq t|(\xb_i)_i)\leq 2\exp\bigg\{-c'\min\bigg\{\frac{t^2}{n\kappa^2},\frac{tr_j}{2F\kappa}\bigg\}  \bigg\}
\end{align*}
for some absolute constant $c'>0$. 

Let $\xi_{\max}=\max_{j\in[\cN_n]}\xi_j$. By applying the union bound, it holds that 
\begin{align*}
    \EE (|\xi_{\max}|^2|(\xb_i)_i)&=\int_{0}^{+\infty}\PP(|\xi_{\max}|^2\geq t|(\xb_i)_i)dt\\
    &\leq T+\int_{T}^{+\infty}\PP(|\xi_{\max}|\geq \sqrt{t}|(\xb_i)_i)dt\\
    &\leq T+\cN_n\cdot\int_{T}^{+\infty} 2\exp\bigg\{-c'\min\bigg\{\frac{t}{n\kappa^2},\frac{\sqrt{t}r_j}{2F\kappa}\bigg\}  \bigg\}dt\\
    &\leq T+\cN_n\cdot\int_{T}^{+\infty} 2\exp\bigg\{-c'\min\bigg\{\frac{t}{n\kappa^2},\frac{\sqrt{t\log\cN_n}}{2F\kappa\sqrt{n}}\bigg\}  \bigg\}dt.
\end{align*}
Here, the last inequality is by $r_j\geq \sqrt{\frac{\log\cN_n}{n}}$. By letting $T=Cn\log\cN_n$ for some  constant $C\geq 4F^2\kappa^2/c'^2$, it is clear that when $t\geq T$, $\frac{t}{n\kappa^2}\geq\frac{\sqrt{t\log\cN_n}}{2F\kappa\sqrt{n}}$, the inequality above can be written as 
\begin{align*}
\EE (|\xi_{\max}|^2|(\xb_i)_i)\leq Cn\log\cN_n+\cN_n\cdot\int_{Cn\log\cN_n}^{+\infty} 2\exp\bigg\{-c'\frac{\sqrt{t\log\cN_n}}{2F\kappa\sqrt{n}}  \bigg\}dt.
\end{align*}
Again by utilizing the equality $\int_{b^2}^{+\infty}e^{-\sqrt{t}a}dt=2(ab+1)e^{-ab}/a^2$ we conclude that as long as $C\geq 4F^2\kappa^2/c'^2$,
\begin{align*}
    \EE (|\xi_{\max}|^2|(\xb_i)_i)&\leq Cn\log\cN_n+\cN_n\cdot\int_{Cn\log\cN_n}^{+\infty} 2\exp\bigg\{-c'\frac{\sqrt{t\log\cN_n}}{2F\kappa\sqrt{n}}  \bigg\}dt\\
    &= Cn\log\cN_n+4\cN_n\bigg(\frac{c'\sqrt{C}}{F\kappa}\log\cN_n+1 \bigg)e^{-\frac{c'\sqrt{C}}{2F\kappa}\log\cN_n}\cdot \frac{F^2\kappa^2n}{c'^2\log\cN_n}\\
    &\leq 2Cn\log\cN_n.
\end{align*}
By the definition of $\xi_j$ in \eqref{eq:def_xij_step2} and with $\xi_{\max}=\max_{j\in[\cN]}\xi_j$, 
we have for any $j'$,
\begin{align*}
    \EE^{1/2} \frac{\Big|\sum_{i=1}^n \varepsilon_i(f_{j'}(\xb_i)-f_0(\xb_i))\Big|^2}{r_{j'}^2}&\leq \sqrt{\EE (\EE (|\xi_{\max}|^2|(\xb_i)_i))}\\
    &\leq \sqrt{2Cn\log\cN_n}.
\end{align*}
Combining the inequality above with \eqref{eq:key_bound_step2}, we have
\begin{align}
    \frac{1}{n}\bigg|\EE\sum_{i=1}^n \varepsilon_if_{j'}(\xb_i)\bigg|&\leq \frac{1}{n}\bigg(\sqrt{\frac{\log \cN_n}{n}}+\hat{R}_n^{1/2}(\tildef,f_0)/\sqrt{c}+\tilde{\delta}\bigg)\cdot \sqrt{2Cn\log\cN_n}.\nonumber\\
    &\leq \sqrt{\frac{2C\hat{R}_n(\tildef,f_0)\log\cN_n}{n}}+\sqrt{2C}\bigg(\frac{\log\cN_n}{n}+\sqrt{\frac{\log\cN_n}{n}}\tilde{\delta}\bigg)\label{eq:key_bound_step2_1}
\end{align}
for some constant $C>0$. Combining \eqref{eq:key_bound_step2_1} with \eqref{eq:key_step2} we conclude that for any $\tildef\in\cF$, 
\begin{align*}
     \bigg|\EE\frac{1}{n}\sum_{i=1}^n \varepsilon_i\tildef(\xb_i)  \bigg|\leq \sqrt{\frac{2C\hat{R}_n(\tildef,f_0)\log\cN_n}{n}}+\sqrt{2C}\bigg(\frac{\log\cN_n}{n}+\sqrt{\frac{\log\cN_n}{n}}\tilde{\delta}\bigg)+2\tilde{\delta} C
\end{align*}
for some large constant $C>0$, which completes the proof of Lemma~\ref{lemma:remain_epsilon_term}.

\subsection{Proof of Lemma~\ref{lemma:upper_bound_hatR}}
Recall the definition of $\Delta_n(\hf_n)$ 
\begin{align*}
   \Delta_n(\hf_n)=\EE \frac{1}{n} \sum_{i=1}^n  -  y_i \hf_n(\xb_i) +  \psi(\hf_n(\xb_i)) - \inf_{f \in \mathcal{F}(L, \mathbf{p}, s, \mathcal{F})}  \frac{1}{n} \sum_{i=1}^n  -  y_i f(\xb_i) +  \psi( f(\xb_i)) .
\end{align*}
The first inequality we can obtain is,  for any fixed $f\in\cF$, 
\begin{align}
    \EE \frac{1}{n} \sum_{i=1}^n  -  y_i \hf_n(\xb_i) +  \psi(\hf_n(\xb_i)) \leq \EE \frac{1}{n} \sum_{i=1}^n  -  y_i  f(\xb_i) +  \psi(f(\xb_i))+\Delta_n(\hf_n).\label{eq:step3_1}
\end{align}
Note that $f$ and $f_0$ are both fixed. For any $\bX\stackrel{\cD}{=}\xb_i$, the inequality above further implies that
\begin{align*}
    \hat{R}_n(\hf_n,f_0)&\leq \EE \frac{1}{n}\sum_{i=1}^n\ell(\xb_i;f,f_0)+\EE \frac{1}{n}\sum_{i=1}^n\varepsilon_i\hf_n(\xb_i)+\Delta_n(\hf_n)\\
    &=\EE \ell(\bX;f,f_0)+\EE \frac{1}{n}\sum_{i=1}^n\varepsilon_i\hf_n(\xb_i)+\Delta_n(\hf_n)\\
    &\leq \inf_{f\in\cF}\EE \ell(\bX;f,f_0)+\sqrt{\frac{2C\hat{R}_n(\hf_n,f_0)\log\cN_n}{n}} \\
    &+\sqrt{2C}\bigg(\frac{\log\cN_n}{n} +\sqrt{\frac{\log\cN_n}{n}}\tilde{\delta}\bigg)+2\tilde{\delta} C+\Delta_n(\hf_n).
\end{align*}
Here, $\varepsilon_i=y_i-\psi'(f_0(\xb_i))$. The first inequality is  from \eqref{eq:step3_1}, the first equality is by $\EE \ell(\bX;f,f_0)=\EE\ell(\xb_i;f,f_0)$ for any fixed $f\in\cF$, and the last inequality is by Lemma~\ref{lemma:remain_epsilon_term}.

\subsection{Proof of Lemma~\ref{lemma:lower_bound_hatR}}
Let $\tildef=\argmin_{f\in\cF}\sum_{i=1}^n  -  y_i f(\xb_i) +  \psi(f(\xb_i))$ be the global empirical risk estimator. We have the following decomposition, by letting $\varepsilon_i=y_i-\psi'(f_0(\xb_i))$. That is,
\begin{align}
    \hR_n(\hf_n,f_0)-\hR_n(\tildef,f_0)=\Delta_n(\hf_n)+\EE\frac{1}{n}\sum_{i=1}^n \varepsilon_i\hf_n(\xb_i)-\EE\frac{1}{n}\sum_{i=1}^n \varepsilon_i\tildef(\xb_i).\label{eq:step4_1}
\end{align}
From Lemma~\ref{lemma:remain_epsilon_term}, we have
\begin{align*}
\bigg|\EE\frac{1}{n}\sum_{i=1}^n \varepsilon_i\tildef(\xb_i)  \bigg|\leq \sqrt{\frac{2C\hat{R}_n(\tildef,f_0)\log\cN_n}{n}}+\sqrt{2C}\bigg(\frac{\log\cN_n}{n}+\sqrt{\frac{\log\cN_n}{n}}\tilde{\delta}\bigg)+2\tilde{\delta} C
\end{align*}
for any $\tildef\in\cF$ with some constant $C>0$.  Therefore  \eqref{eq:step4_1} yields
\begin{equation}
\label{eq:step4_2}
    \begin{split}
           \hR_n(\hf_n,f_0)-\hR_n(\tildef,f_0)&\geq  \Delta_n(\hf_n)-\sqrt{\frac{2C\hat{R}_n(\hf_n,f_0)\log\cN_n}{n}}-\sqrt{\frac{2C\hat{R}_n(\tildef,f_0)\log\cN_n}{n}}\\
    &\quad -2\sqrt{2C}\bigg(\frac{\log\cN_n}{n}+\sqrt{\frac{\log\cN_n}{n}}\tilde{\delta}\bigg)-4\tilde{\delta} C. 
    \end{split}
\end{equation}
For any $a,b>0$, it holds that  $2ab\leq \rho a^2+b^2/\rho$ for any $\rho>0$. By letting $a=\hR_n(\hf_n,f_0)$, $b=\hR_n(\tildef,f_0)$, $c=\sqrt{\frac{C\log\cN_n}{2n}}$ and $d= \Delta_n(\hf_n)-2\sqrt{2C}\bigg(\frac{\log\cN_n}{n}+\sqrt{\frac{\log\cN_n}{n}}\tilde{\delta}\bigg)-4\tilde{\delta} C$, \eqref{eq:step4_2} can be written as 
\begin{align*}
    a-b&\geq d-2\sqrt{a}c-2\sqrt{b}c\\
    &\geq d-\rho a-\frac{c^2}{\rho}-b-c^2
\end{align*}
for any $\rho>0$. We directly conclude that 
\begin{align*}
    a\geq \frac{d}{1+\rho}-\frac{c^2}{1+\rho},
\end{align*}
which is 
\begin{align*}
    \hR_n(\hf_n,f_0)&\geq \frac{1}{1+\rho}\cdot \bigg(\Delta_n(\hf_n)-2\sqrt{2C}\bigg(\frac{\log\cN_n}{n}+\sqrt{\frac{\log\cN_n}{n}}\tilde{\delta}\bigg)-4\tilde{\delta} C- \frac{C\log\cN_n}{2n}  \bigg).
\end{align*}
This completes the proof of Lemma~\ref{lemma:lower_bound_hatR}.

\subsection{Proof of Proposition~\ref{prop:upper_lower_bound_R_n}}
We first prove the upper bound. From Lemma~\ref{lemma:R_n-hatR_n}, we can let $a_1=\hat{R}_n(\hf_n,f_0)$, $b_1=R_n(\hf_n,f_0)$, $c_1=\frac{R}{n}\sqrt{9n\log\cN_n+4n\frac{1+3\log\cN_n}{\log\cN_n}}$ and $d_1=\frac{2R(3\log\cN_n+2)}{n}+5\tilde{\delta} C_{\Lip}$. The inequality in Lemma~\ref{lemma:R_n-hatR_n} can be written as 
\begin{align}
    |a_1-b_1|\leq 2\sqrt{a_1}c_1+d_1. \label{eq:Proof_delta_Rn_1}
\end{align}
Furthermore, from Lemma~\ref{lemma:upper_bound_hatR}, we can let $a_2=a_1=\hat{R}_n(\hf_n,f_0)$, $c_2=\sqrt{\frac{C\log\cN_n}{2n}}$ and $d_2=\inf_{f\in\cF}\EE \ell(\bX;f,f_0)+\sqrt{2C}\bigg(\frac{\log\cN_n}{n}+\sqrt{\frac{\log\cN_n}{n}}\tilde{\delta}\bigg)+2\tilde{\delta} C+\Delta_n(\hf_n)$. The inequality in Lemma~\ref{lemma:upper_bound_hatR} can be written as 
\begin{align}
    a_2\leq 2\sqrt{a_2}c_2+d_2.\label{eq:Proof_delta_Rn_2}
\end{align}
From \eqref{eq:Proof_delta_Rn_1}, we have, for any $0<\rho_1<1$, 
\begin{align*}
    b_1-a_1\leq \rho_1 a_1+\frac{c_1^2}{\rho_1}+d_1, 
\end{align*}
while \eqref{eq:Proof_delta_Rn_2} implies that, for any $\rho_2>0$, 
\begin{align*}
    a_2\leq \rho_2a_2+\frac{c_2^2}{\rho_2}+d_2. 
\end{align*}
Setting $a_1=a_2$ and combining the two inequalities above,  we have 
\begin{align*}
    b_1\leq (1+\rho_1)\cdot\bigg(\frac{c_2^2}{(1-\rho_2)\rho_2}+d_2 \bigg)+\frac{c_1^2}{\rho_1}+d_1.
\end{align*}
By the definition of  $a_1$, $b_1$, $c_1$ $d_1$, $a_2$, $c_2$ and $d_2$ and by letting $\rho_1=1$ and $\rho_2=0.5$, we conclude that there exists a constant $C'$ depending only on  $F$, function $\psi$ and $\kappa$ such that 
\begin{align*}
    R_n(\hf,f_0)\leq 2\Delta_n(\hf_n)+2\inf_{f\in\cF}\EE\ell(\bX;f,f_0)+C'\bigg(\frac{\log\cN_n}{n}+\bigg(\sqrt{\frac{\log\cN_n}{n}}+1 \bigg)\tilde{\delta}\bigg).
\end{align*}
We complete the upper bound in Proposition~\ref{prop:upper_lower_bound_R_n}.

We then prove the lower bound in Proposition~\ref{prop:upper_lower_bound_R_n}. From Lemma~\ref{lemma:lower_bound_hatR}, we can let $a_3=\hat{R}_n(\hf_n,f_0)$, and have
\begin{align}
    a_1=a_3\geq \frac{1}{1+\rho}\cdot \bigg(\Delta_n(\hf_n)-2\sqrt{2C}\bigg(\frac{\log\cN_n}{n}+\sqrt{\frac{\log\cN_n}{n}}\tilde{\delta}\bigg)-4\tilde{\delta} C- \frac{C\log\cN_n}{2n}  \bigg).\label{eq:lower_boundagain}
\end{align}
Hence, from \eqref{eq:Proof_delta_Rn_1} we have for any $0<\rho_3<1$,
\begin{align*}
    a_1-b_1\leq \rho_3 a_1+\frac{c_1^2}{\rho_3}+d_1,
\end{align*}
or 
\begin{align*}
    b_1\geq (1-\rho_3)a_1-\frac{c_1^2}{\rho_3}-d_1.
\end{align*}
We can  let $\rho=0.2$ in \eqref{eq:lower_boundagain} and $\rho_3=0.4$, and it is easy to conclude that there exists a constant $c'$  depending only on  $F$, function $\psi$ and $\kappa$ such that 
\begin{align*}
    R_n(\hf,f_0)\geq \frac{1}{2}\Delta_n(\hf_n)-c'\bigg(\frac{\log\cN_n}{n}+\bigg(\sqrt{\frac{\log\cN_n}{n}}+1 \bigg)\tilde{\delta}\bigg).
\end{align*}

\end{document}